\documentclass{article}
\usepackage[journal=PRE,mode=blank]{ems-journal}

\numberwithin{equation}{section}

\usepackage{url}

\usepackage{subcaption}
\usepackage{tikz}
\usepackage{mathrsfs,bbm,url,comment}

\newtheorem{theorem}{Theorem}[section]
\newtheorem{proposition}[theorem]{Proposition}

\newtheorem{lem}[theorem]{Lemma}
\newtheorem{cor}[theorem]{Corollary}

\newtheorem{rmk}[theorem]{Remark}
\newtheorem{assumption}[theorem]{Assumption}
\newtheorem{definition}[theorem]{Definition}

\usepackage{multirow}
\usepackage{colortbl} % For colored rows
\definecolor{darkgreen}{rgb}{0, .5, 0}
\definecolor{darkcerulean}{rgb}{0.03, 0.27, 0.49}
\definecolor{smokyblack}{rgb}{0.06, 0.05, 0.03}
\definecolor{warmblack}{rgb}{0.0, 0.26, 0.26}
\definecolor{cobalt}{rgb}{0.0, 0.28, 0.67}
\definecolor{aoEnglish}{rgb}{0.0, 0.5, 0.0}
\definecolor{carribeangreen}{rgb}{0.0, 0.8, 0.6}
\definecolor{persiangreen}{rgb}{0.0, 0.65, 0.58}
\definecolor{lightgray}{gray}{0.93}
\definecolor{midgray}{gray}{0.6}

\definecolor{darkgoldenrod}{rgb}{0.72, 0.53, 0.04}
\definecolor{electricviolet}{rgb}{0.56, 0.0, 1.0}
\definecolor{internationalorange}{rgb}{1.0, 0.31, 0.0}
\definecolor{portlandorange}{rgb}{0.80, 0.28, 0.168}

\usepackage{pifont}   % for \ding
\definecolor{deepjunglegreen}{rgb}{0.0, 0.29, 0.29}
\definecolor{dogwoodrose}{rgb}{0.84, 0.09, 0.41}
\definecolor{sanddune}{rgb}{0.59, 0.44, 0.09}
\definecolor{slategray}{rgb}{0.44, 0.5, 0.56}
\definecolor{burntsienna}{rgb}{0.91, 0.45, 0.32}

\usepackage{pifont}

\usepackage[linesnumbered,ruled,vlined]{algorithm2e}

\SetCommentSty{mycommfont}
\SetKwInput{KwIn}{Input}
\SetKwInput{KwOut}{Output}

\newcommand{\eqdef}{\ensuremath{\stackrel{\mbox{\upshape\tiny def.}}{=}}}

\begin{document}

%---
% Insert the title of your paper and (if necessary)
% a short title for the running head.
%---
\title{A Closed-Form Formula for Consistent Lipschitz Regression on Metric Spaces with Sparse Neural Network Realizations}
\titlemark{Consistent Lipschitz Regression on Metric Spaces}

%---

% Authors listed alphabetically by surname.

\emsauthor{1}{
	\givenname{Ruiyang}
	\surname{Hong}
	\mrid{}
	\zblid{}
	\orcid{}}{R.~Hong}

\emsauthor{2}{
	\givenname{Hrad}
	\surname{Ghoukasian}
	\mrid{}
	\zblid{}
	\orcid{}}{H.~Ghoukasian}

\emsauthor{3}{
	\givenname{Anastasis}
	\surname{Kratsios}
	\mrid{}
	\zblid{}
	\orcid{0000-0001-6791-3371}}{A.~Kratsios}

\Emsaffil{1}{
	\department{1}{Department of Mathematics and Statistics}
	\organisation{1}{McMaster University}
	\rorid{1}{02fa3aq29}
	\address{1}{1280 Main Street West}
	\zip{1}{L8S 4K1}
	\city{1}{Hamilton, Ontario}
	\country{1}{Canada}
	\affemail{1}{hongr5@mcmaster.ca}
	\department{2}{}
	\organisation{2}{Vector Institute for Artificial Intelligence}
	\rorid{2}{03kqdja62}
	\address{2}{W1140-108 College Street}
	\zip{2}{M5G 0C6}
	\city{2}{Toronto, Ontario}
	\country{2}{Canada}
	\affemail{2}{}}

\Emsaffil{2}{
	\department{1}{The Edward S. Rogers Sr. Department of Electrical \& Computer Engineering}
	\organisation{1}{University of Toronto}
	\rorid{1}{02fa3aq29}
	\address{1}{40 St George St}
	\zip{1}{M5S 2E4}
	\city{1}{Toronto, Ontario}
	\country{1}{Canada}
	\affemail{1}{ hrad.ghoukasian@mail.utoronto.ca}
%
	% \department{2}{}
	% \organisation{2}{Vector Institute for Artificial Intelligence}
	% \rorid{2}{03kqdja62}
	% \address{2}{W1140-108 College Street}
	% \zip{2}{M5G 0C6}
	% \city{2}{Toronto, Ontario}
	% \country{2}{Canada}
	% \affemail{2}{}
    }

\Emsaffil{3}{
	\department{1}{Department of Mathematics and Statistics}
	\organisation{1}{McMaster University}
	\rorid{1}{02fa3aq29}
	\address{1}{1280 Main Street West}
	\zip{1}{L8S 4K1}
	\city{1}{Hamilton, Ontario}
	\country{1}{Canada}
	\affemail{1}{kratsioa@mcmaster.ca}
	\department{2}{}
	\organisation{2}{Vector Institute for Artificial Intelligence}
	\rorid{2}{03kqdja62}
	\address{2}{W1140-108 College Street}
	\zip{2}{M5G 0C6}
	\city{2}{Toronto, Ontario}
	\country{2}{Canada}
	\affemail{2}{}}

%---

%---
% Add MSC 2020 codes according to https://zbmath.org/classification/.
% A unique primary MSC code (in curly brackets) is mandatory,
% while secondary MSC codes (in square brackets) are optional.
%---
\classification[62G08, 41A25, 68Q32]{68T07}

%---
% Add a list of keywords.
%---

\keywords{Lipschitz regression, metric spaces, nonparametric regression, neural networks, sparse ReLU realization, fat-shattering dimension}

%---
% Insert your abstract.
%---
\begin{abstract}
% Kernel ridge regression is both computationally and analytically tractable because its estimator admits a simple closed-form expression and optimizing a convex objective; neither feature is generally available for deep neural networks.
Several classical machine-learning methods, such as KRRs and SVRs, are both computationally and analytically tractable since their estimators either admit closed-form expressions or are obtained by minimizing convex training objectives; neither feature is generally available for deep neural networks.
We address this by introducing a simple closed-form ``two-stage'' compositional formula $\hat{f}$ for reconstructing an unknown Lipschitz function $f:\mathcal{X}\to \mathbb{R}$ on a metric space $(\mathcal X,\rho)$ from $N$ i.i.d.\ noisy observations.

Our main result is a high-probability uniform ($L^{\infty}$) recovery guarantee that jointly controls approximation and statistical errors while enjoying an optimization error of zero; in particular, we do not assume oracle access to an approximate ERM.
Our secondary main results establish the optimality of our formula in three complementary senses. 1) \textit{Function space:} On Ahlfors-regular metric spaces, the hypothesis class parameterized by our formula attains the optimal fat-shattering dimension. 2) \textit{Parameter space:} Its dependence on the parameters is maximally numerically stable, in the sense that a smaller approximation error cannot be achieved with a smaller Lipschitz dependence on the model parameters. 3) \textit{Forward pass:} Its dependence on the input is maximally regular, matching the Lipschitz constant of the target function $f$.
When $\mathcal X=[0,1]^d$ is equipped with the $\ell^\infty$ norm, $\hat{f}$ admits algorithmic \texttt{ReLU}-MLP and exact \texttt{ReLU}-multi-head transformer realizations of depth $\mathcal{O}(\log(N))$ with $\mathcal{O}(N)$ nonzero parameters.
\end{abstract}
\maketitle

\section{Introduction}
\label{s:Intro}

The nonparametric regression problem is at the core of supervised machine learning: the objective is to learn an unknown function $f:\mathcal{X}\to \mathbb{R}$ defined on a metric space $(\mathcal{X},\rho)$ from noisy training data $(X_1,Y_1),\dots,(X_N,Y_N)$, defined on a probability space $(\Omega,\mathcal{F},\mathbb{P})$ and taking values in $\mathcal{X}\times \mathbb{R}$, where
\begin{equation}
\label{eq:noisy}
Y_n = f(X_n) + \varepsilon_n 
\qquad\quad
\mbox{for } n=1,\dots,N,
\end{equation}
and the $\varepsilon_n$ are i.i.d.\ centred random variables satisfying suitable integrability conditions. A central goal of non-parametric regression is to best infer $f$, by identifying parameters for a finitely-parameterized model which is as close to $f$ as possible; using only the information present in the training data. 
For instance, when $(\mathcal{X},\rho)=(\mathbb{R}^d,\|\cdot\|_1)$, we are in the standard deep-learning setting in which \texttt{ReLU} multi-layer perceptrons (MLPs) are analysed. When $\mathcal{X}$ is a set of finitely supported probability measures on $[0,1]^d$, we enter settings arising in the analysis of point clouds on $[0,1]^d$, such as the DeepSets architecture~\cite{zaheer2017deep}, or in probabilistic formalizations of context~\cite{furuya2026transformers} arising in in-context learning. When $\mathcal{X}=L^2([0,1]^d)$, or a Sobolev subspace thereof, we are in a special case of operator learning, namely learning real-valued functionals.

Unlike classical machine-learning tools---e.g.\ regularized kernel regression, whose ridge-penalized estimator admits an explicit closed-form expression~\cite[Theorem~5.2]{kimeldorf1971some}; Group LASSO~\cite{yuan2006model}, ENET~\cite{zou2005regularization} regression variants, and SVRs~\cite{smola2004tutorial}, which are defined through convex programs; or boosting procedures such as XGBoost~\cite{chen2016xgboost} and LightGBM~\cite{ke2017lightgbm}, which admit efficient greedy training procedures---the empirical training objectives of \textit{deep} neural networks (NNs), i.e.\ models leveraging \textit{compositionality}, are generally non-convex, and tractable closed-form solutions are not known in general. Although one can kernelize a deep-learning model, e.g.\ by using frozen hidden feature maps~\cite{lukovsevivcius2009reservoir,rahimi2007random,gonon2023approximation,neufeld2023universal} or passing to an NTK limit~\cite{jacot2018ntk}, doing so can strictly worsen approximation rates on compositional function classes~\cite{mhaskar2016deep} and yield poorer statistical performance than end-to-end deep learning~\cite{allen2019can}. Moreover, even one step of end-to-end gradient-based training can already exhibit a statistical separation from the corresponding kernel method~\cite{ba2022high}.

This makes it substantially more difficult to simultaneously control the \textit{approximation}, \textit{statistical} (generalization), and \textit{optimization} errors of neural networks. The first of these errors quantifies the mismatch between the target function and the model class induced by restricting the target function to the chosen model class; the second quantifies the discrepancy between the inferred model's predictive performance on the training data and its unobserved population (test) performance; and the optimization error quantifies the gap in the empirical objective between an optimal model within the chosen class, e.g.\ an empirical-risk minimizer (ERM), and the model selected by the data-dependent training algorithm. While approximation-theoretic guarantees
(e.g.~\cite{yarotsky2017error,petersen2018optimal,shen2022optimal,kratsios2025kolmogorov}) address the first of these errors and statistical-learning guarantees address the second, such guarantees typically do not control the optimization error incurred by the training algorithm.

%We identify a \textit{closed-form}, cf.~\eqref{eq:formula__realizable} below, solution to this problem within the class of $\omega$-uniformly-continuous (e.g.\ Lipschitz or H\"{o}lder\footnote{Via snowflaking.}) functions when the random queries $X_1,\dots,X_N$ have compactly supported law.  Our solution achieves near-minimal computation cost as 1) it has a closed-form formula using 2) an optimal number of parameters coinciding with the lower-bounds in stable non-linear approximation theory; i.e.\ the best rate of approximation rates possible if the model depends in a Lipschitz fashion on its trainable parameters, cf.~\cite{petrova2023lipschitz}, over the class $\mathcal{F}$ up to log-factors. 

\subsection{Main Contributions}
\label{s:Intro__ss:MainContributins}

We resolve this gap by identifying a simple closed-form ``compositional'' formula, in~\eqref{eq:formula__realizable}, for which we prove a high-probability uniform recovery guarantee directly, without assuming access to an exact or approximate ERM (Theorem~\ref{thrm:nonparametric_noisy_case}).  Our formula is optimal in a wide-range of sense discussed below, in some cases up to logarithmic factors.

Importantly, when $(\mathcal{X},\rho)=([0,1]^d,\|\cdot\|_{1})$, our formula admits an \emph{explicit} \texttt{ReLU}-MLP realization (Proposition~\ref{prop:representation}); and when $(\mathcal{X},\rho)$ is a suitable space of finitely supported probability measures on $\mathbb{R}^d$, our formula admits a DeepSets realization for point clouds; see~\eqref{eq:DeepSets_factorization} and~\cite{zaheer2017deep}.  This allows us to transfer the recovery guarantee to these explicitly constructed deep-learning realizations without introducing a separate optimization-error term.  The concreteness of the formula also provides an interpretable description of what the resulting neural-network realization computes.

Our main result (Theorem~\ref{thrm:nonparametric_noisy_case}) gives a simple closed-form estimator $
    \hat{f}:\mathcal{X}\to\mathbb{R}
$ which uniformly recovers $f$, with high probability, under the well-specified regression model $Y_n=f(X_n)+\varepsilon_n$, where the noises $\{\varepsilon_n\}_{n=1}^N$ are i.i.d.\ and independent of the sample locations $\{X_n\}_{n=1}^N$.  The observed training dataset is
$\mathbb{D}_N
    \eqdef
    ((X_n,Y_n))_{n=1}^N$.

\begin{figure}[th!b]%[H]
    \centering
    \includegraphics[width=\linewidth]{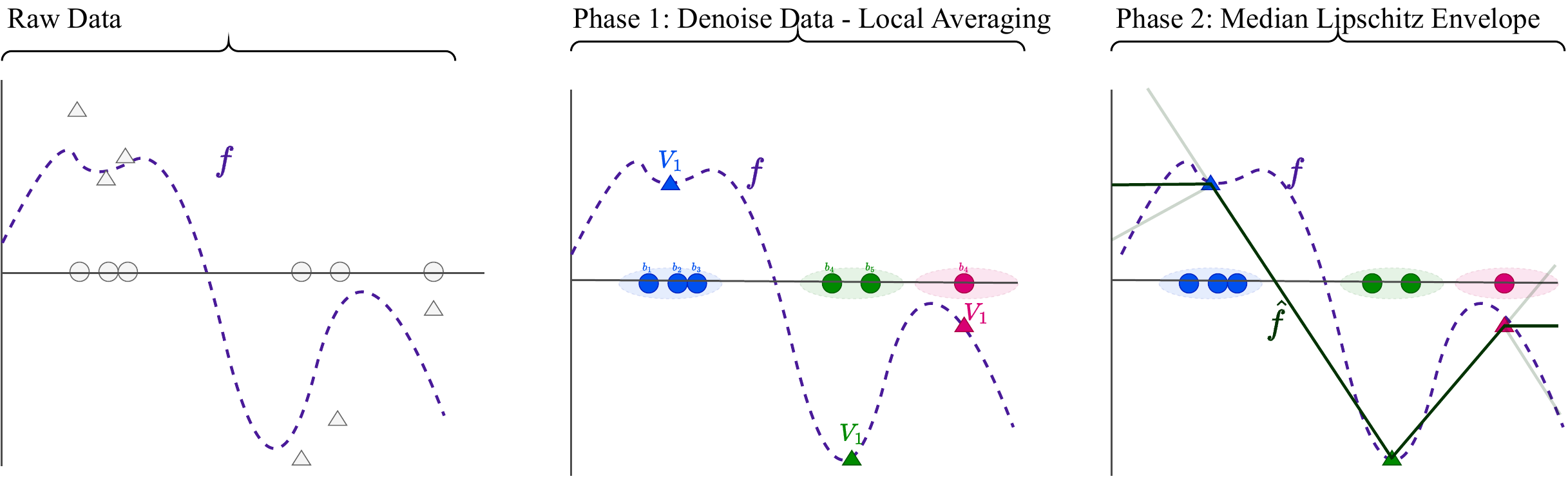}
    \caption{\textbf{Intuition behind the Formula.} The theoretical estimator in~\eqref{eq:formula__realizable} performs two operations.  First, Algorithm~\ref{alg:LocalAveragingRobustification} uses the second sample block to compute a locally averaged value $V_k^{(\beta,q)}$ at each reference point $\xi_k$ from the first block. Second, the estimator takes the midpoint of the upper and lower $L$-Lipschitz envelopes generated by the effective pairs $(\xi_k,V_k^{(\beta,q)})_{k=1}^{N_0}$.  The resulting map is $L$-Lipschitz.  In the noisy setting, no exact interpolation of the locally averaged values is asserted.}
    \label{fig:epsilon_neighbourhood_graph}
\end{figure}

\paragraph{The Formula}
Our formula, illustrated in Figure~\ref{fig:epsilon_neighbourhood_graph} and detailed in~\eqref{eq:formula__realizable}, operate as follows. We first split the training sample into a reference block of size $N_0$ and an averaging block of size $N_1$, where $N=N_0+N_1$. Let $\beta\ge0$, $q\in[N_1]_+$, and $L>0$, and define the neighbourhood-averaging matrix $G^{(\beta,q)}\in\mathbb{R}^{N_1\times N_0}$, as illustrated in Figure~\ref{fig:epsilon_neighbourhood_graph}. We then define the updated formula as
\begin{equation}
\label{eq:formula__realizable}
\begin{aligned}
    \hat{f}_{\mathbb{D}_N}^{(\beta,q,L)}(x)
    &=\tfrac{1}{2}\Big(\min_{k\in[N_0]_+}\big(V_k^{(\beta,q)}+\chi_k\big)+\max_{k\in[N_0]_+}\big(V_k^{(\beta,q)}-\chi_k\big)\Big),\\
    V^{(\beta,q)}&=a^{\operatorname{av}}G^{(\beta,q)},\\
    \chi&=\big(L\,\rho(x,X_k)\big)_{k=1}^{N_0},\\
    a^{\operatorname{av}}&=(Y_{N_0+1},\ldots,Y_{N_0+N_1}).
\end{aligned}
\end{equation}
%%%
In analogy with attention terminology, each reference point $X_k$ acts as a key, $V_k^{(\beta,q)}$ is its locally averaged value, and $L\rho(x,X_k)$ is the metric penalty associated with the query $x$. This terminology is only mnemonic: the aggregation operation in~\eqref{eq:formula__realizable} is the midpoint of a minimum and a maximum Lipschitz envelope, rather than standard softmax attention.
Lastly, the ``neighbourhood-averaging operator'', cf.~Figure~\ref{fig:epsilon_neighbourhood_graph}, is computed by the $N_1\times N_0$ matrix $G^{(\beta,q)}$ whose $(i,k)$-th coordinate is
\begin{equation}
\label{eq:local_averaging__operator}
\begin{aligned}
G^{(\beta,q)}_{ik}
&\eqdef \frac{1}{q}\,
\mathbbm{1}_{\{\mathbf{1}_{N_1}^{\top}H_{:,k}\ge q\}}\,
H_{ik}^{(\beta)}\,
\mathbbm{1}_{\{\mathbf{1}_{i}^{\top}H_{[1:i],k}\le q\}},\\
H_{ik}^{(\beta)}
&\eqdef \mathbbm{1}_{\{\rho(X_{N_0+i},X_k)\le\beta\}},
\end{aligned}
\end{equation}
for $i\in[N_1]_+$ and $k\in[N_0]_+$, where $H_{:,k}$ denotes the $k$-th column of $H$ and $H_{[1:i],k}$ denotes the first $i$ entries of that column.

%%%

Our main result (Theorem~\ref{thrm:nonparametric_noisy_case}) shows that the split-sample estimator~\eqref{eq:formula__realizable} is uniformly consistent. In particular, if the ground truth $f$ is $1$-Lipschitz, then for any accuracy parameter $\varepsilon>0$ and any user-specified failure probability $0<\delta<1$, setting our formula's parameters to
$
    L=1
$, $\beta=\frac{\varepsilon}{2}$, and
$
    q=q_\varepsilon\eqdef\left\lceil\varepsilon^{-2}\right\rceil
$
guarantees that
\[
    \sup_{x\in X}
    \big|
        \hat{f}_{\mathbb D_N}^{(\beta,q_\varepsilon,1)}(x)-f(x)
    \big|
    \le
    \varepsilon
    \Big(
        1+\sqrt{
            C\log\big(\tfrac{4N_0}{\delta}\big)
        }
    \Big)
\]
holds with probability at least $1-\delta$\footnote{Over the draw of $\mathbb{D}_N$.} and
$
    N=
    \widetilde{\mathcal O}_{a,s}
    \big(
        \varepsilon^{-(s+2)}
        +
        \varepsilon^{-s}\log(1/\delta)
    \big)
$ samples suffice. The split $N=N_0+N_1$ chosen as in Theorem~\ref{thrm:nonparametric_noisy_case}.

\subsection{Secondary Contributions: Optimality of our Formula}
\label{s:Intro__ss:SecondaryContributins}

Beyond our main reconstruction guarantees, we establish that our closed-form formula is optimal, or near-optimal, from three complementary perspectives. These results distinguish between the parametric complexity of a neural realization, the combinatorial complexity of the realized function class, and the numerical stability of the parameter-to-realization map. 

\paragraph{Sparse realization}
For $p\in\{1,\infty\}$, we prove that our reconstruction formula on $([0,1]^d,|\cdot|_p)$ admits an exact ReLU-MLP realization of depth $\mathcal{O}(\log N)$, width $\mathcal{O}(N)$, and only $\mathcal{O}(N)$ non-zero parameters (Proposition~\ref{prop:representation}). In the oracle setting, this realization achieves the uniform approximation bound $|f-\hat{f}|_{\infty}\lesssim N^{-1/d}$, matching the optimal exponent for continuous weight assignment established in~\cite{yarotsky2018optimal}; namely, the exponent $1/d$ cannot be improved when the network weights are required to depend continuously on the target function. Moreover, unlike the sparse \texttt{ReLU} estimators in~\cite{schmidthieber2020nonparametric}, whose statistical guarantees require approximate empirical-risk minimization, our network is constructed explicitly from the possibly noisy observations and therefore requires no iterative neural-network training; its recovery bound contains no separate optimization-error term due to the closed-form expression.  In particular, we do not assume access to an approximate ERM as in~\cite[Theorem~1]{schmidthieber2020nonparametric}.

\paragraph{Optimal complexity of the realized function class.}
We prove that, on any compact $d$-Ahlfors regular metric measure space, the hypothesis class realized by our formula has fat-shattering dimension of the optimal order $(L/\varepsilon)^d$ at approximation scale $\varepsilon$ (Theorem~\ref{thrm:fatshatteringotimality}). This is an intrinsic function-space notion of optimality: every $L$-Lipschitz hypothesis class capable of uniformly approximating the bounded $L$-Lipschitz ball to accuracy $\varepsilon$ must have fat-shattering dimension at least of this order. Thus, our realized class has exactly the scale-sensitive complexity required for approximation--large enough to approximate the entire Lipschitz ball, but no larger than necessary. This provides a scale-sensitive analogue of the VC-dimension-based obstruction in ~\cite{shen2022optimal}: using the fat-shattering dimension of~\cite{kearns1994efficient} and related to uniform learnability in~\cite{alon1997scale,bartlett1996fat}.

\paragraph{Stable near-minimax parameterization.}
In the oracle-access to noiseless training data regime, Theorem~\ref{thm:stable_near_minimax_optimality_oracle_regime} shows that our parameter-to-realization map attains the Lipschitz-width benchmark of~\cite{petrova2023lipschitz} up to a logarithmic factor.  Thus, among approximation families whose parameter-to-function realization maps satisfy the prescribed Lipschitz-stability constraint, our approximation rate is near-minimax up to this factor.  This contrasts with the discontinuous weight-selection regimes considered in~\cite{yarotsky2018optimal,ShenYangZhang_JMPA_OptApprx_ReLU}, where the network weights may depend discontinuously on the target function; cf.~\cite{petrova2023limitations,cohen2022optimal}.

\subsection{Related Research}
\label{s:Introduction__ss:Related}

When $\mathcal{X}=\mathbb{R}^d$ and $\rho=\|\cdot\|_p$ for some $1\le p\le\infty$, a substantial approximation-theoretic literature provides quantitative universal \emph{approximation} guarantees for neural networks; see, e.g.~\cite{yarotsky2017error,petersen2018optimal,shen2022optimal,kratsios2025kolmogorov}. These results guarantee variants of the following: for every continuous $f$ and $\varepsilon>0$, there \textit{exists} a neural network $f_{\operatorname{apx}}:\mathbb{R}^d\to\mathbb{R}$ with
\begin{equation}
\label{eq:background__univ_approx}
    \sup_{x\in[0,1]^d}|f_{\operatorname{apx}}(x)-f(x)|
    \le \varepsilon.
\end{equation}
Thus, they guarantee the existence of such an $f_{\operatorname{apx}}$ but are essentially ``oracle-quality'' from an algorithmic perspective: one may select suitable network parameters without addressing how those parameters are to be recovered from noisy training data.

Statistical results instead study a data-dependent estimator $\hat{f}\in\mathcal{H}$ and control its population risk under suitable assumptions on the data distribution, loss, and complexity of $\mathcal{H}$. Typically one assumes access to an approximate (or exact) ERM $\hat{f}$
\begin{equation}
\label{eq:approx_ERM}
    \frac{1}{N}\sum_{n=1}^N
        \ell\big(\hat{f}(X_n),Y_n\big)
\le
    \inf_{\tilde f\in\mathcal{H}}
        \frac{1}{N}\sum_{n=1}^N
        \ell\big(\tilde f(X_n),Y_n\big)
    +\varepsilon_N
% ,
\end{equation}
where $\varepsilon_N\ge0$ is an optimization error, typically required to vanish as the number of samples $N\to\infty$. For example, in the squared-loss setting, \cite[Theorem~1]{schmidthieber2020nonparametric} controls the statistical risk in terms of an empirical optimization-gap quantity, while \cite[Corollary~1]{schmidthieber2020nonparametric} specializes to an exact empirical-risk minimizer.

Once an estimator with sufficiently small empirical risk has been identified, statistical-learning theory can yield high-probability bounds of the schematic form
\begin{equation}
\label{eq:background__statistical}
    \underbrace{
        \mathbb{E}\!\left[
            \ell\big(\hat{f}(X),Y\big)
        \right]
    }_{\text{Population Risk } \mathcal{R}(\hat{f})}
    \le
    \underbrace{
        \frac{1}{N}\sum_{n=1}^N
            \ell\big(\hat{f}(X_n),Y_n\big)
    }_{\text{Empirical Risk } \widehat{\mathcal{R}}_{\mathcal{D}^N}(\hat{f})}
    +
    \underbrace{
        \operatorname{Gen}_N(\mathcal{H},\delta)
    }_{\text{Generalization Term}},
\end{equation}
with probability at least $1-\delta$, where $\operatorname{Gen}_N(\mathcal{H},\delta)\to0$ under appropriate complexity and integrability assumptions. Such terms may be controlled through local Rademacher complexities~\cite{bartlett2005local}, fat-shattering or related scale-sensitive dimensions~\cite{alon1997scale}, and VC- or pseudo-dimension estimates for neural networks~\cite{bartlett2019nearly}. Complementary neural-network bounds exploit parameter norms and margins~\cite{bartlett2017spectrally,neyshabur2018a}, while instance-dependent guarantees may depend directly on the learned predictor and observed data~\cite{hou2023instance}, or more generally on the geometry of an associated limiting Gaussian process~\cite{bartl2025we}.

\paragraph{Where's the Gap: Controlling the Optimization Error}
Universal approximation results, e.g.~\eqref{eq:background__univ_approx}, can guarantee that the approximation error of a sufficiently rich neural-network class is arbitrarily small. Statistical results can then control the corresponding estimation and generalization errors, provided that a suitable approximate ERM as in~\eqref{eq:approx_ERM} can be algorithmically identified. The remaining problem is that the \textbf{optimization error} in~\eqref{eq:approx_ERM} \emph{need not} converge to zero in practice.

Indeed, there is mounting evidence that standard training algorithms, e.g.\ (stochastic) gradient-descent-type methods, can fail to identify empirical-risk minimizers~\cite{jentzen2025nonconvergence,do2026nonconvergence,cheridito2021nonconvergence}, and can even yield statistically inconsistent predictors or fail to attain the optimal population risk~\cite{holzmuller2022training,do2025optimalrisk}; see also~\cite{safran2018spurious} for spurious local minima in two-layer \texttt{ReLU} networks. These difficulties reflect, in particular, the non-convexity of neural-network loss landscapes and the absence of a tractable closed-form characterization of optimal neural networks; the latter is the focus of this paper.

\paragraph{Related estimators and approximation results in the non-Euclidean setting}
In~\cite{balazs2026nearoptimal} the authors gives a tractable, near-minimax delta-convex estimator in Euclidean spaces and remarks that a max-out realization, but does not provide an exact sparse \texttt{ReLU}-MLP complexity theorem or extend the theory to general metric spaces;~\cite{gottlieb2017efficientregression} treat general metric spaces but compute an approximate minimizer of a Lipschitz-regularized empirical objective using specialized optimization. In contrast, our guarantee applies directly to the displayed metric-space estimator without an ERM assumption, while its $\ell^1$ and $\ell^\infty$ specializations admit exact sparse \texttt{ReLU} realizations by explicit weight assignment, with stated depth, width, and parameter counts.

It is also worth noting the universal approximation results on metric spaces of~\cite{kratsios2022universalgeo,kratsios2023metric,acciaio2024designing}.  While these results treat varying degrees of general metric, or topological~\cite{kratsios2020noneuclidean,galimberti2025neural,godeke2025encoderdecoder,ismailov2026topological}, spaces as their input spaces and some allowing for general topological output spaces; they all provide existence theorems, and none provide statistical guarantees, nor any implementable closed-form expressions.

\paragraph{Related Regression Formulas and Certifiable Training Algorithms}
While~\cite{kratsios2025beyond} provides a training algorithm for a single-head transformer with distance-based features that learns an unknown Lipschitz function $f:[0,1]^d\to\mathbb{R}$ from noisy observations, it does not establish the three optimality properties proved here: matching the target Lipschitz constant, optimal $\gamma$-fat-shattering dimension, or near-optimal Lipschitz-width efficiency in the sense of~\cite{petrova2023lipschitz}. Moreover, its recovery guarantee is not uniform, and no extension of the algorithm beyond its Euclidean setting is provided.

% \paragraph{Relation to Nadaraya-Watson Estimators} 
%%
The complementary analysis studies various overfitting regimes for an explicit distance-based Nadaraya-Watson~\cite{nadaraya1964estimating,watson1964smooth}-type formula studied in~\cite{barzilai2026beyond}; which can be linked~\cite{goel2024can} to transformers.  
Although these formulas do interpolate in the binary classification setting, can generalize, and their analysis likely extends to more general metric spaces; the Lipschitz constant of their estimators is easily bounded below by $
\tfrac{2}{\min_{i\neq j\in [N]_+:\,Y_i\neq Y_j}\|X_i-X_j\|}
$ which diverges as the sample size $N$ does.  This is \textit{not} the case for our formula~\eqref{eq:formula__realizable} which is independent of $N$.  
This diverging behaviour is also shared by the consistent Lipschitz estimators of~\cite{balazs2026nearoptimal}; who showed that their estimators have a diverging Lipschitz constant on the order of $\mathcal{O}(\sqrt{\log(N))}$ in~\cite[Lemma 16]{balazs2026nearoptimal}.

This diverging-Lipschitz pathology does not arise for the McShane--Whitney formula-type regressors $\widetilde{f}_N$ in~\cite[Lemma~7]{luxburg2004distance}\footnote{With their mixing constant set to $\tfrac{1}{2}$.} with a prescribed Lipschitz constant $L\ge 0$.  Their formula coincides with the central McShane-Whitney stage of ours when the neighbourhood-averaging step is omitted and the raw labels are used as the effective values; this is not the regime considered in Theorem~\ref{thrm:random_apprx_noisy_robustified}.  
However, their formula need not be uniformly consistent with $f$ when computed on noisy training data; indeed, any exact interpolant $\widetilde f_N$ satisfies
\begin{equation}
\label{eq:explosion_inconsistent}
    \|\widetilde{f}_N-f\|_\infty
    \ge
    \max_{n\in[N]_+}
    |\widetilde{f}_N(X_n)-f(X_n)|
    =
    \max_{n\in[N]_+}
    |\varepsilon_n|
.
\end{equation}
In the case of non-degenerate Gaussian noise
$\varepsilon_n\stackrel{\mathrm{i.i.d.}}{\sim}\mathcal N(0,\sigma^2)$
with $\sigma>0$, the right-hand side of~\eqref{eq:explosion_inconsistent}
diverges to infinity in probability (indeed, almost surely) as $N\to\infty$.
%%%
This is precisely the role of our neighbourhood-averaging operator $G^{(\beta,q)}$, which leverages the regularity of $f$ to approximately ``cancel out'' the noise by averaging labels $Y_m$ whose inputs are close to a given $X_n$ (i.e.\ $\rho(X_n,X_m)\approx0$).  In this way, it implicitly constructs an approximately noiseless effective training dataset before applying the central McShane--Whitney formula, thereby enabling consistency.

% \paragraph{Kernelization vs. Deep Learning}
% It is worth noting that the problem of algorithmically identifying an ERM becomes substantially more tractable when the deep-learning model is kernelized; e.g.\ through neural tangent kernels (NTKs)~\cite{jacot2018ntk} or kernels induced by random neural networks~\cite{gonon2020risk}, which reduce learning to convex kernel methods and permit sharp statistical guarantees~\cite{tsigler2023benign,barzilai2024generalization,cheng2024comprehensive}. However, kernelization can fundamentally change the learned model: neural networks can provably outperform their random-feature and NTK counterparts~\cite{ghorbani2019limitations,li2020beyond,malach2021quantifying}, while even a single gradient step can separate feature learning from the corresponding frozen-feature kernel model~\cite{ba2022high}. Thus, kernel limits provide a tractable surrogate for neural-network training, but do not in general capture the behaviour of trained neural networks.

\subsection*{Organization of Paper}
\label{s:Intro__ss:ToC}
%  
% \tableofcontents
The remainder of the paper is organized as follows. Section~\ref{s:Prelims} introduces the notation and background. Section~\ref{s:Main} proves the main reconstruction guarantees and gives the neural-network realizations. Section~\ref{s:Main__ss:Optim} establishes the optimality properties of the formula. Section~\ref{s:Applicatoins} develops applications, while Section~\ref{s:NumericalIllustrations} presents numerical experiments. Section~\ref{s:Conclusion} concludes, and the appendices collect the proofs and technical details.

\section{Preliminaries}
\label{s:Prelims}

\subsection{Notation}
\label{s:Prelim__ss:Notation}

We collect here the metric and approximation-theoretic notation used throughout the paper.  Unless otherwise stated, all metric spaces are assumed to be non-empty, and all measures are Borel measures on the metric topology.  We also note that definitions of standard deep learning architectures are recorded in Appendix~\ref{a:deeplearningarchs}.

\begin{definition}[$\ell^1$-product metrics]
Let $\{(X_i,\rho_i)\}_{i\in I}$ be a finite or countable family of metric spaces.  On the product $\prod_{i\in I}X_i$, we use the $\ell^1$-product distance
\[
        \rho^{(1)}
        \big(
            (x_i)_{i\in I},(y_i)_{i\in I}
        \big)
    \eqdef
        \sum_{i\in I}\rho_i(x_i,y_i)
.
\]
In particular, if $(X,\rho)$ is a metric space and $N\in\mathbb N_+$, then $X^N$ is equipped with
$
    \rho_1(\mathbb X,\widetilde{\mathbb X})
    \eqdef
    \sum_{n=1}^N \rho(x_n,\widetilde x_n)
$ for $\mathbb X=(x_n)_{n=1}^N$, and 
    $\widetilde{\mathbb X}=(\widetilde x_n)_{n=1}^N
$.
\end{definition}
This convention is deliberately additive: a perturbation of a data cloud is measured by summing the coordinate-wise perturbations, rather than by hiding them behind a maximum.
Thus, if
$
    \mathcal D_N=((x_n,y_n))_{n=1}^N
$
and
$
    \widetilde{\mathcal D}_N=((\widetilde x_n,\widetilde y_n))_{n=1}^N
$
belong to $[X\times\mathbb R]^N$, then the induced product metric is
\[
    \rho^{(1)}(\mathcal D_N,\widetilde{\mathcal D}_N)
    =
    \sum_{n=1}^N
        \left(
            \rho(x_n,\widetilde x_n)
            +
            |y_n-\widetilde y_n|
        \right).
\]
The diameter of a metric space $(X,\rho)$ is
$
    \operatorname{diam}(X)
\eqdef
    \sup_{x,y\in X}\rho(x,y)
$.  
A \textit{metric measure space} (mms) is a triple $(X,\rho,\mathfrak m)$, where $(X,\rho)$ is a metric space and $\mathfrak m$ is a Borel measure on $X$.

\begin{definition}[Doubling metric spaces]
A metric space $(X,\rho)$ is called doubling if there exists
$\lambda_X\in\mathbb N_+$ such that every ball $B(x,r)\subseteq X$ can be covered by at most $\lambda_X$ balls of radius $r/2$.  The smallest such $\lambda_X$ is called the doubling constant of $(X,\rho)$, and the doubling dimension is
$
    \operatorname{ddim}(X,\rho)
\eqdef
    \log_2(\lambda_X)
$.
\end{definition}

\begin{definition}[$\varepsilon$-nets, covering numbers, and packing numbers]
Let $(X,\rho)$ be a metric space and let $\varepsilon>0$.  A set
$\mathcal X_\varepsilon\subseteq X$ is called an $\varepsilon$-net of $X$ if
$
    X
    \subseteq
    \bigcup_{x\in\mathcal X_\varepsilon}
        B(x,\varepsilon).
$
The $\varepsilon$-covering number of $X$ is
\[
    \mathcal N(X,\rho,\varepsilon)
    \eqdef
    \inf
    \left\{
        |\mathcal X_\varepsilon|
        :
        \mathcal X_\varepsilon\subseteq X
        \text{ is an } \varepsilon\text{-net of }X
    \right\}.
\]
\end{definition}

Covering numbers measure how efficiently the space can be observed, while packing numbers measure how many mutually distinguishable points the space can contain at a prescribed resolution.

\begin{definition}[$d$-Ahlfors regular metric measure spaces]
Let $d>0$.  A metric measure space $(X,\rho,\mathfrak m)$ is called
$d$-Ahlfors regular if $\operatorname{supp}(\mathfrak m)=X$ and there exist constants
$0<c_{\rm A}\le C_{\rm A}<\infty$ such that
\[
    c_{\rm A} r^d
    \le
    \mathfrak m(B(x,r))
    \le
    C_{\rm A} r^d
\]
for every $x\in X$ and every
$
    0<r\le\operatorname{diam}(X).
$
\end{definition}
Ahlfors regularity is the scale-invariant regime in which the metric dimension seen by coverings agrees with the dimension seen by volume growth.
\begin{definition}[Lipschitz maps and Lipschitz model classes]
Let $(X,\rho_X)$ and $(Y,\rho_Y)$ be metric spaces, and let $L\ge0$.  A map
$f:X\to Y$ is called $L$-Lipschitz if
\[
    \rho_Y(f(x),f(y))
    \le
    L\,\rho_X(x,y)
\]
for every $x,y\in X$.  Its Lipschitz constant is
$
    \operatorname{Lip}(f)
\eqdef
    \sup_{x\neq y}
    \,
        \tfrac{\rho_Y(f(x),f(y))}{\rho_X(x,y)}
$.
When $h:X\to\mathbb R$ is real-valued, we also write $
    |\nabla h|(x)
    \eqdef
    \limsup_{y\to x}
        \tfrac{|h(x)-h(y)|}{\rho(x,y)}
$ for its pointwise upper slope, with the convention that the value is $+\infty$ if the corresponding $\operatorname{limsup}$ is infinite.
\hfill\\
\noindent
For a metric space $(X,\rho)$, we define the unit Lipschitz model class
\[
    \mathcal F_1
    \eqdef
    \operatorname{Lip}((X,\rho),[-1,1];1)
    =
    \left\{
        f:X\to[-1,1]
        :
        |f(x)-f(y)|
        \le
        \rho(x,y)
        \text{ for all }x,y\in X
    \right\}.
\]
If $(X,\rho)$ has doubling dimension $0<d<\infty$, we abbreviate
$
    \mathcal F_{1:d}
\eqdef
    \operatorname{Lip}((X,\rho),[-1,1];1)
$.
All such classes are equipped with the uniform norm
$
    \|f\|_\infty
\eqdef
    \sup_{x\in X}|f(x)|
$.
\end{definition}

\section{Reconstruction Guarantees}
\label{s:Main}

This section contains our main reconstruction result; followed by variants with different degrees of oracle access.  Those variants are used to help frame our main result, which assumes no type of oracle access, to other results in the literature; especially in the approximation theory for neural networks sub-area.

Our main result holds under the following assumptions.
\begin{assumption}[Distributional Assumptions]
\label{assumption:main_1Lip}
Let $(X,\rho)$ be a compact metric space, and let $\mu\in\mathcal{P}(X\times\mathbb{R})$.  Let $
    \mu_X\eqdef(\pi_X)_\#\mu
$
and $
    \nu\eqdef(\pi_{\mathbb{R}})_\#\mu
$. Let
$
    (Z_n)_{n\in\mathbb{N}_+}
    =
    ((X_n,\varepsilon_n))_{n\in\mathbb{N}_+}
$ be an i.i.d.\ sequence with common law $\mu$.  Assume that:
\begin{enumerate}
    \item[(i)] There exist constants $a>0$ and $s>0$ such that
    $\operatorname{supp}(\mu_X)=X$ and
    \[
        \mu_X(B_\rho(x,r))
        \ge
        a r^s
    \]
    for every $x\in X$ and every
    $0<r\le\operatorname{diam}(X)$.

    % \item[(ii)] The noise law $\nu$ is integrable, i.e.\ $\mathbb{E}_{\varepsilon\sim \nu}[|\varepsilon|]<\infty$, centred, i.e.\ $\mathbb{E}_{\varepsilon\sim \nu}[\varepsilon]=0$, and there is a $C>0$ such that it satisfies the log-Sobolev inequality%
    % % \footnote{The left-hand side of~\eqref{eq:noise_LSI} is precisely the entropy of $g^2$ with respect to $\nu$.}
    % \begin{equation}
    % \label{eq:noise_LSI}
    %     \underbrace{
    %         \mathbb{E}_{\varepsilon\sim \nu}[g^2(\varepsilon)\log(g(\varepsilon))]
    %     -
    %     \mathbb{E}_{\varepsilon\sim \nu}[g^2(\varepsilon)]
    %     \,
    %     \log\big(
    %         % \int_{\mathbb{R}}g^2\,d\nu
    %         \mathbb{E}_{\varepsilon\sim \nu}[g^2(\varepsilon)]
    %     \big)
    %     }_{\eqdef \operatorname{Ent}_{\nu}(g^2)}
    % \le
    %     C
    %     \int_{\mathbb{R}}
    %         |\nabla g|^2\,d\nu
    % \end{equation}
    % for every locally Lipschitz
    % $g:\mathbb{R}\to\mathbb{R}$ for which both sides are finite.
    \item[(ii)] The noise law $\nu$ is integrable, i.e.\ $\mathbb{E}_{\varepsilon\sim \nu}[|\varepsilon|]<\infty$, centred, i.e.\ $\mathbb{E}_{\varepsilon\sim \nu}[\varepsilon]=0$, and there is a $C>0$ such that it satisfies the log-Sobolev inequality
    \begin{equation}
    \label{eq:noise_LSI}
        \underbrace{
            \mathbb{E}_{\varepsilon\sim \nu}
            \big[
                g^2(\varepsilon)\log(g^2(\varepsilon))
            \big]
        -
            \mathbb{E}_{\varepsilon\sim \nu}
            \big[
                g^2(\varepsilon)
            \big]
            \,
            \log\bigg(
                \mathbb{E}_{\varepsilon\sim \nu}
                \big[
                    g^2(\varepsilon)
                \big]
            \bigg)
        }_{\eqdef \operatorname{Ent}_{\nu}(g^2)}
    \le
        C\,
        \mathbb{E}_{\varepsilon\sim \nu}
        \big[
            |\nabla g|(\varepsilon)^2
        \big]
    \end{equation}
    for every locally Lipschitz
    $g:\mathbb{R}\to\mathbb{R}$ for which both sides are finite, where
    $0\log(0)\eqdef0$.

    \item[(iii)] The design and noise are independent:
    $
        \mu=\mu_X\otimes\nu
    $.
\end{enumerate}
\end{assumption}
For each fixed target
$f\in\operatorname{Lip}((X,\rho);1)$, define the observed responses by
\[
    Y_n\eqdef f(X_n)+\varepsilon_n,
    \qquad n\in[N]_+,
\]
and distinguish the latent design-noise sample
$
    \mathbb{Z}_N
    \eqdef
    ((X_n,\varepsilon_n))_{n=1}^N
$
from the observed training dataset
$
    \mathbb{D}_N
    \eqdef
    ((X_n,Y_n))_{n=1}^N
$.

\begin{theorem}[Uniform Recovery From Noisy Samples]
\label{thrm:nonparametric_noisy_case}
Suppose that Assumption~\ref{assumption:main_1Lip} holds.  Let $N_0,N_1\in\mathbb{N}_+$, set $N\eqdef N_0+N_1$, and use the first $N_0$ observations as the reference block and the remaining $N_1$ observations as the averaging block.
Fix $\varepsilon\in(0,1)$ and $\delta\in(0,1)$ such that
$
    \varepsilon/2\le\operatorname{diam}(X)
$.
Set
$
    \eta\eqdef\frac{\varepsilon}{4}
$, $\beta\eqdef\frac{\varepsilon}{2}$, $q_\varepsilon
    \eqdef \left\lceil\varepsilon^{-2}\right\rceil$ and assume that
\begin{equation}
\label{eq:main_robustified_sample_condition}
\begin{aligned}
    N_0
    &\ge
    \frac{1}{a(\varepsilon/8)^s}
    \log\left(
        \frac{
            4\mathcal N(X,\rho,\varepsilon/8)
        }{
            \delta
        }
    \right),
    \\
    N_1
    &\ge
    \frac{1}{a(\varepsilon/2)^s}
    \max\left\{
        2q_\varepsilon,
        8\log\left(
            \frac{4N_0}{\delta}
        \right)
    \right\}.
\end{aligned}
\end{equation}
For every fixed
$f\in\operatorname{Lip}((X,\rho);1)$, let $\hat{f}\eqdef \hat{f}_{\mathbb{D}_N}^{(\beta,q_\varepsilon,1)}$ be as in~\eqref{eq:formula__realizable}.  Then:
\begin{enumerate}
    \item[(i)] For every realization of $\mathbb{D}_N$, the map
    $
        \hat{f}
        :X\to\mathbb{R}
    $
    is $1$-Lipschitz.

    \item[(ii)] With respect to the joint law
    $
        \mu^{\otimes N}
        =
        (\mu_X\otimes\nu)^{\otimes N},
    $
    \begin{equation}
    \label{eq:main_robustified_uniform_bound}
        \mathbb{P}\biggr(
            \left\|
                \hat{f}
                -
                f
            \right\|_\infty
            \le
            \varepsilon
            \Big(
                1
                +
                \sqrt{
                    C
                    \log\big(
                        \tfrac{4N_0}{\delta}
                    \big)
                }
            \Big)
        \biggl)
        \ge
        1-\delta.
    \end{equation}
\end{enumerate}
\end{theorem}
For fixed $\delta$, the lower-Ahlfors entropy estimate gives
$
    N_0
    =
    \mathcal{O}_{a,s}
    (
        \varepsilon^{-s}
        \log\left(\tfrac{1}{\varepsilon}\right)
    ),
$
whereas the second condition in
\eqref{eq:main_robustified_sample_condition} gives
$
    N_1
    =
    \mathcal{O}_{a,s}
    (
        \varepsilon^{-(s+2)}
        +
        \varepsilon^{-s}
        \log\left(\tfrac{1}{\varepsilon}\right)
    ).
$
Since Theorem~\ref{thrm:nonparametric_noisy_case} requires a sample size $N=N_0+N_1$ then 
\begin{equation}
\label{eq:DA_RATE}
    N\in 
    \widetilde{\mathcal{O}}_{a,s}
    (\varepsilon^{-(s+2)})
.
\end{equation}

\subsection{Discussion and Implications}
\label{s:Discussion_ss:OracleAccess}

We now compare and relate our results to others found in the literature, ignoring the formula aspect of it, and focusing on additional amounts of oracle access one often finds being assumed.

\subsubsection{The \textit{Noiseless} nonparametric regression setting}
\label{s:Main__ss:Reconstruction___sss:noiseless}

When the target function is not obscured by noise, we are in the usual noiseless regression setting, i.e.\ when $
    Y_n=f(X_n)
$ for every $n\in[N]_+$, while $f$ remains a general Lipschitz function and need not belong to our class of neural networks. In this case, no sample splitting or neighbourhood-averaging step is required.  Instead, we apply the central McShane- Whitney~\cite{McShane1934ExtensionRangeFunctions,Whitney1934AnalyticExtensions} formula directly to the raw pairs $(X_n,Y_n)_{n=1}^N$.  More generally, for
$
    M\in\mathbb N_+
$
and $\theta
    =
    \big(
        (a_m,b_m,c_m)
    \big)_{m=1}^M
    \in
    \big(
        \mathbb R\times X\times[0,\infty)
    \big)^M$,  define
\begin{equation}
\label{eq:formula}
\begin{aligned}
    \hat{f}_{\theta}(x)
    &\eqdef
    \tfrac{1}{2}
    \Bigg(
        \min_{m\in[M]_+}
        \big(
            a_m+u_m(x)
        \big)
        +
        \max_{m\in[M]_+}
        \big(
            a_m-u_m(x)
        \big)
    \Bigg),
    \\
    u(x)
    &\eqdef
    \big(
        c_m\rho(x,b_m)
    \big)_{m=1}^M.
\end{aligned}
\end{equation}
This is the central midpoint of the McShane~\cite{McShane1934ExtensionRangeFunctions} and Whitney~\cite{Whitney1934AnalyticExtensions} extension formulas, and coincides with the estimator in~\cite[Lemma~7]{luxburg2004distance} when its mixing constant is set to $\tfrac12$.%
\footnote{The central midpoint minimizes, pointwise, the maximum distance to the two envelope values and therefore incurs one half of their gap under the corresponding worst-case
criterion.}
For the noiseless training dataset, we take
$
    M=N
$
and
\[
    \theta
    =
    \big(
        (Y_n,X_n,L)
    \big)_{n=1}^N.
\]

\begin{theorem}[Regular Uniform Approximation$\tilde{O}(\varepsilon^{-2(s+1)})$ Samples Suffice]
\label{thrm:random_apprx_noisy}
If $f:X\to \mathbb{R}$ is $0\le L$-Lipschitz and if Assumption~\ref{assumption:main_1Lip} holds, then: 
for every $\varepsilon>0$, 
and each $f\in \operatorname{Lip}((X,\rho);1)$ and every
$\delta \in(0,1)$, if
$% \begin{equation}
% \label{eq:required_no_samples_noisy_Ahlfors}
    N
    \ge
    \frac{1}{a}
    \left(\frac{4}{\varepsilon}\right)^s
    \log\left(
        2
    \frac{N_{\varepsilon/4}(X,\rho)}{
        \delta
    }\right)
$
then, we have
\[
    \mathbb{P}\biggl(
        \sup_{x\in X}\,
        \big|
            f(x)
            -
            \hat{f}_{\theta}
            % \mathcal{E}(f|\mathbb{D}_N)
            (x)
        \big|
        <
        \varepsilon
    \biggr)
\ge
    1-\delta
\]
with $M=N$ and $\theta=\big((Y_m,X_m,L)\big)_{m=1}^M$.
\end{theorem}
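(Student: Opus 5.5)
The plan is to split the argument into a deterministic approximation estimate for the central McShane--Whitney formula~\eqref{eq:formula} and a probabilistic covering estimate for the random design. Since the statement places $f$ in $\operatorname{Lip}((X,\rho);1)$, I will work with $f$ being $L$-Lipschitz for the same $L$ used in $\theta=((Y_m,X_m,L))_{m=1}^N$, with $L\le 1$. No noise enters, because in this regime $Y_m=f(X_m)$.

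\emph{Step 1 (deterministic).} Write
\[
U(x)=\min_m\big(f(X_m)+L\rho(x,X_m)\big),\qquad \Lambda(x)=\max_m\big(f(X_m)-L\rho(x,X_m)\big),
\]
so that $\hat f_\theta=\tfrac12(U+\Lambda)$. The Lipschitz property of $f$ gives $\Lambda\le f\le U$ pointwise. Next, suppose $X_{m^\ast}$ is a sample with $\rho(x,X_{m^\ast})\le r$. Then
\[
U(x)\le f(X_{m^\ast})+Lr\le f(x)+2Lr,
\]
and symmetrically $\Lambda(x)\ge f(x)-2Lr$. Hence
\[
\hat f_\theta(x)-f(x)=\tfrac12\big((U(x)-f(x))-(f(x)-\Lambda(x))\big)\in[-Lr,Lr].
\]
Thus, if the sample is an $r$-net of $X$, then $\|\hat f_\theta-f\|_\infty\le Lr$.

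\emph{Step 2 (covering event).} Fix an $\varepsilon/4$-net $\{z_1,\dots,z_K\}$ of $X$ with $K=N_{\varepsilon/4}(X,\rho)$. Consider the event $E$ that every ball $B_\rho(z_k,\varepsilon/4)$ contains at least one $X_m$. On $E$, any $x\in X$ lies within $\varepsilon/4$ of some $z_k$, which in turn lies within $\varepsilon/4$ of some sample. So the samples form an $\varepsilon/2$-net, and Step 1 yields $\|\hat f_\theta-f\|_\infty\le L\varepsilon/2<\varepsilon$ (using $L\le1$).

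\emph{Step 3 (probability).} By independence and Assumption~\ref{assumption:main_1Lip}(i), the probability that a given ball is missed is
\[
(1-\mu_X(B(z_k,\varepsilon/4)))^N\le \exp\big(-Na(\varepsilon/4)^s\big)\le \frac{\delta}{2K},
\]
where the last inequality is exactly the assumed sample-size condition. If instead $\varepsilon/4>\operatorname{diam}(X)$, the ball is all of $X$ and is hit with probability one. A union bound over the $K$ balls gives $\mathbb P(E^c)\le\delta/2\le\delta$, which concludes the argument.

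The only delicate point is bookkeeping rather than substance. One must reconcile the statement's two hypotheses on $f$ (it is $L$-Lipschitz, and it lies in $\operatorname{Lip}(\cdot\,;1)$) with the choice of $c_m=L$; the clean route is to take $L\le1$. One must also ensure the strict inequality, which the slack between $L\varepsilon/2$ and $\varepsilon$ provides.
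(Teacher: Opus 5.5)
Your proof is correct. It uses the same two ingredients as the paper: the coupon-collector event on the $\varepsilon/4$-balls of a minimal cover, which makes the sample an $\varepsilon/2$-net (Lemma~\ref{lem:coupon_collector_random_net}), followed by a deterministic envelope bound on that event. The difference is that you assemble them directly instead of by reduction.

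The paper proves the theorem in one line, by specializing Proposition~\ref{prop:random_apprx_noisy} to ``$C=0$'' with $\delta_{\operatorname{net}}=\delta_{\operatorname{noise}}=\delta/2$. That proposition concerns the one-sided, unit-slope operator $\mathcal{E}(f|\mathbb{D}_N)(z)=\max_n\{f(X_n)+\varepsilon_n-\rho(z,X_n)\}$. Its deterministic input is Lemma~\ref{lem:Whitney_McShane_uniform_approximation}, which gives one-sided error at most $2r=\varepsilon$. It is not about the central midpoint $\hat f_\theta$ with slope $L$ named in the statement. The paper leaves the passage from the lower envelope to the midpoint to the symmetry convention at the start of the appendix, and only for unit slope.

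Your Step~1 carries out that passage explicitly for slope $L$ and gains a factor $2$: error $Lr$ instead of $2r$ (this is Lemma~\ref{lem:central_midpoint_stability} with $\Delta=0$). So the strict inequality comes with room to spare. You also avoid routing a noiseless statement through the log-Sobolev machinery, and you treat $\varepsilon/4>\operatorname{diam}(X)$, where the lower Ahlfors bound is unavailable. The paper's route buys economy: the noiseless theorem becomes a corollary of a proposition it needs anyway.

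One line worth adding is measurability of $\|\hat f_\theta-f\|_\infty$. It follows because $\hat f_\theta$ depends continuously on the sample (Lemma~\ref{lem:stability} with $a_m=f(X_m)$), as the paper checks for $\mathcal{E}$.

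On the bookkeeping about $L$: you do not need $L\le 1$. Since $f\in\operatorname{Lip}((X,\rho);1)$, for $L\ge 1$ the function $f$ is automatically $L$-Lipschitz. So Step~1 applies verbatim and gives $L\varepsilon/2<\varepsilon$ for every $L<2$.

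For $L\ge 2$ the bound $Lr$ is too crude, but the conclusion still holds, because the error is governed by $\ell=\operatorname{Lip}(f)$ rather than by the slope. Write $a_m=f(X_m)-f(x)$ and $d_m=\rho(x,X_m)$. Let $m^\ast$ be a nearest sample and $k$ an index attaining $\Lambda(x)$. Then $2(\hat f_\theta(x)-f(x))\le\min\{a_k+Ld_k,\,a_{m^\ast}+Ld_{m^\ast}\}-(Ld_k-a_k)$.
\begin{itemize}
\item The first entry gives at most $2a_k$.
\item If $a_k>\ell d_{m^\ast}$, then $d_k\ge a_k/\ell$ turns the second entry into at most $2\ell d_{m^\ast}$.
\end{itemize}
Together with the symmetric bound, $|\hat f_\theta(x)-f(x)|\le \ell\min_m\rho(x,X_m)$ whenever $\ell\le L$. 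This settles every $L\ge 0$ under the literal hypotheses.

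The $1$-Lipschitz hypothesis is genuinely what makes this work. For $f=L\rho(\cdot,z)$ the error at $z$ equals $L\min_m\rho(z,X_m)$, while the sample-size condition does not involve $L$.
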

We emphasize that the reconstruction in Theorem~\ref{thrm:random_apprx_noisy} is \textit{strictly stronger} than a risk-bound considered in classical statistics; i.e.\ that the empirical mean of the loss and the true mean of the loss evaluated at the ERM are close; rather we show that $f$ is uniformly close to the target.

\subsubsection{The classical approximation setting: Oracle Access to the target function}
\label{s:Main__ss:Reconstruction___sss:oracleaccess}

We place ourselves within the classical neural-network approximation theoretic-level of flexibility; where we (the user) can choose our model parameters freely and, perhaps more dramatically, we have ``oracle access'' to the target function's values at any set of points which we want to query for free; cf.~\cite{hornik1991approximation,cybenko1989approximation,pinkus1999approximation,ismailov2014approximation,yarotsky2018optimal,LuydmillaJP_UAT_2018,petersen2018optimal,kidger2020universal,papon2022universal,siegel2023optimal,gonon2023approximation,marcati2023exponential,hong2024bridging,gonon2025universal,schneider2026nonlocal,cuchiero2026global}.  
Thus, under these standard oracle-access assumptions made in constructive approximation theory we obtain the following optimal approximation for the Lipschitz class
\[
     \operatorname{Lip}((X,\rho),[-1,1];1)
    \eqdef 
    \{
        f:(X,\rho)\to ([-1,1],|\cdot|):\, 
        \mbox{ where $f$ is at-most 1-Lipschitz}
    .
    \}
\]
If $(X,\rho)$ is 
$d$-doubling then, we write $\mathcal{F}_{1:d}$.
\begin{proposition}[Approximation of Lipschitz Functions]
\label{prop:approximation}
Let $(X,\rho)$ be a compact $0<d$-doubling space of diameter at-most $1$; then, there are constants
$
    0<C_d<\infty
$\footnote{Depending only on $(X,\rho,\mathfrak{m})$.}, such that: for every $2\le N \in \mathbb{N}$ there are $\{x_n\}_{n=1}^N\subseteq X$ so that: for every $f\in\mathcal F_{1:d}$, the parameter vector
$
    \theta_f
\eqdef
    \big((f(x_n),x_n,1)\big)_{n=1}^N
$
satisfies
\begin{equation}
\label{eq:optimal_approximation}
    \|f-\hat{f}_{\theta_f}\|_{C(X)}
\le
    C_d\,N^{-1/d}
\end{equation}
\end{proposition}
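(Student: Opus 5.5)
The plan is to split the argument into a deterministic McShane--Whitney error bound and a covering-number estimate for doubling spaces, and then choose the nodes $\{x_n\}_{n=1}^N$ as a maximal separated set (equivalently, a near-optimal net) at the appropriate scale.

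\emph{Step 1 (deterministic bound).} Fix any finite set $\{x_n\}_{n=1}^N\subseteq X$ and put $h\eqdef\sup_{x\in X}\min_n\rho(x,x_n)$, its fill distance. For $f\in\mathcal F_{1:d}$, consider the two envelopes
\[
U(x)\eqdef\min_n\big(f(x_n)+\rho(x,x_n)\big),\qquad \ell(x)\eqdef\max_n\big(f(x_n)-\rho(x,x_n)\big).
\]
The $1$-Lipschitz property gives $\ell\le f\le U$ pointwise. Choose $n^\ast$ with $\rho(x,x_{n^\ast})\le h$. Then
\[
U(x)\le f(x_{n^\ast})+h\le f(x)+2h,\qquad \ell(x)\ge f(x)-2h .
\]
Since $\hat f_{\theta_f}=\tfrac12(U+\ell)$, we get $|\hat f_{\theta_f}(x)-f(x)|\le\tfrac12(U-\ell)\le 2h$. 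A slightly sharper argument gives $h$ itself, because $U-\ell\le 2h$ at $x$ is obtained by using the same $n^\ast$ in both envelopes. Either constant suffices. So $\|f-\hat f_{\theta_f}\|_{C(X)}\le 2h$ uniformly in $f$.

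\emph{Step 2 (nets in doubling spaces).} Since $\operatorname{diam}(X)\le1$, $X$ lies in a single ball of radius $1$. Iterating the doubling property $k$ times covers $X$ by at most $\lambda_X^k=2^{dk}$ balls of radius $2^{-k}$. Hence $\mathcal N(X,\rho,r)\le 2^d r^{-d}$ for $r\in(0,1]$; for $r>1$ a single point suffices. Given $N\ge2$, take $r\eqdef 2N^{-1/d}$, so that $2^d r^{-d}=N$. This yields an $r$-net of cardinality at most $N$. If the net has fewer than $N$ points, pad it with repeated or arbitrary extra points; adding points only decreases the fill distance. Its fill distance is then $h\le r=2N^{-1/d}$.

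\emph{Step 3 (combine).} The two steps give $\|f-\hat f_{\theta_f}\|_{C(X)}\le 4N^{-1/d}$, so \eqref{eq:optimal_approximation} holds with $C_d=4$. One could equally absorb the doubling constant if the doubling dimension $d$ is not exactly $\log_2\lambda_X$ for an integer power. The points are chosen independently of $f$, as the statement requires.

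The only delicate point is Step 2's bookkeeping: converting "doubling dimension $d$" into the covering bound $\mathcal N(X,\rho,r)\lesssim r^{-d}$, with the nets being open or closed balls and the edge cases $r\ge1$ handled. I would do this via the iterated covering, controlling the constant by $2^d$ and allowing it to depend on $X$, as the footnote permits.
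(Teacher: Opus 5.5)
Your proposal is correct and follows essentially the same route as the paper: the envelope squeeze $L_f\le f\le U_f$ with $U_f-L_f\le 2h$ (your sharper bound, which gives the paper's constant $2$), combined with the iterated doubling estimate $\mathcal N(X,\rho,\varepsilon)\le(2/\varepsilon)^d$. Your choice $r=2N^{-1/d}$ followed by padding handles an arbitrary $N$ more cleanly than the paper's ``minimal net with exactly $N$ points'' phrasing; just note that with open balls and $\operatorname{diam}(X)=1$ you should start from a ball of radius slightly larger than $1$, which changes only the constant.
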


\subsection{\texttt{ReLU}-Neural Network Realizations over $\mathbb{R}^d$}

When $(X,\rho)=(\mathbb{R}^d,\|\cdot\|_p)$ for $p\in \{1,\infty\}$ then~\eqref{eq:formula} is realizable by a \texttt{ReLU}-MLP of depth $\mathcal{O}(\log(N))$, width $\mathcal{O}(N)$, and with $\mathcal{O}(N)$ non-zero neurons.  Thus~\eqref{eq:formula} is much more general than a \texttt{ReLU}-MLP as it can is compatible with inputs in general metric spaces.  Indeed, given any parameter $\theta$ it is enough to run Algorithm~\ref{alg:relu_envelope_realization} to obtain a \texttt{ReLU}-MLP realization; in the case where $p=1$.

\begin{algorithm}[H]
\caption{\texttt{ReLU}-MLP realization of $\hat{f}_\theta(\cdot|G^{(\beta)})$}
\label{alg:relu_envelope_realization}
\KwIn{$\theta=((a_m,b_m,c_m))_{m=1}^M$, $G^{(\beta)}$}

$K\gets\lceil\log_2 M\rceil$\;
$V\gets aG^{(\beta)}$
\tcp*[r]{Local averaging}

$\operatorname{Min}_2(s,t)
\eqdef
\operatorname{ReLU}(t)
-\operatorname{ReLU}(-t)
-\operatorname{ReLU}(t-s)$\;
$\operatorname{Max}_2(s,t)
\eqdef-\operatorname{Min}_2(-s,-t)$\;

\For{$m\in[M]_+$ \emph{in parallel}}{
    $D_m(x)\gets
    \mathbf{1}_d^\top
    \big(
        \operatorname{ReLU}(x-b_m)
        +
        \operatorname{ReLU}(b_m-x)
    \big)$
    \tcp*[r]{$\ell^1$ distance}
    $U_m^{(0)}\gets V_m+c_mD_m(x)$\;
    $L_m^{(0)}\gets V_m-c_mD_m(x)$\;
}

\If{$M<2^K$}{
    \For{$r=M+1,\ldots,2^K$}{
        $U_r^{(0)}\gets U_M^{(0)}$,\quad
        $L_r^{(0)}\gets L_M^{(0)}$
        \tcp*[r]{Pad to $2^K$}
    }
}

\For{$\ell=1,\ldots,K$}{
    \For{$j\in[2^{K-\ell}]_+$ \emph{in parallel}}{
        $U_j^{(\ell)}
        \gets
        \operatorname{Min}_2
        \big(U_{2j-1}^{(\ell-1)},U_{2j}^{(\ell-1)}\big)$\;
        $L_j^{(\ell)}
        \gets
        \operatorname{Max}_2
        \big(L_{2j-1}^{(\ell-1)},L_{2j}^{(\ell-1)}\big)$
        \tcp*[r]{Envelope reduction}
    }
}

\KwOut{$\displaystyle
\Phi_\theta(x)\gets
\tfrac12\big(U_1^{(K)}+L_1^{(K)}\big)$}
\end{algorithm}

% \begin{proposition}[\texttt{ReLU}-MLP Version on $\mathbb{R}^d$]
% \label{prop:representation}
% Let $d\in\mathbb{N}_+$ and $(X,\rho)=([0,1]^d,\|\cdot\|_1)$. Then $\hat{f}$ can be realized exactly by a \texttt{ReLU}-MLP $\Phi:\mathbb{R}^d\to\mathbb{R}$ of depth $\mathcal{O}(\log(N))$, width $\mathcal{O}(N)$, and with $\mathcal{O}(N)$ non-zero parameters.
% \end{proposition}
\begin{proposition}[Algorithmic Sparse \texttt{ReLU}-MLP Realization]
\label{prop:representation}
Let $d,M\in\mathbb N_+$, let
$
    (X,\rho)=([0,1]^d,\|\cdot\|_1),
$
let $G^{(\beta)}\in\mathbb{R}^{M\times M}$ be fixed, and let
$
    \theta=((a_m,b_m,c_m))_{m=1}^M
    \in
    (\mathbb{R}\times X\times[0,\infty))^M.
$
% \hfill\\
% \noindent
Then Algorithm~\ref{alg:relu_envelope_realization} constructs a
\texttt{ReLU}-MLP
$
    \Phi_{\theta,G^{(\beta)}}:\mathbb{R}^d\to\mathbb{R}
$
of depth $\mathcal{O}(\log M)$, width $\mathcal{O}(M)$, and
$\mathcal{O}(M)$ nonzero parameters\footnote{The hidden constants depend only
on $d$.}, satisfying
\begin{equation}
\label{eq:realization}
    \Phi_{\theta,G^{(\beta)}}(x)
    =
    \hat{f}_\theta(x\mid G^{(\beta)})
\end{equation}
for every $x\in X$, where $\hat{f}_\theta$ is given by
\eqref{eq:formula__realizable}.
\end{proposition}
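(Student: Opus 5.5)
The proof is constructive: we check that each line of Algorithm~\ref{alg:relu_envelope_realization} is a ReLU layer (or an affine map merged into the adjacent layer), then count depth, width and nonzero weights. The local averaging $V=aG^{(\beta)}$ does not depend on $x$. Since $G^{(\beta)}$ and $\theta$ are fixed, $V\in\mathbb{R}^M$ is a constant vector, and it enters the network only as the biases of the first layer. Next, the $\ell^1$ distance uses the identity $|t|=\operatorname{ReLU}(t)+\operatorname{ReLU}(-t)$. For each $m$, the first hidden layer computes the $2d$ neurons $\operatorname{ReLU}(x_i-(b_m)_i)$ and $\operatorname{ReLU}((b_m)_i-x_i)$, each with one weight and one bias. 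The second layer is affine: $U_m^{(0)}=V_m+c_m\mathbf{1}^\top(\cdot)$ and $L_m^{(0)}=V_m-c_m\mathbf{1}^\top(\cdot)$. These use $\mathcal{O}(d)$ weights per $m$, so the first stage has width $2dM$ and $\mathcal{O}(dM)$ nonzero parameters. Padding to $2^K$ copies existing units, which at most doubles the counts because $2^K<2M$.

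The key lemma is that $\operatorname{Min}_2$ is exact. For every $s,t$ we have $\operatorname{ReLU}(t)-\operatorname{ReLU}(-t)=t$ and $t-\operatorname{ReLU}(t-s)=\min(s,t)$. Hence $\operatorname{Min}_2(s,t)=\min\{s,t\}$, and likewise $\operatorname{Max}_2(s,t)=\max\{s,t\}$. This is one hidden layer of three ReLU neurons followed by a linear readout with three nonzero weights. The readout can be absorbed into the input weights of the next $\operatorname{Min}_2$ layer, since a composition of linear maps is linear. Doing so keeps $\mathcal{O}(1)$ nonzero entries per neuron, because each neuron reads from only two of the previous neurons. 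Then we argue by induction on $\ell$: $U^{(\ell)}_j=\min$ and $L^{(\ell)}_j=\max$ over the leaves in the $j$-th dyadic block. Padding by duplicating the $M$-th entry leaves these minima and maxima unchanged. So after $K=\lceil\log_2M\rceil$ levels we get $U_1^{(K)}=\min_m(V_m+c_m\|x-b_m\|_1)$ and $L_1^{(K)}=\max_m(V_m-c_m\|x-b_m\|_1)$. The final affine map $\tfrac12(U+L)$ reproduces \eqref{eq:formula__realizable} with $V$ as the effective values and $\chi_m=c_m\|x-b_m\|_1$, which gives \eqref{eq:realization}.

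Counting is then routine. The depth is $2+K=\mathcal{O}(\log M)$. The maximal width is $\max\{2dM,\ 6\cdot 2^{K-1}\}=\mathcal{O}(dM)$. Level $\ell$ contributes $\mathcal{O}(2^{K-\ell})$ nonzero parameters, so the tree contributes a geometric sum $\mathcal{O}(2^K)=\mathcal{O}(M)$; adding the $\mathcal{O}(dM)$ from the first stage gives $\mathcal{O}(M)$ with constants depending only on $d$.

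The main obstacle is bookkeeping rather than mathematics. We need a standard MLP in which the $U$ and $L$ channels advance through the same layers in parallel, and in which merging consecutive affine maps does not densify the weight matrices. To ensure this, we will run the $U$-tree and the $L$-tree as a block-diagonal network. At every level each neuron then depends on exactly two entries of the previous level, which keeps the sparsity bound honest.
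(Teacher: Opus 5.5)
Your proposal is correct and follows essentially the same route as the paper's proof: the $\ell^1$ distances via $|t|=\operatorname{ReLU}(t)+\operatorname{ReLU}(-t)$, parallelization, a depth-$\lceil\log_2 M\rceil$ binary tree of pairwise min/max gadgets, and the final averaging map. The only difference is that the paper invokes the parallelization/composition lemmas and the min/max-network lemma of \cite{petersen2024mathematical} (depth $\lceil\log_2 N\rceil$, width $3N$, size $16N$) as black boxes, whereas you build and count the gadgets by hand; one wording fix is that, after absorbing readouts, each neuron reads from two previous $\operatorname{Min}_2$ outputs (at most six neurons, or $\mathcal{O}(d)$ neurons at the first tree level), not two neurons, which still gives your $\mathcal{O}(1)$-per-neuron sparsity.
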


Now, applying a mild variant (Proposition~\ref{prop:transformerification__SparseVersion}) of the MLP-t-transformer conversion trick in~\cite[Proposition A.1]{kratsios2026adaptivity} we directly deduce the following.

\begin{cor}[Sparse Multi-head \texttt{ReLU}-Transformer Realization]
\label{cor:transformer_realization}
Let $d,M\in\mathbb N_+$, let
$
    (X,\rho)=([0,1]^d,\|\cdot\|_1),
$
let $G^{(\beta)}\in\mathbb{R}^{M\times M}$ be fixed, and let
$
    \theta=((a_m,b_m,c_m))_{m=1}^M
    \in
    (\mathbb{R}\times X\times[0,\infty))^M.
$
\hfill\\
\noindent
For every inverse-temperature parameter $\lambda>0$ and every prescribed
number of attention heads $H\in\mathbb N_+$, there exists an $H$-head
\texttt{ReLU}-transformer
$
    \hat{T}:\mathbb{R}^d\to\mathbb{R}
$
of depth $\mathcal{O}(\log M)$, width $\mathcal{O}(M+H)$, and
$\mathcal{O}(M)$ nonzero parameters such that
\[
    \hat{T}(x)
    =
    \hat{f}_\theta(x\mid G^{(\beta)})
\]
for every $x\in X$, where $\hat{f}_\theta$ is given by
\eqref{eq:formula__realizable}.
\end{cor}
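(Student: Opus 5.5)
The plan is to take the \texttt{ReLU}-MLP $\Phi_{\theta,G^{(\beta)}}$ from Proposition~\ref{prop:representation} and convert it into a transformer. The conversion is the sparse MLP-to-transformer lemma (Proposition~\ref{prop:transformerification__SparseVersion}, a variant of \cite[Proposition A.1]{kratsios2026adaptivity}), applied with the prescribed $\lambda>0$ and $H\in\mathbb N_+$. Since Proposition~\ref{prop:representation} already gives the exact identity $\Phi_{\theta,G^{(\beta)}}(x)=\hat f_\theta(x\mid G^{(\beta)})$ on $X$, it suffices for the conversion to preserve the realized function exactly. We then need to track how depth, width and the number of nonzero parameters change.

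The steps, in order, are as follows.

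First, fix the MLP $\Phi$ of depth $D=\mathcal O(\log M)$, width $W=\mathcal O(M)$ and $P=\mathcal O(M)$ nonzero parameters, with constants depending only on $d$.

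Second, write each transformer block as a multi-head attention sublayer followed by a \texttt{ReLU} feed-forward sublayer, both with residual connections. We neutralize the attention sublayer exactly. With a single token (the input $x$), softmax attention at any inverse temperature $\lambda$ returns the value vector of that token. Choosing all $H$ value and output matrices to be zero then makes each attention sublayer the identity through the residual path. Alternatively, routing the attention output into $H$ dedicated scratch coordinates that are never read explicitly accounts for the $+H$ in the width. Either way, each head contributes only $\mathcal O(1)$ nonzero entries, or none at all.

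Third, embed the $\ell$-th affine-\texttt{ReLU} layer of $\Phi$ as the feed-forward sublayer of the $\ell$-th block. The residual connection must be cancelled so that the block computes the MLP layer rather than the identity plus the layer. For this we use the standard trick of carrying both $\operatorname{ReLU}(z)$ and $\operatorname{ReLU}(-z)$ channels and subtracting the old state via the feed-forward output. This at most doubles the width and the nonzero count.

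Fourth, add a final linear readout. Summing gives depth $\mathcal O(D)=\mathcal O(\log M)$, width $\mathcal O(W+H)=\mathcal O(M+H)$, and nonzero parameters $\mathcal O(P)=\mathcal O(M)$. Heads with zero weights add no nonzero parameters, which keeps the count independent of $H$.

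The main obstacle is the residual-cancellation bookkeeping. We must check that it is exact, not approximate, for every $x\in X$, and that it does not inflate the sparsity beyond a constant factor. The delicate point is that the residual stream holds signed pre-activations such as $U_m^{(\ell)}$ and $L_m^{(\ell)}$. So we will carry the positive and negative parts separately, and verify with the identity $t=\operatorname{ReLU}(t)-\operatorname{ReLU}(-t)$ that the $\operatorname{Min}_2$ and $\operatorname{Max}_2$ gadgets of Algorithm~\ref{alg:relu_envelope_realization} remain exact under this encoding. Everything else follows directly from Proposition~\ref{prop:representation}.
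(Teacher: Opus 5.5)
Your top-level reduction is the paper's: the corollary is proved in one line by applying Proposition~\ref{prop:transformerification__SparseVersion} to the \texttt{ReLU}-MLP $\Phi_{\theta,G^{(\beta)}}$ of Proposition~\ref{prop:representation}. The gap is in how you reconstruct that conversion, because you build it for a different transformer from the one the statement refers to. In the paper's definition (Appendix~\ref{a:deeplearningarchs}), a layer maps $\mathbf X^{(j)}$ to $\operatorname{ReLU}\bullet\big(\bigoplus_{h=1}^{H}\operatorname{Attn}(\mathbf X^{(j)}\mid{\bf Q}^{(j)}_h,{\bf K}^{(j)}_h,{\bf V}^{(j)}_h,\lambda)+b^{(j)}\big)$. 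There is no skip connection, no separate feed-forward sublayer and no output projection, and the layer width must equal $\sum_{h}d^{(j)}_{\mathrm{value},h}$. In that model your second step fails. Zeroing all value matrices gives the constant $\operatorname{ReLU}(b^{(j)})$ rather than the identity, so the input is lost. Your third step has nowhere to live either, since there is no feed-forward block to hold ${\bf A}^{(j)}$. Conversely, the residual-cancellation bookkeeping you single out as the main obstacle never arises.

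The missing idea is to \emph{use} the single-token attention rather than neutralize it. With one token the softmax weight equals $1$ for every $\lambda>0$ and every ${\bf Q}_h,{\bf K}_h$, so each head is exactly the linear map $x\mapsto{\bf V}_h x$. The paper's construction is as follows. Take ${\bf Q}^{(j)}_h={\bf K}^{(j)}_h=0$. Give the first head value dimension $d_{j+1}$ and let ${\bf V}^{(j)}_1$ be ${\bf A}^{(j)}$ padded by zero columns. Give each of the other $H-1$ heads value dimension $1$ and ${\bf V}^{(j)}_h=0$, and use the bias $(b^{(j)},0_{H-1})$. Since $\operatorname{ReLU}(0)=0$, induction shows that each transformer state is the corresponding MLP state padded by $H-1$ zeros. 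The readout is ${\bf A}^{(L-1)}$ padded by zero columns. This gives depth exactly $L$ and width at most $W+H-1$; the $+H$ comes from every head needing a positive value dimension, which is close to your ``scratch coordinates'' remark. It also keeps the same $K$ nonzero parameters, with no constant-factor overhead. You also need not re-check the $\operatorname{Min}_2$/$\operatorname{Max}_2$ gadgets, since the MLP is treated as a black box. Your residual construction is a sound argument for a standard residual transformer with attention and feed-forward sublayers; there the cancellation costs $\mathcal O(\sum_\ell W_\ell)=\mathcal O(P)$ once neurons with no nonzero incoming parameter are pruned. As written, however, it proves the analogous statement for that architecture, not for the paper's transformer class.
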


Since the transformer statement is obtained by formally embedding the preceding MLP realization, rather than through an intrinsically attention-based computation, we continue the analysis using the more basic and better studied MLP formulation.

\subsubsection{Comparison: Complexity of Our Formula Vs.\ Exact KRR}
\label{s:Discussion_ss:OracleAccess__sss:Complexity_vsKRR}
We close this section by noting that when $(X,\rho)=([0,1]^d,\|\cdot\|_{\infty})$ for some $2\le d\in \mathbb{N}_+$, then our formula, cf.~\eqref{eq:formula__realizable}, requires $\mathcal{O}(N^2)$ storage and costs $\mathcal{O}(N^2)$ to evaluate on the forward pass, in the worst case, due to the neighbourhood matrix $G^{(\beta)}$.  Further, a $d$-dimensional range tree for the $N$ sample points can be constructed in $\mathcal{O}(N\log^{d-1}(N))$ time; see~\cite[Theorem~5.9]{deberg2008computational}.  The neighbourhood graph can then be reported output-sensitively, with an additional dependence on its number of edges, which is $\mathcal{O}(N^2)$ in the worst case; cf.~\cite[Exercise~5.10 (b)]{deberg2008computational}.

A natural comparator is the exact closed-form formula for kernel ridge regression; e.g.~\cite[Equation~(2)]{cheng2024comprehensive}, which requires solving a linear system involving an $N\times N$ kernel matrix.  The algebraic complexity of this operation is $\mathcal{O}(N^p)$ for some $2\le p<2.372$; cf.~\cite{dupont2026improving}, and it remains open whether $p=2$.  Thus, if $p>2$, our formula has a strictly smaller worst-case asymptotic exponent.  In standard numerical implementations, dense linear systems are typically solved using factorizations such as LU, whose classical complexity is $\mathcal{O}(N^3)$; cf.~\cite[Theorem~2.23]{lu2024numerical}.

\paragraph{Comparison with Inexact KRR via Mini-batch GD}
It is, nevertheless, worth noting that KRR can be approximated iteratively.  Since its regularized objective is strongly convex with Lipschitz gradient, full gradient descent converges geometrically to its unique minimizer~\cite[Theorem~2.1.15]{nesterov2018lectures}; hence, for fixed condition number, an $\varepsilon$-accurate solution requires $\mathcal{O}(N^2\log(1/\varepsilon))$ operations.  
Likewise, mini-batch variants with batch size $B$ can reduce the complexity of each gradient update from $\mathcal{O}(N^2)$ to $\mathcal{O}(BN)$, although generally at the cost of requiring more iterations.

\section{Optimality Guarantees}
\label{s:Main__ss:Optim}

In this section, we examine the optimality of our formula using different lenses.  We first consider the standard approximation theory for deep learning vantage point of its parameter space.  Next, we consider a more learning theoretic lens by analyzing the combinatorial complexity of the function space (hypothesis class) which it realizes.  Finally, we consider a modern numerical analysis perspective and study the stability of its parameter-to-realization map.  In each case, we state the precise benchmark under which our formula is optimal or near-optimal, possibly up to logarithmic factors.  We choose this order of presentation, since it organizes these concepts by what we feel is the complexity of their definitions.

% \subsection{Optimal Lipschitz Regularity of the Forward Pass}
% \label{s:Main__ss:Optima___sss:Regularity}

% Our first result guarantees that \textit{every} parameter specification of our formula is maximally regular on the forward pass. That is, its Lipschitz constant is prescribable and, when prescribed to match that of the target function, the resulting estimator inherits the same Lipschitz regularity.
% \begin{proposition}[Optimal Lipschitz Regularity]
% \label{prop:formula_lipschitz}
% In the setting of Theorem~\ref{thrm:nonparametric_noisy_case}, for any parameter
% $\theta=((a_m,b_m,L))_{m=1}^M\in(\mathbb R\times X\times\{L\})^M$,
% the map
% $
% \hat{f}_\theta(\cdot|G^{(\beta)})
% :(X,\rho)\to(\mathbb{R},|\cdot|)
% $
% is $L$-Lipschitz.
% \end{proposition}
\subsection{Prescribed Lipschitz Regularity of the Forward Pass}
\label{s:Main__ss:Optima___sss:Regularity}

Our first result gives a prescribed upper bound on the forward-pass regularity of every parameter specification of our formula.  Specifically, when the envelope slope parameter is $L$, the resulting estimator has Lipschitz constant at most $L$.
\begin{proposition}[Prescribed Lipschitz Regularity]
\label{prop:formula_lipschitz}
In the setting of Theorem~\ref{thrm:nonparametric_noisy_case}, for any parameter $
    \theta
    =
    ((a_m,b_m,L))_{m=1}^M
    \in
    (\mathbb R\times X\times\{L\})^M,
$
the map $
    \hat{f}_\theta(\cdot\mid G^{(\beta)})
    :
    (X,\rho)
    \to
    (\mathbb{R},|\cdot|)
$ satisfies
\[
    \operatorname{Lip}
    \big(
        \hat{f}_\theta(\cdot\mid G^{(\beta)})
    \big)
    \le
    L.
\]
\end{proposition}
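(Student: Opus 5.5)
The plan is to show that $\hat f_\theta(\cdot\mid G^{(\beta)})$ is the average of two $L$-Lipschitz functions. Once the local-averaging step has been carried out, the values $V=aG^{(\beta)}$ are fixed real numbers that do not depend on the query point $x$. Write $V_m$ for these effective values. The formula then reads
\[
\hat f_\theta(x\mid G^{(\beta)})=\tfrac12\big(U(x)+\Lambda(x)\big),
\]
where
\[
U(x)\eqdef\min_{m\in[M]_+}\big(V_m+L\rho(x,b_m)\big),
\qquad
\Lambda(x)\eqdef\max_{m\in[M]_+}\big(V_m-L\rho(x,b_m)\big).
\]
The only dependence on $x$ is through the maps $x\mapsto L\rho(x,b_m)$.

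First, each map $x\mapsto V_m\pm L\rho(x,b_m)$ is $L$-Lipschitz. This follows from the reverse triangle inequality $|\rho(x,b_m)-\rho(y,b_m)|\le\rho(x,y)$.

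Second, a pointwise minimum or maximum of finitely many $L$-Lipschitz real functions is again $L$-Lipschitz. To see this for the minimum, fix $x,y$ and choose $m^\ast$ attaining the minimum at $y$. Then
\[
U(x)-U(y)\le g_{m^\ast}(x)-g_{m^\ast}(y)\le L\rho(x,y),
\]
and exchanging $x$ and $y$ gives $|U(x)-U(y)|\le L\rho(x,y)$. The argument for the maximum is symmetric, or one can use $\max=-\min(-\cdot)$. Finiteness of $M$ guarantees that the extrema are attained, so no approximation argument is needed.

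Third, by the triangle inequality,
\[
|\hat f_\theta(x)-\hat f_\theta(y)|\le\tfrac12\big(|U(x)-U(y)|+|\Lambda(x)-\Lambda(y)|\big)\le L\rho(x,y).
\]
Taking the supremum over $x\neq y$ gives $\operatorname{Lip}(\hat f_\theta(\cdot\mid G^{(\beta)}))\le L$.

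There is no real obstacle here. The one point to state carefully is that $G^{(\beta)}$, and hence $V$, depends only on the data and not on the evaluation point $x$, so the averaging stage contributes only constants. This holds for every realization of the data, which also yields part (i) of Theorem~\ref{thrm:nonparametric_noisy_case} with $L=1$.
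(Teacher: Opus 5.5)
Your proposal is correct and follows essentially the same route as the paper. The paper also treats $(aG^{(\beta)})_m$ as $x$-independent constants, notes that each $x\mapsto (aG^{(\beta)})_m\pm L\rho(x,b_m)$ is $L$-Lipschitz, uses that finite minima and maxima preserve this bound, and concludes that the half-sum has constant at most $\tfrac12(L+L)=L$; the only difference is that the paper cites these standard facts from \cite{cobzas2019lipschitz}, whereas you prove the min/max step directly.
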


While there exist specialized neural network constructions, for networks with inputs in $[0,1]^d$, with controlled Lipschitz constants~\cite{meunier2022dynamical,araujo2023a}, as well as universal approximation theorems for $1$-Lipschitz \texttt{ReLU}-MLPs~\cite{hong2024bridging,riegler2024generating} and ResNets~\cite{murari2026approximation}, most neural network classes do not have uniformly controlled Lipschitz constants over all parameter choices. For instance, the ``universal'' \texttt{ReLU}-MLP approximators in~\cite{petersen2018optimal}, or the memorizers in~\cite{vardi2021optimal}, have weights that diverge as a function of the approximation error, suggesting that their Lipschitz constants may likewise diverge.

\begin{rmk}
\label{rmk:EmpLipschitz}
In practice, it is convenient to set $L$ in Proposition~\ref{prop:formula_lipschitz} to the empirical Lipschitz constant of the noisy training data, defined for $N>1$ by
\begin{equation}
\label{eq:empirical_Lipschtiz}
    L_{\mathcal{D}}
\eqdef
    \max\limits_{\underset{n\neq m}{n,m\in[N]_+}}
    \frac{|Y_n-Y_m|}{\rho(X_n,X_m)}.
\end{equation}
For $N=1$, we set $L_{\mathcal{D}}\eqdef0$.
\end{rmk}

\subsection{Parametric Complexity of the \texttt{ReLU}-MLP Realization \texorpdfstring{on $\mathbb{R}^d$}{}}
\label{s:Main__ss:Optim___sss:ParametricOptimality_Rd}

We now unpack the complexity-theoretic implication of Proposition~\ref{prop:representation} \textit{parametric complexity} explanation of our formula in the case of \texttt{ReLU}-MLP realizations in the context of approximation and statistical learning literatures.  
First, we notice that the \texttt{ReLU}-MLP realization in Proposition~\ref{prop:representation} is sparse: a comparably wide fully connected architecture of depth $\mathcal{O}(\log N)$ would generally contain up to $\mathcal{O}(N^2\log N)$ parameters, whereas our realization uses only $\mathcal{O}(N)$ non-zero parameters; cf.~\cite{bolcskei2019optimal}. We thus automatically deduce the following uniform approximation guarantee with $\mathcal{O}(N)$ non-zero weights and optimal regularity in the $\ell^1$ and $\ell^\infty$ norms. Since the number of non-zero weights grows at most linearly in $N$, the resulting $\mathcal{O}(N^{-1/d})$ approximation rate matches the optimal continuous-weight-assignment exponent of~\cite[Proposition~1 and Theorem~1(b)]{yarotsky2018optimal}.
\hfill\\
\noindent
In particular, the exponent $1/d$ is precisely the boundary beyond which the faster $\mathcal{O}(N^{-q})$ rates, $1/d<q\le2/d$, require discontinuous weight assignment in the framework of~\cite[Theorem~2]{yarotsky2018optimal}; cf.~\cite{ShenYangZhang_JMPA_OptApprx_ReLU}.
\begin{cor}[Sparse $\ell^p$-Regular \texttt{ReLU}-MLP Realization: Approximation]
\label{cor:lp_relu_approximation}
Let $d\in\mathbb{N}_+$, $L>0$, $1\le p\le\infty$, and $X=[0,1]^d$, and fix an $L$-Lipschitz function $f:([0,1]^d,\|\cdot\|_p)\to(\mathbb{R},|\cdot|)$. For every $N\ge2$, there exists a \texttt{ReLU}-MLP $\Phi:\mathbb{R}^d\to\mathbb{R}$ of depth $\mathcal{O}(\log N)$, width $\mathcal{O}(N)$, and with $\mathcal{O}(N)$ non-zero parameters such that:
\begin{enumerate}
    \item[(i)] \textbf{Uniform Approximation:} 
    \[
    \sup\limits_{x\in[0,1]^d}|f(x)-\Phi(x)|
    \le 
    \frac{
        C'_{p,d}
        \,
        L
        }{
        N^{1/d}
        }
    ;
    \]
    \item[(ii)] \textbf{Regularity:} $\Phi$ is $L\,d^{\min\{1/p,\,1-1/p\}}$-Lipschitz with respect to $\|\cdot\|_p$,
\end{enumerate}
where $
C'_{p,d}\eqdef d
$
if $1\le p\le2$ and
$
C'_{p,d}\eqdef d^{1/p}
$
if $2<p\le\infty$.
\end{cor}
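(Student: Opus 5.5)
The plan is to instantiate the noiseless central McShane--Whitney formula~\eqref{eq:formula} on a uniform grid, with the raw values $a_m=f(b_m)$ and no averaging ($G^{(\beta)}$ equal to the identity). I then realize it by Algorithm~\ref{alg:relu_envelope_realization} and Proposition~\ref{prop:representation}. The choice of base metric depends on $p$: I use $\|\cdot\|_1$ when $1\le p\le 2$ and $\|\cdot\|_\infty$ when $2<p\le\infty$. All comparisons between norms rest on the elementary inequalities
\[
\|x\|_p\le\|x\|_1\le d^{1-1/p}\|x\|_p,
\qquad
\|x\|_\infty\le\|x\|_p\le d^{1/p}\|x\|_\infty .
\]

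\textbf{Grid and error bound.} Set $n\eqdef\lfloor N^{1/d}\rfloor\ge1$, so that $n\ge N^{1/d}/2$. Let $\{b_m\}_{m=1}^{M}$, with $M=n^d\le N$, be the centres of the $n^d$ subcubes of side $1/n$. Every $x\in[0,1]^d$ then lies within $\|\cdot\|_\infty$-distance $1/(2n)$ and $\|\cdot\|_1$-distance $d/(2n)$ of some $b_m$.

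I will use the following generic estimate. Suppose $f$ is $L'$-Lipschitz for the base metric $\rho$ and every point is within distance $r$ of some centre. Write $U(x)=\min_m(f(b_m)+L'\rho(x,b_m))$ and $\underline{U}(x)=\max_m(f(b_m)-L'\rho(x,b_m))$.
\begin{itemize}
\item The Lipschitz property gives $\underline{U}\le f\le U$.
\item Testing against the nearest centre gives $U(x)\le f(x)+2L'r$ and $\underline{U}(x)\ge f(x)-2L'r$.
\end{itemize}
Hence the midpoint satisfies $|\hat f_\theta-f|\le L'r$. I then apply this in the two regimes.
\begin{itemize}
\item For $p\le2$, take $\rho=\|\cdot\|_1$ and $L'=L$. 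This is valid because $\|\cdot\|_p\le\|\cdot\|_1$, so $f$ is $L$-Lipschitz in $\ell^1$. The error is at most $L\,d/(2n)\le L\,d\,N^{-1/d}$.
\item For $p>2$, take $\rho=\|\cdot\|_\infty$ and $L'=Ld^{1/p}$, which is a valid Lipschitz constant in $\ell^\infty$. The error is at most $Ld^{1/p}/(2n)\le Ld^{1/p}N^{-1/d}$.
\end{itemize}
Both bounds give (i) with the stated $C'_{p,d}$.

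\textbf{Regularity (ii).} By Proposition~\ref{prop:formula_lipschitz}, the estimator is $L'$-Lipschitz with respect to the base metric.
\begin{itemize}
\item For $p\le2$, the bound $\|x\|_1\le d^{1-1/p}\|x\|_p$ makes it $Ld^{1-1/p}$-Lipschitz in $\ell^p$. Since $1-1/p\le1/p$ here, this constant is $L\,d^{\min\{1/p,1-1/p\}}$.
\item For $p>2$, the bound $\|x\|_\infty\le\|x\|_p$ makes it $Ld^{1/p}$-Lipschitz in $\ell^p$. Since $1/p<1-1/p$ here, this constant is again $L\,d^{\min\{1/p,1-1/p\}}$.
\end{itemize}

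\textbf{Network realization.} For $p\le2$, Proposition~\ref{prop:representation} applies directly with $M\le N$. It gives depth $\mathcal O(\log N)$, width $\mathcal O(N)$ and $\mathcal O(N)$ nonzero parameters.

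The main obstacle is the case $p>2$, because Proposition~\ref{prop:representation} is stated only for the $\ell^1$ distance. I would modify Algorithm~\ref{alg:relu_envelope_realization} by replacing the step computing $D_m(x)$. The new step first forms the $2d$ values $\operatorname{ReLU}(x_i-b_{m,i})$ and $\operatorname{ReLU}(b_{m,i}-x_i)$, and then reduces them with a binary tree of $\operatorname{Max}_2$ gadgets. This computes $\|x-b_m\|_\infty$ exactly, adding depth $\mathcal O(\log d)$ and $\mathcal O(d)$ parameters per centre. The subsequent envelope reductions are unchanged. All counts therefore remain $\mathcal O(\log N)$ depth, $\mathcal O(N)$ width and $\mathcal O(N)$ nonzero parameters, with constants depending only on $d$. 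The one point that needs checking is that inserting this sub-tree preserves the parallel, layer-synchronized structure; I would handle this by padding with identity maps $t=\operatorname{ReLU}(t)-\operatorname{ReLU}(-t)$ so that all branches have equal depth.
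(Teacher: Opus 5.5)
Your proposal is correct and follows the paper's proof. Both split at $p=2$ into an $\ell^1$-based and an $\ell^\infty$-based McShane--Whitney midpoint on a tensor grid, use the same covering radii and norm comparisons for (i) and (ii), and realize the estimator with the envelope \texttt{ReLU} network. You additionally derive the midpoint error bound directly instead of citing Proposition~\ref{prop:approximation}, whose hypotheses (diameter at most $1$, values in $[-1,1]$) do not literally hold here. You also spell out the max-tree computation of $\|x-b_m\|_\infty$ that the paper only invokes as an ``$\ell^\infty$ version'' of Proposition~\ref{prop:representation}.
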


We emphasize that Corollary~\ref{cor:realizability} requires neither oracle access to $f$ at freely chosen locations nor optimization over the network weights: the \texttt{ReLU}-MLP is constructed explicitly from noisy i.i.d.\ samples, unlike the preceding deterministic approximation results where the parameters may depend on oracle evaluations of $f$.

\paragraph{Comparison to Minimax Statistical Guarantees for (Approximate) ERMs}
In this setting, the minimax sup-norm rate is
$
    (\log N/N)^{1/(d+2)}
$
by~\cite{Stone1982OptimalGlobalRates}, whereas our construction yields
$
    \|\hat{f}-f\|_\infty
    =
    \mathcal{O}_{\mathbb P}\!\big(
        N^{-1/(d+2)}\sqrt{\log N}
    \big).
$
Thus, we recover the optimal polynomial exponent up to logarithmic factors. This is also comparable, at the level of the polynomial exponent, to the sparse \texttt{ReLU}-MLP estimators of~\cite{schmidthieber2020nonparametric}, which satisfy
$
    \mathbb{E}\|\hat{f}-f\|_{L^2}^2
    \lesssim
    N^{-2/(d+2)}\log^3 N
$;
equivalently, a root-$L^2$ rate of order
$
    N^{-1/(d+2)}\log^{3/2}N
$
for the $1$-Lipschitz class.

\begin{cor}
\label{cor:realizability}
In the setting of Theorem~\ref{thrm:nonparametric_noisy_case}, there exists a $1$-Lipschitz \texttt{ReLU}-MLP $\Phi:\mathbb{R}^d\to \mathbb{R}$ satisfying
\begin{equation}
\label{eq:relu_robustified_uniform_bound}
    \mathbb{P}\left(
        \sup_{x\in X}
        \big|
            f(x)-\Phi(x)
        \big|
        \le
        \varepsilon
        \left(
            1+
            \sqrt{
                C
                \log\left(
                    \frac{2N_0}{\delta}
                \right)
            }
        \right)
    \right)
    \ge
    1-\delta.
\end{equation}
\end{cor}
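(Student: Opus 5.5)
The plan is to realize the estimator $\hat f=\hat f^{(\beta,q_\varepsilon,1)}_{\mathbb D_N}$ of Theorem~\ref{thrm:nonparametric_noisy_case} exactly, for every realization of $\mathbb{D}_N$, as a \texttt{ReLU}-MLP. The probability statement~\eqref{eq:relu_robustified_uniform_bound} then transfers verbatim from~\eqref{eq:main_robustified_uniform_bound}. We work on $X=[0,1]^d$ with the $\ell^1$ metric, where Proposition~\ref{prop:representation} applies, and take $M=N_0$, $b_k=X_k$, $c_k=1$.

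\textbf{Step 1: fix the envelope values.} For a fixed realization of the data, the locally averaged values $V^{(\beta,q_\varepsilon)}=a^{\operatorname{av}}G^{(\beta,q_\varepsilon)}\in\mathbb{R}^{N_0}$ are just fixed real numbers. They depend on the averaging block but not on the query $x$. We therefore set $a_k\eqdef V_k^{(\beta,q_\varepsilon)}$. In Algorithm~\ref{alg:relu_envelope_realization} we replace the matrix $G^{(\beta)}$ by the identity, or equivalently absorb the averaging into the parameter $a$, so that the pairs $(a_k,X_k)$ are simply the effective dataset.

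\textbf{Step 2: invoke the realization result.} Proposition~\ref{prop:representation} produces a \texttt{ReLU}-MLP $\Phi$ of depth $\mathcal{O}(\log N_0)$, width $\mathcal{O}(N_0)$ and $\mathcal{O}(N_0)$ nonzero weights. It satisfies $\Phi(x)=\tfrac12\big(\min_k(a_k+\|x-X_k\|_1)+\max_k(a_k-\|x-X_k\|_1)\big)=\hat f(x)$ for all $x\in X$.

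\textbf{Step 3: handle the Lipschitz claim.} On $X$, $\Phi$ is $1$-Lipschitz by Proposition~\ref{prop:formula_lipschitz}, or by Theorem~\ref{thrm:nonparametric_noisy_case}(i). Since $\Phi$ is required to be defined on all of $\mathbb{R}^d$ and $1$-Lipschitz there, I would check that the identity in Step 2 in fact holds on all of $\mathbb{R}^d$. The distance gadget $\mathbf 1^\top(\operatorname{ReLU}(x-b)+\operatorname{ReLU}(b-x))=\|x-b\|_1$ is exact everywhere, and $\operatorname{Min}_2$ and $\operatorname{Max}_2$ compute exact minima and maxima for all real inputs. Consequently $\Phi$ equals the central McShane--Whitney formula on all of $\mathbb{R}^d$. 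Each $x\mapsto a_k\pm\|x-X_k\|_1$ is $1$-Lipschitz, minima and maxima of $1$-Lipschitz functions are $1$-Lipschitz, and averaging two of them preserves this. Hence $\Phi$ is $1$-Lipschitz on $(\mathbb{R}^d,\|\cdot\|_1)$.

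\textbf{Step 4: transfer the probability bound.} Since $\Phi|_X=\hat f$ pointwise for every data realization, the events $\{\|\Phi-f\|_{\infty,X}\le r\}$ and $\{\|\hat f-f\|_\infty\le r\}$ coincide. Theorem~\ref{thrm:nonparametric_noisy_case}(ii) then gives the bound with $\log(4N_0/\delta)$.

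\textbf{Main obstacle: the stated constant.} The corollary is written with $\log(2N_0/\delta)$, which is smaller than the theorem's $\log(4N_0/\delta)$. A direct transfer therefore does not give the stated constant, and there are two ways to close the gap. The first is to rerun the union-bound budget in the proof of Theorem~\ref{thrm:nonparametric_noisy_case}. The second is to read the corollary's constant as covered by applying the theorem at $\delta/2$: the bound then holds with probability $1-\delta/2\ge1-\delta$, with sample conditions strengthened accordingly. I would adopt the latter. A secondary subtlety is that $\Phi$ is random, being data-dependent, so the statement should be read as the existence of a measurable map $\mathbb{D}_N\mapsto\Phi$. This is immediate, because the weights are explicit Borel functions of the data.
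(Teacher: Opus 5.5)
Your Steps 1--4 follow the paper's own argument. You take the effective parameter $((V_k^{(\beta,q_\varepsilon)},X_k,1))_{k=1}^{N_0}$, apply Proposition~\ref{prop:representation} with $M=N_0$, and transfer the event because $\Phi=\hat f$ on $X$ for every realization. Your Step 3 makes explicit something the paper leaves implicit: the realization identity holds on all of $\mathbb{R}^d$, so $\Phi$ is $1$-Lipschitz for $\|\cdot\|_1$ on the whole space.

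The genuine problem is how you reconcile the constant. Applying Theorem~\ref{thrm:nonparametric_noisy_case} with $\delta/2$ in place of $\delta$ gives the radius $\varepsilon\big(1+\sqrt{C\log(8N_0/\delta)}\big)$. This is \emph{larger} than the target $\varepsilon\big(1+\sqrt{C\log(2N_0/\delta)}\big)$, so the substitution moves in the wrong direction. To turn $\log(4N_0/\delta')$ into $\log(2N_0/\delta)$ you need $\delta'\ge 2\delta$, which only gives confidence $1-2\delta$. No single choice of $\delta'$ achieves both the radius and the confidence $1-\delta$.

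Your other option fails as well. In the proof of Theorem~\ref{thrm:random_apprx_noisy_robustified}, the noise event costs $\delta_{\operatorname{noise}}$ and contributes $\log(2N_0/\delta_{\operatorname{noise}})$ to the radius. Matching $\log(2N_0/\delta)$ therefore forces $\delta_{\operatorname{noise}}=\delta$, leaving no budget for the net and occupancy events, whose failure probabilities are positive in general. Under the sample conditions of Theorem~\ref{thrm:nonparametric_noisy_case} each of those is at most $\delta/4$, so this route gives $\log(2N_0/\delta)$ only at confidence $1-3\delta/2$.

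What the exact-transfer argument actually proves is the corollary with $\log(4N_0/\delta)$. Equivalently, it proves the bound with $\log(2N_0/\delta_{\operatorname{noise}})$ at confidence $1-\delta_{\operatorname{net}}-\delta_{\operatorname{noise}}$, as in Theorem~\ref{thrm:random_apprx_noisy_robustified}. This is also all the paper's own proof delivers, since it simply cites \eqref{eq:main_robustified_uniform_bound}. You should state that version rather than claim the $2N_0$ constant through the $\delta/2$ substitution.
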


Unlike the preceding approximation-theoretic results,
Corollary~\ref{cor:realizability} requires neither oracle evaluations of
$f$ at freely chosen locations nor optimization over the network weights.
After the explicit neighbourhood-averaging preprocessing step, the effective
\texttt{ReLU}-MLP weights are assigned directly from the observed sample.
In particular, the guarantee does not assume that an empirical-risk
minimizer, or a sufficiently accurate approximate minimizer, has been
identified; it holds directly for the sparse \texttt{ReLU}-MLP realization
of the estimator in~\eqref{eq:formula__realizable}.

\subsection{Complexity: Realized Function Class}
\label{s:Main__ss:Optim___sss:Pseudodimension}
We now evaluate the combinatorial complexity of the \textit{set} of functions \textit{realized} by our formula. We recall that the fat-shattering dimension, introduced in~\cite{kearns1994efficient}, is a scale-sensitive analogue of the Vapnik-Chervonenkis (VC) dimension for real-valued function classes. Finiteness of the fat-shattering dimension at every positive scale characterizes uniform learnability~\cite{alon1997scale,bartlett1996fat}. Intuitively, it measures the largest set of inputs on which every binary labelling can be realized by functions in the class while maintaining a prescribed margin $\gamma>0$ in function value.

\begin{definition}[$\gamma$-Fat-Shattering Dimension]
Let $X$ be a set, let $\mathcal{F}\subseteq\mathbb{R}^{X}$, and fix $\gamma>0$. A finite set $\{x_1,\ldots,x_m\}\subseteq X$ is said to be $\gamma$-\emph{fat shattered} by $\mathcal{F}$ if there are $t_1,\ldots,t_m\in\mathbb{R}$ such that: for every $\sigma=(\sigma_1,\ldots,\sigma_m)\in\{-1,1\}^m$, there is an  $f_\sigma\in\mathcal{F}$ satisfying
\[
    \sigma_j(f_\sigma(x_j)-t_j)\ge\gamma
\]
for every $j\in[m]_+$. The $\gamma$-fat-shattering dimension $ \operatorname{fat}_{\gamma}$ of $\mathcal{F}$ is the largest such $m$.
\end{definition}

Fix a regularity parameter $L>0$ and a model-size parameter
$M\in\mathbb{N}_+$.  We define the $L$-regular parameter space by
\begin{equation}
\label{eq:regular_parameter_space}
    \Theta_{M:L}
    \eqdef
    \left(
        \mathbb{R}\times X\times\{L\}
    \right)^M.
\end{equation}
For every
$
    \theta=((a_m,b_m,L))_{m=1}^M\in\Theta_{M:L},
$
define
\[
    \hat{f}_\theta(x)
    \eqdef
    \frac{1}{2}
    \left[
        \min_{m\in[M]_+}
        \left\{
            a_m+L\rho(x,b_m)
        \right\}
        +
        \max_{m\in[M]_+}
        \left\{
            a_m-L\rho(x,b_m)
        \right\}
    \right],
    \qquad x\in X.
\]
We denote the corresponding central-envelope class by
\begin{equation}
\label{eq:regular_parameter_space__realization}
    \mathcal F_{M:L}
    \eqdef
    \left\{
        \hat{f}_\theta
        :
        \theta\in\Theta_{M:L}
    \right\}.
\end{equation}
For the split-sample estimator in
Theorem~\ref{thrm:nonparametric_noisy_case}, the effective parameter is
\[
    \theta^{(\beta,q_\varepsilon,1)}
    =
    \left(
        \left(
            V_k^{(\beta,q_\varepsilon)},X_k,1
        \right)
    \right)_{k=1}^{N_0},
\]
and therefore
$
    \hat{f}_{\mathbb{D}_N}^{(\beta,q_\varepsilon,1)}
    =
    \hat{f}_{\theta^{(\beta,q_\varepsilon,1)}}
    \in
    \mathcal F_{N_0:1}$.

The following result gives a scale-sensitive analogue of the classical VC-dimension obstruction to uniform approximation used, for example, in~\cite[Theorem~2.4]{shen2022optimal}: uniform approximation of a Lipschitz ball at resolution $\varepsilon$ necessarily requires combinatorial complexity of order $\varepsilon^{-d}$ at the same resolution.

\begin{theorem}[Optimal Fat-Shattering Dimension at the Approximation Scale]
\label{thrm:fatshatteringotimality}
Let $(X,\rho,\mathfrak{m})$ be a compact $d$-Ahlfors regular metric measure space of diameter $1$, where $d>0$, and let $B,L>0$. There are constants $\varepsilon_0,c,C>0$
% \footnote{Depending only on $(X,\rho,\mathfrak m)$}
, such that: 
for any
$
    \mathcal F
    \subseteq
    \operatorname{Lip}((X,\rho),\mathbb{R};L)
$
and any
$
    0<\varepsilon<\min\{B/2,L\varepsilon_0\},
$
if
\begin{equation}
\label{eq:eps_approx__minimaxformulation}
    \sup_{f\in\operatorname{Lip}((X,\rho),[-B,B];L)}
    \inf_{\hat{f}\in\mathcal F}
    \|f-\hat{f}\|_\infty
\le
    \varepsilon,
\end{equation}
then
\begin{equation}
\label{eq:tight_psuedodimension}
    c
    \left(
        \frac{L}{\varepsilon}
    \right)^d
\le
    \operatorname{fat}_{\varepsilon}(\mathcal F)
\le
    C
    \left(
        \frac{L}{\varepsilon}
    \right)^d.
\end{equation}
In particular, $
    \mathcal{F}_{M:L}
\subseteq
    \operatorname{Lip}((X,\rho),\mathbb{R};L)$ for any $M\in \mathbb{N}_+
$ and there is a constant $C_d'>0$%
\footnote{Depending only on $(X,\rho,\mathfrak m)$.}%
, such that~\eqref{eq:eps_approx__minimaxformulation} holds for $\mathcal F_{M:L}$ whenever
$
    M
    \ge
    C_d'
    (L/\varepsilon)^d.
$
\end{theorem}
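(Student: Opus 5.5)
Three separate arguments are needed: the upper bound on $\operatorname{fat}_\varepsilon(\mathcal F)$, which uses only that $\mathcal F$ is $L$-Lipschitz; the lower bound, which uses the approximation hypothesis~\eqref{eq:eps_approx__minimaxformulation}; and the final claim about $\mathcal F_{M:L}$.

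\emph{Upper bound.} Suppose $\{x_1,\dots,x_m\}$ is $\varepsilon$-fat shattered with thresholds $t_j$. Fix $i\neq j$ and a sign pattern $\sigma$ with $\sigma_i=1$, $\sigma_j=-1$, and let $\tau$ be the same pattern with $\sigma_i,\sigma_j$ swapped. Then
\[
f_\sigma(x_i)-f_\tau(x_i)\ge 2\varepsilon
\quad\text{and}\quad
f_\tau(x_j)-f_\sigma(x_j)\ge 2\varepsilon .
\]
Write $g=f_\sigma-f_\tau$, which is $2L$-Lipschitz. Since $g(x_i)\ge 2\varepsilon$ and $g(x_j)\le -2\varepsilon$, we get $4\varepsilon\le 2L\rho(x_i,x_j)$. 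Hence the points are $2\varepsilon/L$-separated. Balls of radius $\varepsilon/L$ around them are disjoint, and by Ahlfors regularity each has mass at least $c_{\rm A}(\varepsilon/L)^d$ (valid since $\varepsilon/L\le\varepsilon_0\le 1$). Therefore $m\le \mathfrak m(X)/c_{\rm A}\,(L/\varepsilon)^d$, and $\mathfrak m(X)\le C_{\rm A}$ because the diameter is $1$.

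\emph{Lower bound.} Take a maximal $r$-separated set $\{x_1,\dots,x_m\}$ with $r=8\varepsilon/L$. Ahlfors regularity (a volume comparison with the upper bound $C_{\rm A}r^d$) gives $m\ge c\,(L/\varepsilon)^d$ once $r\le\operatorname{diam}X=1$, which is where $\varepsilon_0$ enters. For each $\sigma$, define the bump function
\[
g_\sigma(x)=\sum_j \sigma_j\,\max\{0,\,2\varepsilon'-L\rho(x,x_j)\}
\]
with $2\varepsilon'=4\varepsilon$. The supports have radius $4\varepsilon/L=r/2$, so they are disjoint up to boundary points. Hence $g_\sigma$ is $L$-Lipschitz, since a difference across two bumps is bounded by $L(\rho(x,x_j)+\ldots)$ via the triangle inequality, and bounded by $4\varepsilon\le B$. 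This last bound needs $\varepsilon<B/2$, with a constant adjusted (one may instead take bump height $3\varepsilon$). Then $g_\sigma(x_j)=\pm4\varepsilon$. By~\eqref{eq:eps_approx__minimaxformulation} there is $\hat f_\sigma\in\mathcal F$ with $\sigma_j\hat f_\sigma(x_j)\ge 3\varepsilon-\delta'$. With threshold $t_j=0$ this gives margin at least $\varepsilon$ after slack (use the infimum plus a small slack, absorbed by the choice of height $4\varepsilon$). So $\operatorname{fat}_\varepsilon(\mathcal F)\ge m$. The main care point is making the heights, the margin and the $B$-constraint compatible, e.g.\ height $\min\{B,4\varepsilon\}$, which is $\ge 2\varepsilon+$slack because $\varepsilon<B/2$. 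I expect this to go through with minor constant tuning.

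\emph{The class $\mathcal F_{M:L}$.} Inclusion in the $L$-Lipschitz class is Proposition~\ref{prop:formula_lipschitz}, which is just the fact that mins and maxes of $L$-Lipschitz functions are $L$-Lipschitz, and averages preserve this. For approximation, take an $\varepsilon/L$-net $\{b_m\}$ of size $M\le C_d'(L/\varepsilon)^d$; this size bound follows from Ahlfors regularity applied to a maximal packing. Set $a_m=f(b_m)$, padding with repeated points if $M$ is larger. The McShane upper envelope $U\ge f$ and the Whitney lower envelope $W\le f$ satisfy
\[
U(x)-W(x)\le 2L\rho(x,b_{m(x)})+|f(b_{m(x)})-f(b_{m(x)})|\le 2\varepsilon ,
\]
so their midpoint is within $\varepsilon$ of $f$, as in Proposition~\ref{prop:approximation}.
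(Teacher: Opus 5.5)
Your proposal is correct and follows the paper's route. The lower bound uses $L$-Lipschitz tent functions on a maximal separated set with thresholds $t_j=0$; the upper bound uses the $2\varepsilon/L$-separation of any fat-shattered set plus an Ahlfors packing count, a separation step the paper imports from Gottlieb--Kontorovich--Krauthgamer rather than proving; and the final claim uses the McShane--Whitney midpoint on an $\varepsilon/L$-net. Your choice of bump height $\min\{B,4\varepsilon\}>2\varepsilon$ is actually slightly more careful than the paper, whose height-$2\varepsilon$ construction at $\gamma=\varepsilon$ silently needs the infimum in~\eqref{eq:eps_approx__minimaxformulation} to be attained.
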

\begin{proof}
See Section~\ref{s:FatShattering__ss:LB}.
\end{proof}

\subsection{Complexity: Parameter-to-Realization}
\label{s:Main__ss:Optim___sss:Stability}
There are several notions of optimality in the constructive approximation literature, ranging from linear complexity measures, such as Kolmogorov widths, linear widths, and related $n$-widths~\cite{pinkus1985nwidths}, to sampling and information complexity~\cite{novak2008tractability}, nonlinear dictionary-based notions such as best $N$-term approximation~\cite{devore1998nonlinear}, and fully nonlinear notions such as metric entropy~\cite{kolmogorov1959entropy}. A further line of work studies nonlinear continuous encodings through the ``manifold widths'' introduced in~\cite{devore1989optimal,devore1993wavelet} and recently refined in~\cite{cohen2022optimal,petrova2023lipschitz}.

In particular, the Lipschitz widths of~\cite{petrova2023lipschitz} quantify the smallest worst-case approximation error achievable by representing a model class $\mathcal K$ as the image of an $n$-dimensional parameter ball under a $\gamma$-Lipschitz realization map. Equivalently, once the approximating $\gamma$-Lipschitz $n$-parameter family has been fixed, an adversary selects the element of $\mathcal K$ that is worst approximated by that family.

\begin{definition}[{Lipschitz width;~\cite[Equations (2.1)-(2.3]{petrova2023lipschitz}}]
\label{def:Lipschitz_width}
Let $(\mathcal{X},\|\cdot\|_{\mathcal{X}})$ be a Banach space, let
$\mathcal K\subseteq \mathcal{X}$ be bounded, let $n\in\mathbb N_+$, and let
$\gamma\ge 0$.  
For each fixed norm $\|\cdot\|_{Y_n}$ on $\mathbb{R}^n$; the $\|\cdot\|_{Y_n}$-Lipschitz width of $\mathcal{K}$ is defined\footnote{We denote the  closed unit ball
$
    B_{Y_n}
\eqdef
    \{y\in\mathbb{R}^n:\|y\|_{Y_n}\le 1\}
$ if a normed space $(\mathbb{R}^n,\|\cdot\|_{Y_n})$.}~%
by
\begin{equation}
\label{eq:def_LipWidth}
    d^\gamma(\mathcal K,Y_n)_{\mathcal{X}}
\eqdef
    \inf_{\Phi}
        \sup_{f\in\mathcal K}
            \inf_{y\in B_{Y_n}}
                \|f-\Phi(y)\|_{\mathcal{X}},
\end{equation}
where the outer infimum in~\eqref{eq:def_LipWidth} is taken over all $\gamma$-Lipchitz maps
$
    \Phi:B_{Y_n}\to\mathcal{X}
$.
The $n$-th Lipschitz width of $\mathcal K$ is obtained by further minimizing over all norms $\|\cdot\|_{Y_n}$ on $\mathbb{R}^n$; i.e.\
\begin{equation}
\label{eq:def_LipWidth_n}
    d^\gamma_n(\mathcal K)_{\mathcal{X}}
\eqdef
    \inf\big\{
        d^\gamma(\mathcal K,Y_n)_{\mathcal{X}}
    :
    \,
        \|\cdot\|_{Y_n} 
            \text{ a norm on } 
        \mathbb{R}^n
    \big\}
.
\end{equation}
\end{definition}

Before proving the optimality of our scheme, we clarify the benchmark against
which optimality is measured.  
We consider the Lipschitz-width lower bounds, which impose stability
of the parameter-to-function realization map and therefore provide a numerically
reliable notion of nonlinear optimality in constructive approximation
theory~\cite{petrova2023lipschitz,cohen2022optimal}.
\begin{theorem}[Stable near-minimax optimality in the oracle regime]
\label{thm:stable_near_minimax_optimality_oracle_regime}
Let $(X,\rho,\mathfrak{m})$ be a compact $0<d$-Ahlfors regular metric measure space of diameter at-most $1$.
Let $\gamma_d \eqdef 1 + \sqrt d$. 
Then, 
the map $([-1,1] \times X \times [0,1])^N\mapsto \hat{f}_{\theta} \in C(X)$ is $\gamma_d$-Lipschitz
and there are constants 
$
    0<c_d'\le C_d<\infty
$\footnote{Depending only on $(X,\rho,\mathfrak{m})$.}, such that: for every $2\le N \in \mathbb{N}$ there are $\{x_n\}_{n=1}^N\subseteq X$ so that:
\[
    % c_d'
    % \,
    N^{-1/d}
    \,
    \log_2(N)^{-1/d}
\lesssim_{c_d'}
    d^{\gamma_d}_{P_N}(\mathcal F_{1:d})_{C(X)}
\le
    \sup_{f\in\mathcal F_{1:d}}
        \|f-\hat{f}_{\theta_f}\|_{C(X)}
\lesssim_{C_f}
    % C_d
    % \,
    % P_
    N^{-1/d}
% .
\]
where $\hat{f}_{\theta_f}$ is as in Proposition~\ref{prop:approximation}.
\end{theorem}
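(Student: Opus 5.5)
The statement has three pieces: a Lipschitz estimate for the parameter-to-realization map, the upper bound, and the lower bound on the Lipschitz width. I will prove them in that order.

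\emph{Lipschitz estimate.} Equip the parameter set $([-1,1]\times X\times[0,1])^N$ with the $\ell^1$-product metric $\rho^{(1)}$. Fix $x\in X$. The maps $\min_m$ and $\max_m$ are $1$-Lipschitz from $\ell^\infty$ to $\mathbb R$, so
\[
|\hat f_\theta(x)-\hat f_{\theta'}(x)|\le \max_m\big(|a_m-a_m'|+|c_m\rho(x,b_m)-c_m'\rho(x,b_m')|\big).
\]
Since $c_m\le1$ and $\operatorname{diam}(X)\le1$, the second term is bounded by $|c_m-c_m'|+\rho(b_m,b_m')$. Hence $\sup_x|\hat f_\theta-\hat f_{\theta'}|\le\rho^{(1)}(\theta,\theta')$.

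To compare with the Lipschitz-width definition, we must encode the parameter as a point of a normed ball $B_{Y_n}$. For this I would restrict to parameters $\theta_f$ whose centers $b_m=x_m$ are fixed. Then only $a\in[-1,1]^N$ varies, and the map is $1$-Lipschitz from $(\mathbb R^N,\|\cdot\|_\infty)$, so $1\le\gamma_d$. If the full parameter set is needed, I would embed $X$ via a bi-Lipschitz/Assouad-type map into $\mathbb R^{P}$. The factor $\sqrt d$ in $\gamma_d$ suggests measuring the distance coordinates in a Euclidean norm, which costs $\sqrt d$, plus $1$ for the values. I would choose $Y_{P_N}$ as a weighted $\ell^\infty$ norm with $P_N\asymp N$ parameters, and rescale the ball so that the constants match.

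\emph{Upper bound.} This is immediate from Proposition~\ref{prop:approximation}. Take the net $\{x_n\}$ given there and set $\theta_f=((f(x_n),x_n,1))_n$. These parameters lie in the unit ball, because $|f(x_n)|\le1$. Therefore the $\gamma_d$-Lipschitz family contains $\hat f_{\theta_f}$, and
\[
d^{\gamma_d}_{P_N}(\mathcal F_{1:d})\le\sup_f\|f-\hat f_{\theta_f}\|\le C_dN^{-1/d}.
\]

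\emph{Lower bound.} This is the main obstacle. I would invoke the lower bound of~\cite{petrova2023lipschitz}, which bounds Lipschitz widths from below through entropy numbers: if $\epsilon_n(\mathcal K)\gtrsim (n\log n)^{-\alpha}$ ... more precisely, their Theorem says $d_n^\gamma(\mathcal K)\gtrsim\epsilon_{cn\log(\gamma n)}(\mathcal K)$. So it suffices to show $\epsilon_m(\mathcal F_{1:d})_{C(X)}\gtrsim m^{-1/d}$. The standard route is a packing argument. Using Ahlfors regularity, choose an $r$-separated set of $\asymp r^{-d}$ points. Place disjoint bump functions $\pm r\,(1-\rho(\cdot,z)/r)_+$ at these points. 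All $2^{cr^{-d}}$ sign patterns give $1$-Lipschitz functions with values in $[-1,1]$ that are pairwise $r$-separated in sup norm. This yields $\log\mathcal N(\mathcal F_{1:d},r/2)\gtrsim r^{-d}$, so $\epsilon_m\gtrsim m^{-1/d}$.

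Substituting $m\asymp N\log(\gamma_dN)$ produces $N^{-1/d}\log_2(N)^{-1/d}$. The delicate points are the exact form of the Petrova--Wojtaszczyk entropy comparison, which requires an $\ell^\infty$-type control and boundedness, and checking that $\gamma_d$ enters only through the logarithm, so that $c_d'$ depends only on $(X,\rho,\mathfrak m)$.
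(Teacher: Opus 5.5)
Your proposal is correct and follows essentially the paper's route: the parameter-stability estimate (the paper's Lemma~\ref{lem:stability} with $R=1$), the fixed-centre parametrization $a\mapsto\hat f_{((a_n,x_n,1))_n}$ on $B_{\ell^\infty_N}$, which gives $d^{\gamma_d}_{N}\le\sup_f\|f-\hat f_{\theta_f}\|_{C(X)}\lesssim N^{-1/d}$, and a tent-function packing bound $\epsilon_m(\mathcal F_{1:d})_{C(X)}\gtrsim m^{-1/d}$ fed into Petrova--Wojtaszczyk's Theorem~3.1(i). Two small fixes: the tents $(r-\rho(\cdot,z))_+$ have disjoint supports only if the centres are $2r$-separated (the paper uses $4r$), not merely $r$-separated; and you should drop the Assouad-embedding fallback, which is unnecessary here and unavailable in general, since Assouad's theorem only embeds snowflakes $(X,\rho^\alpha)$ with $\alpha<1$.
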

Theorem~\ref{thm:stable_near_minimax_optimality_oracle_regime} shows that~\eqref{eq:formula} attains the optimal Lipschitz-width rate up to the logarithmic factor $(\log_2(N))^{1/d}$.

\subsubsection{Discussion: Other Notions of Optimal Nonlinear Approximation}
\label{s:Optimality__ss:Approx}
Before moving on, it is very much worth interpreting Theorem~\ref{thm:stable_near_minimax_optimality_oracle_regime} formulated for Lipschitz widths of~\cite{petrova2023lipschitz} with other modern notions of non-linear approximation.

\paragraph{Comparison with Manifold Widths}
% \hfill\\
This notion is closely related to the stable manifold widths of~\cite{cohen2022optimal}, which quantify how well all elements of $\mathcal K$ can be stably encoded and reconstructed through an $n$-dimensional Banach space $(\mathbb R^n,\|\cdot\|_{Y_n})$ using a $\gamma$-Lipschitz encoder-decoder pair. The key distinction is that Lipschitz widths require stability only of the realization map from parameters to functions, whereas stable manifold widths require stability of both the encoder and decoder. Moreover, modified stable manifold widths are sufficiently fine to characterize the bounded approximation property, cf.~\cite{grothendieck1955produits,figiel1973approximation,enflo1973counterexample}, in the sense of~\cite[Theorem~2.4]{cohen2022optimal}. Conversely, stable manifold widths are no finer than Lipschitz widths, up to the natural adjustment of the admissible Lipschitz constant, as shown in~\cite[Theorem~5.1]{petrova2023lipschitz}; this comparison may be strict, as demonstrated by~\cite[Theorem~5.3]{petrova2023lipschitz}.

\paragraph{Comparison with Yarotsky's ``Discontinuous Weight Selection'' Rate}
% \hfill\\
The optimality notions in~\cite{yarotsky2018optimal,ShenYangZhang_JMPA_OptApprx_ReLU} are primarily combinatorial or architectural: they show that any neural-network class approximating a prescribed smoothness class must possess sufficient expressive capacity, but do not control the continuity or conditioning of the parameter-to-function realization map. By contrast, Lipschitz widths impose this stability directly and therefore exclude ``\textit{\textbf{unstable} approximation schemes}'' whose rates rely on numerically fragile encodings or ill-conditioned parametrizations.

\section{Applications}
\label{s:Applicatoins}
In this section, we consider a few further applications of our main result; beyond the \texttt{ReLU}-MLP implications mentioned above.

\subsection{Guidance on Choice of Activation Function for Low-Regularity Problems}
\label{s:Arb_regularity}
In many applications, the natural regularity is \emph{lower} than Lipschitz continuity. For instance, Brownian motion has almost surely locally $\alpha$-H\"older paths for every $0<\alpha<\tfrac12$~\cite[Chapter~I, Theorem~2.2]{RevuzYor1999Continuous}, while bounded-vorticity solutions of the two-dimensional Euler equations induce log-Lipschitz velocity fields~\cite{yudovich1963nonstationary}; see~\cite[Lemma~2.1]{shikhkhalil2025loss} for the explicit modulus $\omega(t)\asymp Lt\log(e/t)$.

Our theory naturally handles both through classical metric transforms, originating with Wilson~\cite{Wilson1935MetricTransformations} and closely related to metric snowflaking~\cite{Schoenberg1937MetricSpaces,assouad111plongements,naor2012assouad}, recently used in Neural Snowflake models~\cite{saez2024neural,saez2025neural}. Indeed, for any non-decreasing concave modulus $\omega$ with $\omega(0)=0$, $\rho^\omega\eqdef\omega\circ\rho$ is again a metric, cf.~\cite{mendelnaor2011dichotomies}, and $f$ is $\omega$-uniformly continuous on $(X,\rho)$ if and only if it is $1$-Lipschitz on $(X,\rho^\omega)$. The usual snowflake $\rho^\alpha$ is recovered by taking $\omega(t)=t^\alpha$.  

As an example, we simple oracle-access to $f$-type setting of Proposition~\ref{prop:approximation}.  In that case, after snowflaking with $\omega$, our models in~\eqref{eq:formula__realizable} becomes:
\begin{equation}
\label{eq:formula__realizable___snowflaked}
\begin{aligned}
    \hat{f}_{\theta:\omega}(x|G^{\beta})
     & = 
        \tfrac{1}{2}
        \Big(
            \min_{m \in [M]_+} \big(
                (aG^{(\beta)})_m+u_m
            \big)
            +  
            \max_{m \in [M]_+} \big(
                (aG^{(\beta)})_m-u_m
            \big)
        \Big)
\\
    u & = 
        \big(c_m\,\omega\circ \rho(x,b_m)\big)_{m=1}^M
.
\end{aligned}
\end{equation}
We state the natural consequence of our result here qualitatively for simplicity. 
\begin{cor}[Universal Approximation]
\label{cor:uniformlycontinuous}
Let $(X,\rho)$ be a compact metric space, $\omega$ be a concave, continuous, and strictly increasing modulus of continuity, and let $f:(X,\rho)\to (\mathbb{R},|\cdot|)$ be $\omega$-uniformly continuous.  
%%%
Then, for every $\varepsilon>0$ there exists some $M\in \mathbb{N}_+$, and some $\theta \in [\mathbb{R}\times X\times \times \{1\}]^M$ such that $\hat{f}_{\theta}(\cdot|G^0)$ in~\eqref{eq:formula__realizable___snowflaked} satisfies
\[
    \sup_{x\in X}
    \,
    \big|
        \hat{f}_{\theta}(x|G^0)
        -
        f(x)
    \big|
    <\varepsilon
.
\]
Moreover, $\hat{f}_{\theta}(\cdot|G^0):(X,\rho)\to (\mathbb{R},|\cdot|)$ is $\omega$-uniformly continuous.
\end{cor}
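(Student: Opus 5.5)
The plan is to transfer everything to the snowflaked space $(X,\rho^\omega)$, where $\rho^\omega\eqdef\omega\circ\rho$, and then apply the Lipschitz machinery already developed there.

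\textbf{Step 1: $\rho^\omega$ is a compatible metric.} Since $\omega$ is concave with $\omega(0)=0$, it is subadditive: $\omega(s+t)\le\omega(s)+\omega(t)$. Combined with monotonicity and the triangle inequality for $\rho$, this gives the triangle inequality for $\rho^\omega$. Strict increase of $\omega$ gives definiteness.

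\textbf{Step 2: the topologies agree, so $(X,\rho^\omega)$ is compact.} Continuity of $\omega$ at $0$ shows that $\rho$-convergence implies $\rho^\omega$-convergence. For the converse, if $\omega(\rho(x_n,x))\to0$ then $\rho(x_n,x)\to0$, because $\omega$ is strictly increasing and so $\omega(t)\ge\omega(t_0)>0$ whenever $t\ge t_0$. Hence the identity map $(X,\rho)\to(X,\rho^\omega)$ is a homeomorphism, and $(X,\rho^\omega)$ is compact.

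\textbf{Step 3: reduce to a uniform approximation problem.} By definition of $\omega$-uniform continuity, $|f(x)-f(y)|\le\omega(\rho(x,y))=\rho^\omega(x,y)$. So $f$ is $1$-Lipschitz on $(X,\rho^\omega)$. Note also that with $G^0$, i.e.\ $\beta=0$ and trivial averaging as in the oracle setting, the formula \eqref{eq:formula__realizable___snowflaked} is exactly the central McShane--Whitney formula \eqref{eq:formula} on the space $(X,\rho^\omega)$. The parameter is $\theta=((f(b_m),b_m,1))_{m=1}^M$, with the effective values equal to $f(b_m)$. I will make this identification explicit; interpreting $G^0$ as the identity on the oracle values is the one notational point to settle.

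\textbf{Step 4: a direct envelope estimate.} Proposition~\ref{prop:approximation} needs a doubling space, which a general compact $(X,\rho^\omega)$ need not be. So instead I argue directly, using only compactness:
\begin{itemize}
\item By compactness of $(X,\rho^\omega)$, pick a finite $\varepsilon/2$-net $\{b_m\}_{m=1}^M$.
\item For every $x$, the $1$-Lipschitz property of $f$ sandwiches it between the envelopes:
\[
\max_m\big(f(b_m)-\rho^\omega(x,b_m)\big)\;\le\; f(x)\;\le\;\min_m\big(f(b_m)+\rho^\omega(x,b_m)\big).
\]
\item Choose $m$ with $\rho^\omega(x,b_m)<\varepsilon/2$. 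Then the upper envelope is at most $f(b_m)+\rho^\omega(x,b_m)\le f(x)+2\rho^\omega(x,b_m)<f(x)+\varepsilon$. Symmetrically, the lower envelope exceeds $f(x)-\varepsilon$.
\item The midpoint of two numbers that both lie in $(f(x)-\varepsilon,f(x)+\varepsilon)$ also lies there, so $|\hat f_\theta(x|G^0)-f(x)|<\varepsilon$.
\end{itemize}

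\textbf{Step 5: regularity of the approximant.} Apply Proposition~\ref{prop:formula_lipschitz} on the metric space $(X,\rho^\omega)$; alternatively, note directly that a min or max of $1$-Lipschitz functions is $1$-Lipschitz, and so is their average. This gives $|\hat f_\theta(x)-\hat f_\theta(y)|\le\omega(\rho(x,y))$, i.e.\ $\hat f_\theta$ is $\omega$-uniformly continuous on $(X,\rho)$.

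\textbf{Main obstacle.} The argument is routine once Step 2 is in place, and that step is the only real subtlety. It requires $\omega$ to be continuous at $0$ and strictly increasing, so that $\rho^\omega$ is topologically equivalent to $\rho$ and compactness transfers. This is exactly why those hypotheses are imposed in the statement.
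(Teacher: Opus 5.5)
Your proposal is correct and follows essentially the paper's route: pass to $\rho^\omega=\omega\circ\rho$, take $\theta=((f(b_m),b_m,1))_{m=1}^M$ over a finite $\varepsilon/2$-net with $G^0=I_M$, run the McShane--Whitney sandwich argument of Lemma~\ref{lem:oracle_access__approximation} directly (so no doubling hypothesis is needed), and obtain $\omega$-regularity because minima, maxima and averages preserve $1$-Lipschitz bounds in $\rho^\omega$; you additionally prove the metric and compactness facts that the paper simply takes from the preceding discussion. One small tightening: your sandwich already gives the pointwise bound $|\hat f_\theta(x|G^0)-f(x)|\le\rho^\omega(x,b_m)<\varepsilon/2$, which makes the supremum strictly below $\varepsilon$ without a separate argument that a pointwise strict bound survives passage to the supremum.
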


\subsection{Universality of DeepSets with Prescribed Lipschitz Regularity}
\label{s:NonEucExample}
Many machine-learning problems, particularly in science and engineering, involve target functions that are invariant under known symmetries. A central example is permutation invariance of point clouds, where the input is naturally an unordered set rather than an ordered sequence, motivating the DeepSets architecture of~\cite{zaheer2017deep}. DeepSets are known to be universal for continuous permutation-invariant functions under suitable latent-dimensional conditions~\cite[Theorems~15 and~20]{wagstaff2022universal}; see also~\cite{syed2026embedding}. As we will see, a universal and highly structured subclass of DeepSets falls within our setting and admits an explicit realization by our formula. In particular, universality can be achieved while preserving the prescribed Lipschitz upper bound, supplementing existing universality guarantees with explicit control of the approximator's regularity and a closed-form description of what the resulting universal DeepSet computes.

Let $d,K\in\mathbb N_+$ and denote by
$
\mathcal P_{\operatorname{emp}}^{\le K}(\mathbb R^d)
$
the collection of uniform empirical probability measures
\begin{equation}
\label{eq:emp_measure}
    \nu
    =
    \frac{1}{\widetilde K}
    \sum_{k=1}^{\widetilde K}\delta_{z_k},
\end{equation}
where $1\le\widetilde K\le K$ and $z_1,\ldots,z_{\widetilde K}\in\mathbb R^d$ are pairwise distinct. Thus, $\mathcal P_{\operatorname{emp}}^{\le K}(\mathbb R^d)$ may equivalently be viewed as the collection of non-empty point clouds in $\mathbb R^d$ containing at most $K$ points.
% \hfill\\
% \noindent
Fix $F\in\mathbb N_+$ and a \texttt{ReLU}-MLP
$
    \phi:\mathbb R^d\to\mathbb R^F.
$
For $\nu$ as in~\eqref{eq:emp_measure}, define its feature-mean embedding by
\begin{equation}
\label{eq:feature_mean_embedding}
    T_\phi(\nu)
    \eqdef
    \int_{\mathbb R^d}\phi(x)\,\nu(dx)
    =
    \frac{1}{\widetilde K}
    \sum_{k=1}^{\widetilde K}\phi(z_k)
    \in\mathbb R^F.
\end{equation}
Let $
    \mathcal{K}
\subseteq
    \mathcal P_{\operatorname{emp}}^{\le K}(\mathbb R^d)
$, with $2\le \#\mathcal{K}$, and suppose that $T_\phi$ is injective on $\mathcal{K}$. Then
\[
    d_\phi(\nu,\widetilde\nu)
    \eqdef
    \big\|
        T_\phi(\nu)-T_\phi(\widetilde\nu)
    \big\|_\infty,
\]
for each $\nu,\widetilde\nu\in\mathcal{K}$; defines a metric on $\mathcal{K}$.  
Suppose that $\mathcal{K}$ is compact with respect to $d_\phi$. For reference point clouds $
    \nu^{(m)}
    =
    K_m^{-1}\sum_{k=1}^{K_m}\delta_{z_k^{(m)}}
    \in\mathcal{K}
$ and $\beta=0$, formula~\eqref{eq:formula__realizable} takes the form
\begin{equation}
\label{eq:formula__realizable___feature_mean}
\begin{aligned}
    \hat{f}_\theta(\nu|G^0)
& =
    \frac12
    \left(
        \min_{m\in[M]_+}
        \left\{
            a_m+c_m
            \big\|
                T_\phi(\nu)-T_\phi(\nu^{(m)})
            \big\|_\infty
        \right\}
    \right.
\\
    &
    \left.
    +
        \max_{m\in[M]_+}
        \left\{
            a_m-c_m
            \big\|
                T_\phi(\nu)-T_\phi(\nu^{(m)})
            \big\|_\infty
        \right\}
    \right).
\end{aligned}
\end{equation}
Indeed, setting $
    b_m\eqdef T_\phi(\nu^{(m)})\in\mathbb R^F $ and
\[
    \Psi_\theta(z)
    \eqdef
    \frac12
    \left(
        \min_{m\in[M]_+}
        \{a_m+c_m\|z-b_m\|_\infty\}
        +
        \max_{m\in[M]_+}
        \{a_m-c_m\|z-b_m\|_\infty\}
    \right),
\]
we have
\begin{equation}
\label{eq:DeepSets_factorization}
    \hat{f}_\theta(\nu|G^0)
    =
    \Psi_\theta\big(T_\phi(\nu)\big)
    =
    \Psi_\theta\left(
        \frac{1}{\widetilde K}
        \sum_{k=1}^{\widetilde K}\phi(z_k)
    \right).
\end{equation}
Thus,~\eqref{eq:formula__realizable___feature_mean} is a mean-pooled instance of the \textsc{Deep Sets} architecture of~\cite[Section~3.1]{zaheer2017deep}; cf.~\cite[Theorem~2]{zaheer2017deep} for the classical sum-decomposition characterization. Moreover, by the same absolute-value, parallelization, and min/max construction used in Proposition~\ref{prop:representation}; cf.~\cite[Lemmata~5.3 and~5.11]{petersen2024mathematical}, $\Psi_\theta$ admits an exact \texttt{ReLU}-MLP realization of depth
$
    \mathcal{O}(\log F+\log M)
$
and with $\mathcal{O}(MF)$ width and non-zero parameters.
We therefore obtain the following direct consequence of Corollary~\ref{cor:uniformlycontinuous}.

\begin{cor}[Universality of Lipschitz Deep Sets with Prescribed Regularity]
\label{cor:DeepSets}
Let $d,K,F\in\mathbb N_+$, let
$
    \mathcal{K}
    \subseteq
    \mathcal P_{\operatorname{emp}}^{\le K}(\mathbb R^d)
$
be compact with respect to $d_\phi$, and let
$
    \phi:\mathbb R^d\to\mathbb R^F
$
be a \texttt{ReLU}-MLP such that $T_\phi$ is injective on $\mathcal{K}$. 
\hfill\\
\noindent
For every $L>0$, every $L$-Lipschitz map
$
    f:(\mathcal{K},d_\phi)\to\mathbb R,
$
and every $\varepsilon>0$, there exists a \texttt{ReLU}-MLP
$
    \Psi:\mathbb R^F\to\mathbb R
$
such that
\begin{equation}
\label{eq:deep_sets}
    \sup_{
        \nu=
        \frac{1}{\widetilde K}
        \sum_{k=1}^{\widetilde K}\delta_{z_k}
        \in\mathcal{K}
    }
    \left|
        f(\nu)
        -
        \Psi\left(
            \frac{1}{\widetilde K}
            \sum_{k=1}^{\widetilde K}\phi(z_k)
        \right)
    \right|
    <\varepsilon
\end{equation}
and the resulting Deep Set map $\big(
\mathcal P_{\operatorname{emp}}^{\le K}(\mathbb R^d),d_{\phi}\big)\ni 
    \nu
    \mapsto 
    \Psi\big(T_\phi(\nu)\big)
\in (\mathbb{R},|\cdot|)$
is $L$-Lipschitz.
\end{cor}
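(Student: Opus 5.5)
The plan is to reduce Corollary~\ref{cor:DeepSets} to Corollary~\ref{cor:uniformlycontinuous}, applied on the compact metric space $(\mathcal K,d_\phi)$, and then to use the factorization~\eqref{eq:DeepSets_factorization} to read off the ReLU-MLP $\Psi$.

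\textbf{Step 1 (metric setup and approximation).} Since $T_\phi$ is injective on $\mathcal K$, $d_\phi$ is a genuine metric, and by hypothesis $(\mathcal K,d_\phi)$ is compact. Rescaling, $f/L$ is $1$-Lipschitz, so $f$ is $\omega$-uniformly continuous with $\omega(t)=Lt$. This $\omega$ is concave, continuous and strictly increasing, so Corollary~\ref{cor:uniformlycontinuous} applies. To avoid carrying $L$ inside the metric, I will instead apply it to $\omega(t)=t$ and the target $f/L$, and multiply back.

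Concretely, I would take the simpler direct route of Proposition~\ref{prop:approximation}. Choose a finite $\varepsilon/(2L)$-net $\{\nu^{(m)}\}_{m=1}^M\subseteq\mathcal K$, which exists by compactness. Set $\theta=((f(\nu^{(m)}),\nu^{(m)},L))_{m=1}^M$ with $\beta=0$, so that $G^0$ acts as the identity on the reference values.

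\textbf{Step 2 (the McShane--Whitney bound).} The upper envelope $U(\nu)=\min_m\{f(\nu^{(m)})+L d_\phi(\nu,\nu^{(m)})\}$ and the lower envelope $\Lambda(\nu)=\max_m\{f(\nu^{(m)})-Ld_\phi(\nu,\nu^{(m)})\}$ sandwich $f$, because $f$ is $L$-Lipschitz. Choosing $m$ with $d_\phi(\nu,\nu^{(m)})\le\varepsilon/(2L)$ gives $U(\nu)-\Lambda(\nu)\le 2L\,d_\phi(\nu,\nu^{(m)})\le\varepsilon$. The midpoint $\hat f_\theta=\frac12(U+\Lambda)$ therefore satisfies $|\hat f_\theta-f|\le\varepsilon/2<\varepsilon$ uniformly on $\mathcal K$.

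\textbf{Step 3 (realization and regularity).} Define $\Psi\eqdef\Psi_\theta$ on $\mathbb R^F$ with $b_m=T_\phi(\nu^{(m)})$. Then~\eqref{eq:DeepSets_factorization} gives $\hat f_\theta(\nu|G^0)=\Psi(T_\phi(\nu))$, which yields~\eqref{eq:deep_sets}. The exact ReLU-MLP realization of $\Psi_\theta$ follows from the construction quoted before the corollary: $\|z-b_m\|_\infty$ is computed as a max-tree of $|z_i-b_{m,i}|=\operatorname{ReLU}(\cdot)+\operatorname{ReLU}(-\cdot)$, followed by the $\operatorname{Min}_2/\operatorname{Max}_2$ reduction of Algorithm~\ref{alg:relu_envelope_realization}.

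For the Lipschitz claim, note that $\Psi_\theta$ is $L$-Lipschitz on $(\mathbb R^F,\|\cdot\|_\infty)$. Each map $z\mapsto a_m\pm L\|z-b_m\|_\infty$ is $L$-Lipschitz, min and max preserve this, so the upper and lower envelopes are each $L$-Lipschitz. This alone only gives a bound of $L$ for their average. To get the sharper form I would invoke the argument of Proposition~\ref{prop:formula_lipschitz}, which holds verbatim on any metric space, here $(\mathbb R^F,\|\cdot\|_\infty)$. Composing with $T_\phi$, which is an isometry from $(\mathcal P^{\le K}_{\rm emp},d_\phi)$ onto its image by the definition of $d_\phi$, shows that $\nu\mapsto\Psi(T_\phi(\nu))$ is $L$-Lipschitz.

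\textbf{Main obstacle.} The main subtlety is that $d_\phi$ is only a pseudometric on the whole of $\mathcal P^{\le K}_{\rm emp}$ if $T_\phi$ fails to be injective there. The Lipschitz statement nevertheless holds, because $|\Psi(T_\phi\nu)-\Psi(T_\phi\tilde\nu)|\le L\|T_\phi\nu-T_\phi\tilde\nu\|_\infty$ is exactly what is needed and requires no injectivity. I would remark on this point explicitly.
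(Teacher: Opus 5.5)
Your proposal is correct and follows the paper's construction. The paper invokes Corollary~\ref{cor:uniformlycontinuous} with $\omega(t)=Lt$ on $(\mathcal K,d_\phi)$. Since $\rho^\omega=L\,d_\phi$, the proof of that corollary is exactly your finite net plus McShane--Whitney midpoint with slope $L$. The paper then reads off $\Psi_\theta$ from \eqref{eq:DeepSets_factorization} and the $\ell^\infty$ version of Proposition~\ref{prop:representation}, as you do.

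The one place you differ is the regularity claim, and there your argument is more complete. The paper gets $L$-Lipschitzness from Corollary~\ref{cor:uniformlycontinuous}, which only gives it on $(\mathcal K,d_\phi)$. The statement, however, asserts it on all of $\mathcal P_{\operatorname{emp}}^{\le K}(\mathbb R^d)$, and the paper leaves that extension implicit. Your observation closes this: $\Psi_\theta$ is $L$-Lipschitz on $(\mathbb R^F,\|\cdot\|_\infty)$, so $|\Psi(T_\phi\nu)-\Psi(T_\phi\tilde\nu)|\le L\,d_\phi(\nu,\tilde\nu)$ for all $\nu,\tilde\nu$, with no injectivity needed. That is what actually covers the stated domain.

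Two small clean-ups:
\begin{itemize}
\item Your remark that averaging the two $L$-Lipschitz envelopes ``only gives a bound of $L$'' and needs a ``sharper form'' is confused. $L$ is exactly the required bound, and Proposition~\ref{prop:formula_lipschitz} gives nothing sharper, so that detour can be deleted.
\item Proposition~\ref{prop:approximation} is not literally applicable here, since it assumes a doubling space of diameter at most $1$ and targets in $[-1,1]$. Your Step~2 redoes the envelope estimate directly on the compact space $(\mathcal K,d_\phi)$, so nothing depends on it, but cite the argument rather than the proposition.
\end{itemize}
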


\section{Numerical Ablations}
\label{s:NumericalIllustrations}

We numerically examine the dependence of the estimator on its bandwidth,
sample size, target regularity, and noise level, and compare its observed
performance with that of selected \texttt{ReLU}-MLPs trained by gradient-based
methods. All code and scripts used to
generate the numerical results and figures reported in this section are publicly available.\footnote{\url{https://github.com/hradghoukasian/closed-form-nn}}

\paragraph{Numerical implementation convention}
The finite-sample guarantee in
Theorem~\ref{thrm:nonparametric_noisy_case} is proved for the split-sample,
fixed-cardinality construction in~\eqref{eq:formula__realizable}: the first
sample block supplies the reference points, and exactly
$q_\varepsilon$ observations from the second block are averaged at each
reference point.

In the experiments, we instead use the natural all-sample implementation.
Every training input acts as a reference point, and every observation in its
$\beta$-neighbourhood contributes to its locally averaged label.  Thus the
same observations are used both as reference points and as averaging
observations, and the number of observations averaged may vary between
neighbourhoods.  The subsequent central McShane-Whitney midpoint operation
is unchanged.
While  Theorem~\ref{thrm:nonparametric_noisy_case} is proved only for the split-sample, fixed-cardinality estimator, the experiments show that the formula can also be successfully applied using the all-sample implementation.  In addition, when $\beta$ is selected by
hold-out validation, the theoretical choice $\beta=\varepsilon/2$ is not
being used.

\subsection{Closed-Form Reconstruction on $\mathbb{R}$}

We first illustrate the noiseless closed-form reconstruction formula in the one
-dimensional setting. The input space is the compact interval
$X=[-1,1]\subset \mathbb{R}$,
equipped with the standard metric
$\rho(x,y)=|x-y|$.
We sample $N_{\mathrm{train}}=100$  or $N_{\mathrm{train}}=50$ training points uniformly from $[-1,1]$ and set
$Y_n=f(X_n)$,
where the target function is
\[
f(x)=e^{-x^2}\cos(5x)+\max\{0,x\}+\min\{0,-x^2\}.
\]
The Lipschitz constant used in the estimator is chosen empirically as
$
L=1.05\,L_{\mathcal{D}}$ and 
$L_{\mathcal{D}}
=
\max_{i\neq j}
\frac{|Y_i-Y_j|}{|X_i-X_j|}$; where, the factor $1.05$ provides a small safety margin.
Given the training data, we evaluate the noiseless version of our formula, i.e.\ with $\beta=0$.
The resulting plot (Figure \ref{fig:closed-form-r1-main}) compares the target function, the lower and upper Lipschitz envelopes, and the closed-form estimator.

\begin{figure}[ht]
    \centering

    \begin{subfigure}{0.48\textwidth}
        \centering
        \includegraphics[width=\textwidth]{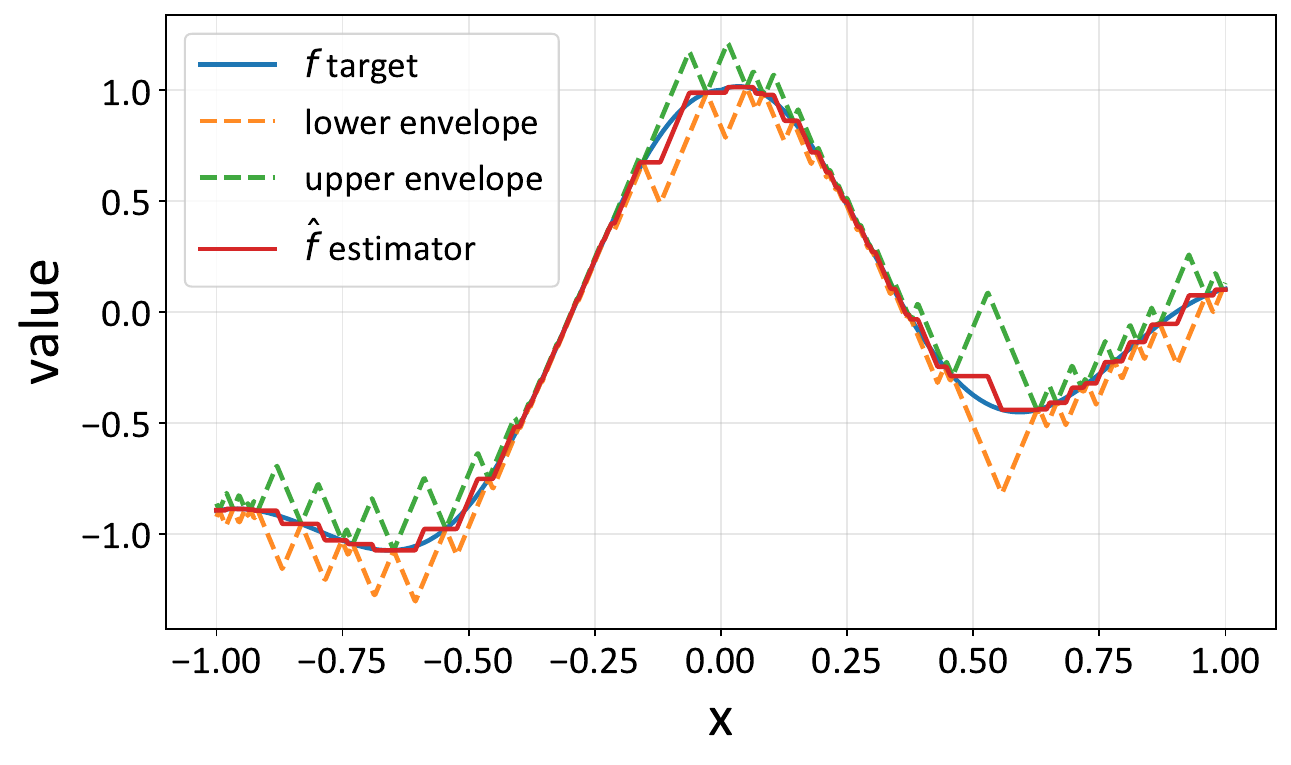}
        \caption{$N_{\mathrm{train}}=50$}
        \label{fig:closed-form-r1-n50}
    \end{subfigure}
    \hfill
    \begin{subfigure}{0.48\textwidth}
        \centering
        \includegraphics[width=\textwidth]{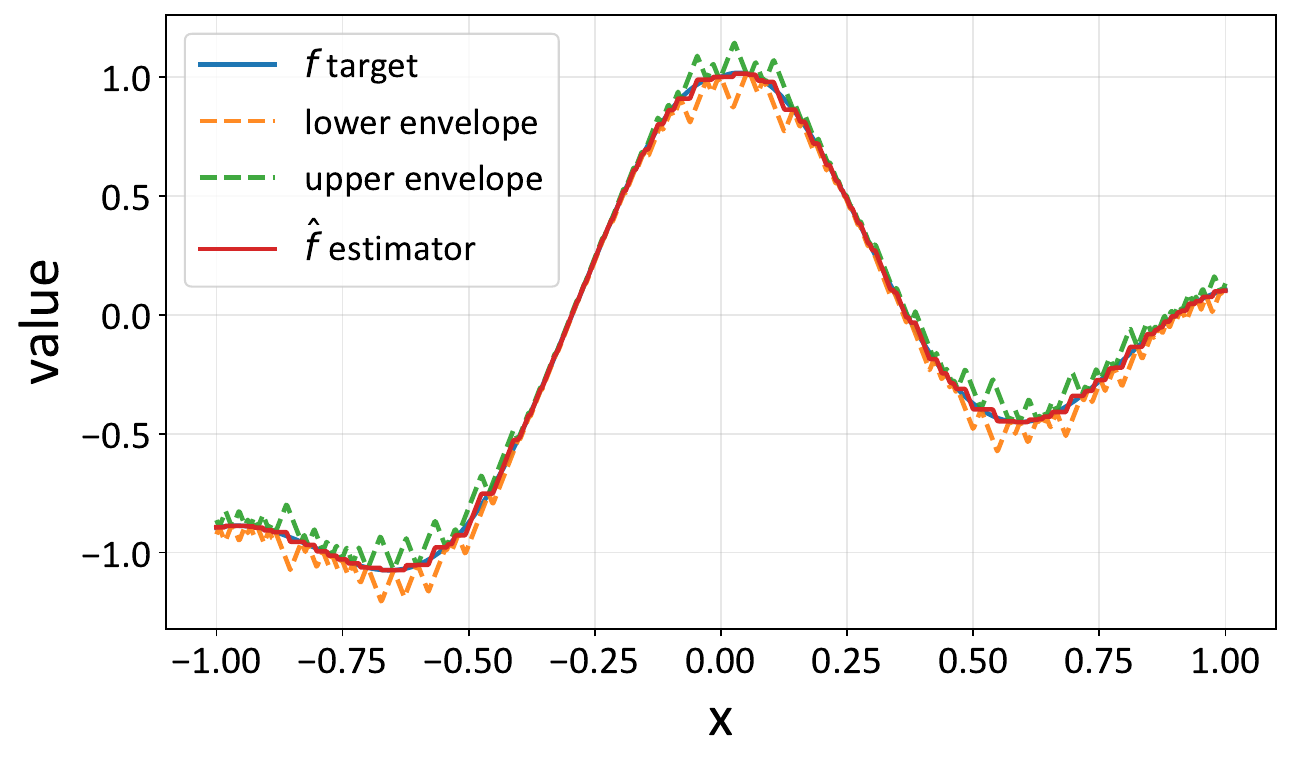}
        \caption{$N_{\mathrm{train}}=100$}
        \label{fig:closed-form-r1-n100}
    \end{subfigure}

    \caption{Closed-form Whitney-McShane midpoint reconstruction on $[-1,1]$ with noiseless samples. The lower and upper envelopes correspond to the Whitney and McShane bounds, and the estimator is their midpoint.}
    \label{fig:closed-form-r1-main}
\end{figure}

\subsection{Closed-Form Reconstruction on $\mathbb{R}^2$}

We next illustrate the closed-form reconstruction formula on a two-dimensional domain. The input space is
$
X=[-1,1]^2\subset \mathbb{R}^2,
$
equipped with the $\ell_1$ metric
$\rho(x,y)=\|x-y\|_1$.
We consider the smooth target function
$
f(x_1,x_2)=\sin(2x_1+x_2).
$
For each training size
$N_{\mathrm{train}}\in\{50,100,1000\},
$
we sample training points uniformly from $[-1,1]^2$ and set $Y_n=f(X_n)$. The Lipschitz constant used in the estimator is chosen as
$
L=1.05\,L_{\mathcal{D}}
$ where $L_{\mathcal{D}}$ is as in Remark~\ref{rmk:EmpLipschitz}.   
%%%
The closed-form estimator is then evaluated on a $60\times 60$ uniform grid over $[-1,1]^2$.   
Figures ~\ref{fig:closed-form-r2-n50_ell1}, \ref{fig:closed-form-r2-n100-ell1}, and \ref{fig:closed-form-r2-n1000-ell1} show the target function, the closed-form estimator, and the absolute reconstruction error. To emphasize the reconstructed surfaces, the training sample locations are not displayed.

\begin{figure}[ht]
    \centering
    \includegraphics[width=0.95\textwidth]{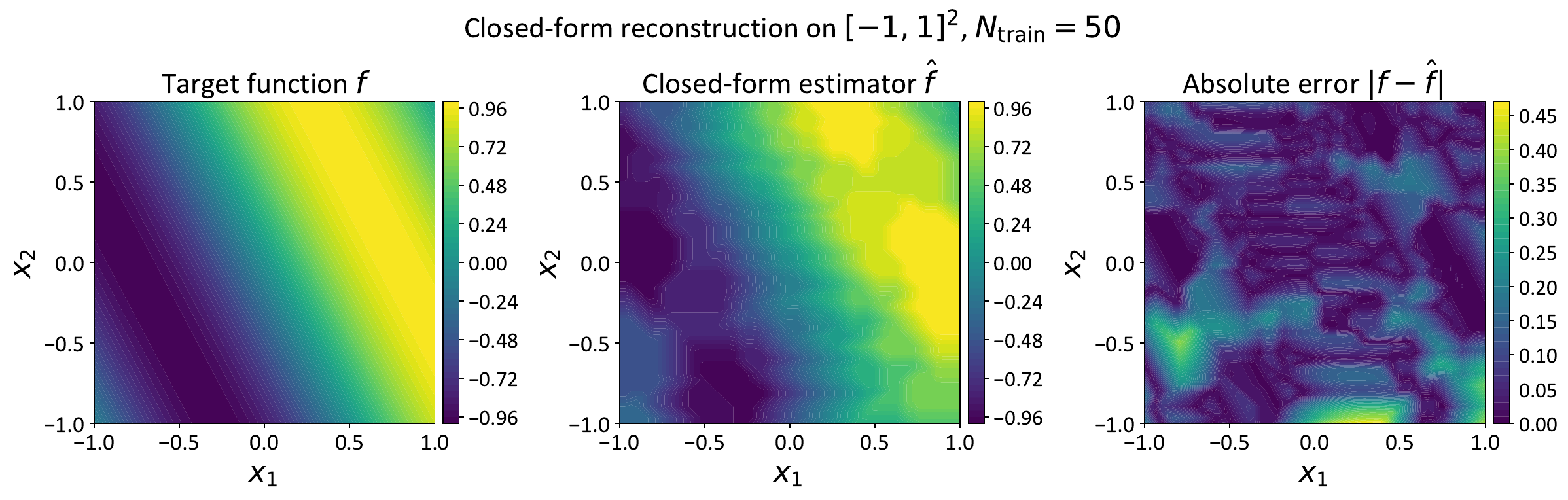}
    \caption{Closed-form reconstruction on $[-1,1]^2$ with $N_{\mathrm{train}}=50$ noiseless samples and $\ell_1$-distance as a metric.}
    \label{fig:closed-form-r2-n50_ell1}
\end{figure}

\begin{figure}[ht]
    \centering
    \includegraphics[width=0.95\textwidth]{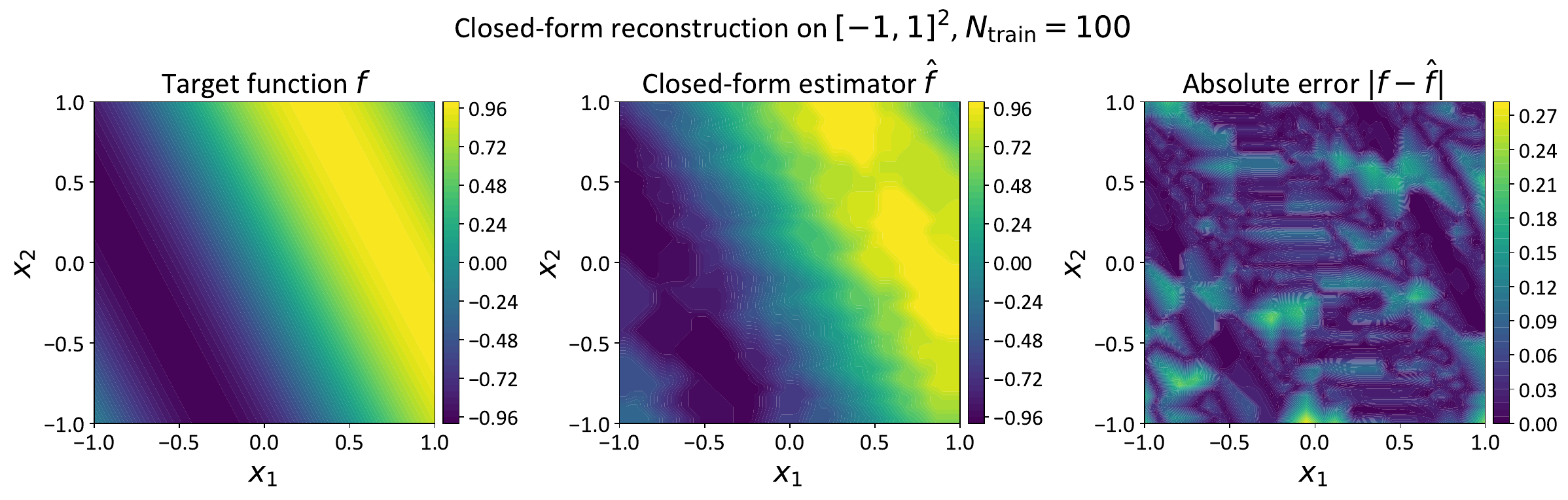}
    \caption{Closed-form reconstruction on $[-1,1]^2$ with $N_{\mathrm{train}}=100$ noiseless samples and $\ell_1$-distance as a metric.}
    \label{fig:closed-form-r2-n100-ell1}
\end{figure}

\begin{figure}[ht]
    \centering
    \includegraphics[width=0.95\textwidth]{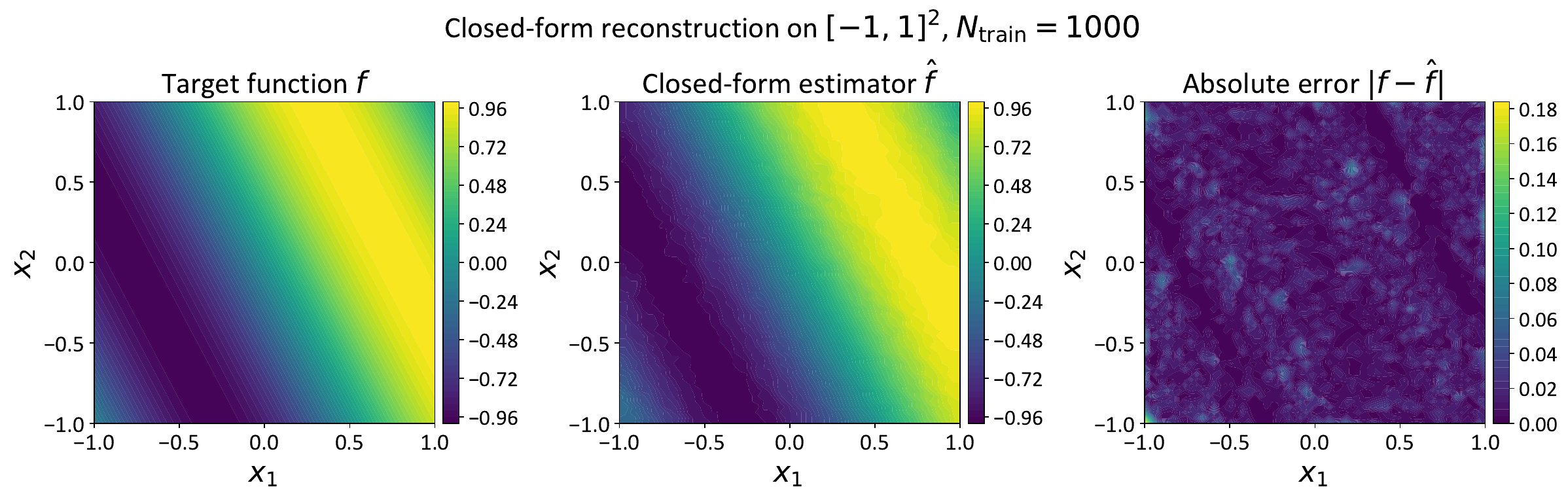}
    \caption{Closed-form reconstruction on $[-1,1]^2$ with $N_{\mathrm{train}}=1000$ noiseless samples and $\ell_1$-distance as a metric.}
    \label{fig:closed-form-r2-n1000-ell1}
\end{figure}

\subsection{Comparison with a Vanilla ReLU MLP}

We compare the closed-form Lipschitz estimator with two vanilla ReLU MLP baselines on a synthetic regression task over $\mathbb{R}^d$. In each run, we generate a random exactly $1$-Lipschitz target function given for $x\in[-1,1]^d$ by
\[
f(x)=\tanh(W_{\mathrm{norm}}^\top x+b)
\]
where $W_{\mathrm{norm}}$ is normalized according to the input metric. For the $\ell_2$ metric, we use
\[
W_{\mathrm{norm}}=\tfrac{W}{\|W\|_2},
\]
and choose $b=-W_{\mathrm{norm}}^\top x_0$ for a random interior point $x_0\in[-1,1]^d$, so that the maximum derivative of $\tanh$ is attained inside the domain. We then sample noiseless training data $Y_i=f(X_i)$ and compare the methods on an independently sampled test set.
For the closed-form method, we estimate the Lipschitz constant from the training data by $
L_{\mathcal{D}}
=
\max_{i\neq j}
\tfrac{|Y_i-Y_j|}{\|X_i-X_j\|_2}
$
and use the safety factor $
L_{\mathrm{used}}=1.05L_{\mathcal{D}}
$.
The closed-form estimator is then
\[
\hat{f}_{\mathrm{CF}}(x)
=
\frac12
\left[
\max_i \{Y_i-L_{\mathrm{used}}\|x-X_i\|_2\}
+
\min_i \{Y_i+L_{\mathrm{used}}\|x-X_i\|_2\}
\right].
\]
For the MLP baselines, we use ReLU networks with architecture with widths
\linebreak
$
(d ,W ,W, W, 1),
$
with widths $W=64$ and $W=128$. We report the test mean-squared error and the empirical Lipschitz estimate
$
\widehat{\mathrm{Lip}}(\hat{f})
=
\max_{i\neq j}
\frac{|\hat{f}(Z_i)-\hat{f}(Z_j)|}{\|Z_i-Z_j\|_2}
$,
computed on an independent set of Lipschitz-evaluation points ${Z_i}{i=1}^{N{\mathrm{lip}}}$; see Table~\ref{tab:mlp-closed-form-rd}.

\begin{table}[ht]
\centering
\caption{Comparison of the closed-form estimator and ReLU MLP baselines on random exactly $1$-Lipschitz target functions over $[-1,1]^d$. Hyperparameters: $d=2$, metric $\ell_2$, $N_{\mathrm{train}}=50$, $N_{\mathrm{test}}=2000$, $N_{\mathrm{lip}}=1000$, $20$ independent runs, MLP depth $3$, learning rate $10^{-3}$, and $1000$ training epochs. The MLP architecture is $d\to W\to W\to W\to 1$.}
\begin{tabular}{lcc}
\hline
\textbf{Method} & \textbf{Test MSE $\pm$ std} & \textbf{Empirical Lip. $\pm$ std} \\
\hline
Closed form & $2.605\times 10^{-3} \pm 1.854\times 10^{-3}$ & $1.034 \pm 0.011$ \\
MLP $W=64$ & $6.030\times 10^{-5} \pm 3.420\times 10^{-5}$ & $1.108\pm 0.050$ \\
MLP $W=128$ & $7.437\times 10^{-5} \pm 1.249\times 10^{-4}$ & $1.085 \pm 0.040$ \\
\hline
\end{tabular}
\label{tab:mlp-closed-form-rd}
\end{table}

Both methods achieve small test MSE on this smooth synthetic task, with the MLP baselines attaining lower error, while the closed-form estimator exhibits an empirical Lipschitz constant closer to the target value \(1\). This highlights the closed-form estimator’s advantage in controlled regularity and explicit construction, while still maintaining competitive predictive accuracy.

\subsection{Closed-Form Reconstruction on $\mathbb{R}$ in the Noisy Case}

We study the recovery of a clean one-dimensional target function from noisy observations. The domain is
$
    X=[-1,1]$ and $\rho(x,x')=|x-x'|$, 
and the ground truth is $
    f(x)=\sin(3x)
$.
For each noise level \(\sigma\), we sample
\[
    X_i\sim \mathrm{Unif}([-1,1]), 
    \qquad 
    Y_i=f(X_i)+\varepsilon_i,
    \qquad
    \varepsilon_i\sim \mathcal N(0,\sigma^2).
\]
We use \(N_{\mathrm{train}}=2000\), \(N_{\mathrm{test}}=2000\), and average over \(10\) random seeds, with
$
    \sigma\in\{0.0, 0.05, 0.1, 0.2, 0.4, 0.6,0.8,1,1.2,1.5\}.
$
We compare two estimators. The first is a standard ReLU MLP trained on the noisy labels by minimizing the empirical squared loss using Adam. The network has input dimension \(1\), output dimension \(1\), three hidden layers of width \(128\), ReLU activations, and is trained using Adam with learning rate \(10^{-3}\), batch size equal to the full training set, 
% no weight decay, 
and \(3000\) epochs.

The second estimator is the noisy universal formula. For each bandwidth \(\beta\), we locally average the noisy labels by
\[
    \bar Y_i^{(\beta)}
    =
    \frac{1}{|\{j:|X_j-X_i|\leq \beta\}|}
    \sum_{j:|X_j-X_i|\leq \beta} Y_j,
\]
and then apply
$
    \hat{f}_\beta(x)
    =
    \frac12
    \left[
    \min_i \{\bar Y_i^{(\beta)}+L_\beta |x-X_i|\}
    +
    \max_i \{\bar Y_i^{(\beta)}-L_\beta |x-X_i|\}
    \right]
$.
Here $L_\beta$ is the empirical Lipschitz constant of the averaged labels,
multiplied by $1.05$. The bandwidth $\beta$ is selected by cross-validation:
$20\%$ of the training data is held out for validation, and $\beta$ is chosen
from a logarithmic grid of 30 values in
$[\Delta_{\min},\operatorname{diam}(\mathcal X)]$, where $\Delta_{\min}$
is the minimum spacing between sorted training points and
$\operatorname{diam}(\mathcal X)=2$.
After selecting $\beta^\star$, we recompute the local averages and the final
estimator using the full training sample.
Both methods are evaluated against the clean target $f$, using clean test MSE.

\begin{figure}[ht]
    \centering
    \includegraphics[width=0.5\textwidth]{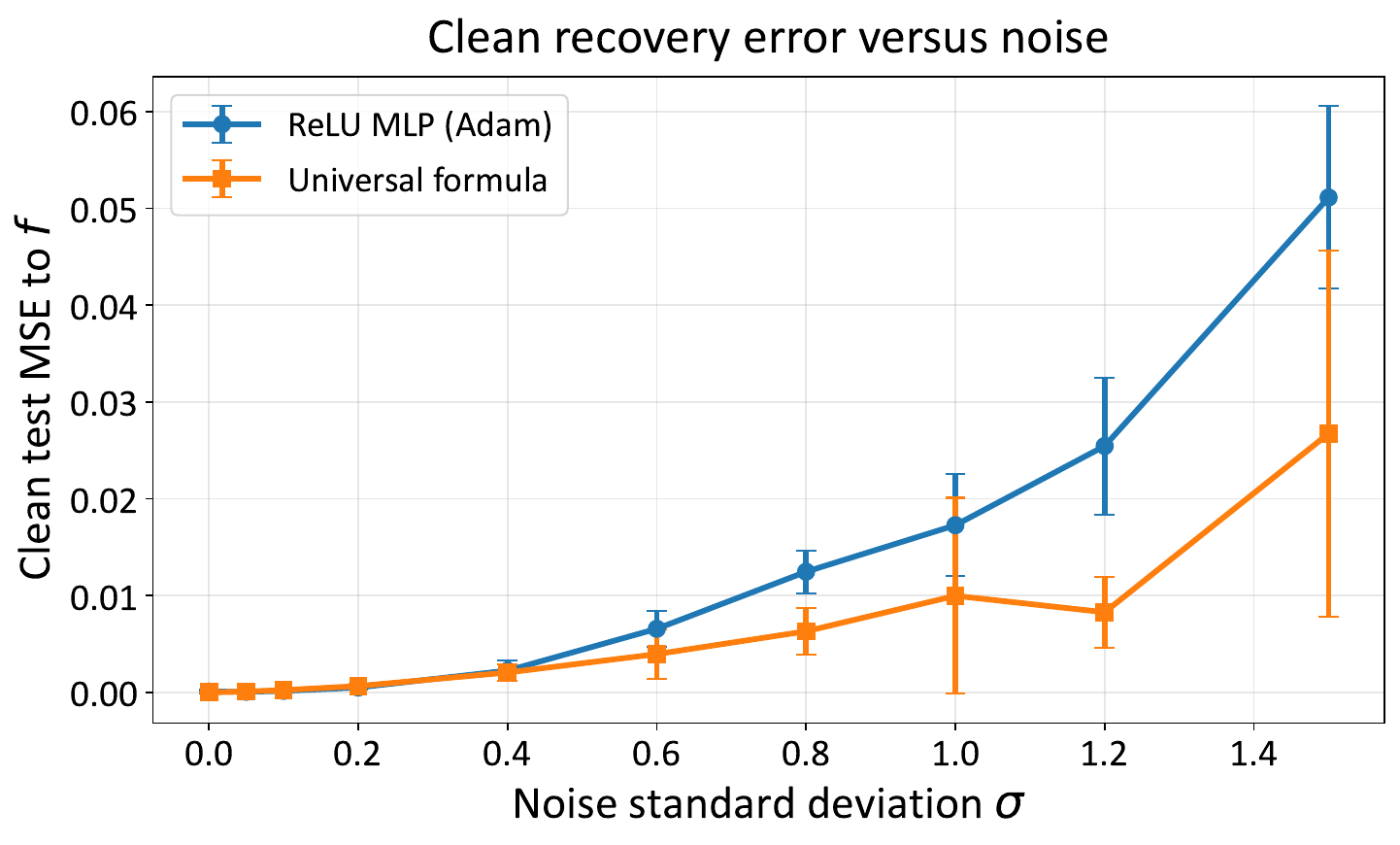}
    \caption{
    Clean test MSE versus noise standard deviation \(\sigma\). Both methods are trained on noisy labels but evaluated against the clean ground truth.
    }
    \label{fig:noisy-mse-sigma}
\end{figure}

Figure~\ref{fig:beta-vs-sigma} shows how the bandwidth selected by hold-out
validation changes with the noise level. For each value of $\sigma$, we report
the mean selected bandwidth $\beta^\star$ over 10 independent runs, with error
bars indicating one standard deviation. As the observation noise increases,
the selected bandwidth tends to increase, reflecting the need for stronger
local averaging to suppress noise before applying the Lipschitz extension.
Thus, $\beta^\star$ adapts to the noise level by controlling the bias-variance
tradeoff of the local smoothing step. Figure~\ref{fig:noisy-recovery-curves} shows ground truth, the ReLU MLP  estimate, the noisy universal-formula estimate, and noisy training samples for two different values of $\sigma$.

\begin{figure}[t]
    \centering
    \includegraphics[width=0.5\linewidth]
{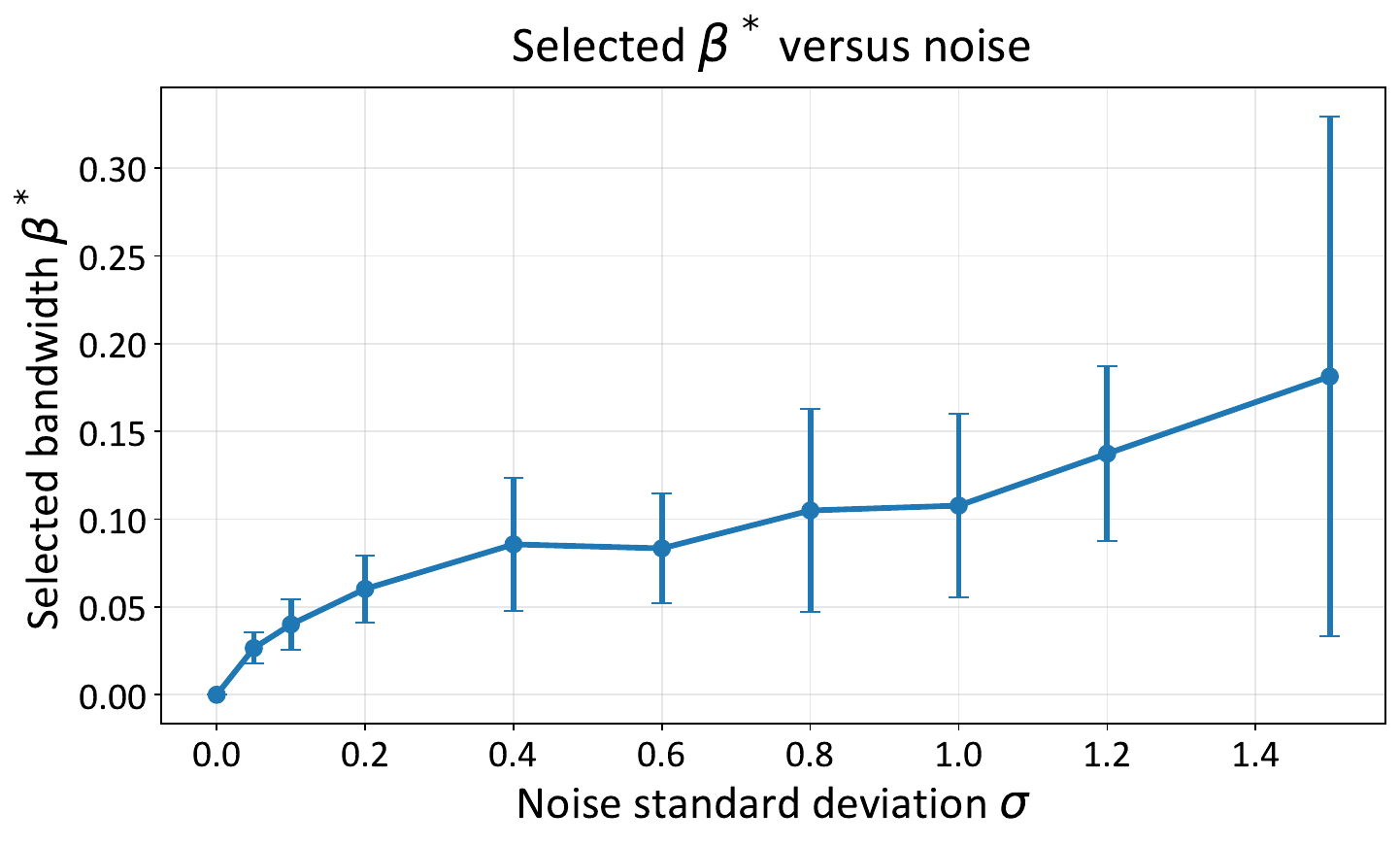}
    \caption{
    Selected bandwidth $\beta^\star$ as a function of the noise standard
    deviation $\sigma$. For each $\sigma$, $\beta^\star$ is selected by
    hold-out validation on the noisy training data. The curve shows the mean
    over 10 independent runs, and the error bars indicate one standard
    deviation.
    }
    \label{fig:beta-vs-sigma}
\end{figure}

\begin{figure}[ht]
    \centering

    \begin{subfigure}[t]{0.475\textwidth}
        \centering
        \includegraphics[width=\textwidth]{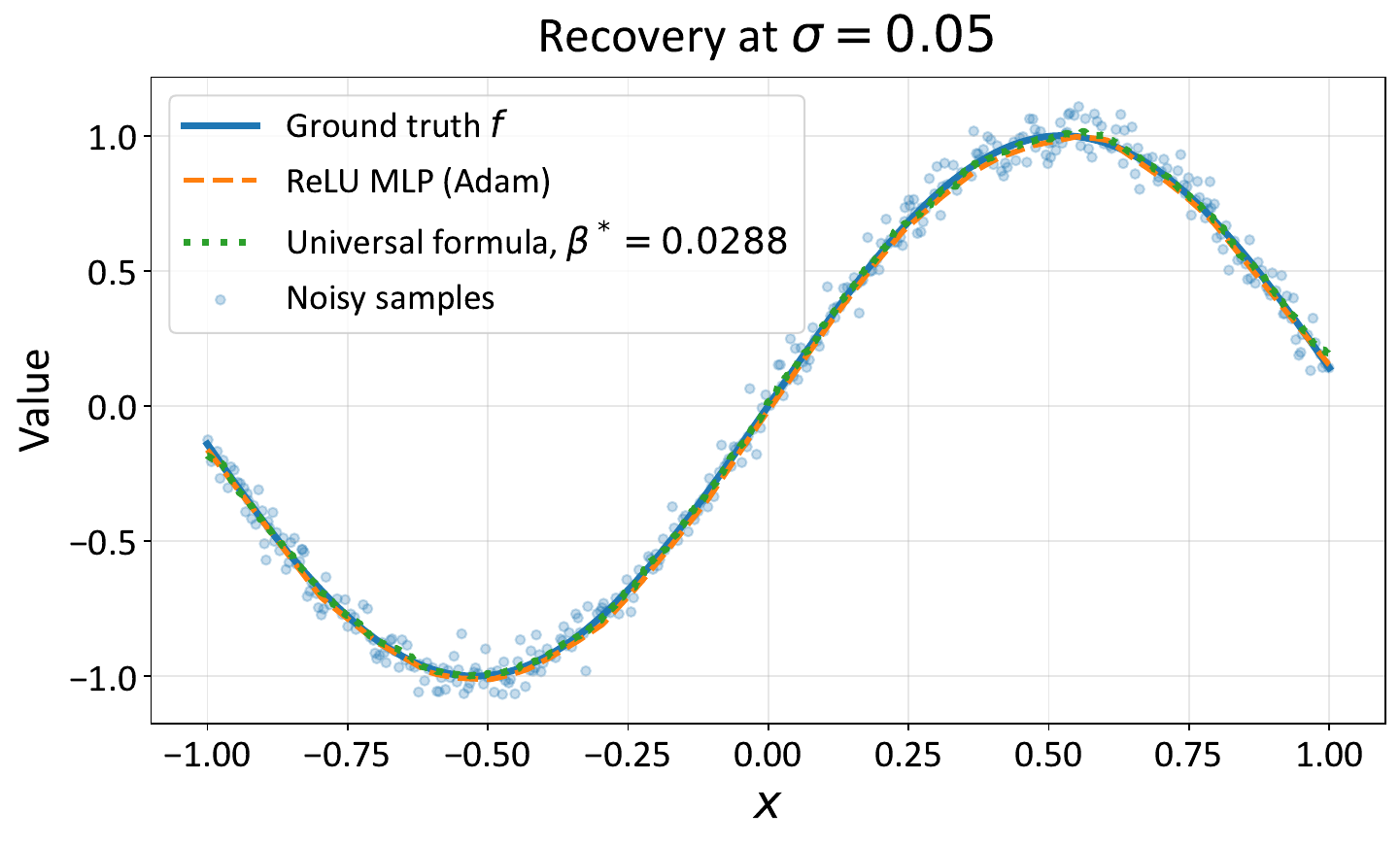}
        \caption{\(\sigma=0.05\)}
    \end{subfigure}
    \hfill
    \begin{subfigure}[t]{0.475\textwidth}
        \centering
        \includegraphics[width=\textwidth]{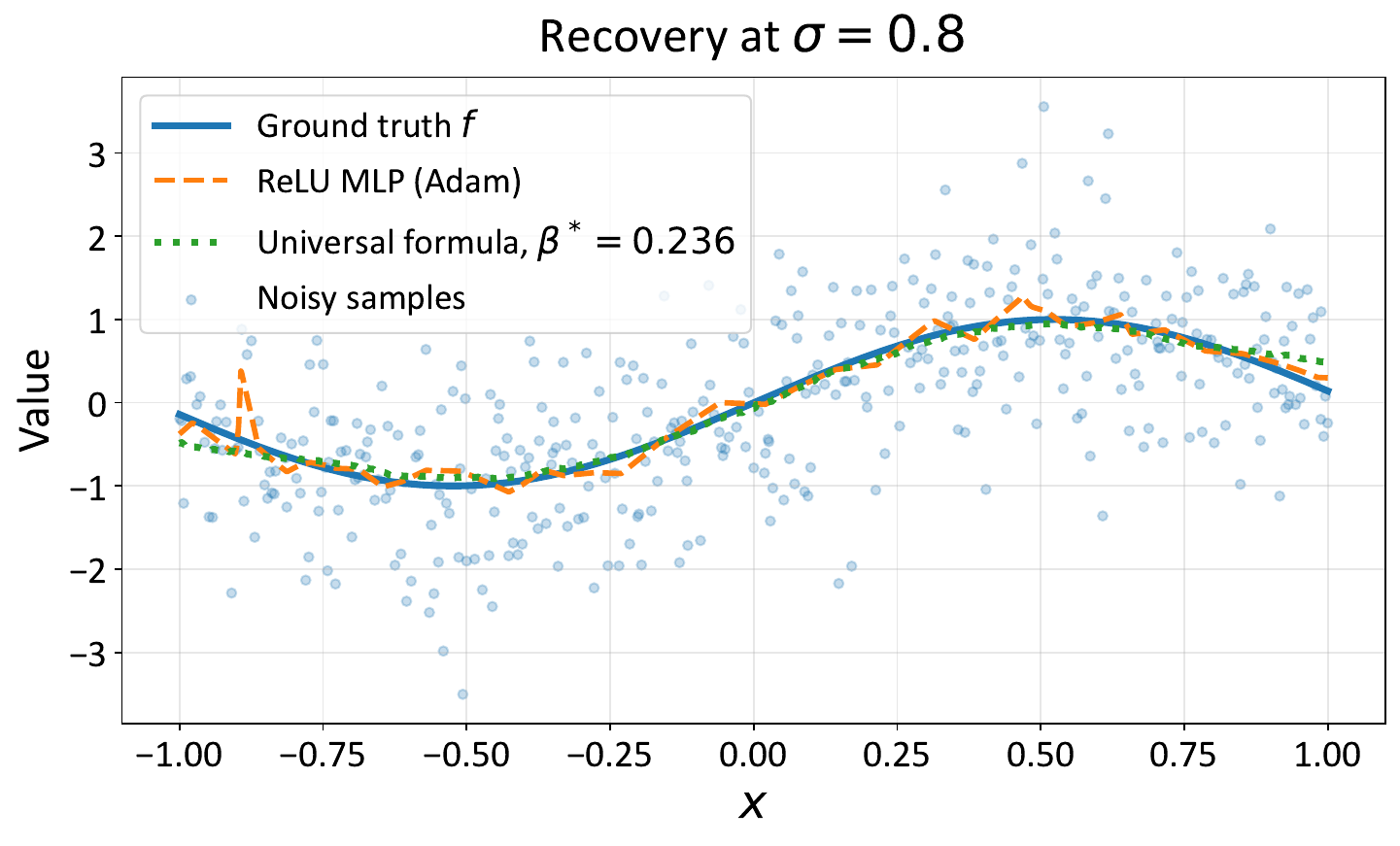}
        \caption{\(\sigma=0.8\)}
    \end{subfigure}

    \caption{
    Recovery of \(f(x)=\sin(3x)\) for representative noise levels. The plots show the ground truth, the ReLU MLP  estimate, the noisy universal-formula estimate, and noisy training samples.
    }
    \label{fig:noisy-recovery-curves}
\end{figure}
We next study how the recovery performance changes as the target function becomes more oscillatory. The domain and metric are again
$([-1,1],|\cdot|)$.
We fix the noise standard deviation at
    $\sigma=0.1$,
and consider the family of target functions
    $f_\omega(x)=\sin(\omega x)$,
where
    $\omega\in\{1,5,10,15,20,25,30,35,40,45,50\}$.
For each \(\omega\), we generate noisy observations
$
    X_i\sim \mathrm{Unif}([-1,1])
$, $Y_i=f_\omega(X_i)+\varepsilon_i$, $\varepsilon_i\sim \mathcal N(0,\sigma^2)$.  
We use \(N_{\mathrm{train}}=2000\), \(N_{\mathrm{test}}=2000\), and average the results over \(10\) random seeds. As \(\omega\) increases, the target function oscillates more rapidly, and the recovery problem becomes harder.

We use the same MLP architecture, optimization settings, bandwidth-selection procedure, and all-sample universal-formula implementation as in the preceding noise experiment.
Both methods are evaluated against the clean target \(f_\omega\), not against the noisy labels. The main error metric is the clean test MSE
$
    \tfrac{1}{N_{\mathrm{test}}}
    \sum_{j=1}^{N_{\mathrm{test}}}
    \big(
    \hat{f}(X_j^{\mathrm{test}})
    -
    f_\omega(X_j^{\mathrm{test}})
    \big)^2
$.
Figure~\ref{fig:noisy-frequency-mse} reports this error as a function of the frequency \(\omega\). Figure~\ref{fig:noisy-frequency-recovery} shows representative recovery plots for several frequencies.

\begin{figure}[H]%[ht]
    \centering
    \begin{subfigure}[t]{0.48\textwidth}
        \centering
        \includegraphics[width=\linewidth]{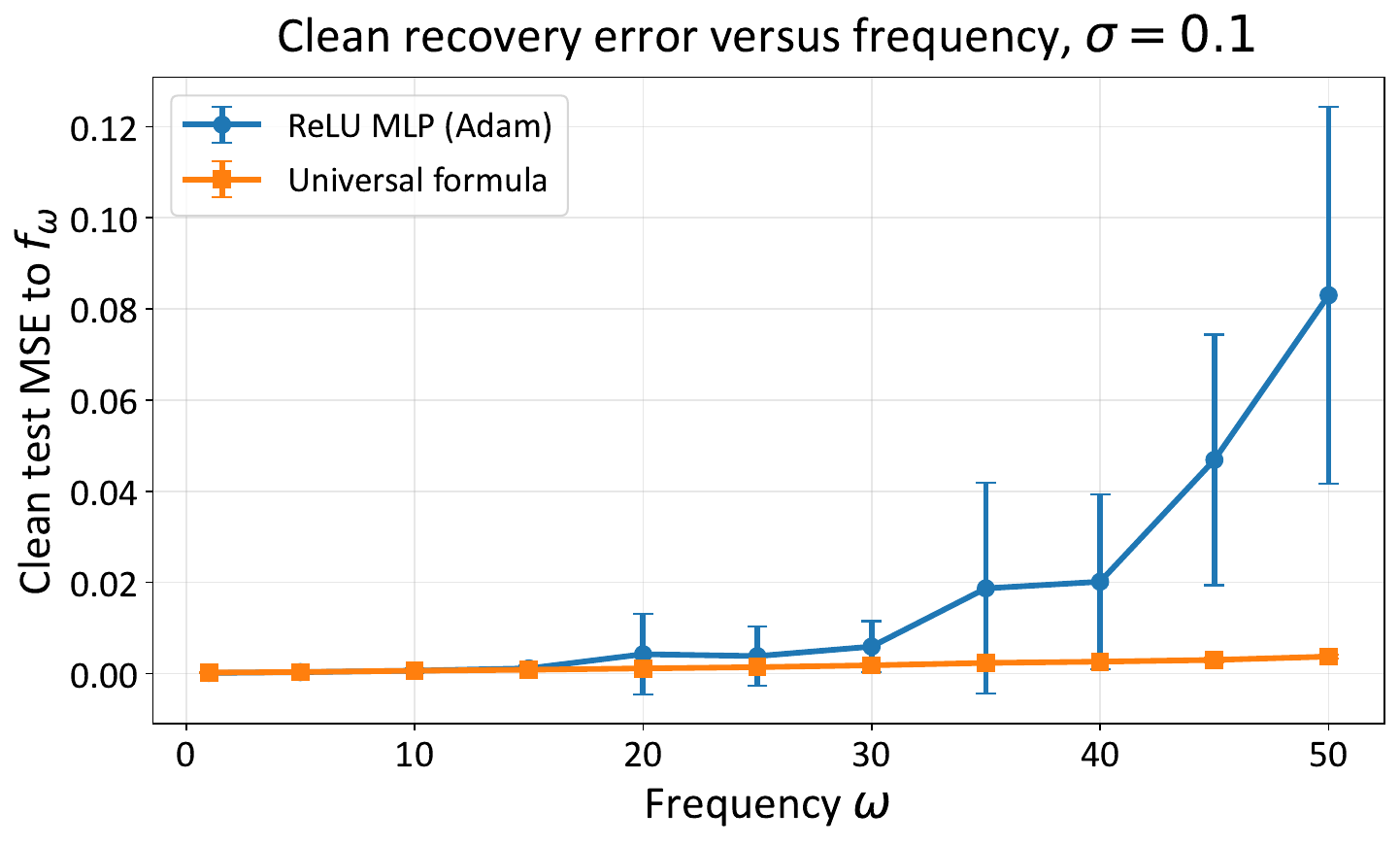}
        \caption{
        Clean test MSE versus target frequency $\omega$ under fixed noise level $\sigma=0.1$.
        Larger $\omega$ corresponds to a more oscillatory target function and hence a harder recovery problem.
        }
        \label{fig:noisy-frequency-mse}
    \end{subfigure}
    \hfill
    \begin{subfigure}[t]{0.48\textwidth}
        \centering
        \includegraphics[width=\linewidth]{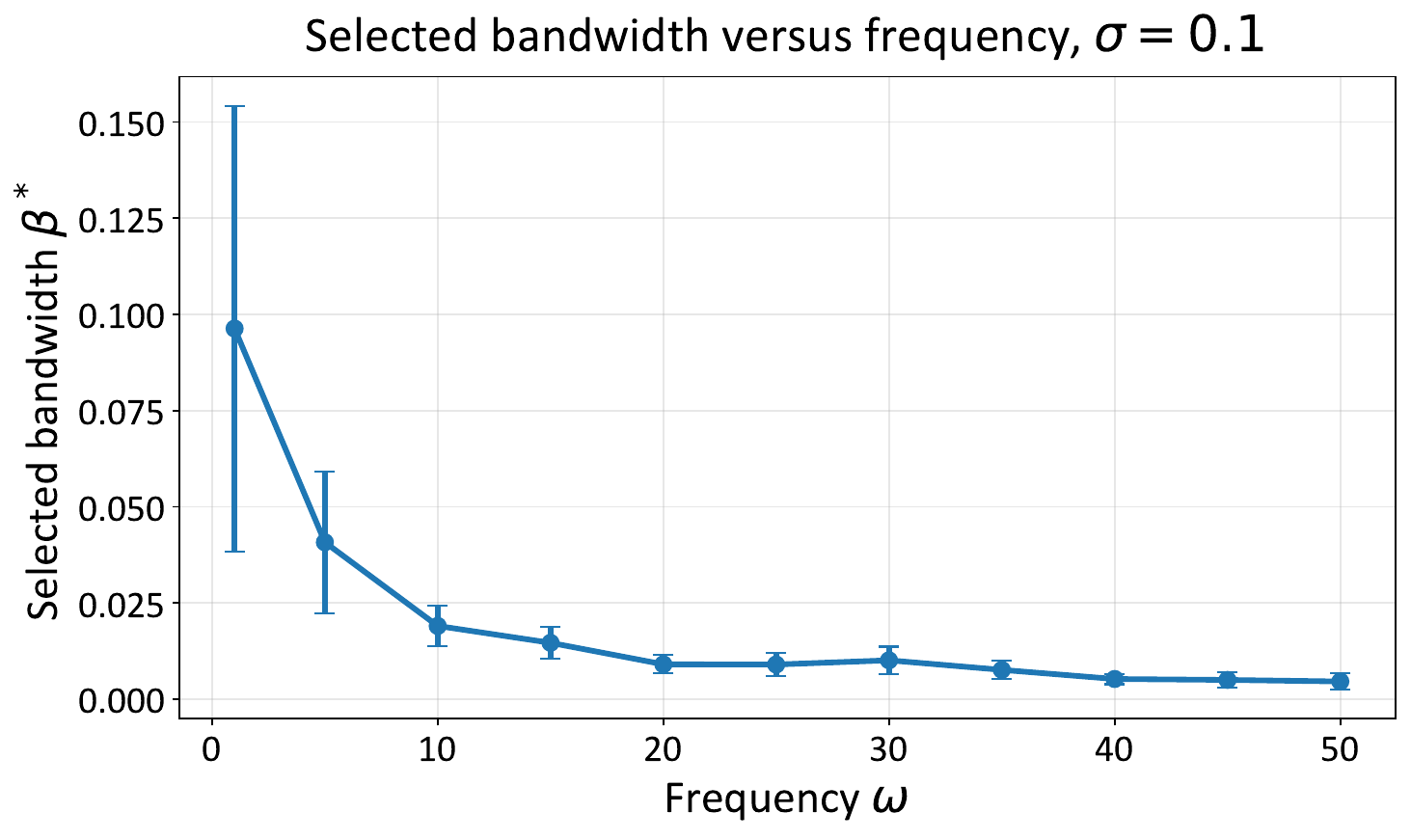}
        \caption{
        Selected bandwidth $\beta^\star$ as a function of the target frequency $\omega$ at fixed noise level $\sigma=0.1$.
        The curve shows the mean over 10 independent runs, with error bars indicating one standard deviation.
        }
        \label{fig:beta-vs-omega}
    \end{subfigure}
\end{figure}

Figure~\ref{fig:beta-vs-omega} shows how the bandwidth selected by hold-out validation changes with the frequency of the target function, under a fixed noise level $\sigma=0.1$. For each frequency $\omega$, we report the mean selected bandwidth $\beta^\star$ over 10 independent runs, with error bars indicating one standard deviation. As the frequency increases, the selected bandwidth generally decreases. This behaviour is natural: higher-frequency target functions vary more rapidly, so the local averaging step must use a smaller neighbourhood in order to avoid oversmoothing the signal. Thus, $\beta^\star$ adapts to the complexity of the target function by balancing noise reduction against preservation of local structure.

\begin{figure}[ht]
    \centering

    \begin{subfigure}[t]{0.475\textwidth}
        \centering
        \includegraphics[width=\textwidth]{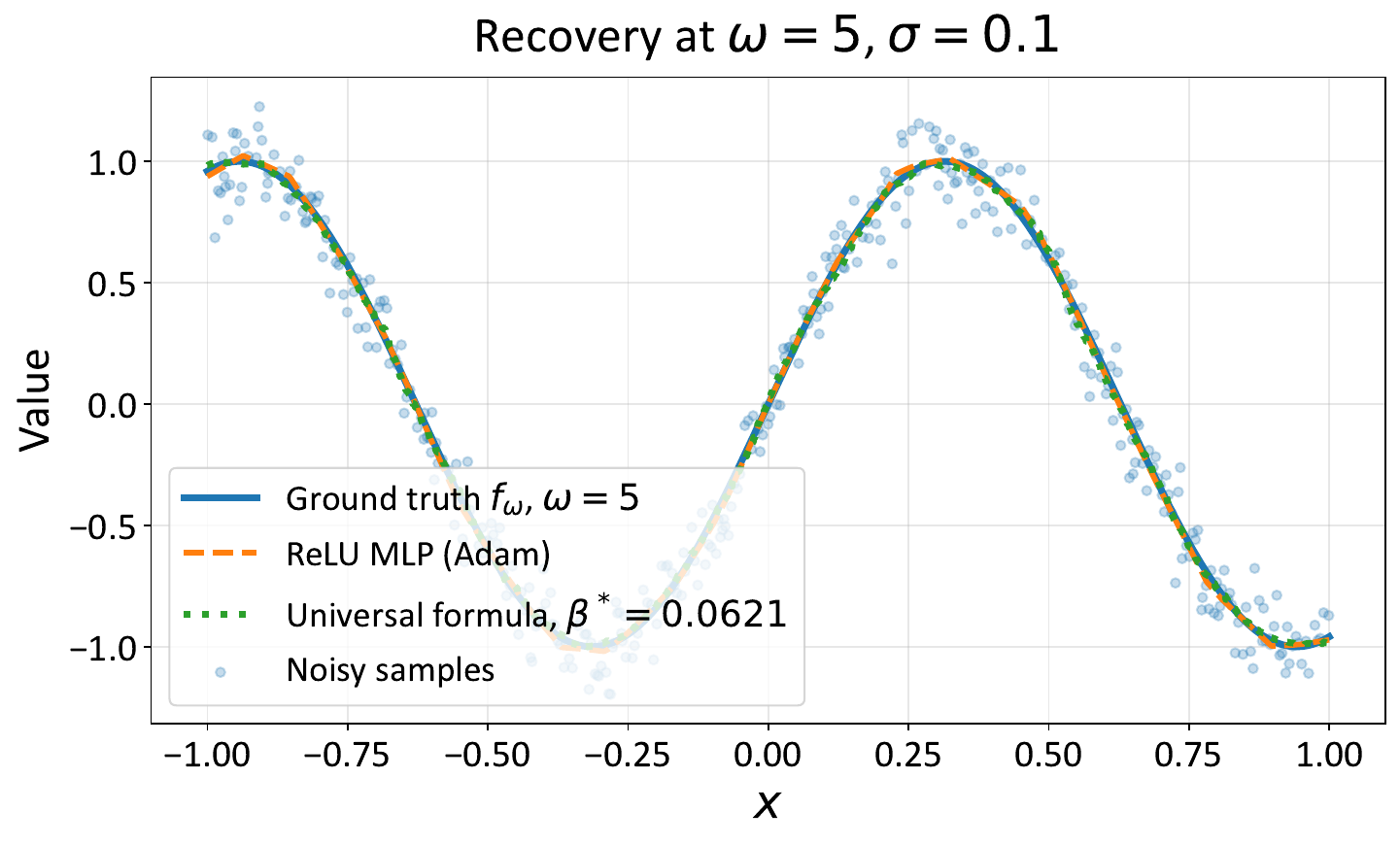}
        \caption{\(\omega=5\)}
    \end{subfigure}
    \hfill
    \begin{subfigure}[t]{0.475\textwidth}
        \centering
        \includegraphics[width=\textwidth]{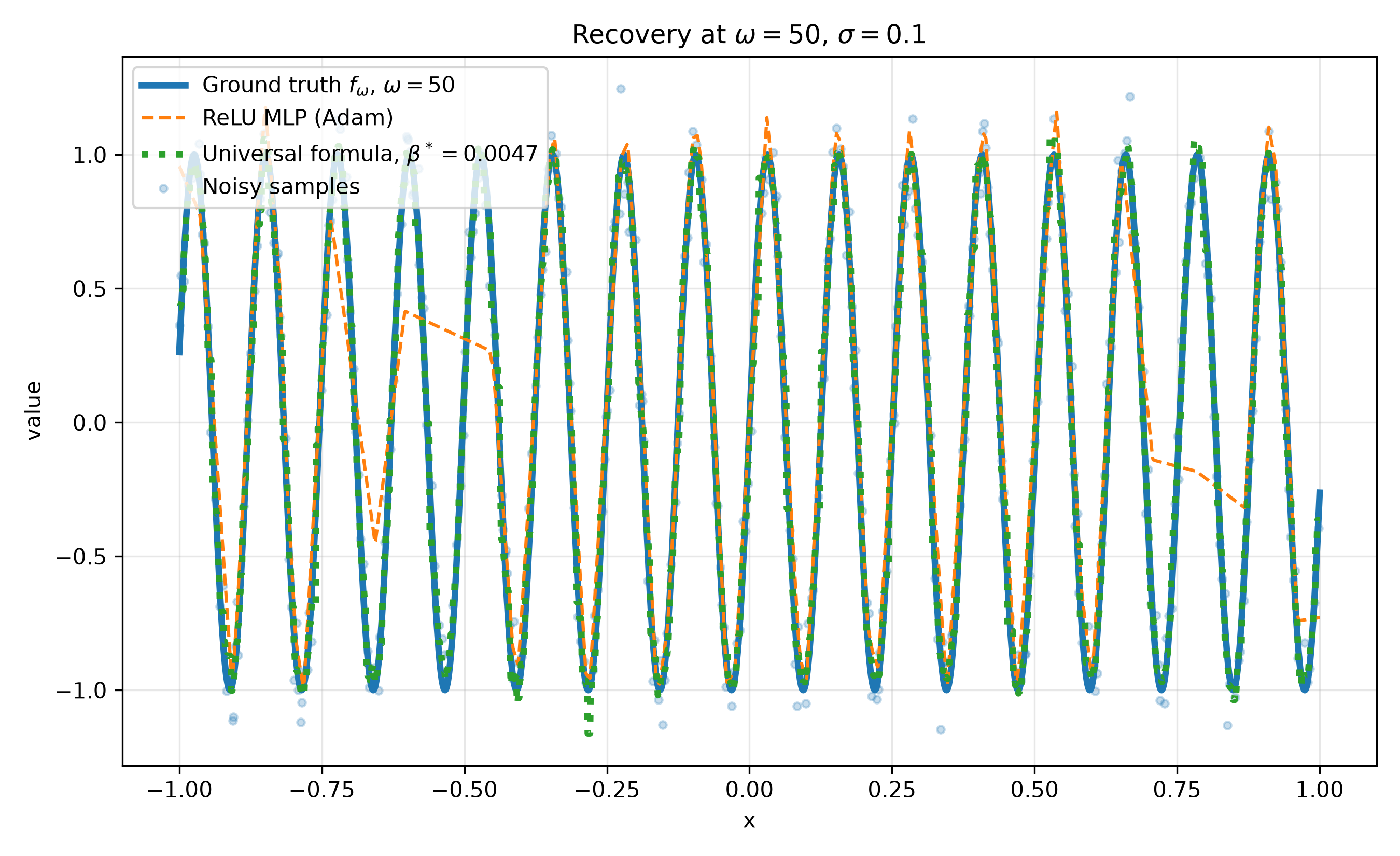}
        \caption{\(\omega=50\)}
    \end{subfigure}

    \caption{
    Recovery of \(f_\omega(x)=\sin(\omega x)\) for representative frequencies under fixed noise level \(\sigma=0.1\).
    The plots show the clean ground truth, the ReLU MLP estimate, the noisy universal-formula estimate, and the noisy training samples.
    }
    \label{fig:noisy-frequency-recovery}
\end{figure}

\section{Conclusion}
\label{s:Conclusion}

We introduce an explicit split-sample McShane--Whitney estimator for Lipschitz regression on general metric spaces and prove high-probability uniform recovery, attaining the minimax Euclidean polynomial exponent up to logarithmic factors without assuming access to an exact or approximate ERM (Theorem~\ref{thrm:nonparametric_noisy_case}).
On the Euclidean domains covered by Proposition~\ref{prop:representation} and Corollary~\ref{cor:realizability}, we give an exact sparse \texttt{ReLU}-MLP realization with explicitly assigned weights and a recovery bound containing no separate optimization-error term.
We further establish prescribed $L$-Lipschitz forward regularity (Proposition~\ref{prop:formula_lipschitz}), optimal-order fat-shattering complexity at the stated approximation scale (Theorem~\ref{thrm:fatshatteringotimality}), and stable near-minimax Lipschitz-width parameterization up to logarithmic factors in the oracle regime (Theorem~\ref{thm:stable_near_minimax_optimality_oracle_regime}).
Section~\ref{s:NumericalIllustrations} evaluates an all-sample implementation, whereas the finite-sample guarantee concerns the split-sample, fixed-cardinality estimator.

\appendix
\section{Additional Technical Background}
\label{s:TechDefs_Extras}
This appendix collects several technical definitions and notations that are not essential to the main exposition but will be useful in the proofs that follow.

\begin{definition}[Entropy number]
    Let $(X,\rho)$ be a metric space, and $\mathcal{M}\subseteq X$ be compact. For every fixed $n \in \mathbb{N}$, the entropy number $\varepsilon_n(\mathcal{M})_X$ is the infimum of all $\varepsilon > 0$ for which $2^n$ balls with centers from $X$ and radius $\varepsilon$ cover $\mathcal{M}$. Formally, we write
    \[
        \varepsilon_n(\mathcal{M})_X
        =
        \inf\left\{
            \varepsilon > 0 :
            \exists
            g_1,\cdots,g_{2^n} \in X,
            \
            \mathcal{M}
            \subseteq
            \bigcup_{j=1}^{2^n} B(g_j,\varepsilon)
        \right\}.
    \]
\end{definition}

\begin{definition}[Metric Entropy]
    Let $(X,\rho)$ be a metric space and let $\varepsilon>0$.  A set
    $\mathcal X_\varepsilon\subseteq X$ is called an $\varepsilon$-net of $X$ if
    $
    X
    \subseteq
    \bigcup_{x\in\mathcal X_\varepsilon}
        B(x,\varepsilon).
    $
    The $\varepsilon$-covering number of $X$ is
    \[
        \mathcal N(X,\rho,\varepsilon)
        \eqdef
        \inf
        \left\{
            |\mathcal X_\varepsilon|
            :
            \mathcal X_\varepsilon\subseteq X
            \text{ is an } \varepsilon\text{-net of }X
        \right\}.
    \]
    The metric entropy of $(X,\rho)$ is defined as 
    \[
        \log_2
        \mathcal N(X,\rho,\varepsilon)
        .
    \]
\end{definition}

\section{Proofs}
\label{s:Proofs}

\paragraph{Symmetry convention for the envelope arguments}
Before beginning the proofs, it is worth recording a notational convention.  For $\alpha\in[0,1]$, consider the convex combination of the upper and lower McShane--Whitney envelopes
\begin{equation}
\label{eq:formula__realizable__simplified}
\begin{aligned}
    \hat{f}_{\theta,\alpha}(x\mid G^{(\beta)})
    &\eqdef
        \alpha
        \min_{m\in[M]_+}
        \big\{
            V_m+\chi_m(x)
        \big\}
        +
        (1-\alpha)
        \max_{m\in[M]_+}
        \big\{
            V_m-\chi_m(x)
        \big\},
    \\
    V
    &\eqdef
        aG^{(\beta)}
        =
        (V_m)_{m=1}^M,
    \\
    \chi(x)
    &\eqdef
        \big(
            c_m\rho(x,b_m)
        \big)_{m=1}^M.
\end{aligned}
\end{equation}
The formula studied in the main text corresponds to
$\alpha=\tfrac12$, namely the central McShane--Whitney midpoint.  For notational simplicity, the elementary envelope arguments below are written for $\alpha=0$, i.e.\ for the lower envelope
$
    W_V(x)
\eqdef
    \max_{m\in[M]_+}
    \big\{
        V_m-\chi_m(x)
    \big\}
$.
The corresponding arguments for the upper envelope
$
    U_V(x)
    \eqdef
    \min_{m\in[M]_+}
    \big\{
        V_m+\chi_m(x)
    \big\}
$
are identical after simultaneously replacing $V$ and the target function
$f$ by $-V$ and $-f$, since $
    U_V(x)
    =
    -
    W_{-V}(x)
$.
Consequently, whenever a conclusion is preserved under convex combinations-such as well-definedness, measurability, or Lipschitz regularity---it follows for every $\alpha\in[0,1]$, and in particular for $\alpha=\tfrac12$, without repeating the same min/max calculation twice.

The use of both envelopes matters quantitatively only for the sharper central-midpoint approximation estimate.  Indeed, in the notation of Lemma~\ref{lem:central_midpoint_stability}, the complementary one-sided bounds for $U_V$ and $W_V$ imply that
\[
    f(x)
    -
    \Delta
    -
    2(1-\alpha)L\eta
    \le
    \hat{f}_{\theta,\alpha}(x\mid G^{(\beta)})
    \le
    f(x)
    +
    \Delta
    +
    2\alpha L\eta.
\]
Therefore,
$
    \big|
        \hat{f}_{\theta,\alpha}(x\mid G^{(\beta)})
        -
        f(x)
    \big|
    \le
    \Delta
    +
    2\max\{\alpha,1-\alpha\}L\eta.
$
This worst-case bound is minimized at $\alpha=\tfrac12$, for which it
becomes
$
    \big|
        \hat{f}_{\theta,1/2}(x\mid G^{(\beta)})
        -
        f(x)
    \big|
    \le
    \Delta+L\eta
$; as stated in
Lemma~\ref{lem:central_midpoint_stability}.  Thus, whenever this sharper central-midpoint estimate is required, we invoke Lemma~\ref{lem:central_midpoint_stability} (below); otherwise, we display only one of the two symmetric envelope calculations for notational simplicity.

With that said, our objective here is to analyze the following object, trained on noisy data corrupted with measurement noise.  We consider the i.i.d.\ LSI regime.
\begin{definition}[$N$-Point Whitney-McShane Operator]
\label{defn:Whitney_McShane_operator}
Let $(X,\rho)$ be a bounded metric space and fix $N\in \mathbb{N}_+$.  The \textit{$N$-point Whitney-McShane operator} is the map
\[
\begin{aligned}
    \mathcal{E}:
    (\operatorname{Lip}((X,\rho);1),\|\cdot\|_\infty)
    \times
    ([X\times \mathbb{R}]^N,\rho_1)
&\longrightarrow
    (\operatorname{Lip}((X,\rho);1),\|\cdot\|_\infty),
\\
    (f,\mathcal{D}_N)
&\longmapsto
    \mathcal{E}(f,\mathcal{D}_N),
\end{aligned}
\]
where, for every $\mathcal{D}_N\eqdef (x_n,y_n)_{n=1}^N\in [X\times \mathbb{R}]^N$, for each $z\in X$ by
\[
        \mathcal{E}(f,\mathcal{D}_N)(z)
    \eqdef
        \max_{n\in[N]_+}
        \,
            \big\{
                [f(x_n)+y_n]-\rho(z,x_n)
            \big\}
.
\]
\end{definition}

\subsection{Step 1: estimator in the Noiseless Case}
\label{s:Step1Noiseless}

\begin{proposition}[Regular Uniform Approximation: Noiseless Case - $\tilde{O}(\varepsilon^{-2(s+1)})$ Samples Suffice]
\label{prop:random_apprx_noisy}
Suppose that Assumption~\ref{assumption:main_1Lip} holds.  For every $\varepsilon>0$, 
and each $f\in \operatorname{Lip}((X,\rho);1)$ and every
$\delta_{\operatorname{net}},\delta_{\operatorname{noise}}\in(0,1)$, if
\begin{equation}
\label{eq:required_no_samples_noisy_Ahlfors}
    N
    \ge
    \frac{1}{a}
    \left(\frac{4}{\varepsilon}\right)^s
    \log\left(\frac{N_{\varepsilon/4}(X,\rho)}{\delta_{\operatorname{net}}}\right)
,
\end{equation}
then the following holds with probability at-least $-\delta_{\mathrm{net}}-\delta_{\mathrm{noise}}$
% have
\[
\begin{aligned}
% \mathbb{P}\Bigg(
% &
\sup_{x\in\mathcal X}
 \bigl|f(x)-\mathcal E(f\mid\mathbb D_N)(x)\bigr|
 < \varepsilon
 + \sqrt{C\log\!\left(\frac{2N}{\delta_{\mathrm{noise}}}\right)}
% \\[-1mm]
% &
% \quad\text{and}\quad
\,\mbox{ and }\,
 \{X_n\}_{n=1}^{N}
 \text{ is an $\varepsilon$-net of }(\mathcal X,\rho)
% \Bigg)
% \ge 1-\delta_{\mathrm{net}}-\delta_{\mathrm{noise}} .
\end{aligned}
\]
\end{proposition}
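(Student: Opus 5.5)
The proof intersects two high-probability events, a covering event controlled by $\delta_{\operatorname{net}}$ and a noise event controlled by $\delta_{\operatorname{noise}}$, and then closes with a deterministic envelope argument. Throughout, I read $\mathcal E(f\mid\mathbb D_N)(z)=\max_{n\in[N]_+}\{f(X_n)+\varepsilon_n-\rho(z,X_n)\}$, i.e.\ the operator of Definition~\ref{defn:Whitney_McShane_operator} with $y_n=\varepsilon_n$.

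\emph{Covering event.} I fix a minimal $\varepsilon/4$-net $\{c_j\}_{j=1}^{J}$ of $(X,\rho)$, so $J=N_{\varepsilon/4}(X,\rho)$. By Assumption~\ref{assumption:main_1Lip}(i), each ball $B_\rho(c_j,\varepsilon/4)$ has $\mu_X$-mass at least $a(\varepsilon/4)^s$. The probability that no $X_n$ falls in this ball is therefore at most
\[
(1-a(\varepsilon/4)^s)^N\le \exp(-Na(\varepsilon/4)^s)\le \delta_{\operatorname{net}}/J,
\]
where the last inequality is exactly \eqref{eq:required_no_samples_noisy_Ahlfors}. A union bound over $j$ shows that, with probability at least $1-\delta_{\operatorname{net}}$, every ball $B(c_j,\varepsilon/4)$ contains a sample point. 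Hence every $z\in X$ lies within distance $<\varepsilon/2$ of some $X_n$. In particular $\{X_n\}$ is an $\varepsilon$-net, and even an $\varepsilon/2$-net, which we need below.

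\emph{Noise event.} From the log-Sobolev inequality \eqref{eq:noise_LSI}, Herbst's argument applied to $g=e^{\lambda F/2}$ with $F(\varepsilon)=\pm\varepsilon$ (which is $1$-Lipschitz) gives $\mathbb E e^{\lambda\varepsilon_n}\le e^{C\lambda^2/4}$, using that $\nu$ is centred. Optimizing in $\lambda$ yields the sub-Gaussian tail $\mathbb P(|\varepsilon_n|\ge t)\le 2e^{-t^2/C}$. A union bound over $n\in[N]_+$ then shows that, with probability at least $1-\delta_{\operatorname{noise}}$,
\[
\max_n|\varepsilon_n|<\sqrt{C\log(2N/\delta_{\operatorname{noise}})}.
\]
I expect this step to be the main (if routine) obstacle. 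The delicate points are tracking the LSI normalization so that the constant comes out as exactly $C$ rather than $2C$ or $C/2$, and justifying the exponential integrability needed in Herbst's argument, e.g.\ by truncating $F$ first.

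\emph{Deterministic step.} Work on the intersection of the two events, which has probability at least $1-\delta_{\operatorname{net}}-\delta_{\operatorname{noise}}$, and write $E\eqdef\max_n|\varepsilon_n|$.

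For the upper bound: since $f$ is $1$-Lipschitz, $f(X_n)-\rho(z,X_n)\le f(z)$ for every $n$. Taking the maximum gives $\mathcal E(f\mid\mathbb D_N)(z)\le f(z)+E$.

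For the lower bound: pick $n$ with $\rho(z,X_n)<\varepsilon/2$, which is possible on the covering event. Then
\[
\mathcal E(f\mid\mathbb D_N)(z)\ge f(X_n)+\varepsilon_n-\rho(z,X_n)\ge f(z)-2\rho(z,X_n)-E> f(z)-\varepsilon-E.
\]

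Combining the two bounds gives $|f(z)-\mathcal E(f\mid\mathbb D_N)(z)|<\varepsilon+E$. The inequality is strict because $2\rho(z,X_n)<\varepsilon$ and $E$ is strictly below the noise threshold. Taking the supremum over $z$ and inserting the noise bound yields the claimed estimate, jointly with the $\varepsilon$-net property.
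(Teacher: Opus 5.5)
Your proof is correct and follows the paper's route. It uses a coupon-collector covering event with balls of radius $\varepsilon/4$, which yields an $\varepsilon/2$-net; a log-Sobolev sub-Gaussian tail with a union bound for $\max_n|\varepsilon_n|$; and a deterministic envelope estimate on the intersection of these two events. The only difference is cosmetic: you merge the paper's clean-data approximation bound (Lemma~\ref{lem:Whitney_McShane_uniform_approximation}) and its noise-propagation bound $\Xi_N\le\max_n|\varepsilon_n|$ (Lemma~\ref{lem:Whitney_McShane_noise_propagation}) into a single two-sided estimate, instead of passing through $\mathcal{E}(f\mid\mathbb{D}_N^0)$ and the triangle inequality.
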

\noindent
We now directly deduce Theorem~\ref{thrm:random_apprx_noisy}.
\begin{proof}[{Proof of Theorem~\ref{thrm:random_apprx_noisy}}]
Since $Y_n=f(X_n)$ for each $n\in [N]_+$, then $C=0$.  Setting $\delta_{\operatorname{net}}=\delta_{\operatorname{noise}}\eqdef \tfrac{\delta}{2}$ in Proposition~\ref{prop:random_apprx_noisy} yields the conclusion.
\end{proof}

\begin{lem}[Probability of Seeing $\varepsilon$-Most of A Compact Metric Space]
\label{lem:coupon_collector_random_net}
Let $(X,\rho)$ be a compact metric space and let $\mu\in\mathcal{P}(X)$.  Fix
$\varepsilon>0$, and let $x_1^\star,\ldots,x_M^\star\in X$ be such that
\begin{equation}
\label{eq:covering_condition}
    X
    =
    \bigcup_{m=1}^{N_{\varepsilon/2}(X,\rho)}
    B_\rho(x_m^\star,\varepsilon/2)
.
\end{equation}
Assume that
$
    p_\varepsilon
    \eqdef
    \min_{m\in[M]_+}
    \mu\big(B_\rho(x_m^\star,\varepsilon/2)\big)
    >
    0.
$
Let $X_1,\ldots,X_N$ be i.i.d.\ with law $\mu$; then, for every $\delta\in(0,1)$, if
\begin{equation}
\label{eq:required_no_samples}
    N
    \ge
    \frac{1}{p_\varepsilon}
    \log\left(\frac{M}{\delta}\right)
,
\end{equation}
then
$
    \mathbb{P}
    \left(
        \{X_n\}_{n=1}^N
        \mbox{ is an } \varepsilon\mbox{-net of }(X,\rho)
    \right)
    \ge
    1-\delta
$.
\end{lem}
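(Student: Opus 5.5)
The argument is a coupon-collector union bound over the fixed cover $\{B_\rho(x_m^\star,\varepsilon/2)\}_{m=1}^M$ from~\eqref{eq:covering_condition}, where $M=N_{\varepsilon/2}(X,\rho)$. The reduction is deterministic. Suppose every ball $B_\rho(x_m^\star,\varepsilon/2)$ contains at least one sample point, say $X_{n_m}$. Take any $x\in X$. By~\eqref{eq:covering_condition} there is an $m$ with $\rho(x,x_m^\star)<\varepsilon/2$. The triangle inequality then gives
\[
\rho(x,X_{n_m})\le \rho(x,x_m^\star)+\rho(x_m^\star,X_{n_m})<\varepsilon,
\]
so $\{X_n\}_{n=1}^N$ is an $\varepsilon$-net. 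Consequently the failure event is contained in $\bigcup_{m=1}^M E_m$, where $E_m\eqdef\{X_n\notin B_\rho(x_m^\star,\varepsilon/2)\ \forall n\in[N]_+\}$.

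Next we bound each $E_m$. Because the $X_n$ are i.i.d.\ with law $\mu$, independence gives
\[
\mathbb P(E_m)=\big(1-\mu(B_\rho(x_m^\star,\varepsilon/2))\big)^N\le (1-p_\varepsilon)^N\le e^{-Np_\varepsilon},
\]
using the elementary inequality $1-t\le e^{-t}$. A union bound over the $M$ balls then yields
\[
\mathbb P\Big(\bigcup_m E_m\Big)\le M e^{-Np_\varepsilon}.
\]
Under~\eqref{eq:required_no_samples} we have $Np_\varepsilon\ge\log(M/\delta)$, so this is at most $\delta$. This proves the claim.

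The only point needing care is bookkeeping rather than mathematics. The statement uses both $M$ and $N_{\varepsilon/2}(X,\rho)$ for the number of centres, so I will read them as equal, which is admissible since a finite cover exists by compactness. I also assume $p_\varepsilon>0$, as hypothesized. Measurability of $E_m$ is immediate because the balls are Borel. I do not expect any real obstacle in this argument.
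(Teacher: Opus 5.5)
Your proposal is correct and follows the paper's proof essentially step for step. The paper also defines the event that every ball $B_\rho(x_m^\star,\varepsilon/2)$ is hit, gets the $\varepsilon$-net property from the triangle inequality, and concludes with the union bound $\sum_{m=1}^M(1-\mu(B_\rho(x_m^\star,\varepsilon/2)))^N\le Me^{-Np_\varepsilon}\le\delta$, implicitly making the same identification $M=N_{\varepsilon/2}(X,\rho)$ that you make.
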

\begin{proof}[{Proof of Lemma~\ref{lem:coupon_collector_random_net}}]
For each $m\in[M]_+$, set
$
    B_m\eqdef B_\rho(x_m^\star,\varepsilon/2).
$
Let $\mathcal{G}$ denote the event that every element of the cover $(B_m)_{m=1}^M$ is hit by at least one of the samples $X_1,\ldots,X_N$; that is,
\[
    \mathcal{G}
    \eqdef
    \bigcap_{m=1}^M
    \left\{
        \exists n\in[N]_+
        \mbox{ such that }
        X_n\in B_m
    \right\}.
\]
We first show that $\mathcal{G}$ implies that $\{X_n\}_{n=1}^N$ is an $\varepsilon$-net of $(X,\rho)$.  Fix $x\in X$.  Since $(B_m)_{m=1}^M$ covers $X$, there exists $m\in[M]_+$ such that $x\in B_m$.  On $\mathcal{G}$, there exists $n\in[N]_+$ such that $X_n\in B_m$.  Hence, by the triangle inequality,
\[
    \rho(x,X_n)
    \le
    \rho(x,x_m^\star)
    +
    \rho(x_m^\star,X_n)
    <
    \frac{\varepsilon}{2}
    +
    \frac{\varepsilon}{2}
    =
    \varepsilon.
\]
Thus, on $\mathcal{G}$, every $x\in X$ lies within distance $\varepsilon$ of some sample point $X_n$.
It remains to estimate $\mathbb{P}(\mathcal{G}^c)$.  By the union bound,
\[
\begin{aligned}
    \mathbb{P}(\mathcal{G}^c)
    &=
    \mathbb{P}
    \left(
        \bigcup_{m=1}^M
        \left\{
            X_n\notin B_m
            \mbox{ for every }n\in[N]_+
        \right\}
    \right)
    \\
    &\le
    \sum_{m=1}^M
    \mathbb{P}
    \left(
        X_n\notin B_m
        \mbox{ for every }n\in[N]_+
    \right)
    \\
    &=
    \sum_{m=1}^M
    \big(1-\mu(B_m)\big)^N
    \\
    &\le
    \sum_{m=1}^M
    e^{-N\mu(B_m)}
    \\
    &\le
    M e^{-Np_\varepsilon}.
\end{aligned}
\]
If \eqref{eq:required_no_samples} holds, then $M e^{-Np_\varepsilon}\le \delta$.  Therefore $\mathbb{P}(\mathcal{G})\ge 1-\delta$.  Since $\mathcal{G}$ implies that $\{X_n\}_{n=1}^N$ is an $\varepsilon$-net of $(X,\rho)$, the claim follows.
\end{proof}

\begin{lem}[Uniform approximation by the estimator]
\label{lem:Whitney_McShane_uniform_approximation}
Let $(X,\rho)$ be a metric space and let $f\in \operatorname{Lip}((X,\rho);1)$.  Let
$x_1,\ldots,x_N\in X$ be an $\varepsilon/2$-net of $(X,\rho)$, and set
$
    \mathcal{D}^{\star}\eqdef ((x_n,0))_{n=1}^N.
$
Then the Whitney-McShane \textit{approximator}
$
    \mathcal{E}(f|\mathcal{D}^{\star})
$
satisfies
\begin{equation}
\label{eq:unif_approx}
    \sup_{x\in X}
    \,
    \big|
        f(x)
        -
        \mathcal{E}(f|\mathcal{D}^{\star})(x)
    \big|
    \le
    \varepsilon
.
\end{equation}
\end{lem}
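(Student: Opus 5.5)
The estimator here is $\mathcal{E}(f|\mathcal{D}^\star)(z)=\max_{n}\{f(x_n)-\rho(z,x_n)\}$, i.e.\ the lower (Whitney-type) envelope of the noiseless data $(x_n,f(x_n))_{n=1}^N$ with slope $1$, since all labels $y_n=0$. The plan is to prove two one-sided bounds at an arbitrary point $x\in X$, each using only the $1$-Lipschitz property of $f$ and the net property of $\{x_n\}$.

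\emph{Upper bound.} For every $n\in[N]_+$, the $1$-Lipschitz property gives $f(x_n)\le f(x)+\rho(x,x_n)$, so $f(x_n)-\rho(x,x_n)\le f(x)$. Taking the maximum over $n$ yields $\mathcal{E}(f|\mathcal{D}^\star)(x)\le f(x)$. This step requires no covering at all.

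\emph{Lower bound.} Since $\{x_n\}$ is an $\varepsilon/2$-net, there is an index $n^\star$ with $\rho(x,x_{n^\star})<\varepsilon/2$. Then
\[
\mathcal{E}(f|\mathcal{D}^\star)(x)\ge f(x_{n^\star})-\rho(x,x_{n^\star})\ge f(x)-2\rho(x,x_{n^\star})>f(x)-\varepsilon,
\]
where the middle inequality again uses the Lipschitz property, now in the form $f(x_{n^\star})\ge f(x)-\rho(x,x_{n^\star})$.

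Combining the two bounds gives $0\le f(x)-\mathcal{E}(f|\mathcal{D}^\star)(x)<\varepsilon$ for every $x\in X$. Taking the supremum over $x$ yields \eqref{eq:unif_approx}, with $\le\varepsilon$ because the strict inequality may be lost in the supremum. The only step needing care is the lower bound: the one-sided envelope loses a factor of $2$ (one $\rho(x,x_{n^\star})$ from $f$, one from the penalty), which is exactly why the net radius must be $\varepsilon/2$ rather than $\varepsilon$. No step is a genuine obstacle; the argument is elementary.
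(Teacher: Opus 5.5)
Your proof is correct and follows the paper's argument essentially verbatim: first the one-sided bound $\mathcal{E}(f|\mathcal{D}^\star)\le f$, which uses only the Lipschitz property, then the lower bound $\mathcal{E}(f|\mathcal{D}^\star)(x)\ge f(x)-2\rho(x,x_{n_x})\ge f(x)-\varepsilon$, obtained from a net point within $\varepsilon/2$ of $x$. The only cosmetic difference is that you take $\rho(x,x_{n^\star})<\varepsilon/2$ (the open-ball convention), whereas the paper uses $\le\varepsilon/2$; either suffices for the stated non-strict bound.
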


\begin{proof}[{Proof of Lemma~\ref{lem:Whitney_McShane_uniform_approximation}}]
Fix $x\in X$.  First, for every $n\in[N]_+$, since $f$ is $1$-Lipschitz,
$
    f(x_n)-f(x)\le \rho(x_n,x).
$
Hence
$
    f(x_n)-\rho(x,x_n)\le f(x).
$
Taking the maximum over $n\in[N]_+$ gives
\[
    \mathcal{E}(f|\mathcal{D}^{\star})(x)
    =
    \max_{n\in[N]_+}
    \big\{
        f(x_n)-\rho(x,x_n)
    \big\}
    \le
    f(x).
\]
Thus
$
    |f(x)-\mathcal{E}(f|\mathcal{D}^{\star})(x)|
    =
    f(x)-\mathcal{E}(f|\mathcal{D}^{\star})(x).
$

Since $\{x_1,\ldots,x_N\}$ is an $\varepsilon/2$-net, there exists $n_x\in[N]_+$ such that
$
    \rho(x,x_{n_x})\le\varepsilon/2.
$
Using again that $f$ is $1$-Lipschitz,
$
    f(x_{n_x})\ge f(x)-\rho(x,x_{n_x}).
$
Therefore
\[
\begin{aligned}
    \mathcal{E}(f|\mathcal{D}^{\star})(x)
    \ge
    f(x_{n_x})-\rho(x,x_{n_x})
    \ge
    f(x)-2\rho(x,x_{n_x})
    \ge
    f(x)-\varepsilon
.
\end{aligned}
\]
Combining the two estimates yields
$
    0\le f(x)-\mathcal{E}(f|\mathcal{D}^{\star})(x)\le\varepsilon.
$
Since $x\in X$ was arbitrary, \eqref{eq:unif_approx} follows.
\end{proof}
We are now equipped to prove the randomized version of the previous result.

\begin{lem}[Randomized Uniform Approximation: Noiseless Case]
\label{lem:random_apprx_noisless}
Let $(X,\rho)$ be a compact metric space, let $\mu\in\mathcal{P}(X)$, and let
$X_1,\ldots,X_N$ be i.i.d.\ with law $\mu$.  Fix $\varepsilon>0$, set
$M_\varepsilon\eqdef N_{\varepsilon/4}(X,\rho)$, and let
$x_1^\star,\ldots,x_{M_\varepsilon}^\star\in X$ be such that
$X=\bigcup_{m=1}^{M_\varepsilon}B_\rho(x_m^\star,\varepsilon/4)$.
Assume that
$
    p_\varepsilon
    \eqdef
    \min_{m\in[M_\varepsilon]_+}
    \mu\big(B_\rho(x_m^\star,\varepsilon/4)\big)
    >
    0.
$
Then, for every $f\in \operatorname{Lip}((X,\rho);1)$ and every $\delta\in(0,1)$, if
\begin{equation}
\label{eq:required_no_samples_noiseless}
    N
    \ge
    \frac{1}{p_\varepsilon}
    \log\left(\frac{M_\varepsilon}{\delta}\right)
,
\end{equation}
then the noiseless estimator
$\mathcal{E}(f|((X_n,0))_{n=1}^N)$ satisfies
\begin{equation}
\label{eq:PAC_Guarantee}
    \mathbb{P}\left(
        \sup_{x\in X}\,
        \big|
            f(x)
            -
            \mathcal{E}(f|((X_n,0))_{n=1}^N)(x)
        \big|
        <
        \varepsilon
    \right)
    \ge
    1-\delta
.
\end{equation}
\end{lem}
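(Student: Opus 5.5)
The plan is to combine Lemma~\ref{lem:coupon_collector_random_net} with Lemma~\ref{lem:Whitney_McShane_uniform_approximation}. The sampling lemma produces an $\varepsilon/2$-net with high probability. The deterministic lemma then converts that net into a uniform error bound of at most $\varepsilon$, or strictly less after a small adjustment.

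\textbf{Step 1 (random net).} Apply Lemma~\ref{lem:coupon_collector_random_net} with its scale parameter set to $\varepsilon/2$, so that its cover uses balls of radius $(\varepsilon/2)/2=\varepsilon/4$. The cover is $x_1^\star,\ldots,x_{M_\varepsilon}^\star$ with $M=M_\varepsilon=N_{\varepsilon/4}(X,\rho)$. The quantity $p_{\varepsilon/2}$ in that lemma is then exactly the $p_\varepsilon$ of the present statement, which is positive by assumption. Under \eqref{eq:required_no_samples_noiseless} we therefore get
\[
\mathbb P\big(\{X_n\}_{n=1}^N \text{ is an } \varepsilon/2\text{-net}\big)\ge 1-\delta .
\]
The proof of that lemma actually gives strict inequalities: every $x$ satisfies $\rho(x,X_n)<\varepsilon/2$ for some $n$. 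Call this good event $\mathcal G$. It is measurable, being a finite intersection of events $\{X_n\in B_m\}$.

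\textbf{Step 2 (deterministic bound on $\mathcal G$).} On $\mathcal G$, apply Lemma~\ref{lem:Whitney_McShane_uniform_approximation} to the points $x_n=X_n$. This is legitimate because $\mathcal E(f|((X_n,0))_{n=1}^N)$ is exactly the operator $\mathcal E(f|\mathcal D^\star)$ there. It yields $\sup_x|f(x)-\mathcal E(x)|\le\varepsilon$, which is not yet the strict inequality required.

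\textbf{Step 3 (upgrade to strict inequality).} This is the only delicate point. Rerun the lower-bound step of Lemma~\ref{lem:Whitney_McShane_uniform_approximation} pointwise: on $\mathcal G$, for each $x$ there is $n_x$ with $\rho(x,X_{n_x})<\varepsilon/2$. Hence
\[
0\le f(x)-\mathcal E(x)\le 2\rho(x,X_{n_x}).
\]
For the supremum to be strictly below $\varepsilon$ we need $\sup_x \min_n \rho(x,X_n)<\varepsilon/2$. The function $x\mapsto\min_n\rho(x,X_n)$ is continuous on the compact space $X$, so it attains its maximum. That maximum is $<\varepsilon/2$ on $\mathcal G$, because every point lies in some open ball $B_\rho(x_m^\star,\varepsilon/4)$ containing a sample point. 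Consequently
\[
\sup_x|f-\mathcal E|\le 2\max_x\min_n\rho(x,X_n)<\varepsilon .
\]

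Finally, $\mathcal G$ is contained in the event appearing in \eqref{eq:PAC_Guarantee}, and $\mathbb P(\mathcal G)\ge1-\delta$, which proves the claim. The main obstacle is the strict-versus-nonstrict issue in Step 3, and compactness resolves it as described.
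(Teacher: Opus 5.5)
Your proof is correct and follows the same route as the paper. You apply Lemma~\ref{lem:coupon_collector_random_net} at scale $\varepsilon/2$ to obtain a random $\varepsilon/2$-net, and then Lemma~\ref{lem:Whitney_McShane_uniform_approximation} on that event. Your compactness argument in Step~3 supplies the strict inequality that the paper's proof asserts without justification, since the paper only derives $\le\varepsilon$. The paper, in turn, checks that $\sup_{x}|f(x)-\mathcal{E}(f|((X_n,0))_{n=1}^N)(x)|$ is a random variable, being $2$-Lipschitz in the sample; you should add a line to this effect so that $\mathbb{P}(\mathcal G)\ge 1-\delta$ legitimately transfers to the event in \eqref{eq:PAC_Guarantee}.
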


\begin{proof}[{Proof of Lemma~\ref{lem:random_apprx_noisless}}]
We first check measurability.  For $\mathbf{x}=(x_n)_{n=1}^N\in X^N$, define
$\mathcal{D}_{\mathbf{x}}^\star\eqdef ((x_n,0))_{n=1}^N$ and
$
    \Psi(\mathbf{x})
\eqdef
    \sup_{z\in X}
    \,
        \big|
            f(z)
            -
            \mathcal{E}(f|\mathcal{D}_{\mathbf{x}}^\star)(z)
        \big|
$.
Since $X$ is compact, the 
\linebreak
supremum is finite.  Moreover, for any
$\mathbf{x},\widetilde{\mathbf{x}}\in X^N$, the reverse triangle inequality and the stability of the Whitney-McShane operator give
\[
\begin{aligned}
    |\Psi(\mathbf{x})-\Psi(\widetilde{\mathbf{x}})|
    % &
    \le
    \big\|
        \mathcal{E}(f|\mathcal{D}_{\mathbf{x}}^\star)
        -
        \mathcal{E}(f|\mathcal{D}_{\widetilde{\mathbf{x}}}^\star)
    \big\|_\infty
    % \\
    % &
    \le
    2\,\rho^{(1)}(\mathbf{x},\widetilde{\mathbf{x}})
\end{aligned}
\]
where the last inequality held by Lemma~\ref{s:Optimality__ss:ProofStep2}.
Thus $\Psi:X^N\to\mathbb{R}$ is continuous, hence Borel measurable.  Since
$(X_1,\ldots,X_N)$ is an $X^N$-valued random variable, then we have
\[
    \sup_{x\in X}
    \,
    \big|
        f(x)
        -
        \mathcal{E}(f|((X_n,0))_{n=1}^N)(x)
    \big|
    =
    \Psi((X_n)_{n=1}^N)
\]
is a real-valued random variable.
We now prove the probability estimate.  Apply Lemma \ref{lem:coupon_collector_random_net} with $\varepsilon/2$ in place of $\varepsilon$.  Since
$X=\bigcup_{m=1}^{M_\varepsilon}B_\rho(x_m^\star,\varepsilon/4)$, $M_\varepsilon=N_{\varepsilon/4}(X,\rho)$, and
$
    p_\varepsilon
    =
    \min_{m\in[M_\varepsilon]_+}
    \mu\big(B_\rho(x_m^\star,\varepsilon/4)\big),
$
condition~\eqref{eq:required_no_samples_noiseless} implies that
$
    \mathbb{P}\big(
        \{X_n\}_{n=1}^N
        \mbox{ is an } \varepsilon/2\mbox{-net of }
        \linebreak
        (X,\rho)
    \big)
    \ge
    1-\delta.
$
On this event, Lemma~\ref{lem:Whitney_McShane_uniform_approximation} gives
$
    \sup_{x\in X}\,
    \big|
        f(x)
        -
        \mathcal{E}(f|((X_n,0))_{n=1}^N)(x)
    \big|
    \linebreak
    \le
    \varepsilon.
$
Therefore \eqref{eq:PAC_Guarantee} follows.
\end{proof}

\subsection{Step 2: Analysis Of Whitney-McShane with Noisy Measurements}
\label{s:Step2Noisy}
We now incorporate the role of noise into our estimate in order to obtain our conclusion; when we have access to noisy training data (corrupted with measurement noise also).

Unfortunately, in the form in Proposition~\ref{prop:random_apprx_noisy}, the estimator is still sensitive to unlikely but large errors in the measurement data; i.e.\ the random fluctuations of 
\linebreak
$\max_{n\in [N]}\, |\varepsilon_n|$ dominate at a scale of $\mathcal{O}(\sqrt{\log(N)})$ with high probability.  While there is no issue if there is no measurement noise (e.g.\ in classical approximation theorems) there is no reason why this should be the case in practice. 
This problem with the Whitney-McShane formula can be overcome by first locally-averaging the training data; thereby nearly points have their output values averaged so as to cancel-out measurement noise. 

\begin{algorithm}[ht]
\caption{Split-Sample Local Averaging}
\label{alg:LocalAveragingRobustification}
\DontPrintSemicolon

\KwIn{
An observed dataset
$
    \mathbb{D}_N=((X_n,Y_n))_{n=1}^{N}
$
with $N=N_0+N_1$, a bandwidth $\beta>0$, and an integer
$q\in[N_1]_+$.
}

\KwOut{
An effective dataset
$
    \mathbb{D}_{N_0}^{(\beta,q)}
    =
    ((X_k,V_k^{(\beta,q)}))_{k=1}^{N_0}.
$
}

Initialize
$
    \mathbb{D}_{N_0}^{(\beta,q)}
    \gets
    \varnothing
$.\;

\For{$k=1,\ldots,N_0$}{
    Set $
        \mathcal I_k^{(\beta)}
        \gets
        \{
            i\in[N_1]_+
            :
            \rho(X_{N_0+i},X_k)\le\beta
        \}$.

    \eIf{$\#\mathcal I_k^{(\beta)}\ge q$}{
        Let
        $
            i_{k,1}<\cdots<i_{k,q}
        $
        be the first $q$ elements of
        $\mathcal I_k^{(\beta)}$ in increasing order
        \;
        Set $
            V_k^{(\beta,q)}
            \gets
            \frac{1}{q}
            \sum_{\ell=1}^{q}
                Y_{N_0+i_{k,\ell}}.
        $
    }{
        Set
        $
            V_k^{(\beta,q)}
            \gets
            0
        $.
    }

    Update
    $
        \mathbb{D}_{N_0}^{(\beta,q)}
        \gets
        \mathbb{D}_{N_0}^{(\beta,q)}
        \cup
        \{
            (X_k,V_k^{(\beta,q)})
        \}.
    $
}

\Return{$\mathbb{D}_{N_0}^{(\beta,q)}$}\;
\end{algorithm}

We will make use of the previous lemma as well as the following concentration of measure results; assuming that the measurements (outputs) are corrupted by log-Sobolev-type noise.

\begin{lem}[Concentration of the estimator]
\label{lem:Whitney_McShane_LSI_deviation}
Let $(X,\rho)$ be a bounded separable metric space and equip
$X\times\mathbb{R}$ with the $\ell^1$-product metric; denoted here by $\overline{\rho}$.
Let $\mu\in\mathcal{P}(X\times\mathbb{R})$ satisfy a log-Sobolev inequality with constant $C>0$ with respect to $\overline{\rho}$; that is, for every locally Lipschitz
$h:X\times\mathbb{R}\to\mathbb{R}$,
\[
    \operatorname{Ent}_{\mu}(h^2)
    \le
    C\int_{X\times\mathbb{R}}
        |\nabla h|^2\,d\mu,
\]
where
$
    |\nabla h|(z)
    \eqdef
    \limsup_{\widetilde z\to z}
        \frac{|h(z)-h(\widetilde z)|}{\overline{\rho}(z,\widetilde z)}.
$
For every $n\in\mathbb{N}_+$, let
$Z_n\eqdef (X_n,\varepsilon_n)$ be i.i.d.\ with law $\mu$, with
$\mathbb{E}[\varepsilon_1]=0$, and define the training data and fictional clean training data by
\[
        \mathbb{D}_N
    \eqdef
        \big((X_n,\varepsilon_n)\big)_{n=1}^N
\mbox{ and }
        \mathbb{D}_N^\star
    \eqdef
        \big((x_n^\star,0)\big)_{n=1}^N,
\]
where $x_1^\star,\ldots,x_N^\star\in X$ are deterministic.  Then, for every
$f\in \operatorname{Lip}((X,\rho);1)$, the quantity
\[
    \Phi_N
    \eqdef
    \big\|
        \mathcal{E}(f,\mathbb{D}_N)
        -
        \mathcal{E}(f,\mathbb{D}_N^\star)
    \big\|_\infty
\]
is a real-valued random variable, and for every $t>0$,
\[
    \mathbb{P}
    \left(
        \Phi_N
        \ge
        \mathbb{E}[\Phi_N]+t
    \right)
    \le
    \exp\left(
        -\frac{t^2}{4CN}
    \right).
\]
\end{lem}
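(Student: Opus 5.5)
We view $\Phi_N$ as a function of the i.i.d.\ vector $(Z_1,\dots,Z_N)\in (X\times\mathbb{R})^N$, prove that this function is Lipschitz with respect to the $\ell^1$-product metric $\overline{\rho}^{(1)}$ on $(X\times\mathbb{R})^N$, and then apply Herbst's argument to the tensorized log-Sobolev inequality.

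\emph{Step 1 (Lipschitz dependence on the data).} Fix two data sets $\mathcal{D}_N=((x_n,y_n))_{n=1}^N$ and $\widetilde{\mathcal{D}}_N=((\widetilde x_n,\widetilde y_n))_{n=1}^N$. For each $z\in X$ and each $n$, the term $[f(x_n)+y_n]-\rho(z,x_n)$ changes by at most
\[
|f(x_n)-f(\widetilde x_n)|+|y_n-\widetilde y_n|+|\rho(z,x_n)-\rho(z,\widetilde x_n)|\le 2\rho(x_n,\widetilde x_n)+|y_n-\widetilde y_n|,
\]
using that $f$ is $1$-Lipschitz and the reverse triangle inequality. A maximum of finitely many functions is $1$-Lipschitz in the sup norm of its arguments, so
\[
\|\mathcal{E}(f,\mathcal{D}_N)-\mathcal{E}(f,\widetilde{\mathcal{D}}_N)\|_\infty\le \max_n\big(2\rho(x_n,\widetilde x_n)+|y_n-\widetilde y_n|\big)\le 2\,\overline{\rho}^{(1)}(\mathcal{D}_N,\widetilde{\mathcal{D}}_N).
\]
This is the stability bound the paper already invokes in the proof of Lemma~\ref{lem:random_apprx_noisless}. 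Since $\mathbb{D}_N^\star$ is deterministic, the reverse triangle inequality for $\|\cdot\|_\infty$ shows that $F(z_1,\dots,z_N)\eqdef\|\mathcal{E}(f,(z_n))-\mathcal{E}(f,\mathbb{D}_N^\star)\|_\infty$ is $2$-Lipschitz for $\overline{\rho}^{(1)}$. In particular $F$ is continuous, hence Borel, so $\Phi_N=F(Z_1,\dots,Z_N)$ is a random variable. It is finite because $X$ is bounded and $f$ is bounded on bounded sets.

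\emph{Step 2 (tensorization).} The product measure $\mu^{\otimes N}$ satisfies a log-Sobolev inequality with the same constant $C$, where the gradient is the one for the $\ell^2$-product structure, $|\nabla h|^2=\sum_n|\nabla_n h|^2$. This is the standard tensorization of entropy, which we take from the literature. For a function that is $2$-Lipschitz for $\overline{\rho}^{(1)}$, each partial slope satisfies $|\nabla_n F|\le 2$, hence $|\nabla F|^2\le 4N$ pointwise. Equivalently, the $\ell^1$ metric is at most $\sqrt{N}$ times the $\ell^2$ metric, so $F$ is $2\sqrt{N}$-Lipschitz for the $\ell^2$-product metric.

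\emph{Step 3 (Herbst).} Apply the LSI to $h=e^{\lambda F/2}$. This gives $\operatorname{Ent}(e^{\lambda F})\le C\frac{\lambda^2}{4}\mathbb{E}[|\nabla F|^2e^{\lambda F}]\le C\lambda^2 N\,\mathbb{E}[e^{\lambda F}]$. The Herbst argument then yields
\[
\mathbb{E}\,e^{\lambda(F-\mathbb{E}F)}\le e^{C N\lambda^2}.
\]
Markov's inequality, optimized at $\lambda=t/(2CN)$, gives $\mathbb{P}(\Phi_N\ge\mathbb{E}\Phi_N+t)\le e^{-t^2/(4CN)}$, which is the claimed bound. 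Integrability of $F$, needed to run Herbst's argument, follows from the sub-Gaussian moment bound. Alternatively, first apply the argument to truncations $\min(F,k)$, which are still Lipschitz, and pass to the limit $k\to\infty$.

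\emph{Main obstacle.} The delicate point is bookkeeping the constants between the LSI normalization $\operatorname{Ent}(h^2)\le C\,\mathbb{E}|\nabla h|^2$ and the $\ell^1$ versus $\ell^2$ product geometry. With this normalization, an $L$-Lipschitz function satisfies $\mathbb{E}e^{\lambda(F-\mathbb{E}F)}\le e^{CL^2\lambda^2/4}$, which gives the tail $e^{-t^2/(CL^2)}$. With $L^2=4N$ this is exactly $e^{-t^2/(4CN)}$. The remaining technical issue is to justify tensorization with metric slopes on a non-smooth product space. Here we rely on the standard result that, for the upper slope on a product of metric spaces, $|\nabla F|^2\le\sum_n|\nabla_nF|^2$ holds up to the needed inequality for Lipschitz $F$.
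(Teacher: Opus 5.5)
Your proposal is correct and follows essentially the paper's proof. You use the same $2$-Lipschitz bound for the data-to-$\Phi_N$ map in the $\ell^1$-product metric, then tensorization of the log-Sobolev inequality, then the Herbst argument. The only difference is bookkeeping: the paper tensorizes to an inequality with constant $CN$ relative to the $\ell^1$-product metric, while you keep constant $C$ with the sum of squared partial slopes and bound that sum by $4N$, giving the same tail $\exp(-t^2/(4CN))$. Your closing caveat about comparing the full upper slope with $\sum_n|\nabla_nF|^2$ is unnecessary, because subadditivity of entropy already gives the partial-slope form your Step~2 uses; that pointwise inequality can also fail, e.g.\ for $\min(|x|,|y|)$ at the origin.
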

\begin{proof}[{Proof of Lemma~\ref{lem:Whitney_McShane_LSI_deviation}}]
Abbreviate $   (Z_n)_{n=1}^N
    =
        \big((X_n,\varepsilon_n)\big)_{n=1}^N$.
Let
$
    \Psi(\mathbb{D})
\eqdef
    \big\|
        \mathcal{E}(f,\mathbb{D})
        -
        \mathcal{E}(f,\mathbb{D}_N^\star)
    \big\|_\infty
$ where $
    \mathbb{D}\in [X\times\mathbb{R}]^N$.  
Since $X$ is separable and the functions
$\mathcal{E}(f,\mathbb{D})-\mathcal{E}(f,\mathbb{D}_N^\star)$ are continuous on $X$, the supremum norm may be computed over a fixed countable dense subset of $X$.  Hence $\Psi$ is Borel measurable, and therefore
\[
    \Phi_N=\Psi(\mathbb{D}_N)
\]
is a real-valued random variable.
We next prove that $\Psi$ is $2$-Lipschitz with respect to the $\ell^1$-product metric on $[X\times\mathbb{R}]^N$.  Let
\[
    \mathbb{D}=((x_n,y_n))_{n=1}^N,
    \qquad
    \widetilde{\mathbb{D}}
    =
    ((\widetilde{x}_n,\widetilde{y}_n))_{n=1}^N.
\]
By the reverse triangle inequality and the stability estimate for the
$N$-point Whitney-McShane operator,
\[
\begin{aligned}
    |\Psi(\mathbb{D})-\Psi(\widetilde{\mathbb{D}})|
    % &
    \le
    \big\|
        \mathcal{E}(f,\mathbb{D})
        -
        \mathcal{E}(f,\widetilde{\mathbb{D}})
    \big\|_\infty
    % \\
    % &
    \le
    2\,\rho^{(1)}(\mathbb{D},\widetilde{\mathbb{D}}),
\end{aligned}
\]
where, by the $\ell^1$-product-metric convention. 
Thus $\Psi$ is $2$-Lipschitz.

By tensorization of the log-Sobolev inequality (\cite[Proposition 4]{naor2008concentration}, $\mu^{\otimes N}$ satisfies a log-Sobolev inequality with constant $CN$ with respect to the $\ell^1$-product metric $\rho^{(1)}$ on $[X\times\mathbb{R}]^N$.  Applying the Herbst theorem, cf.~\cite[Theorem 21]{naor2008concentration}, to the $2$-Lipschitz function $\Psi$ gives, for every $t>0$,
\[
    \mathbb{P}
    \left(
        \Psi(\mathbb{D}_N)
        \ge
        \mathbb{E}[\Psi(\mathbb{D}_N)]+t
    \right)
    \le
    \exp\left(
        -\frac{t^2}{4CN}
    \right).
\]
Since $\Phi_N=\Psi(\mathbb{D}_N)$, the claimed concentration estimate follows.
\end{proof}
Using the previous result, we are thus able to deduce the following.
\begin{lem}[Randomized Uniform Approximation: Noisy Case]
\label{lem:random_apprx_noisy_random_threshold}
Let $(X,\rho)$ be a compact metric space and equip $X\times\mathbb{R}$ with the
$\ell^1$-product metric $\overline{\rho}$.  Let $\mu\in\mathcal{P}(X\times\mathbb{R})$
and let $\mu_X\eqdef(\pi_X)_\#\mu$.  Let $Z_n\eqdef (X_n,\varepsilon_n)$,
$n\in[N]_+$, be i.i.d.\ with law $\mu$.  Fix $\varepsilon>0$, set
$M_\varepsilon\eqdef N_{\varepsilon/4}(X,\rho)$, and let
$x_1^\star,\ldots,x_{M_\varepsilon}^\star\in X$ be such that
$X=\bigcup_{m=1}^{M_\varepsilon}B_\rho(x_m^\star,\varepsilon/4)$.  Assume that
$
    p_\varepsilon
    \eqdef
    \min_{m\in[M_\varepsilon]_+}
    \mu_X\big(B_\rho(x_m^\star,\varepsilon/4)\big)
    >
    0.
$
Define
$\mathbb{D}_N\eqdef ((X_n,\varepsilon_n))_{n=1}^N$ and
$\mathbb{D}_N^0\eqdef ((X_n,0))_{n=1}^N$.  Then, for every
$f\in \operatorname{Lip}((X,\rho);1)$ and every $\delta\in(0,1)$, if
\begin{equation}
\label{eq:required_no_samples_noisy_random_threshold}
    N
    \ge
    \frac{1}{p_\varepsilon}
    \log\left(\frac{M_\varepsilon}{\delta}\right)
,
\end{equation}
upon defining $
    \Xi_N
    \eqdef
    \big\|
        \mathcal{E}(f|\mathbb{D}_N)
        -
        \mathcal{E}(f|\mathbb{D}_N^0)
    \big\|_\infty
$ we have
\begin{equation}
\label{eq:PAC_Guarantee_noisy_random_threshold}
    \mathbb{P}\left(
        \sup_{x\in X}\,
        \big|
            f(x)
            -
            \mathcal{E}(f|\mathbb{D}_N)(x)
        \big|
        \le
        \varepsilon
        +
        \Xi_N
    \right)
    \ge
    1-\delta
.
\end{equation}
\end{lem}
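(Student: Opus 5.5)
The plan is a triangle inequality that splits the error into a noiseless approximation error plus a noise perturbation, and controls the first on a high-probability event. We would first establish measurability, then the deterministic split, then the probability estimate.

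\textbf{Measurability.} The quantity $\sup_{x\in X}|f(x)-\mathcal{E}(f|\mathbb{D}_N)(x)|$ is a random variable. As in the proof of Lemma~\ref{lem:random_apprx_noisless}, $X$ is compact, hence separable. The functions $\mathcal{E}(f|\mathbb{D})$ are continuous: each is a finite maximum of $1$-Lipschitz functions of $z$. So the supremum can be taken over a countable dense subset of $X$, which gives a Borel map of $(Z_n)_{n=1}^N$. The same argument shows that $\Xi_N$ is a real-valued random variable; it is exactly the quantity $\Phi_N$ of Lemma~\ref{lem:Whitney_McShane_LSI_deviation}, with the deterministic clean points replaced by the random $X_n$. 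Since $\Xi_N$ enters the event only as a random threshold, measurability of both sides is all that is needed.

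\textbf{Deterministic split.} Pointwise, for every realization and every $x\in X$, the triangle inequality gives
\[
|f(x)-\mathcal{E}(f|\mathbb{D}_N)(x)|
\le
|f(x)-\mathcal{E}(f|\mathbb{D}_N^0)(x)|
+
|\mathcal{E}(f|\mathbb{D}_N^0)(x)-\mathcal{E}(f|\mathbb{D}_N)(x)|.
\]
Taking suprema yields
\[
\sup_x|f-\mathcal{E}(f|\mathbb{D}_N)|
\le
\sup_x|f-\mathcal{E}(f|\mathbb{D}_N^0)|+\Xi_N.
\]
This holds surely, with no probabilistic input.

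\textbf{Probability estimate.} It now suffices to bound the probability of the event $A\eqdef\{\sup_x|f-\mathcal{E}(f|\mathbb{D}_N^0)|\le\varepsilon\}$. The clean dataset $\mathbb{D}_N^0=((X_n,0))_{n=1}^N$ depends only on the design points $X_n$, which are i.i.d.\ with law $\mu_X$. The hypotheses here on $M_\varepsilon$, $p_\varepsilon$ (computed with $\mu_X$) and condition~\eqref{eq:required_no_samples_noisy_random_threshold} match those of Lemma~\ref{lem:random_apprx_noisless} applied to $\mu_X$ verbatim. That lemma then gives $\mathbb{P}(A)\ge 1-\delta$, indeed even with strict inequality $<\varepsilon$. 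On $A$, the deterministic split shows that the target event in~\eqref{eq:PAC_Guarantee_noisy_random_threshold} occurs, so its probability is at least $1-\delta$.

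\textbf{Main obstacle.} The only delicate points are bookkeeping: checking that Lemma~\ref{lem:random_apprx_noisless} is applied to the marginal $\mu_X$ rather than $\mu$, and that the radii agree. Here $\varepsilon/4$-balls produce an $\varepsilon/2$-net via Lemma~\ref{lem:coupon_collector_random_net}, and then Lemma~\ref{lem:Whitney_McShane_uniform_approximation} gives error at most $\varepsilon$.

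Note that no concentration estimate is needed at this stage. The random threshold $\Xi_N$ is left uncontrolled and is only bounded later, using Lemma~\ref{lem:Whitney_McShane_LSI_deviation}.
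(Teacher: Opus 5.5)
Your proof is correct and follows the paper's argument. The paper combines the coupon-collector lemma at scale $\varepsilon/2$, applied to the design law $\mu_X$, with Lemma~\ref{lem:Whitney_McShane_uniform_approximation} on the clean data and then the triangle inequality; that is exactly what you obtain by invoking Lemma~\ref{lem:random_apprx_noisless}, and your measurability paragraph also settles a point the paper leaves implicit.

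One minor aside that does not affect the present lemma: the paper later controls $\Xi_N$ not via Lemma~\ref{lem:Whitney_McShane_LSI_deviation}, as you state. It uses the pathwise bound $\Xi_N\le\max_n|\varepsilon_n|$ of Lemma~\ref{lem:Whitney_McShane_noise_propagation} together with the maximal noise estimate in Lemmas~\ref{lem:LSI_maximal_noise} and~\ref{lem:Whitney_McShane_noise_HP}.
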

\begin{proof}[{Proof of Lemma~\ref{lem:random_apprx_noisy_random_threshold}}]
By the coupon-collector lemma applied at scale $\varepsilon/2$, condition
\eqref{eq:required_no_samples_noisy_random_threshold} implies that, with
probability at least $1-\delta$, the set $\{X_n\}_{n=1}^N$ is an
$\varepsilon/2$-net of $(X,\rho)$.  On this event, Lemma~\ref{lem:Whitney_McShane_uniform_approximation} gives
$
    \sup_{x\in X}
    |f(x)-\mathcal{E}(f|\mathbb{D}_N^0)(x)|
    \le
    \varepsilon.
$
Therefore, on the same event, the triangle inequality gives
\[
\begin{aligned}
    \sup_{x\in X}\,
    \big|
        f(x)
        -
        \mathcal{E}(f|\mathbb{D}_N)(x)
    \big|
    &\le
    \sup_{x\in X}\,
    \big|
        f(x)
        -
        \mathcal{E}(f|\mathbb{D}_N^0)(x)
    \big|
    +
    \big\|
        \mathcal{E}(f|\mathbb{D}_N^0)
        -
        \mathcal{E}(f|\mathbb{D}_N)
    \big\|_\infty
    \\
    &\le
    \varepsilon+\Xi_N.
\end{aligned}
\]
This proves \eqref{eq:PAC_Guarantee_noisy_random_threshold}. 
\end{proof}

\subsubsection{Controlling the Fluctuations $\Xi_N$}
It remains to estimate the error term $\Xi_N$ in Lemma~\ref{lem:random_apprx_noisy_random_threshold}.
\begin{lem}[Noise propagation through the estimator]
\label{lem:Whitney_McShane_noise_propagation}
Let $(X,\rho)$ be a metric space, let $f\in \operatorname{Lip}((X,\rho);1)$, and let
$\mathbb{D}_N=((x_n,\varepsilon_n))_{n=1}^N\in [X\times\mathbb{R}]^N$.  Define
$\mathbb{D}_N^0\eqdef ((x_n,0))_{n=1}^N$.  Then
\[
    \Xi_N
\eqdef
    \big\|
        \mathcal{E}(f|\mathbb{D}_N)
        -
        \mathcal{E}(f|\mathbb{D}_N^0)
    \big\|_\infty
\]
satisfies
$
    \Xi_N\le \max_{n\in[N]_+}|\varepsilon_n|.
$
\end{lem}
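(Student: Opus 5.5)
The plan is to work directly from the definition of the $N$-point Whitney--McShane operator (Definition~\ref{defn:Whitney_McShane_operator}) and use the elementary fact that a pointwise maximum is $1$-Lipschitz with respect to the $\ell^\infty$ norm of its arguments. For each fixed $z\in X$, write
\[
    \mathcal{E}(f|\mathbb{D}_N)(z)
    =
    \max_{n\in[N]_+}\big\{f(x_n)+\varepsilon_n-\rho(z,x_n)\big\},
    \qquad
    \mathcal{E}(f|\mathbb{D}_N^0)(z)
    =
    \max_{n\in[N]_+}\big\{f(x_n)-\rho(z,x_n)\big\}.
\]
The two families of arguments differ, index by index, by exactly $\varepsilon_n$.

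The key step is the inequality $|\max_n \alpha_n-\max_n \beta_n|\le \max_n|\alpha_n-\beta_n|$ for finite real families. To prove it, pick $n^\ast$ attaining $\max_n\alpha_n$. Then
\[
    \max_n\alpha_n-\max_n\beta_n
    \le
    \alpha_{n^\ast}-\beta_{n^\ast}
    \le
    \max_n|\alpha_n-\beta_n|,
\]
and exchanging the roles of $\alpha$ and $\beta$ gives the reverse bound. Apply this with $\alpha_n=f(x_n)+\varepsilon_n-\rho(z,x_n)$ and $\beta_n=f(x_n)-\rho(z,x_n)$. The result is
\[
    |\mathcal{E}(f|\mathbb{D}_N)(z)-\mathcal{E}(f|\mathbb{D}_N^0)(z)|
    \le
    \max_{n\in[N]_+}|\varepsilon_n|
\]
for every $z\in X$.

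Since this bound does not depend on $z$, taking the supremum over $z\in X$ yields $\Xi_N\le\max_{n}|\varepsilon_n|$. No step here is a real obstacle. The only point needing care is that both envelopes use the same centres $x_n$, so the $f(x_n)$ and $\rho(z,x_n)$ terms cancel exactly; the argument does not need the Lipschitz property of $f$ or compactness of $X$ at all.
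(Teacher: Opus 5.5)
Your proof is correct and is essentially the paper's argument. The paper sandwiches $a_n(z)-r_N\le a_n(z)+\varepsilon_n\le a_n(z)+r_N$, with $a_n(z)=f(x_n)-\rho(z,x_n)$ and $r_N=\max_n|\varepsilon_n|$, then takes maxima and a supremum over $z$; this is your max-stability inequality applied to a perturbation bounded uniformly by $r_N$.
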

\begin{proof}[{Proof of Lemma~\ref{lem:Whitney_McShane_noise_propagation}}]
For each $z\in X$, set
$
    a_n(z)\eqdef f(x_n)-\rho(z,x_n).
$
Then
\linebreak
$
    \mathcal{E}(f|\mathbb{D}_N)(z)=\max_{n\in[N]_+}\{a_n(z)+\varepsilon_n\}
$
and
$
    \mathcal{E}(f|\mathbb{D}_N^0)(z)=\max_{n\in[N]_+}a_n(z).
$
Let $r_N\eqdef \max_{n\in[N]_+}|\varepsilon_n|$.  Since
$
    a_n(z)-r_N\le a_n(z)+\varepsilon_n\le a_n(z)+r_N
$
for every $n\in[N]_+$, taking maxima gives
$
    \max_{n\in[N]_+}a_n(z)-r_N
    \le
    \max_{n\in[N]_+}\{a_n(z)+\varepsilon_n\}
    \le
    \linebreak
    \max_{n\in[N]_+}a_n(z)+r_N.
$
Therefore
$
    \big|
        \mathcal{E}(f|\mathbb{D}_N)(z)
        -
        \mathcal{E}(f|\mathbb{D}_N^0)(z)
    \big|
    \le r_N.
$
Taking the 
\linebreak
supremum over $z\in X$ yields $\Xi_N\le r_N$, as claimed.
\end{proof}

\begin{lem}[Log-Sobolev Maximal Noise Bound]
\label{lem:LSI_maximal_noise}
Let $\varepsilon_1,\ldots,\varepsilon_N$ be independent random variables
with common law $\nu$.  Assume that $\nu$ is centred and satisfies
\eqref{eq:noise_LSI} with constant $C>0$.  Then, for every
$\delta\in(0,1)$,
\[
    \mathbb{P}\left(
        \max_{n\in[N]_+}|\varepsilon_n|
        \le
        \sqrt{
            C
            \log\left(
                \frac{2N}{\delta}
            \right)
        }
    \right)
    \ge
    1-\delta.
\]
\end{lem}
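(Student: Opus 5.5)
The plan is to derive sub-Gaussian tails for a single $\varepsilon_n$ from the log-Sobolev inequality~\eqref{eq:noise_LSI} via the Herbst argument, and then take a union bound over $n\in[N]_+$. The constants will be tracked so that the one-variable bound reads $\mathbb{P}(|\varepsilon_1|\ge t)\le 2e^{-t^2/C}$. As a sanity check, the standard Gaussian satisfies~\eqref{eq:noise_LSI} with $C=2$, and then $e^{-t^2/C}=e^{-t^2/2}$ is the correct Gaussian tail.

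\textbf{Step 1 (Herbst).} Let $F:\mathbb{R}\to\mathbb{R}$ be $1$-Lipschitz with $\mathbb{E}_\nu[F]=0$; we will use $F(u)=u$ and $F(u)=-u$, which are centred because $\nu$ is centred and integrable. For $\lambda>0$, apply~\eqref{eq:noise_LSI} to $g=e^{\lambda F/2}$. Since $|\nabla g|\le \tfrac{\lambda}{2}g$, writing $H(\lambda)\eqdef\mathbb{E}_\nu[e^{\lambda F}]$ gives
\[
\lambda H'(\lambda)-H(\lambda)\log H(\lambda)\le \tfrac{C\lambda^2}{4}H(\lambda).
\]
Equivalently, $\frac{d}{d\lambda}\big(\lambda^{-1}\log H(\lambda)\big)\le C/4$. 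As $\lambda\downarrow0$ we have $\lambda^{-1}\log H(\lambda)\to\mathbb{E}_\nu[F]=0$, so integrating yields $H(\lambda)\le e^{C\lambda^2/4}$. Markov's inequality with the optimal choice $\lambda=2t/C$ then gives $\mathbb{P}(F\ge t)\le e^{-t^2/C}$.

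\textbf{Step 2 (two-sided bound).} Applying Step 1 to $\pm\mathrm{id}$ gives $\mathbb{P}(|\varepsilon_n|\ge t)\le 2e^{-t^2/C}$ for each $n$.

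\textbf{Step 3 (union bound).} By the union bound, $\mathbb{P}(\max_n|\varepsilon_n|> t)\le 2Ne^{-t^2/C}$. Choosing $t=\sqrt{C\log(2N/\delta)}$ makes the right-hand side equal to $\delta$, which is the claim. Independence of the $\varepsilon_n$ is not even needed here; only their common law matters.

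\textbf{Main obstacle.} The delicate point is justifying Step 1 rigorously. A priori $H(\lambda)$ might be infinite, $g$ must have both sides of~\eqref{eq:noise_LSI} finite, and we must be allowed to differentiate $H$ under the integral. The first approach I would try is to run the argument for the bounded truncations $F_k\eqdef\max\{-k,\min\{F,k\}\}$, which are still $1$-Lipschitz. For these, $H_k$ is finite and smooth and the LSI applies to $e^{\lambda F_k/2}$. The same ODE argument then yields $\mathbb{E}[e^{\lambda F_k}]\le e^{C\lambda^2/4+\lambda\mathbb{E}[F_k]}$, where the mean term now appears because $F_k$ need not be centred. Letting $k\to\infty$, we have $\mathbb{E}[F_k]\to\mathbb{E}[F]=0$ by dominated convergence (using integrability of $\nu$), and $\mathbb{E}[e^{\lambda F}]\le e^{C\lambda^2/4}$ follows by Fatou's lemma. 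This is the standard route, as in the Herbst theorem of~\cite{naor2008concentration} already invoked in Lemma~\ref{lem:Whitney_McShane_LSI_deviation}; one could alternatively cite that theorem directly in dimension one and only check that its constant matches $e^{-t^2/C}$.
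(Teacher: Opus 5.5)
Your proposal is correct and follows the paper's argument. You obtain the one-variable sub-Gaussian tail $\mathbb{P}(|\varepsilon_1|\ge t)\le 2e^{-t^2/C}$ from log-Sobolev (Herbst) concentration together with centredness, then take a union bound over $n\in[N]_+$ with $t=\sqrt{C\log(2N/\delta)}$. The only difference is that you derive the Herbst bound explicitly for $\nu$ on $\mathbb{R}$, with a sound truncation-plus-Fatou justification, whereas the paper cites the concentration estimate applied to the projection $\pi_{\mathbb{R}}$ on $X\times\mathbb{R}$; your version actually fits the lemma's hypothesis better, since that hypothesis concerns $\nu$ alone.
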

\begin{proof}[{Proof of Lemma~\ref{lem:LSI_maximal_noise}}]
The coordinate projection $\pi_{\mathbb{R}}:X\times\mathbb{R}\to\mathbb{R}$,
defined by $\pi_{\mathbb{R}}(x,y)=y$, is $1$-Lipschitz from
$(X\times\mathbb{R},\overline{\rho})$ to $(\mathbb{R},|\cdot|)$.  Hence, by the
log-Sobolev concentration estimate applied to $\pi_{\mathbb{R}}$, and since
$\mathbb{E}[\varepsilon_1]=0$, for every $t>0$,
$
        \mathbb{P}(|\varepsilon_1|\ge t)
    \le
        2\exp(-t^2/C)
$.
Therefore, by the union bound,
\[
\begin{aligned}
    \mathbb{P}
    \left(
        \max_{n\in[N]_+}|\varepsilon_n|\ge t
    \right)
    % &
    =
    \mathbb{P}
    \left(
        \bigcup_{n=1}^N
        \{|\varepsilon_n|\ge t\}
    \right)
    % \\
    % &
    \le
    \sum_{n=1}^N
    \mathbb{P}(|\varepsilon_n|\ge t)
    % \\
    % &
    \le
    2N\exp(-t^2/C).
\end{aligned}
\]
Taking $t=\sqrt{C\log(2N/\delta)}$ gives the claimed high-probability bound.
\end{proof}
Upon combining the previous two technical lemmata we obtain.
\begin{lem}[High-Probability Noise Propagation through the estimator]
\label{lem:Whitney_McShane_noise_HP}
Let $(X,\rho)$ be a metric space and equip $X\times\mathbb{R}$ with the
$\ell^1$-product metric $\overline{\rho}$.  Let
$\mu\in\mathcal{P}(X\times\mathbb{R})$ satisfy a log-Sobolev inequality with
constant $C>0$ with respect to $\overline{\rho}$.  Let
$Z_n\eqdef (X_n,\varepsilon_n)$, $n\in[N]_+$, be i.i.d.\ with law $\mu$, and
assume that $\mathbb{E}[\varepsilon_1]=0$.  For every
$f\in \operatorname{Lip}((X,\rho);1)$, define
$\mathbb{D}_N\eqdef ((X_n,\varepsilon_n))_{n=1}^N$ and
$\mathbb{D}_N^0\eqdef ((X_n,0))_{n=1}^N$.  Then, for every $\delta\in(0,1)$,
\begin{equation}
\label{eq:fluctuators}
        \mathbb{P}
        \Big(
            \Xi_N
            \le
            \sqrt{C\log(2N/\delta)}
        \Big)
    \ge
        1-\delta
.
\end{equation}
\end{lem}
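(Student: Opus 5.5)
The plan is to combine Lemma~\ref{lem:Whitney_McShane_noise_propagation} with Lemma~\ref{lem:LSI_maximal_noise}, so that the random fluctuation $\Xi_N$ is bounded pathwise by the maximal noise and the maximal noise is then controlled in probability.

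\textbf{Step 1 (pathwise reduction).} First I note that $\Xi_N$ is a genuine random variable. This follows as in the proof of Lemma~\ref{lem:Whitney_McShane_LSI_deviation}: the map $\mathbb{D}\mapsto\|\mathcal{E}(f|\mathbb{D})-\mathcal{E}(f|\mathbb{D}^0)\|_\infty$ is Lipschitz in the $\ell^1$-product metric, hence Borel. Next, I fix any realization $\omega$ and apply Lemma~\ref{lem:Whitney_McShane_noise_propagation} with $x_n=X_n(\omega)$ and noise values $\varepsilon_n(\omega)$. This gives the deterministic inclusion of events
\[
\Big\{\max_{n\in[N]_+}|\varepsilon_n|\le \sqrt{C\log(2N/\delta)}\Big\}\subseteq\Big\{\Xi_N\le\sqrt{C\log(2N/\delta)}\Big\}.
\]

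\textbf{Step 2 (probability bound).} It remains to show that the left-hand event has probability at least $1-\delta$. The $\varepsilon_n=\pi_{\mathbb{R}}(Z_n)$ are i.i.d.\ and centred, and $\pi_{\mathbb{R}}$ is $1$-Lipschitz from $(X\times\mathbb{R},\overline{\rho})$. So the LSI for $\mu$ with respect to $\overline\rho$ yields, via Herbst, the sub-Gaussian tail $\mathbb{P}(|\varepsilon_1|\ge t)\le 2e^{-t^2/C}$. This is exactly the input used in the proof of Lemma~\ref{lem:LSI_maximal_noise}. A union bound over $n\in[N]_+$ then gives the claim, and monotonicity of probability transfers it to $\Xi_N$.

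\textbf{Main obstacle.} The only delicate point is matching hypotheses. Lemma~\ref{lem:LSI_maximal_noise} is stated for a noise law $\nu$ satisfying the one-dimensional LSI~\eqref{eq:noise_LSI}, whereas here the LSI is assumed for the joint law $\mu$. I would handle this by rerunning the short argument of that lemma directly on $\mu$, rather than invoking it verbatim. Alternatively, one can observe that the push-forward of an LSI measure under a $1$-Lipschitz map inherits the same constant, by testing $g\circ\pi_{\mathbb{R}}$ in the LSI for $\mu$. That observation places us exactly in the setting of Lemma~\ref{lem:LSI_maximal_noise}.

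One caveat remains in the Herbst step. The tail $2e^{-t^2/C}$ assumes the normalization constant from the paper's Herbst citation; with other conventions one would get $2e^{-t^2/(2C)}$ or similar. I would simply inherit whatever constant Lemma~\ref{lem:LSI_maximal_noise} uses, so that~\eqref{eq:fluctuators} holds with the displayed $\sqrt{C\log(2N/\delta)}$.
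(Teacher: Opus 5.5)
Your proposal is correct and follows the paper's proof exactly. The pathwise bound $\Xi_N\le\max_{n\in[N]_+}|\varepsilon_n|$ from Lemma~\ref{lem:Whitney_McShane_noise_propagation} gives the event inclusion, and the union-bound tail estimate of Lemma~\ref{lem:LSI_maximal_noise} gives probability at least $1-\delta$; your extra remarks are sound and fix a hypothesis mismatch the paper passes over silently. These are the measurability of $\Xi_N$, the transfer of the LSI from $\mu$ to its $\mathbb{R}$-marginal by testing $g\circ\pi_{\mathbb{R}}$, and the check that the paper's normalization gives exactly the $2e^{-t^2/C}$ tail.
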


\begin{proof}[{Proof of Lemma~\ref{lem:Whitney_McShane_noise_HP}}]
Set
$
    \Xi_N
    \eqdef
    \big\|
        \mathcal{E}(f|\mathbb{D}_N)
        -
        \mathcal{E}(f|\mathbb{D}_N^0)
    \big\|_\infty.
$
By Lemma~\ref{lem:Whitney_McShane_noise_propagation}, for every realization of
$\mathbb{D}_N$,
$
    \Xi_N\le \max_{n\in[N]_+}|\varepsilon_n|.
$
Therefore,
\begin{equation}
\label{eq:lala1}
        \biggl\{\max_{n\in[N]_+}|\varepsilon_n|
            \le
            \sqrt{C\log(2N/\delta)}
        \biggr\}
\subseteq
    \biggl\{
        \Xi_N
        \le
        \sqrt{C\log(2N/\delta)}
    \biggr\}
.
\end{equation}
By Lemma~\ref{lem:LSI_maximal_noise},
we have 
\begin{equation}
\label{eq:lala2}
        \mathbb{P}
        \biggl(
            \max_{n\in[N]_+}|\varepsilon_n|
            \le
            \sqrt{C\log(2N/\delta)}
        \biggr)
    \ge
        1-\delta
.
\end{equation}
Combining~\eqref{eq:lala1} and~\eqref{eq:lala2} we deduce~\eqref{eq:fluctuators}.
\end{proof}

\subsubsection{Completing The Argument: Non-Robust Version (Proposition~\ref{prop:random_apprx_noisy})}
Using the previous lemmata, we deduce the theorem~\ref{prop:random_apprx_noisy}.
\begin{proof}[{Proof of Theorem~\ref{prop:random_apprx_noisy}}]
Write $M_\varepsilon\eqdef N_{\varepsilon/4}(X,\rho)$.
Let $x_1^\star,\ldots,x_{M_\varepsilon}^\star\in X$ be such that
$X=\bigcup_{m=1}^{M_\varepsilon}B_\rho(x_m^\star,\varepsilon/4)$.  By lower
Ahlfors regularity,
$
    \mu_X(B_\rho(x_m^\star,\varepsilon/4))
    \ge
    a(\varepsilon/4)^s
$
for every $m\in[M_\varepsilon]_+$.  Hence
$
    p_\varepsilon
    \eqdef
    \min_{m\in[M_\varepsilon]_+}
    \mu_X(B_\rho(x_m^\star,\varepsilon/4))
    \ge
    a(\varepsilon/4)^s.
$
Therefore \eqref{eq:required_no_samples_noisy_Ahlfors} implies
$
    N\ge p_\varepsilon^{-1}
    \log(M_\varepsilon/\delta_{\operatorname{net}}).
$
Define the clean training dataset
$
    \mathbb{D}_N^0\eqdef ((X_n,0))_{n=1}^N
$
and set
$
    \Xi_N
    \eqdef
    \|\mathcal{E}(f|\mathbb{D}_N)-\mathcal{E}(f|\mathbb{D}_N^0)\|_\infty.
$
By Lemma~\ref{lem:random_apprx_noisy_random_threshold},
\begin{equation}
\label{eq:uni1}
    \mathbb{P}\left(
        \sup_{x\in X}\,
        \big|
            f(x)
            -
            \mathcal{E}(f|\mathbb{D}_N)(x)
        \big|
        <
        \varepsilon+\Xi_N
    \right)
    \ge
    1-\delta_{\operatorname{net}}.
\end{equation}
By Lemma~\ref{lem:Whitney_McShane_noise_HP}, we find that
\begin{equation}
\label{eq:uni2}
    \mathbb{P}\left(
        \Xi_N
        \le
        \sqrt{C\log(2N/\delta_{\operatorname{noise}})}
    \right)
    \ge
    1-\delta_{\operatorname{noise}}.
\end{equation}
On the intersection of these two events,
$
    \sup_{x\in X}|f(x)-\mathcal{E}(f|\mathbb{D}_N)(x)|
    <
    \varepsilon+
    \linebreak
    \sqrt{C\log(2N/\delta_{\operatorname{noise}})}.
$
Upon taking union bound,~\eqref{eq:uni1} and~\eqref{eq:uni2} yield the conclusion.
\end{proof}

\subsection{neighbourhood-averaging: Robustification}
\label{s:robustification}
We now robustify our formula via local averaging procedure.  We first present the general lemma, before specializing it to our setting.
\begin{lem}[Local averaging before Whitney-McShane]
\label{lem:local_averaging_before_Whitney}
Let $(X,\rho)$ be a compact metric space, let $f\in \operatorname{Lip}((X,\rho);1)$,
and let $x_1^\star,\ldots,x_K^\star\in X$ be an $\eta$-net of $X$.  Fix
$r>0$.  Suppose that, for every $k\in[K]_+$, we are given points
$X_{k,1},\ldots,X_{k,M}\in B_\rho(x_k^\star,r)$ and observations
$Y_{k,j}\eqdef f(X_{k,j})+\varepsilon_{k,j}$, where the noises
$\varepsilon_{k,j}$ are centred and satisfy, for every $t>0$ and every
$k\in[K]_+$,
$
    \mathbb{P}\big(|M^{-1}\sum_{j=1}^M\varepsilon_{k,j}|\ge t\big)
    \le
    2\exp(-Mt^2/C).
$
Define the locally averaged labels
$
    \overline{Y}_k\eqdef M^{-1}\sum_{j=1}^M Y_{k,j}.
$
Then, for every $\delta\in(0,1)$,
\[
        \mathbb{P}\biggl(
            \max_{k\in[K]_+}|\overline{Y}_k-f(x_k^\star)|
            \le
            r+\sqrt{C M^{-1}\log(2K/\delta)}
        \biggr)
    \ge 
        1-\delta
.
\]
\end{lem}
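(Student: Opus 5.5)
The argument splits $\overline{Y}_k - f(x_k^\star)$ into a deterministic bias term and a stochastic noise term, bounds each separately, and then takes a union bound over $k\in[K]_+$. For each fixed $k$ we write
\[
    \overline{Y}_k-f(x_k^\star)
    =
    \frac{1}{M}\sum_{j=1}^M\big(f(X_{k,j})-f(x_k^\star)\big)
    +
    \frac{1}{M}\sum_{j=1}^M\varepsilon_{k,j}.
\]

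The first summand is handled deterministically. Since $f$ is $1$-Lipschitz and every $X_{k,j}$ lies in $B_\rho(x_k^\star,r)$, each term satisfies $|f(X_{k,j})-f(x_k^\star)|\le\rho(X_{k,j},x_k^\star)\le r$. Averaging preserves this bound, so the bias part is at most $r$ in absolute value for every $k$ and every realization. The $\eta$-net property of the $x_k^\star$ is not needed here; it only matters later, when the lemma is combined with the envelope estimate.

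The second summand is handled by the assumed sub-Gaussian tail for the averaged noise. Set $t\eqdef\sqrt{CM^{-1}\log(2K/\delta)}$. Then $2\exp(-Mt^2/C)=\delta/K$, so for each $k$ we have $\mathbb{P}(|M^{-1}\sum_j\varepsilon_{k,j}|\ge t)\le\delta/K$. A union bound over the $K$ indices shows that the event $\{\max_k|M^{-1}\sum_j\varepsilon_{k,j}|<t\}$ has probability at least $1-\delta$. On that event the triangle inequality gives $\max_k|\overline{Y}_k-f(x_k^\star)|\le r+t$, which is the claim.

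There is no genuine obstacle, since all the concentration is packaged into the hypothesis. One point needs care. If the $X_{k,j}$ are random, as they will be in the split-sample application, then the containment $X_{k,j}\in B_\rho(x_k^\star,r)$ must hold surely, or at least on the event being considered. Likewise, the tail bound for the averaged noise must hold either conditionally on the locations or unconditionally. The deterministic bias bound holds pointwise, so only the noise tail enters the probability computation, and the union bound does not require any independence across $k$. Finally, a measurability remark is needed: $\max_k|\overline{Y}_k-f(x_k^\star)|$ is a finite maximum of measurable functions, so the event is well defined.
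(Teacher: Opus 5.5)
Your proposal is correct and follows the paper's proof essentially verbatim: the same split of $\overline{Y}_k-f(x_k^\star)$ into a Lipschitz bias term bounded by $r$ and an averaged-noise term, controlled by the assumed sub-Gaussian tail at $t=\sqrt{CM^{-1}\log(2K/\delta)}$ together with a union bound over $k\in[K]_+$. Your added remarks, that the $\eta$-net hypothesis is unused here and that no independence across $k$ is needed, are also accurate.
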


\begin{proof}[{Proof of Lemma~\ref{lem:local_averaging_before_Whitney}}]
For every $k\in[K]_+$,
$
    \overline{Y}_k-f(x_k^\star)
    =
    M^{-1}\sum_{j=1}^M\big(f(X_{k,j})-f(x_k^\star)\big)
    \linebreak
    +
    M^{-1}\sum_{j=1}^M\varepsilon_{k,j}.
$
Since $f$ is $1$-Lipschitz and $X_{k,j}\in B_\rho(x_k^\star,r)$,
$
    |M^{-1}\sum_{j=1}^M(f(X_{k,j})-f(x_k^\star))|\le r.
$
By the sub-Gaussian estimate and a union bound over $k\in[K]_+$, we have
\[
    \max_{k\in[K]_+}
    \left|
        M^{-1}\sum_{j=1}^M\varepsilon_{k,j}
    \right|
    \le
    \sqrt{C M^{-1}\log(2K/\delta)}
\]
with probability at least $1-\delta$.  Hence, on this event,
$
    \max_{k\in[K]_+}|\overline{Y}_k-f(x_k^\star)|
    \le
    r+\sqrt{C M^{-1}\log(2K/\delta)}.
$
\end{proof}
We need to estimate the stability of the Whitney-McShane formula as a function of the input data.
We also deduce the following Lipschitz stability bound.

\begin{lem}[Stability after local averaging]
\label{lem:stability_after_local_averaging}
Assume the hypotheses of Lemma~\ref{lem:local_averaging_before_Whitney}.  Define
$
    \overline{\mathbb{D}}_K
    \eqdef
    ((x_k^\star,\overline{Y}_k-f(x_k^\star)))_{k=1}^K
$
and
$
    \mathbb{D}_K^\star
    \eqdef
    ((x_k^\star,0))_{k=1}^K.
$
Then, for every $\delta\in(0,1)$,
\[
    \mathbb{P}\left(
        \big\|
            \mathcal{E}(f|\overline{\mathbb{D}}_K)
            -
            \mathcal{E}(f|\mathbb{D}_K^\star)
        \big\|_\infty
        \le
        r+\sqrt{C M^{-1}\log(2K/\delta)}
    \right)
    \ge
    1-\delta
.
\]
\end{lem}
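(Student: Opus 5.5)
The plan is to reduce the claim to a deterministic perturbation bound for the Whitney--McShane operator and then invoke Lemma~\ref{lem:local_averaging_before_Whitney} for the probability estimate. The datasets $\overline{\mathbb{D}}_K$ and $\mathbb{D}_K^\star$ share the same input locations $x_1^\star,\ldots,x_K^\star$ and differ only in their output coordinates, namely $\overline{Y}_k-f(x_k^\star)$ versus $0$. By Definition~\ref{defn:Whitney_McShane_operator}, for every $z\in X$ we have
\[
\mathcal{E}(f|\overline{\mathbb{D}}_K)(z)=\max_{k\in[K]_+}\big\{a_k(z)+\overline{\varepsilon}_k\big\},
\qquad
\mathcal{E}(f|\mathbb{D}_K^\star)(z)=\max_{k\in[K]_+}a_k(z),
\]
where $a_k(z)\eqdef f(x_k^\star)-\rho(z,x_k^\star)$ and $\overline{\varepsilon}_k\eqdef\overline{Y}_k-f(x_k^\star)$. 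This is exactly the structure of Lemma~\ref{lem:Whitney_McShane_noise_propagation}, with $N$ replaced by $K$, the points $x_n$ by $x_k^\star$, and the noises $\varepsilon_n$ by $\overline{\varepsilon}_k$. That lemma is purely deterministic and holds for an arbitrary dataset, so it gives, for every realization,
\[
\big\|\mathcal{E}(f|\overline{\mathbb{D}}_K)-\mathcal{E}(f|\mathbb{D}_K^\star)\big\|_\infty\le\max_{k\in[K]_+}\big|\overline{Y}_k-f(x_k^\star)\big|.
\]
If one prefers to avoid citing it, the same bound follows directly: bracket each term $a_k(z)+\overline{\varepsilon}_k$ between $a_k(z)\pm\max_j|\overline{\varepsilon}_j|$, take maxima over $k$, and then take the supremum over $z$.

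Given this inclusion of events, Lemma~\ref{lem:local_averaging_before_Whitney}, applied under the same hypotheses and with the same $\delta$, shows that with probability at least $1-\delta$
\[
\max_{k\in[K]_+}\big|\overline{Y}_k-f(x_k^\star)\big|\le r+\sqrt{CM^{-1}\log(2K/\delta)}.
\]
On this event the sup-norm difference obeys the same bound, which is the claim.

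The only step needing care is measurability of the sup-norm quantity. I would handle it as in Lemma~\ref{lem:Whitney_McShane_LSI_deviation}. The functions $\mathcal{E}(f|\cdot)$ are continuous in $z$ and $X$ is compact, hence separable, so the supremum may be taken over a countable dense set and the quantity is a random variable. Alternatively, it suffices to note that the event in the statement contains a measurable event of probability at least $1-\delta$. I do not expect any genuine obstacle: the content is the deterministic $1$-Lipschitz dependence of the max-envelope on the label vector in the $\ell^\infty$ norm.
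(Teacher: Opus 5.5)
Your proposal is correct and follows essentially the same route as the paper: both bracket each term $a_k(z)+\overline{Y}_k-f(x_k^\star)$ between $a_k(z)\pm\max_j|\overline{Y}_j-f(x_j^\star)|$ to obtain the deterministic sup-norm bound, and then invoke Lemma~\ref{lem:local_averaging_before_Whitney} for the high-probability estimate. Your remark on measurability is a harmless addition that the paper does not make.
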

\begin{proof}[{Proof of Lemma~\ref{lem:stability_after_local_averaging}}]
Set
$
    \Delta_K
    \eqdef
    \max_{k\in[K]_+}|\overline{Y}_k-f(x_k^\star)|.
$
For each $z\in X$, write
\linebreak
$
    a_k(z)\eqdef f(x_k^\star)-\rho(z,x_k^\star).
$
Then
$
    \mathcal{E}(f|\overline{\mathbb{D}}_K)(z)
    =
    \max_{k\in[K]_+}\{a_k(z)+\overline{Y}_k-f(x_k^\star)\}
$
and
$
    \mathcal{E}(f|\mathbb{D}_K^\star)(z)
    =
    \max_{k\in[K]_+}a_k(z).
$
Since
$
    -\Delta_K
    \le
    \overline{Y}_k-f(x_k^\star)
    \le
    \Delta_K
$
for every $k\in[K]_+$, taking maxima gives
$
    \big|
        \mathcal{E}(f|\overline{\mathbb{D}}_K)(z)
        -
        \mathcal{E}(f|\mathbb{D}_K^\star)(z)
    \big|
    \le
    \Delta_K.
$
Taking the supremum over $z\in X$ yields
$
    \|\mathcal{E}(f|\overline{\mathbb{D}}_K)
        -
        \mathcal{E}(f|\mathbb{D}_K^\star)\|_\infty
    \le
    \Delta_K.
$
By Lemma~\ref{lem:local_averaging_before_Whitney}, with probability at least $1-\delta$,
$
    \Delta_K
    \le
    r+\sqrt{C M^{-1}\log(2K/\delta)}
$.
\end{proof}

We now derive our main full-version of our reconstruction theorem; which relies on the following technical lemma.

\begin{lem}[Uniform Stability of Central Envelopes]
\label{lem:central_midpoint_stability}
Let $(X,\rho)$ be a metric space, let $L>0$, and let $f:X\to\mathbb{R}$ be $L$-Lipschitz. Let $K\in\mathbb{N}_+$, let $x_1,\ldots,x_K\in X$, and suppose that there exists $\eta\ge0$ such that
\begin{equation}
\label{eq:central_midpoint_covering_condition}
    \sup_{x\in X}
    \min_{k\in[K]_+}
    \rho(x,x_k)
    \le
    \eta.
\end{equation}
Let $v_1,\ldots,v_K\in\mathbb{R}$ and suppose that, for some $\Delta\ge0$, one has $\max_{k\in[K]_+}|v_k-f(x_k)|\le\Delta$.
For every $x\in X$, define
\begin{equation*}
\begin{aligned}
    U(x)
    &\eqdef
    \min_{k\in[K]_+}
    \left\{
        v_k+L\rho(x,x_k)
    \right\},
    \\
    W(x)
    &\eqdef
    \max_{k\in[K]_+}
    \left\{
        v_k-L\rho(x,x_k)
    \right\},
    \\
    \hat{f}(x)
    &\eqdef
    \frac{U(x)+W(x)}{2}.
\end{aligned}
\end{equation*}
Then $U$, $W$, and $\hat{f}$ are $L$-Lipschitz and for every $x\in X$
\begin{equation}
\label{eq:central_midpoint_stability}
    |\hat{f}(x)-f(x)|
\le
    \Delta+L\eta.
\end{equation}
\end{lem}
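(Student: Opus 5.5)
The argument has two parts: Lipschitz regularity, then a two-sided pointwise bound obtained from one-sided estimates on the two envelopes.

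\textbf{Lipschitz regularity.} For each fixed $k$, the maps $x\mapsto v_k\pm L\rho(x,x_k)$ are $L$-Lipschitz by the triangle inequality. A finite minimum or maximum of $L$-Lipschitz functions is again $L$-Lipschitz. To see this for $W$, fix $x,y$ and pick $k^\star$ attaining the maximum in $W(x)$. Then $W(x)-W(y)\le (v_{k^\star}-L\rho(x,x_{k^\star}))-(v_{k^\star}-L\rho(y,x_{k^\star}))\le L\rho(x,y)$, and the reverse inequality follows by symmetry. Hence $U$ and $W$ are $L$-Lipschitz. Their average $\hat f$ is $L$-Lipschitz because $\tfrac12(L+L)=L$.

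\textbf{One-sided bounds on the envelopes.} Fix $x\in X$. For every $k$,
\[
v_k-L\rho(x,x_k)\le f(x_k)+\Delta-L\rho(x,x_k)\le f(x)+\Delta,
\]
using the $L$-Lipschitz property of $f$. Taking the maximum over $k$ gives $W(x)\le f(x)+\Delta$. Symmetrically, $v_k+L\rho(x,x_k)\ge f(x_k)-\Delta+L\rho(x,x_k)\ge f(x)-\Delta$, so $U(x)\ge f(x)-\Delta$.

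Now use the covering condition \eqref{eq:central_midpoint_covering_condition}. The minimum there is over a finite set, so it is attained: there is $k_x$ with $\rho(x,x_{k_x})\le\eta$. Evaluating both envelopes at this single index gives
\[
U(x)\le v_{k_x}+L\eta\le f(x_{k_x})+\Delta+L\eta\le f(x)+\Delta+2L\eta,
\]
and likewise $W(x)\ge f(x)-\Delta-2L\eta$.

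\textbf{Combining the bounds.} Averaging $U(x)\le f(x)+\Delta+2L\eta$ with $W(x)\le f(x)+\Delta$ gives $\hat f(x)\le f(x)+\Delta+L\eta$. Averaging $U(x)\ge f(x)-\Delta$ with $W(x)\ge f(x)-\Delta-2L\eta$ gives $\hat f(x)\ge f(x)-\Delta-L\eta$. Together these yield \eqref{eq:central_midpoint_stability}.

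The step needing care is this pairing. One must use the crude one-sided bound on one envelope together with the net-based bound on the other. Bounding each envelope separately by $\Delta+2L\eta$ would lose the factor $2$ on $L\eta$.
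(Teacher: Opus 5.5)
Your proof is correct and follows the paper's argument essentially step for step. The paper also gets the Lipschitz bounds from attained extremizers plus the reverse triangle inequality, proves the same four one-sided envelope bounds (a crude bound on one side and a net-based bound with $2L\eta$ on the other), and averages them to obtain $\Delta+L\eta$.
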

\begin{proof}[{Proof of Lemma~\ref{lem:central_midpoint_stability}}]
Since $K\in\mathbb{N}_+$, each minimum and maximum in the definitions of $U$ and $W$ is taken over a finite non-empty set of real numbers. Hence $U$, $W$, and $\hat{f}$ are well defined and real valued on $X$.
We first verify the Lipschitz claims. Fix $x,y\in X$. Since the index set is finite, there exist $k_y^+\in[K]_+$ and $k_x^-\in[K]_+$ such that
\[
    U(y)
    =
    v_{k_y^+}
    +
    L\rho(y,x_{k_y^+})
    \qquad\text{and}\qquad
    W(x)
    =
    v_{k_x^-}
    -
    L\rho(x,x_{k_x^-}).
\]
Using the definitions of $U$ and $W$, followed by the reverse triangle inequality, gives
\begin{equation*}
\begin{aligned}
    U(x)-U(y)
    &\le
    L\left(
        \rho(x,x_{k_y^+})
        -
        \rho(y,x_{k_y^+})
    \right)
    \le
    L\rho(x,y),
    \\
    W(x)-W(y)
    &\le
    L\left(
        \rho(y,x_{k_x^-})
        -
        \rho(x,x_{k_x^-})
    \right)
    \le
    L\rho(x,y).
\end{aligned}
\end{equation*}
Interchanging $x$ and $y$ in these inequalities yields $|U(x)-U(y)|\le L\rho(x,y)$ and $|W(x)-W(y)|\le L\rho(x,y)$. Therefore,
\[
    |\hat{f}(x)-\hat{f}(y)|
    \le
    \frac{|U(x)-U(y)|+|W(x)-W(y)|}{2}
    \le
    L\rho(x,y),
\]
so $U$, $W$, and $\hat{f}$ are $L$-Lipschitz.
We next prove the following pointwise bounds: for every $x\in X$,
\begin{equation}
\label{eq:central_envelope_bounds}
\begin{aligned}
    f(x)-\Delta
    &\le
    U(x)
    \le
    f(x)+\Delta+2L\eta,
    \\
    f(x)-\Delta-2L\eta
    &\le
    W(x)
    \le
    f(x)+\Delta.
\end{aligned}
\end{equation}
Fix $x\in X$. For every $k\in[K]_+$, the assumption $|v_k-f(x_k)|\le\Delta$ and the $L$-Lipschitz continuity of $f$ imply
\begin{equation*}
\begin{aligned}
    v_k+L\rho(x,x_k)
    &\ge
    f(x_k)-\Delta+L\rho(x,x_k)
    \ge
    f(x)-\Delta,
    \\
    v_k-L\rho(x,x_k)
    &\le
    f(x_k)+\Delta-L\rho(x,x_k)
    \le
    f(x)+\Delta.
\end{aligned}
\end{equation*}
Taking the minimum in the first inequality and the maximum in the second gives
\[
    U(x)\ge f(x)-\Delta
    \qquad\text{and}\qquad
    W(x)\le f(x)+\Delta.
\]
By~\eqref{eq:central_midpoint_covering_condition}, choose $k_x\in[K]_+$ attaining the finite minimum, so that $\rho(x,x_{k_x})\le\eta$. Evaluating the minimum and maximum at this index and again using $|v_{k_x}-f(x_{k_x})|\le\Delta$ and the Lipschitz continuity of $f$, we obtain
\begin{equation*}
\begin{aligned}
    U(x)
    &\le
    v_{k_x}+L\rho(x,x_{k_x})
    \le
    f(x)+\Delta+2L\rho(x,x_{k_x})
    \le
    f(x)+\Delta+2L\eta,
    \\
    W(x)
    &\ge
    v_{k_x}-L\rho(x,x_{k_x})
    \ge
    f(x)-\Delta-2L\rho(x,x_{k_x})
    \ge
    f(x)-\Delta-2L\eta.
\end{aligned}
\end{equation*}
This proves~\eqref{eq:central_envelope_bounds}.
Averaging the lower bounds for $U(x)$ and $W(x)$ in~\eqref{eq:central_envelope_bounds}, and then averaging their upper bounds, yields
\[
    f(x)-\Delta-L\eta
    \le
    \hat{f}(x)
    \le
    f(x)+\Delta+L\eta.
\]
Thus $|\hat{f}(x)-f(x)|\le\Delta+L\eta$. Since $x\in X$ was arbitrary, taking the supremum over $x\in X$ proves~\eqref{eq:central_midpoint_stability}.
\end{proof}

%%%
The following is the technical version of our main result.
\begin{theorem}[Split-Sample Uniform Recovery]
\label{thrm:random_apprx_noisy_robustified}
Suppose that Assumption~\ref{assumption:main_1Lip} holds.  Let
$N_0,N_1\in\mathbb{N}_+$ and set $N\eqdef N_0+N_1$.  Fix
$\varepsilon\in(0,1)$ and
$\delta_{\operatorname{net}},
 \delta_{\operatorname{noise}}\in(0,1)$
such that
$
    \varepsilon/2\le\operatorname{diam}(X)
$.
Set
\[
    \eta\eqdef\frac{\varepsilon}{4},
    \qquad
    \beta\eqdef\frac{\varepsilon}{2},
    \qquad
    q_\varepsilon
    \eqdef
    \left\lceil\varepsilon^{-2}\right\rceil.
\]
Use the first $N_0$ observations as the reference block and the remaining
$N_1$ observations as the averaging block.  Assume that
\begin{equation}
\label{eq:technical_robustified_sample_condition}
\begin{aligned}
    N_0
    &\ge
    \frac{1}{a(\varepsilon/8)^s}
    \log\left(
        \frac{
            2\mathcal N(X,\rho,\varepsilon/8)
        }{
            \delta_{\operatorname{net}}
        }
    \right),
    \\
    N_1
    &\ge
    \frac{1}{a(\varepsilon/2)^s}
    \max\left\{
        2q_\varepsilon,
        8\log\left(
            \frac{
                2N_0
            }{
                \delta_{\operatorname{net}}
            }
        \right)
    \right\}.
\end{aligned}
\end{equation}

For each $k\in[N_0]_+$, set $\xi_k\eqdef X_k$.  If at least
$q_\varepsilon$ indices
$j\in\{N_0+1,\ldots,N_0+N_1\}$ satisfy
$
    \rho(X_j,\xi_k)\le\beta,
$
let
$
    j_{k,1}<\cdots<j_{k,q_\varepsilon}
$
be the first $q_\varepsilon$ such indices and define
\[
    \overline Y_k
    \eqdef
    \frac{1}{q_\varepsilon}
    \sum_{\ell=1}^{q_\varepsilon}
        Y_{j_{k,\ell}}.
\]
If fewer than $q_\varepsilon$ such indices exist, set
$
    \overline Y_k\eqdef0
$.

Define, for every $x\in X$,
\begin{equation}
\label{eq:technical_robustified_estimator}
    \hat{f}(x)
    \eqdef
    \frac{1}{2}
    \left[
        \min_{k\in[N_0]_+}
        \left\{
            \overline Y_k+\rho(x,\xi_k)
        \right\}
        +
        \max_{k\in[N_0]_+}
        \left\{
            \overline Y_k-\rho(x,\xi_k)
        \right\}
    \right].
\end{equation}
Then, for every fixed
$f\in\operatorname{Lip}((X,\rho);1)$:

\begin{enumerate}
    \item[(i)] $\hat{f}:X\to\mathbb{R}$ is $1$-Lipschitz for every
    realization of the sample;

    \item[(ii)] the random variable
    $
        \|\hat{f}-f\|_\infty
    $
    is measurable and satisfies
    \begin{equation}
    \label{eq:technical_robustified_uniform_bound}
        \mathbb{P}\left(
            \|\hat{f}-f\|_\infty
            \le
            \varepsilon
            \left[
                1
                +
                \sqrt{
                    C
                    \log\left(
                        \frac{
                            2N_0
                        }{
                            \delta_{\operatorname{noise}}
                        }
                    \right)
                }
            \right]
        \right)
        \ge
        1
        -
        \delta_{\operatorname{net}}
        -
        \delta_{\operatorname{noise}}.
    \end{equation}
\end{enumerate}
\end{theorem}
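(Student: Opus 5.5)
Part (i) follows directly from Lemma~\ref{lem:central_midpoint_stability} with $L=1$, $x_k=\xi_k$, $v_k=\overline Y_k$. The Lipschitz claim there uses no covering or accuracy hypothesis, so it holds for every realization of the sample. For measurability in (ii), I would argue as in Lemma~\ref{lem:Whitney_McShane_LSI_deviation}. Since $\hat f-f$ is continuous and $X$ is compact and separable, the supremum can be taken over a countable dense set. Each $\hat f(x)$ is a Borel function of the data, because the index selection $j_{k,\ell}$ is determined by finitely many Borel events $\{\rho(X_j,X_k)\le\beta\}$.

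For (ii), I would intersect three good events and then apply Lemma~\ref{lem:central_midpoint_stability} with $\eta=\varepsilon/4$ and $\Delta=\beta+\varepsilon\sqrt{C\log(2N_0/\delta_{\rm noise})}$.

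\textbf{Event $E_1$: the reference points form a net.} By Lemma~\ref{lem:coupon_collector_random_net}, applied at scale $\varepsilon/4$ with an $\varepsilon/8$-cover, together with lower Ahlfors regularity $p\ge a(\varepsilon/8)^s$, the first condition in \eqref{eq:technical_robustified_sample_condition} makes $\{\xi_k\}$ an $\varepsilon/4$-net with probability at least $1-\delta_{\rm net}/2$.

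\textbf{Event $E_2$: every reference ball is well populated.} Condition on the reference block. For each $k$, the count $\#\mathcal I_k^{(\beta)}$ is Binomial$(N_1,p_k)$ with $p_k=\mu_X(B(\xi_k,\beta))\ge a(\varepsilon/2)^s$, so its mean is at least $\max\{2q_\varepsilon,8\log(2N_0/\delta_{\rm net})\}$. A multiplicative Chernoff bound $\mathbb P(\mathrm{Bin}<\mathrm{mean}/2)\le e^{-\mathrm{mean}/8}$ then gives $\#\mathcal I_k\ge q_\varepsilon$ for all $k$ simultaneously, after a union bound over $N_0$ indices, with probability at least $1-\delta_{\rm net}/2$. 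The constants $2$ and $8$ in \eqref{eq:technical_robustified_sample_condition} appear tailored to exactly this step.

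\textbf{Event $E_3$: the averaged noise is small.} On $E_2$, the bias satisfies $|\frac1{q}\sum_\ell f(X_{j_{k,\ell}})-f(\xi_k)|\le\beta$ by the Lipschitz property. It remains to control $\bar\varepsilon_k=\frac1{q}\sum_\ell\varepsilon_{j_{k,\ell}}$.

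The key point, and the main obstacle, is that the selected indices are random and depend on the design. By the independence in Assumption~\ref{assumption:main_1Lip}(iii), conditional on all $X$'s the indices $j_{k,\ell}$ are fixed, while the noises remain i.i.d.\ $\nu$. Tensorization of the LSI (as in Lemma~\ref{lem:Whitney_McShane_LSI_deviation}) plus Herbst, applied to the map $\frac1q\sum_\ell\varepsilon_{j_\ell}$, which is $q^{-1/2}$-Lipschitz in $\ell^2$, gives $\mathbb P(|\bar\varepsilon_k|\ge t\mid X)\le 2e^{-qt^2/C}$, with centring supplied by $\mathbb E\varepsilon=0$. I would use the $\ell^2$ tensorized form, which is the standard one for LSI; the $\ell^1$ version would lose a factor of $q$. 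A union bound over $k$ and integration over $X$ then give $\max_k|\bar\varepsilon_k|\le\sqrt{Cq^{-1}\log(2N_0/\delta_{\rm noise})}\le\varepsilon\sqrt{C\log(2N_0/\delta_{\rm noise})}$, using $q\ge\varepsilon^{-2}$. This is essentially Lemma~\ref{lem:local_averaging_before_Whitney}, with its sub-Gaussian hypothesis verified conditionally.

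\textbf{Conclusion.} On $E_1\cap E_2\cap E_3$, which has probability at least $1-\delta_{\rm net}-\delta_{\rm noise}$, Lemma~\ref{lem:central_midpoint_stability} yields
\[
\|\hat f-f\|_\infty\le \tfrac{\varepsilon}{2}+\varepsilon\sqrt{C\log(2N_0/\delta_{\rm noise})}+\tfrac{\varepsilon}{4}\le\varepsilon\Big[1+\sqrt{C\log(2N_0/\delta_{\rm noise})}\Big],
\]
which is \eqref{eq:technical_robustified_uniform_bound}.
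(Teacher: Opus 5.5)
Your proposal is correct and follows the paper's proof essentially step for step: the same three events (coupon-collector net at scale $\varepsilon/4$, the Chernoff occupancy bound using the thresholds $2q_\varepsilon$ and $8\log(2N_0/\delta_{\operatorname{net}})$, and a conditional-on-design LSI/Herbst bound for the $q_\varepsilon^{-1/2}$-Lipschitz average followed by a union bound over $k$), combined to reach probability $1-\delta_{\operatorname{net}}-\delta_{\operatorname{noise}}$. The only difference is cosmetic and lies in the final deterministic step: you invoke Lemma~\ref{lem:central_midpoint_stability} directly, giving bias $\eta+\beta=\tfrac34\varepsilon$, whereas the paper passes through the exact-value estimator and the one-sided bound $2\eta$ of Lemma~\ref{lem:Whitney_McShane_uniform_approximation}; both yield the stated inequality, and your explicit measurability argument supplies a point the paper leaves implicit.
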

In the finite-dimensional Euclidean random-design setting covered by the
classical minimax theory, the polynomial exponent in our rate agrees with
the minimax sup-norm exponent for first-order Lipschitz regression, up to
logarithmic factors; see~\cite{Stone1982OptimalGlobalRates}.  We use this as
a rate comparison.  We do not claim here a matching minimax lower bound for
the full class of lower-Ahlfors-regular metric-space models appearing in
Theorem~\ref{thrm:random_apprx_noisy_robustified}.
Moreover, Theorem~\ref{thrm:random_apprx_noisy_robustified} gives a high-probability
uniform guarantee for $\hat{f}_{\mathcal{D}}$, and hence a high-probability
population-risk guarantee.  The bias-variance decomposition is a weaker
second-moment summary of the same estimation problem; cf.\ bias-variance decompositions for kernel ridge regression~\cite{pmlr-v235-cheng24g,cheng2024comprehensive}.
\begin{proof}[{Proof of Theorem~\ref{thrm:random_apprx_noisy_robustified}}]
We first construct the reference net.  Apply
Lemma~\ref{lem:coupon_collector_random_net} with target accuracy
$\eta=\varepsilon/4$.  Thus the covering balls used in that lemma have radius
$\eta/2=\varepsilon/8$.  By lower Ahlfors $s$-regularity, every such ball has
$\mu_X$-mass at least $a(\varepsilon/8)^s$.  Therefore the first inequality in
\eqref{eq:technical_robustified_sample_condition} implies
\[
    \mathbb{P}\left(
        \{\xi_k\}_{k=1}^{N_0}
        \mbox{ is an } \eta\mbox{-net of }(X,\rho)
    \right)
    \ge
    1-\frac{\delta_{\operatorname{net}}}{2}.
\]
Denote this event by $\mathcal{G}_{\operatorname{net}}$.
We next show that the second block supplies $M$ nearby samples around each
reference point.  Condition on the first block $(X_k)_{k=1}^{N_0}$ and set
$B_k\eqdef B_\rho(\xi_k,r)$, where $r=\varepsilon/2$.  Since $r\le
\operatorname{diam}(X)$, lower Ahlfors regularity gives
$
    \mu_X(B_k)\ge ar^s=a(\varepsilon/2)^s
$
for every $k\in[N_0]_+$.  Define
$
    S_k
    \eqdef
    \#\{j\in\{N_0+1,\ldots,N_0+N_1\}:X_j\in B_k\}.
$
Conditional on the first block, $S_k$ is binomial with parameters $N_1$ and
$q_k\eqdef\mu_X(B_k)$, where $q_k\ge a(\varepsilon/2)^s$.  By
\eqref{eq:technical_robustified_sample_condition}, $N_1q_k\ge 2M$.  
Conditional on $(X_1,\ldots,X_{N_0})$, the random variable $S_k$ is
binomial with parameters $N_1$ and
$
    p_k
\eqdef
    \mu_X(B_\rho(\xi_k,\beta))$.
Let
$
    \mu_k\eqdef N_1p_k
$
denote its conditional mean.  By lower Ahlfors regularity and
\eqref{eq:technical_robustified_sample_condition}, $
    \mu_k
    \ge
    N_1a\beta^s
    \ge
    2q_\varepsilon$
and hence $
    \{S_k<q_\varepsilon\}
    \subseteq
    \{S_k\le\mu_k/2\}.
$.
For every $\lambda>0$, conditional Markov's inequality gives
\[
\begin{aligned}
    \mathbb{P}\left(
        S_k\le\frac{\mu_k}{2}
        \,\middle|\,
        X_1,\ldots,X_{N_0}
    \right)
    &=
    \mathbb{P}\left(
        e^{-\lambda S_k}
        \ge
        e^{-\lambda\mu_k/2}
        \,\middle|\,
        X_1,\ldots,X_{N_0}
    \right)
    \\
    &\le
    e^{\lambda\mu_k/2}
    \mathbb{E}\left[
        e^{-\lambda S_k}
        \,\middle|\,
        X_1,\ldots,X_{N_0}
    \right]
    \\
    &=
    e^{\lambda\mu_k/2}
    \left(
        1-p_k+p_ke^{-\lambda}
    \right)^{N_1}
    \\
    &\le
    \exp\left(
        \mu_k
        \left[
            \frac{\lambda}{2}
            +
            e^{-\lambda}
            -
            1
        \right]
    \right).
\end{aligned}
\]
Taking $\lambda=\log 2$ and using $\log 2\le3/4$ yields
$
    \tfrac{\log 2}{2}
    +
    \tfrac{1}{2}
    -
    1
    \le
    -\frac{1}{8}$; therefore
\begin{equation}
\label{eq:Chernoffed}
\begin{aligned}
    \mathbb{P}\left(
        S_k<q_\varepsilon
        \,\middle|\,
        X_1,\ldots,X_{N_0}
    \right)
    &\le
    \exp\left(
        -\frac{\mu_k}{8}
    \right)
    \\
    &\le
    \exp\left(
        -\frac{
            N_1a(\varepsilon/2)^s
        }{8}
    \right).
\end{aligned}
\end{equation}
By the second inequality in
\eqref{eq:technical_robustified_sample_condition},
$
    \exp\big(
        -\tfrac{
            N_1a(\varepsilon/2)^s
        }{8}
    \big)
    \le
    \tfrac{
        \delta_{\operatorname{net}}
    }{
        2N_0
    }
$.    Taking a union bound over $k\in[N_0]_+$ gives
\[
    \mathbb{P}\left(
        \min_{k\in[N_0]_+}S_k
        <
        q_\varepsilon
        \,\middle|\,
        X_1,\ldots,X_{N_0}
    \right)
    \le
    \frac{\delta_{\operatorname{net}}}{2}.
\]
Upon taking a union bound over $k\in[N_0]_+$ and using the second inequality in
\eqref{eq:technical_robustified_sample_condition}, we obtain $
    \mathbb{P}\big(
        \exists k\in[N_0]_+:S_k<M
        \,\big|\,
        X_1,\ldots,X_{N_0}
    \big)
    \le
    \delta_{\operatorname{net}}/2
$.  Consequently, with probability at least $1-\delta_{\operatorname{net}}$, both $\mathcal{G}_{\operatorname{net}}$ holds and every ball $B_k$ contains at least
$M$ samples from the second block.  Denote this intersection by $\mathcal{G}_{\operatorname{occ}}$. Let
\[
    \mathscr X_N
    \eqdef
    \sigma(X_1,\ldots,X_{N_0+N_1})
\]
be the sigma-algebra generated by all sample locations.  Define the occupancy event
\[
    \mathcal G_{\operatorname{occ}}
    \eqdef
    \bigcap_{k=1}^{N_0}
    \left\{
        S_k\ge q_\varepsilon
    \right\}.
\]
Because the selected indices are chosen in increasing sample-index order,
each map
\[
    (X_1,\ldots,X_{N_0+N_1})
    \longmapsto
    j_{k,\ell}
\]
is $\mathscr X_N$-measurable on $\mathcal G_{\operatorname{occ}}$.  By the product assumption $
    \mu=\mu_X\otimes\nu
$ and independence across observations, conditional on $\mathscr X_N$ the
random variables $
    \varepsilon_1,\ldots,\varepsilon_{N_0+N_1}
$ are independent and each has law $\nu$.  Consequently, for every fixed $k\in[N_0]_+$, on $\mathcal G_{\operatorname{occ}}$ the selected vector
\[
    \left(
        \varepsilon_{j_{k,1}},
        \ldots,
        \varepsilon_{j_{k,q_\varepsilon}}
    \right)
\]
has conditional law $\nu^{\otimes q_\varepsilon}$. By tensorization of~\eqref{eq:noise_LSI}, the measure $\nu^{\otimes q_\varepsilon}$ satisfies the same log-Sobolev inequality with constant $C$ with respect to the Euclidean product metric.  The map
\[
    (z_1,\ldots,z_{q_\varepsilon})
    \longmapsto
    \frac{1}{q_\varepsilon}
    \sum_{\ell=1}^{q_\varepsilon}z_\ell
\]
has Lipschitz constant $q_\varepsilon^{-1/2}$ (with respect to the $\ell^2$ metric).  The Herbst
argument therefore yields, for every $t>0$,
\[
\begin{aligned}
    &\mathbb{P}\left(
        \left|
            \frac{1}{q_\varepsilon}
            \sum_{\ell=1}^{q_\varepsilon}
                \varepsilon_{j_{k,\ell}}
        \right|
        \ge t
        \,\middle|\,
        \mathscr X_N
    \right)
    \\
    &\hspace{4cm}\le
    2\exp\left(
        -\frac{
            q_\varepsilon t^2
        }{C}
    \right)
\end{aligned}
\]
on $\mathcal G_{\operatorname{occ}}$.
 The selected index sets may overlap for different values of $k$.  Hence the corresponding local averages need not be conditionally independent. No such independence is used: a conditional union bound gives
\[
\begin{aligned}
    &\mathbb{P}\left(
        \max_{k\in[N_0]_+}
        \left|
            \frac{1}{q_\varepsilon}
            \sum_{\ell=1}^{q_\varepsilon}
                \varepsilon_{j_{k,\ell}}
        \right|
        \ge t
        \,\middle|\,
        \mathscr X_N
    \right)
    \\
    &\hspace{2cm}\le
    2N_0
    \exp\left(
        -\frac{
            q_\varepsilon t^2
        }{C}
    \right)
\end{aligned}
\]
on $\mathcal G_{\operatorname{occ}}$.  
Set
$
    t_\varepsilon
    \eqdef
    \sqrt{
        \tfrac{C}{q_\varepsilon}
        \log\big(
            \tfrac{
                2N_0
            }{
                \delta_{\operatorname{noise}}
            }
        \big)
    }$, and define $
    \mathcal G_{\operatorname{noise}}
\eqdef
    \{
        \max_{k\in[N_0]_+}
        \big|
            \tfrac{1}{q_\varepsilon}
            \sum_{\ell=1}^{q_\varepsilon}
                \varepsilon_{j_{k,\ell}}
        \big|
        \le
        t_\varepsilon
    \}
$. 
Then
\[
    \mathbb{P}\left(
        \mathcal G_{\operatorname{occ}}
        \cap
        \mathcal G_{\operatorname{noise}}^c
    \right)
    =
    \mathbb{E}\left[
        \mathbbm{1}_{\mathcal G_{\operatorname{occ}}}
        \mathbb{P}\left(
            \mathcal G_{\operatorname{noise}}^c
            \,\middle|\,
            \mathscr X_N
        \right)
    \right]
    \le
    \delta_{\operatorname{noise}}.
\]
This shows that $
    \mathbb{P}\left(
        \mathcal G_{\operatorname{net}}^c
    \right)
    \le
    \tfrac{\delta_{\operatorname{net}}}{2}
$,
and the occupancy argument gives
$
    \mathbb{P}\left(
        \mathcal G_{\operatorname{occ}}^c
    \right)
    \le
    \tfrac{\delta_{\operatorname{net}}}{2}
$.
Therefore
\[
\begin{aligned}
    &\mathbb{P}\left(
        \mathcal G_{\operatorname{net}}
        \cap
        \mathcal G_{\operatorname{occ}}
        \cap
        \mathcal G_{\operatorname{noise}}
    \right)
    \\
    &\quad\ge
    1
    -
    \mathbb{P}\left(
        \mathcal G_{\operatorname{net}}^c
    \right)
    -
    \mathbb{P}\left(
        \mathcal G_{\operatorname{occ}}^c
    \right)
    -
    \mathbb{P}\left(
        \mathcal G_{\operatorname{occ}}
        \cap
        \mathcal G_{\operatorname{noise}}^c
    \right)
    \\
    &\quad\ge
    1
    -
    \delta_{\operatorname{net}}
    -
    \delta_{\operatorname{noise}}.
\end{aligned}
\]

Since $|e_k|\le \max_{\ell\in[N_0]_+}|e_\ell|$ for every $k$, taking maxima gives
$
    |\hat{f}(x)-f_{N_0}^{\circ}(x)|
    \le
    \max_{k\in[N_0]_+}|\overline{Y}_k-f(\xi_k)|.
$
Therefore,
\[
    \|\hat{f}-f_{N_0}^{\circ}\|_\infty
    \le
    r+
    \sqrt{
        \frac{C}{M}
        \log\left(
            \frac{2N_0}{\delta_{\operatorname{noise}}}
        \right)
    }.
\]
Since $\{\xi_k\}_{k=1}^{N_0}$ is an $\eta$-net on
$\mathcal{G}_{\operatorname{net}}$, Lemma~\ref{lem:Whitney_McShane_uniform_approximation}
applied with accuracy $2\eta$ gives
\linebreak
$
    \sup_{x\in X}|f(x)-f_{N_0}^{\circ}(x)|
    \le
    2\eta.
$
We thus have
\[
\begin{aligned}
    \sup_{x\in X}
    |f(x)-\hat{f}(x)|
\le
    \sup_{x\in X}|f(x)-f_{N_0}^{\circ}(x)|
    +
    \|f_{N_0}^{\circ}-\hat{f}\|_\infty
\le
    2\eta+r+
    \sqrt{
        \frac{C}{M}
        \log\left(
            \frac{2N_0}{\delta_{\operatorname{noise}}}
        \right)
    }.
\end{aligned}
\]
Since $2\eta+r=2(\varepsilon/4)+\varepsilon/2=\varepsilon$ and
$M=\lceil\varepsilon^{-2}\rceil$ implies $M^{-1}\le \varepsilon^2$, we obtain \eqref{eq:relu_robustified_uniform_bound} since
\[
    \sup_{x\in X}
    \,
    |f(x)-\hat{f}(x)|
\le
    \varepsilon
    +
    \varepsilon
    \sqrt{
        C
        \log\left(
            \frac{2N_0}{\delta_{\operatorname{noise}}}
        \right)
    }.
\]
It remains only to verify that $\hat{f}$ is $1$-Lipschitz.  For each
$k\in[N_0]_+$, the map $x\mapsto \overline{Y}_k-\rho(x,\xi_k)$ is
$1$-Lipschitz, because
$
    |\rho(x,\xi_k)-\rho(x',\xi_k)|\le \rho(x,x')
$
for all $x,x'\in X$.  The pointwise maximum of finitely many $1$-Lipschitz
functions is again $1$-Lipschitz.  Hence $\hat{f}$ is %$\mathbb{P}$-a.s.\
$1$-Lipschitz.
\end{proof}

We now deduce convergence.
\begin{lem}[Lower Ahlfors regularity implies polynomial metric entropy]
\label{lem:lower_Ahlfors_implies_entropy_bound}
Suppose that $(X,\rho)$ is compact and that $\mu_X$ is lower Ahlfors
$s$-regular with constant $a>0$; namely,
\[
    \mu_X(B_\rho(x,r))
    \ge
    a r^s
\]
for every $x\in X$ and every $0<r\le \operatorname{diam}(X)$.  Then, for every
$0<r\le \operatorname{diam}(X)$,
\begin{equation}
\label{eq:Ahlfors_entropy_bound}
    N_r(X,\rho)
    \le
    \frac{2^s}{a}\,r^{-s}.
\end{equation}
\end{lem}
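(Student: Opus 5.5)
The plan is the standard packing-versus-volume argument. Fix $0<r\le\operatorname{diam}(X)$ and let $\{x_1,\ldots,x_P\}\subseteq X$ be a maximal $r$-separated subset, i.e.\ $\rho(x_i,x_j)\ge r$ for $i\neq j$, with no further point of $X$ addable. We first check that such a maximal set is finite and bounds the covering number. Compactness gives finiteness, since an infinite $r$-separated set would have no convergent subsequence. Maximality means every $x\in X$ satisfies $\rho(x,x_i)<r$ for some $i$. Hence $\{x_i\}$ is an $r$-net, and $N_r(X,\rho)\le P$.

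Next we bound $P$ using the measure. The balls $B_\rho(x_i,r/2)$ are pairwise disjoint. Indeed, a common point $z$ would give $\rho(x_i,x_j)\le\rho(x_i,z)+\rho(z,x_j)<r$ if the balls are open. If the paper's balls are closed, we would instead use radius slightly smaller than $r/2$, or radius $r/2$ with strict separation; the limiting argument costs nothing. Since $r/2\le\operatorname{diam}(X)$, lower Ahlfors regularity applies to each ball and gives $\mu_X(B_\rho(x_i,r/2))\ge a(r/2)^s$. Summing over the disjoint balls and using $\mu_X(X)=1$ yields
\[
    P\,a\,2^{-s}r^{s}\le 1,
\]
so $P\le \tfrac{2^s}{a}r^{-s}$. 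This gives \eqref{eq:Ahlfors_entropy_bound}.

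The only delicate point is the open-versus-closed ball convention, which decides whether the disjointness uses strict inequalities. If needed, we will take an $r'$-separated maximal set with $r'<r$, or use balls of radius $r/2-\epsilon$ and let $\epsilon\downarrow0$. The continuity of $\epsilon\mapsto a(r/2-\epsilon)^s$ preserves the same constant $2^s/a$. Everything else is routine.
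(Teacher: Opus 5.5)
Your proposal is correct and is essentially the paper's proof: a maximal $r$-separated set is an $r$-net, the balls of radius $r/2$ around its points are pairwise disjoint, and lower Ahlfors regularity together with $\mu_X(X)=1$ bounds its cardinality by $2^s a^{-1} r^{-s}$. The only difference is cosmetic. The paper uses strict separation $\rho(x_i,x_j)>r$, which makes closed half-radius balls disjoint but only yields a covering by closed $r$-balls. Your non-strict separation with open balls, plus the $\epsilon\downarrow 0$ limit for the closed-ball convention, handles both boundary conventions cleanly.
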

\begin{proof}[{Proof of Lemma~\ref{lem:lower_Ahlfors_implies_entropy_bound}}]
Let $\{x_1,\ldots,x_K\}\subseteq X$ be a maximal $r$-separated set; that is,
$
    \rho(x_i,x_j)>r
$ for every $i,j\in [N]_+$ with $i\neq j$, 
and the set is maximal with respect to inclusion.  By maximality,
$\{x_1,\ldots,x_K\}$ is an $r$-net of $X$.  Hence, $
    N_r(X,\rho)\le K
$; while, on the other hand, the balls
$
    B_\rho(x_1,r/2),\ldots,B_\rho(x_K,r/2)
$
are pairwise disjoint.  Therefore, using that $\mu_X$ is a probability measure
and lower Ahlfors $s$-regular, we find that
\[
    1
\ge
    \sum_{k=1}^K
        \mu_X(B_\rho(x_k,r/2))
\ge
    K\,a\left(\frac{r}{2}\right)^s
.
\]
Thus
$
    K
\le
    \frac{2^s}{a}\,r^{-s}
$.  Since $N_r(X,\rho)\le K$, this proves~\eqref{eq:Ahlfors_entropy_bound}.
\end{proof}

We now deduce our consistency result, which concludes step 2 of our problem.

\begin{cor}[Consistency and rate of the estimator]
\label{cor:robustified_consistency_total_sample}
Suppose that Assumption~\ref{assumption:main_1Lip} holds and that
$\operatorname{diam}(X)>0$.  Let $(Z_n)_{n=1}^{\infty}$ be an i.i.d.\ sequence
with law $\mu=\mu_X\otimes\nu$, where $Z_n=(X_n,\varepsilon_n)$.  
Fix any $\kappa>1$.  For every total sample budget
$\mathsf N\in\mathbb N_+$, set
\[
    \delta_{\operatorname{net},\mathsf N}
    =
    \delta_{\operatorname{noise},\mathsf N}
    \eqdef
    \mathsf N^{-\kappa}.
\]
Choose $A>0$ as specified below and set
\[
    \varepsilon_{\mathsf N}
    \eqdef
    A\mathsf N^{-1/(s+2)},
    \qquad
    \beta_{\mathsf N}
    \eqdef
    \frac{\varepsilon_{\mathsf N}}{2},
    \qquad
    q_{\mathsf N}
    \eqdef
    \left\lceil
        \varepsilon_{\mathsf N}^{-2}
    \right\rceil.
\]
For all sufficiently large $\mathsf N$, choose
$N_{0,\mathsf N},N_{1,\mathsf N}\in\mathbb N_+$ satisfying
$
    N_{0,\mathsf N}+N_{1,\mathsf N}\le\mathsf N
$
and the sample-size conditions below:
and choose integers $N_{0,\mathsf{N}},N_{1,\mathsf{N}}\in\mathbb N_+$ satisfying
$N_{0,\mathsf{N}}+N_{1,\mathsf{N}}\le \mathsf{N}$ and
\begin{align}
\label{eq:N0_N_choice}
    N_{0,\mathsf{N}}
  &   \ge
    \frac{1}{a(\varepsilon_{\mathsf{N}}/8)^s}
    \log\left(
        \frac{
            2N_{\varepsilon_{\mathsf{N}}/8}(X,\rho)
        }{
            \delta_{\operatorname{net},\mathsf{N}}
        }
    \right)
\quad \mbox{ and }\quad
    \\
    N_{1,\mathsf{N}}
  &   \ge
    \frac{1}{a(\varepsilon_{\mathsf{N}}/2)^s}
    \max\left\{
        2q_{\mathsf{N}},
        8\log\left(
            \frac{
                2N_{0,\mathsf{N}}
            }{
                \delta_{\operatorname{net},\mathsf{N}}
            }
        \right)
    \right\}
.
\end{align}
Then, for every fixed
$f\in\operatorname{Lip}((X,\rho);1)$, there exists a constant
$K\in(0,\infty)$, depending only on
$a,s,C,A,\kappa$, such that, for all sufficiently large $\mathsf N$,
\begin{equation}
\label{eq:robustified_rate_OP}
    \mathbb{P}\left(
        \|\hat{f}_{\mathsf N}-f\|_\infty
        >
        K
        \mathsf N^{-1/(s+2)}
        \sqrt{\log\mathsf N}
    \right)
    \le
    2\mathsf N^{-\kappa}.
\end{equation}
\end{cor}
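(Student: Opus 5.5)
The plan is to apply Theorem~\ref{thrm:random_apprx_noisy_robustified} with $\varepsilon=\varepsilon_{\mathsf N}$ and $\delta_{\operatorname{net}}=\delta_{\operatorname{noise}}=\mathsf N^{-\kappa}$. Its hypotheses are exactly the displayed sample-size conditions on $N_{0,\mathsf N},N_{1,\mathsf N}$; any unused samples (when $N_{0,\mathsf N}+N_{1,\mathsf N}<\mathsf N$) are discarded. The only extra hypothesis is $\varepsilon_{\mathsf N}\in(0,1)$ with $\varepsilon_{\mathsf N}/2\le\operatorname{diam}(X)$, which holds for all large $\mathsf N$ because $\varepsilon_{\mathsf N}\to0$ and $\operatorname{diam}(X)>0$. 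The theorem then gives, with probability at least $1-2\mathsf N^{-\kappa}$,
\[
\|\hat f_{\mathsf N}-f\|_\infty\le \varepsilon_{\mathsf N}\Big(1+\sqrt{C\log(2N_{0,\mathsf N}\mathsf N^{\kappa})}\Big).
\]
Since $N_{0,\mathsf N}\le\mathsf N$, the logarithm is at most $\log 2+(1+\kappa)\log\mathsf N\le 2(1+\kappa)\log\mathsf N$ for $\mathsf N\ge2$. This yields the bound $A\mathsf N^{-1/(s+2)}\big(1+\sqrt{2C(1+\kappa)\log\mathsf N}\big)\le K\mathsf N^{-1/(s+2)}\sqrt{\log\mathsf N}$ with, e.g., $K=A(1+\sqrt{2C(1+\kappa)})$ once $\log\mathsf N\ge1$.

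The main obstacle is \emph{feasibility}: integers $N_{0,\mathsf N},N_{1,\mathsf N}$ meeting both lower bounds with $N_{0,\mathsf N}+N_{1,\mathsf N}\le\mathsf N$ must exist for large $\mathsf N$, and this is where $A$ is chosen. I would use Lemma~\ref{lem:lower_Ahlfors_implies_entropy_bound} to bound $N_{\varepsilon_{\mathsf N}/8}(X,\rho)\le (2^s/a)(8/\varepsilon_{\mathsf N})^s$, which is polynomial in $\mathsf N$. Then the required $N_0$ is
\[
N_0^{\rm req}=\mathcal O_{a,s,\kappa}\big(\varepsilon_{\mathsf N}^{-s}\log\mathsf N\big)=\mathcal O\big(A^{-s}\mathsf N^{s/(s+2)}\log\mathsf N\big)=o(\mathsf N).
\]
Set $N_{0,\mathsf N}$ equal to the ceiling of this bound. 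For $N_1$, the term $2q_{\mathsf N}/(a(\varepsilon_{\mathsf N}/2)^s)$ is at most
\[
\frac{2^{s+1}(\varepsilon_{\mathsf N}^{-2}+1)}{a\,\varepsilon_{\mathsf N}^{s}}\le\frac{2^{s+2}}{a}A^{-(s+2)}\mathsf N
\]
for large $\mathsf N$. The logarithmic term is $\mathcal O(\varepsilon_{\mathsf N}^{-s}\log\mathsf N)=o(\mathsf N)$.

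Choosing $A$ large enough that $\frac{2^{s+2}}{a}A^{-(s+2)}\le\frac14$ gives $N_0^{\rm req}+N_1^{\rm req}\le\frac14\mathsf N+o(\mathsf N)\le\mathsf N$ for all sufficiently large $\mathsf N$, so admissible ceilings exist. This is the specification of $A$, and it depends only on $a,s$. Note that $K$ then depends only on $A,C,\kappa$, as claimed. Measurability of $\|\hat f_{\mathsf N}-f\|_\infty$ is supplied by part (ii) of the theorem.
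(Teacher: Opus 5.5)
Your proposal is correct and follows the paper's proof: you apply Theorem~\ref{thrm:random_apprx_noisy_robustified} with $\varepsilon=\varepsilon_{\mathsf N}$ and $\delta_{\operatorname{net}}=\delta_{\operatorname{noise}}=\mathsf N^{-\kappa}$, show the sample-size conditions are feasible via Lemma~\ref{lem:lower_Ahlfors_implies_entropy_bound} together with a choice of $A$ that makes $2^{s+2}/(aA^{s+2})$ small, and absorb the logarithm into $\sqrt{\log\mathsf N}$. The only minor difference is that you bound $\log(2N_{0,\mathsf N}\mathsf N^{\kappa})$ with the trivial estimate $N_{0,\mathsf N}\le\mathsf N$ rather than the growth estimate $N_{0,\mathsf N}\lesssim\mathsf N^{s/(s+2)}\log\mathsf N$; this is slightly more robust, since it holds for any admissible choice of $N_{0,\mathsf N}$ and not only a near-minimal one.
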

\begin{proof}[{Proof of Corollary~\ref{cor:robustified_consistency_total_sample}}]
Set $\hat{f}_{\mathsf{N}}:X\to\mathbb R$ be the estimator constructed in
Theorem~\ref{thrm:random_apprx_noisy_robustified} with
$
    \varepsilon=\varepsilon_{\mathsf{N}}
$ and $M=q_{\mathsf{N}}
$, $N_0=N_{0,\mathsf{N}}$, $N_1=N_{1,\mathsf{N}}$.  
We divide the proof into four steps.
By Lemma~\ref{lem:lower_Ahlfors_implies_entropy_bound}, for every
$0<r\le\operatorname{diam}(X)$,
$
    N_r(X,\rho)
\le
    \frac{2^s}{a}\,r^{-s}
$.
Since $\varepsilon_{\mathsf{N}}\downarrow0$, for all sufficiently large
$\mathsf{N}$ we have
$
    0<\varepsilon_{\mathsf{N}}<1
$ and $\frac{\varepsilon_{\mathsf{N}}}{2}
    \le
    \operatorname{diam}(X)$.  
Thus Theorem~\ref{thrm:random_apprx_noisy_robustified} is applicable once
\eqref{eq:N0_N_choice} holds.
\hfill\\
We next verify that such choices can be made under the total budget
$N_{0,\mathsf{N}}+N_{1,\mathsf{N}}\le\mathsf{N}$, provided that $A>0$ is chosen
large enough.  By the entropy bound,
$
    N_{\varepsilon_{\mathsf{N}}/8}(X,\rho)
\le
    \frac{2^s}{a}
    \left(
        \frac{\varepsilon_{\mathsf{N}}}{8}
    \right)^{-s}
=
    \frac{16^s}{a}
    \varepsilon_{\mathsf{N}}^{-s}
$. 
Thus, the right-hand side of the first inequality in~\eqref{eq:N0_N_choice} is bounded, up to a
constant depending only on $a,s,A,\beta$, by
$
    \varepsilon_{\mathsf{N}}^{-s}
    \log\big(
        \varepsilon_{\mathsf{N}}^{-s}
        \mathsf{N}^\beta
    \big)
$.
Using $\varepsilon_{\mathsf{N}}=A\mathsf{N}^{-1/(s+2)}$, we obtain
\begin{equation}
\label{eq:N0_growth_bound}
    N_{0,\mathsf{N}}
    \lesssim
    \mathsf{N}^{s/(s+2)}
    \log \mathsf{N}.
\end{equation}
In particular, $
    N_{0,\mathsf{N}}=o(\mathsf{N})
$.
\hfill\\
Now consider the lower bound for $N_{1,\mathsf{N}}$.  Since
$q_{\mathsf{N}}=\lceil\varepsilon_{\mathsf{N}}^{-2}\rceil$, for all sufficiently
large $\mathsf{N}$,
$
    q_{\mathsf{N}}
\le
    2\varepsilon_{\mathsf{N}}^{-2}
$.  
Hence
$
    2q_{\mathsf{N}}
\le
    4\varepsilon_{\mathsf{N}}^{-2}
$.  
The part of~\eqref{eq:N0_N_choice} coming from the term $2q_{\mathsf{N}}$ is
therefore bounded by
\[
    \tfrac{1}{a(\varepsilon_{\mathsf{N}}/2)^s}
    4\varepsilon_{\mathsf{N}}^{-2}
    =
    \tfrac{2^{s+2}}{a}
    \varepsilon_{\mathsf{N}}^{-(s+2)}
    =
    \frac{2^{s+2}}{aA^{s+2}}\,
    \mathsf{N}.
\]
The remaining logarithmic part in~\eqref{eq:N0_N_choice} is, by
\eqref{eq:N0_growth_bound},
\[
\begin{aligned}
    \tfrac{1}{a(\varepsilon_{\mathsf{N}}/2)^s}
    8\log\big(
        \frac{
            2N_{0,\mathsf{N}}
        }{
            \delta_{\operatorname{net},\mathsf{N}}
        }
    \big)
\lesssim
    \varepsilon_{\mathsf{N}}^{-s}
    \log\left(
        N_{0,\mathsf{N}}\mathsf{N}^\beta
    \right)
\lesssim
        \mathsf{N}^{s/(s+2)}
        \log \mathsf{N}
    \in 
        o(\mathsf{N})
.
\end{aligned}
\]
Consequently,
$
    N_{1,\mathsf{N}}
    \le
    \big(
        \frac{2^{s+2}}{aA^{s+2}}
        +o(1)
    \big)
    \mathsf{N}.
$.
Choosing $A>0$ sufficiently large ensures that
$
    \frac{2^{s+2}}{aA^{s+2}}<1
$.
Since $N_{0,\mathsf{N}}=o(\mathsf{N})$, it follows that, for all sufficiently
large $\mathsf{N}$, the integers $N_{0,\mathsf{N}}$ and $N_{1,\mathsf{N}}$ may be
chosen so that
$    N_{0,\mathsf{N}}+N_{1,\mathsf{N}}
\le
    \mathsf{N}
$.  
It remains to obtain a high-probability bound; first we apply Theorem~\ref{thrm:random_apprx_noisy_robustified} with
$
    \varepsilon=\varepsilon_{\mathsf{N}}
$, $\delta_{\operatorname{net}}
    =
    \delta_{\operatorname{net},\mathsf{N}}
$, $\delta_{\operatorname{noise}}
    =
    \delta_{\operatorname{noise},\mathsf{N}}
$, 
and with the sample sizes $N_{0,\mathsf{N}},N_{1,\mathsf{N}}$ chosen above.  Then
there is an event $\mathcal G_{\mathsf{N}}$ satisfying
$
    \mathbb P(\mathcal G_{\mathsf{N}})
\ge
    1
    -
    \delta_{\operatorname{net},\mathsf{N}}
    -
    \delta_{\operatorname{noise},\mathsf{N}}
=
    1-2\mathsf{N}^{-\beta}
$
such that, on $\mathcal G_{\mathsf{N}}$,
\begin{equation}
\label{eq:pre_rate_bound}
    \|\hat{f}_{\mathsf{N}}-f\|_\infty
    \le
    \varepsilon_{\mathsf{N}}
    \left(
        1
        +
        \sqrt{
            C
            \log\left(
                \frac{
                    2N_{0,\mathsf{N}}
                }{
                    \delta_{\operatorname{noise},\mathsf{N}}
                }
            \right)
        }
    \right).
\end{equation}
Using~\eqref{eq:N0_growth_bound} and
$\delta_{\operatorname{noise},\mathsf{N}}=\mathsf{N}^{-\beta}$, we have
\[
\begin{aligned}
    \log\left(
        \frac{
            2N_{0,\mathsf{N}}
        }{
            \delta_{\operatorname{noise},\mathsf{N}}
        }
    \right)
=
    \log\left(
        2N_{0,\mathsf{N}}\mathsf{N}^\beta
    \right)
\lesssim
    \log\left(
        \mathsf{N}^{\beta+s/(s+2)}
        \log\mathsf{N}
    \right)
\lesssim
    \log\mathsf{N}.
\end{aligned}
\]
Substituting this into~\eqref{eq:pre_rate_bound} and using
$\varepsilon_{\mathsf{N}}=A\mathsf{N}^{-1/(s+2)}$, we obtain a constant
$K<\infty$, independent of $\mathsf{N}$, such that $
    \|\hat{f}_{\mathsf{N}}-f\|_\infty
\le
    K\,\mathsf{N}^{-1/(s+2)}
    \sqrt{\log\mathsf{N}}
$
on $\mathcal G_{\mathsf{N}}$, for all sufficiently large $\mathsf{N}$.
We finish by deducing convergence in probability.  
Thus, for all sufficiently large $\mathsf{N}$,
$
    \mathbb P\left(
        \|\hat{f}_{\mathsf{N}}-f\|_\infty
        >
        K\,\mathsf{N}^{-1/(s+2)}
        \sqrt{\log\mathsf{N}}
    \right)
\le
    2\mathsf{N}^{-\beta}
$.
This proves
$
    \|\hat{f}_{\mathsf{N}}-f\|_\infty
\lesssim_{\mathbb P}
    \\
        \mathsf{N}^{-1/(s+2)}
        \sqrt{\log \mathsf{N}}
$.  
Since
$
 \lim\limits_{N\uparrow \infty}\,   \mathsf{N}^{-1/(s+2)}
    \sqrt{\log \mathsf{N}}
=0$ we also obtain
$\lim\limits_{N \uparrow \infty}\,
    \|\hat{f}_{\mathsf{N}}-f\|_\infty
=0$ in probability.
\end{proof}

\subsection{Step 3: Graph Realization}
\label{s:graph_realization}
It only remains to show that Algorithm~\ref{alg:LocalAveragingRobustification} is being computed by the formula in~\eqref{eq:formula__realizable}, with aptly-chosen bandwidth parameter.
\begin{lem}[Selected-Graph Formula Identity]
\label{lem:graph_averaged_labels_equal_virtual_McShane_Whitney}
Let $(X,\rho)$ be a metric space, let
$N_0,N_1\in\mathbb{N}_+$, and set $N\eqdef N_0+N_1$.  Let $
    \mathbb{D}_N
\eqdef
    ((X_n,Y_n))_{n=1}^{N}
    \in
    (X\times\mathbb{R})^N$, fix $\beta>0$, $q\in[N_1]_+$, and $L>0$.  For each
$i\in[N_1]_+$ and $k\in[N_0]_+$, define
\[
\begin{aligned}
    H_{ik}^{(\beta)}
    & \eqdef
    \mathbbm{1}_{\{
        \rho(X_{N_0+i},X_k)\le\beta
    \}},
\,\,
    S_k^{(\beta)}
    \eqdef
    \sum_{i=1}^{N_1}H_{ik}^{(\beta)}
,\,\,
    R_{ik}^{(\beta)}
    \eqdef
    \sum_{\ell=1}^{i}H_{\ell k}^{(\beta)},
\\
    A_{ik}^{(\beta,q)}
    & \eqdef
    \mathbbm{1}_{\{S_k^{(\beta)}\ge q\}}
    H_{ik}^{(\beta)}
    \mathbbm{1}_{\{R_{ik}^{(\beta)}\le q\}}
,\,\,
    G^{(\beta,q)}
    \eqdef
    \frac{1}{q}A^{(\beta,q)}
\\a^{\operatorname{av}}
    & \eqdef
    (Y_{N_0+1},\ldots,Y_{N_0+N_1})
,\,\,
\mbox{ and }
    V^{(\beta,q)}
    \eqdef
    a^{\operatorname{av}}G^{(\beta,q)}.
\end{aligned}
\]
Let
$
    ((X_k,\overline Y_k))_{k=1}^{N_0}
$
be the output of
Algorithm~\ref{alg:LocalAveragingRobustification} with inputs
$(\mathbb{D}_N,\beta,q)$.  Then
\begin{equation}
\label{eq:selected_graph_average_identity}
    \overline Y_k
    =
    V_k^{(\beta,q)}
    =
    \left(
        a^{\operatorname{av}}
        G^{(\beta,q)}
    \right)_k
\end{equation}
for every $k\in[N_0]_+$.  Consequently, if
$
    \theta^{(\beta,q,L)}
\eqdef
    \left(
        \left(
            V_k^{(\beta,q)},X_k,L
        \right)
    \right)_{k=1}^{N_0}
$, then
\begin{equation}
\label{eq:selected_graph_formula_identity}
    \hat{f}_{\mathbb{D}_N}^{(\beta,q,L)}(x)
    =
    \hat{f}_{\theta^{(\beta,q,L)}}(x)
\end{equation}
for each $x\in \mathcal{X}$.
\end{lem}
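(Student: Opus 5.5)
The identity \eqref{eq:selected_graph_average_identity} is a direct computation, and I would verify it column by column. Fix $k\in[N_0]_+$. Compare the column $H^{(\beta)}_{:,k}$ with the set $\mathcal I_k^{(\beta)}$ of Algorithm~\ref{alg:LocalAveragingRobustification}: we have $H_{ik}^{(\beta)}=1$ if and only if $i\in\mathcal I_k^{(\beta)}$. Hence $S_k^{(\beta)}=\#\mathcal I_k^{(\beta)}$, and $R_{ik}^{(\beta)}=\#\big(\mathcal I_k^{(\beta)}\cap[i]_+\big)$ is the rank of $i$ within $\mathcal I_k^{(\beta)}$ whenever $i\in\mathcal I_k^{(\beta)}$. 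Expanding the vector--matrix product then gives
\[
    V_k^{(\beta,q)}
    =
    \sum_{i=1}^{N_1}Y_{N_0+i}\,G^{(\beta,q)}_{ik}
    =
    \frac{1}{q}\,\mathbbm{1}_{\{S_k^{(\beta)}\ge q\}}
    \sum_{i=1}^{N_1}Y_{N_0+i}\,H^{(\beta)}_{ik}\,\mathbbm{1}_{\{R^{(\beta)}_{ik}\le q\}}.
\]

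I would then split into the two branches of the algorithm. If $\#\mathcal I_k^{(\beta)}<q$, the leading indicator vanishes, so $V_k^{(\beta,q)}=0=\overline Y_k$. If $\#\mathcal I_k^{(\beta)}\ge q$, I claim that the set of $i$ with $H^{(\beta)}_{ik}=1$ and $R^{(\beta)}_{ik}\le q$ is exactly $\{i_{k,1},\dots,i_{k,q}\}$, the first $q$ elements of $\mathcal I_k^{(\beta)}$. The argument is that the running count $R^{(\beta)}_{\cdot k}$ is nondecreasing, increases by one precisely at the elements of $\mathcal I_k^{(\beta)}$, and takes the value $\ell$ at $i_{k,\ell}$. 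The sum therefore reduces to $\frac1q\sum_{\ell=1}^q Y_{N_0+i_{k,\ell}}$, which is exactly $\overline Y_k$.

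Only this rank/counting step needs care; the rest is bookkeeping. In particular, one must use the inclusive convention in $R^{(\beta)}_{ik}$, which counts the $i$-th entry itself. This is what makes the condition $R\le q$ select exactly $q$ elements rather than $q+1$.

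For \eqref{eq:selected_graph_formula_identity}, I would substitute $V^{(\beta,q)}$ into \eqref{eq:formula__realizable}, where $\chi_k=L\rho(x,X_k)$. I would then compare termwise with \eqref{eq:formula} for $\theta^{(\beta,q,L)}$, where $M=N_0$, $a_k=V_k^{(\beta,q)}$, $b_k=X_k$, $c_k=L$, and so $u_k(x)=L\rho(x,X_k)=\chi_k$. The minima and maxima agree term by term, so the two functions coincide for every $x$.
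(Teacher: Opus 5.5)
Your proposal is correct and follows essentially the same route as the paper. Both split column by column on whether $S_k^{(\beta)}<q$ (both sides vanish) or $S_k^{(\beta)}\ge q$, where $R_{ik}^{(\beta)}=\ell$ at $i=i_{k,\ell}$ forces the $k$-th column of $A^{(\beta,q)}$ to select exactly the first $q$ neighbours, and then substitute $V^{(\beta,q)}$ termwise into the envelope formula.
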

\begin{proof}[{Proof of Lemma~\ref{lem:graph_averaged_labels_equal_virtual_McShane_Whitney}}]
Fix $k\in[N_0]_+$.  Suppose first that
$
    S_k^{(\beta)}<q
$.
Then the factor
$
    \mathbbm{1}_{\{S_k^{(\beta)}\ge q\}}
$
vanishes, so
$
    A_{ik}^{(\beta,q)}=0
$
for every $i\in[N_1]_+$.  Hence
$
    \big(
        a^{\operatorname{av}}
        G^{(\beta,q)}
    \big)_k
    =
    0$.  
Algorithm~\ref{alg:LocalAveragingRobustification} also sets
$
    \overline Y_k=0
$
in this case.  Thus
\eqref{eq:selected_graph_average_identity} holds when
$
    S_k^{(\beta)}<q
$.
Suppose now that
$
    S_k^{(\beta)}\ge q
$,
and let
$
    i_{k,1}<\cdots<i_{k,S_k^{(\beta)}}
$
be the increasing enumeration of the indices $i\in[N_1]_+$ for which
$
    H_{ik}^{(\beta)}=1
$.
For such an index $i=i_{k,\ell}$, one has
$
    R_{ik}^{(\beta)}=\ell$.  
It follows from the definition of
$
    A_{ik}^{(\beta,q)}
$
that
$
    A_{ik}^{(\beta,q)}
    =
    1
$  
precisely when
$
    i\in\{i_{k,1},\ldots,i_{k,q}\}
$,
and that it vanishes for every other $i$.  Therefore
\[
\begin{aligned}
    \big(
        a^{\operatorname{av}}
        G^{(\beta,q)}
    \big)_k
=
    \frac{1}{q}
    \sum_{i=1}^{N_1}
        Y_{N_0+i}
=
    \frac{1}{q}
    \sum_{\ell=1}^{q}
=
    \overline Y_k,
\end{aligned}
\]
where the last equality is the update rule in
Algorithm~\ref{alg:LocalAveragingRobustification}.  This proves
\eqref{eq:selected_graph_average_identity} in both cases.  Substituting
$
    V_k^{(\beta,q)}=\overline Y_k
$
and
$
    \xi_k=X_k
$
into~\eqref{eq:formula__realizable} gives
\[
\begin{aligned}
    \hat{f}_{\mathbb{D}_N}^{(\beta,q,L)}(x)
    &=
    \frac{1}{2}
    \left[
        \min_{k\in[N_0]_+}
        \left\{
            V_k^{(\beta,q)}
            +
            L\rho(x,X_k)
        \right\}
        +
        \max_{k\in[N_0]_+}
        \left\{
            V_k^{(\beta,q)}
            -
            L\rho(x,X_k)
        \right\}
    \right]
    \\
    &=
    \hat{f}_{\theta^{(\beta,q,L)}}(x)
\end{aligned}
\]
for every $x\in X$.  This proves
\eqref{eq:selected_graph_formula_identity}.
\end{proof}

\begin{proof}[{Proof of Theorem~\ref{thrm:nonparametric_noisy_case}}]
Set
$
    \delta_{\operatorname{net}}
    =
    \delta_{\operatorname{noise}}
    \eqdef
    \frac{\delta}{2}$; then, the first inequality in~\eqref{eq:main_robustified_sample_condition} becomes
\[
    N_0
    \ge
    \frac{1}{a(\varepsilon/8)^s}
    \log\left(
        \frac{
            2\mathcal N(X,\rho,\varepsilon/8)
        }{
            \delta_{\operatorname{net}}
        }
    \right),
\]
and the second becomes
\[
    N_1
    \ge
    \frac{1}{a(\varepsilon/2)^s}
    \max\left\{
        2q_\varepsilon,
        8\log\left(
            \frac{
                2N_0
            }{
                \delta_{\operatorname{net}}
            }
        \right)
    \right\}.
\]
Thus all hypotheses of
Theorem~\ref{thrm:random_apprx_noisy_robustified} hold with
$
    \beta=\frac{\varepsilon}{2}
$ and $
    q_\varepsilon
    =
    \left\lceil\varepsilon^{-2}\right\rceil$.  
That theorem shows that the midpoint estimator
$\hat{f}$ in~\eqref{eq:technical_robustified_estimator} is
$1$-Lipschitz for every sample realization and satisfies
\[
    \mathbb{P}\left(
        \|\hat{f}-f\|_\infty
        \le
        \varepsilon
        \left[
            1
            +
            \sqrt{
                C
                \log\left(
                    \frac{4N_0}{\delta}
                \right)
            }
        \right]
    \right)
    \ge
    1-\delta.
\]
By
Lemma~\ref{lem:graph_averaged_labels_equal_virtual_McShane_Whitney},
this estimator agrees pointwise with
$
    \hat{f}_{\mathbb{D}_N}^{(\beta,q_\varepsilon,1)}
$
defined in~\eqref{eq:formula__realizable}.  Therefore
\eqref{eq:main_robustified_uniform_bound} follows, together with the claimed
$1$-Lipschitz regularity.
\end{proof}

\subsection{Proofs of Optimality}
\label{s:Proofs__ss:Optim}
\subsubsection{Proof of Combinatorial Optimality}
\label{s:FatShattering__ss:LB}

Our lower bound generalizes~\cite[Theorem~4.2]{zhang2024deep} to general metric measure spaces. Again, Ahlfors regularity plays the key role. In particular, the result of~\cite[Theorem~4.2]{zhang2024deep} is recovered by equipping $[0,1]^d$ with the $\alpha$-snowflaked metric $\|x-y\|_\infty^\alpha$, for $0<\alpha\le1$, and Lebesgue measure, which is $(d/\alpha)$-Ahlfors regular with respect to this metric.

\begin{proposition}[Lower Bound on Fat-Shattering Dimension via Ahlfors Regularity]
\label{prop:uniform_approximation_fat_shattering}
Let $(X,\rho,\mathfrak{m})$ be a compact $d$-Ahlfors regular metric measure space, for some $d>0$, let $B,L>0$, and fix $\mathcal{F}\subseteq C(X,\mathbb{R})$. There exist constants $\varepsilon_0,c>0$ such that, if there is a fixed tolerance $0<\varepsilon<\min\{B/2,L\varepsilon_0\}$ such that, for every $f\in\operatorname{Lip}((X,\rho),[-B,B];L)$, there is some $\hat{f}\in\mathcal{F}$ satisfying
\begin{equation}
\label{eq:eps_approx}
    \|f-\hat{f}\|_\infty
\le
    \varepsilon,
\end{equation}
then, for every $\varepsilon\le\gamma<\min\{B/2,L\varepsilon_0\}$,
\begin{equation}
\label{eq:fat_shattering_lower_bound}
    \operatorname{fat}_{\gamma}(\mathcal{F})
\ge
    c
    \left(
        \frac{L}{\gamma}
    \right)^d.
\end{equation}
\end{proposition}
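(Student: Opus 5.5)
The plan is to build a large separated set in $X$ and shatter it with bump functions; the approximation hypothesis then transfers this shattering to $\mathcal F$ with a reduced margin. Fix $\gamma$ with $\varepsilon\le\gamma<\min\{B/2,L\varepsilon_0\}$ and set the scale $r\eqdef 4\gamma/L$; in fact $r=c_1\gamma/L$ with a suitable absolute $c_1$. Take a maximal $2r$-separated set $\{x_1,\ldots,x_m\}\subseteq X$. The constant $\varepsilon_0$ is chosen so that $r$ lies below $\operatorname{diam}(X)$ up to a fixed factor, so that Ahlfors regularity applies at every scale involved.

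The first step is to count the points. By maximality the balls $B(x_j,2r)$ cover $X$, so the upper Ahlfors bound gives
\[
\mathfrak m(X)\le m\,C_{\rm A}(2r)^d .
\]
The lower bound, applied to a ball of radius $\operatorname{diam}(X)$ (equivalently, $\mathfrak m(X)\ge c_{\rm A}\operatorname{diam}(X)^d$), then yields $m\ge c\,(L/\gamma)^d$.

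The second step constructs the shattering functions. For each sign vector $\sigma\in\{-1,1\}^m$ let
\[
g_\sigma(x)\eqdef \sum_j \sigma_j\,(3\gamma-L\rho(x,x_j))_+ .
\]
The bumps $(3\gamma-L\rho(\cdot,x_j))_+$ are supported in $B(x_j,3\gamma/L)$. These supports are pairwise disjoint once $2r>6\gamma/L$, which holds for the choice $r=4\gamma/L$. Each bump is $L$-Lipschitz, and a sum of disjointly supported $L$-Lipschitz bumps remains $L$-Lipschitz: for two points in different supports, insert the boundary points on the path, or bound directly, since each term vanishes at the other point and the distances add. Because the supports are disjoint, $|g_\sigma|\le 3\gamma$, and clipping to $[-B,B]$ keeps the function Lipschitz. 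We need $3\gamma\le B$ or must rescale; with $\gamma<B/2$, using height $2\gamma$ instead of $3\gamma$ fixes this. Concretely, use $h_\sigma=\sum_j\sigma_j(2\gamma-L\rho(\cdot,x_j))_+\in\operatorname{Lip}((X,\rho),[-B,B];L)$, so that $h_\sigma(x_j)=2\sigma_j\gamma$.

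The third step transfers the shattering to $\mathcal F$. The hypothesis gives $\hat f_\sigma\in\mathcal F$ with $\|\hat f_\sigma-h_\sigma\|_\infty\le\varepsilon\le\gamma$. With thresholds $t_j=0$ we get
\[
\sigma_j\hat f_\sigma(x_j)\ge 2\gamma-\varepsilon\ge\gamma ,
\]
so $\{x_1,\ldots,x_m\}$ is $\gamma$-fat-shattered by $\mathcal F$. Hence $\operatorname{fat}_\gamma(\mathcal F)\ge m\ge c(L/\gamma)^d$.

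The main obstacle is the bookkeeping of constants. The separation radius must be matched to the bump width so that the supports are disjoint. Ahlfors regularity must be applicable at radius $2r$, which requires $2r\le\operatorname{diam}(X)$ and is ensured by $\gamma<L\varepsilon_0$ with $\varepsilon_0$ small. Finally, the bump height must fit inside $[-B,B]$ while still exceeding $\gamma+\varepsilon$; the assumption $\gamma<B/2$ together with $\varepsilon\le\gamma$ makes height $2\gamma$ work exactly.
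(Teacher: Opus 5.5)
Your proposal is correct and is essentially the paper's argument: the paper also works at scale $\gamma/L$ with a maximal separated set whose separation is four times the tent radius (the construction from Lemma~\ref{lem:Nondegeneracy_metric_entropy}), counts the points via the upper Ahlfors bound, uses signed tents with $f_\sigma(x_j)=2\gamma\sigma_j$, and transfers the shattering to $\mathcal F$ with thresholds $t_j=0$. One point to tighten: in a general metric space there are no paths, and disjoint supports alone do not make a signed sum of $L$-Lipschitz bumps $L$-Lipschitz; what does it is your separation, since for $x,y$ in the supports of $\psi_i\eqdef(2\gamma-L\rho(\cdot,x_i))_+$ and $\psi_j$ with $i\neq j$, the bound $\rho(x,y)\ge\rho(x_i,x_j)-\rho(x,x_i)-\rho(y,x_j)$ together with $\rho(x_i,x_j)\ge 4\gamma/L$ gives $|h_\sigma(x)-h_\sigma(y)|\le\psi_i(x)+\psi_j(y)\le L\rho(x,y)$, exactly as in the paper's tent-function check.
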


\begin{proof}[{Proof of Proposition~\ref{prop:uniform_approximation_fat_shattering}}]
Fix $\varepsilon\le\gamma<\min\{B/2,L\varepsilon_0\}$ and set $\eta\eqdef\gamma/L$. By the lower-bound construction in the proof of Lemma~\ref{lem:Nondegeneracy_metric_entropy}, together with~\eqref{eq:packing_lower_bound_Ahlfors}, there exist points $\{x_1,\ldots,x_m\}\subseteq X$, with
\[
    m
    \ge
    c\,\eta^{-d}
    =
    c
    \left(
        \frac{L}{\gamma}
    \right)^d,
\]
such that, for every $\sigma\in\{-1,1\}^m$, there exists $f_\sigma\in\operatorname{Lip}((X,\rho),[-B,B];L)$ satisfying
$
    f_\sigma(x_j)=2\gamma\sigma_j
$
for every $j\in[m]_+$.

By~\eqref{eq:eps_approx}, for each $\sigma\in\{-1,1\}^m$ there exists $\hat{f}_\sigma\in\mathcal F$ such that $\|f_\sigma-\hat{f}_\sigma\|_\infty\le\varepsilon$. Therefore, for every $j\in[m]_+$,
\[
    \sigma_j\hat{f}_\sigma(x_j)
    \ge
    2\gamma-\varepsilon
    \ge
    \gamma.
\]
Thus, taking $t_j\eqdef0$ for every $j\in[m]_+$, the set $\{x_1,\ldots,x_m\}$ is $\gamma$-fat shattered by $\mathcal F$. Consequently,
\[
    \operatorname{fat}_{\gamma}(\mathcal F)
    \ge
    m
    \ge
    c
    \left(
        \frac{L}{\gamma}
    \right)^d,
\]
which proves~\eqref{eq:fat_shattering_lower_bound}.
\end{proof}

We may now complete the proof of Theorem~\ref{thrm:fatshatteringotimality} by combining Proposition~\ref{prop:uniform_approximation_fat_shattering} with the Lipschitz-class fat-shattering bound of~\cite[Theorem~2]{gottlieb2014efficient}, the Ahlfors-regular packing estimates established in Lemma~\ref{lem:Nondegeneracy_metric_entropy}, and our oracle approximation guarantee.

\begin{proof}[{Proof of Theorem~\ref{thrm:fatshatteringotimality}}]
Suppose that~\eqref{eq:eps_approx__minimaxformulation} holds. Proposition~\ref{prop:uniform_approximation_fat_shattering}, applied with $\gamma=\varepsilon$, gives
$
    \operatorname{fat}_{\varepsilon}(\mathcal F)
    \ge
    c
    \big(
        \tfrac{L}{\varepsilon}
    \big)^d
$.  On the other hand, monotonicity of the fat-shattering dimension, the inclusion
$
    \mathcal F\subseteq\operatorname{Lip}((X,\rho),\mathbb R;L),
$
and~\cite[Theorem~2]{gottlieb2014efficient} give
$
    \operatorname{fat}_{\varepsilon}(\mathcal F)
\le
    \mathcal{N}
    \big(
        X,\rho,\frac{2\varepsilon}{L}
    \big)
$ and  the Ahlfors-regular packing estimate established in Lemma~\ref{lem:Nondegeneracy_metric_entropy} therefore yields $\mathcal{N}
    \big(
        X,\rho,\frac{2\varepsilon}{L}
    \big)\le C
    \left(
        \frac{L}{\varepsilon}
    \right)^d$; whence
\[
    \operatorname{fat}_{\varepsilon}(\mathcal F)
    \le
    C
    \left(
        \frac{L}{\varepsilon}
    \right)^d.
\]
By construction, we have that $
    \mathcal F_{M:L}
    \subseteq
    \operatorname{Lip}((X,\rho),\mathbb{R};L)
$.  
Finally, the estimate~\eqref{eq:MW_error_bound_Ahlfors_metric}, applied after rescaling from the unit-Lipschitz to the $L$-Lipschitz setting, gives
\[
    \sup_{f\in\operatorname{Lip}((X,\rho),[-B,B];L)}
    \inf_{\hat{f}\in\mathcal F_{M:L}}
    \|f-\hat{f}\|_\infty
    \le
    C_d L M^{-1/d}.
\]
Hence~\eqref{eq:eps_approx__minimaxformulation} holds for $\mathcal F_{M:L}$ whenever
$
    M\ge C_d'(L/\varepsilon)^d
$
for a sufficiently large constant $C_d'>0$ depending only on $(X,\rho,\mathfrak m)$. This proves the final assertion.
\end{proof}

\subsection{Proof of Optimal Parameter-Dependence}
\label{s:Optimality__ss:Result}
We now prove Theorem~\ref{thm:stable_near_minimax_optimality_oracle_regime} in the realizable regime, if $\{X_n\}_{n=1}^N$ can be selected by the user as in neural network approximation theory; then our reconstruction formula is minimax-optimal (up to a poly-log factor if we sample the data and do not have oracle access)
\begin{equation}
% \label{eq:formula}
\begin{aligned}
    \hat{f}_{\theta}(x)
     & = 
        \tfrac{1}{2}
        \Big(
            \min_{m \in [M]_+} (a_m+u_m)
            +  
            \max_{m \in [M]_+} (a_m-u_m)
        \Big)
\\
    x^{(0)} & = 
        \big(c_m\,\rho(x,b_m)\big)_{m=1}^M
.
\end{aligned}
\end{equation}
We now study the (local-)Lipschitz regularity of $\hat{f}_{\theta}$ as a function of $\theta$; i.e.\ of the map $[\mathbb{R}\times X\times \mathbb{R}]^M \ni \theta \mapsto \hat{f}_{\theta}\in C(X)$.  

\subsubsection{Step 1: Optimal Approximation with Oracle-Access to $f$}
\label{s:Optimality__ss:ProofStep1}

Let us study the approximation of $1$-Lipschitz functions into $[-1,1]$; which when we have oracle-access to the function values $f(x)$ any any queried point $x\in X$ yields.
\begin{lem}[Candidate optimal approximator with oracle access to \(f\)]
\label{lem:oracle_access__approximation}
Let $(X,\rho)$ be a totally bounded metric space with
$
    \operatorname{diam}(X)\le 1
$
and doubling dimension at most $d\in\mathbb N_+$.  Let
$
    \{x_n\}_{n=1}^N
$
be an $\tfrac{1}{N^{1/d}}$-net of minimal cardinality in $(X,\rho)$.  For any
$1$-Lipschitz map $f:X\to[-1,1]$, define
\[
    \theta_f
    \eqdef
    \big((f(x_n),x_n,1)\big)_{n=1}^N .
\]
Then the reconstruction formula $\hat{f}_{\theta_f}$ from~\eqref{eq:formula}
satisfies
\[
    \sup_{x\in X}
        |f(x)-\hat{f}_{\theta_f}(x)|
    \lesssim
        N^{-1/d}.
\]
\end{lem}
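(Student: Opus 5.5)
The plan is to reduce the statement to Lemma~\ref{lem:central_midpoint_stability} with exact labels. Set $K=N$, $L=1$, $v_n\eqdef f(x_n)$, and let the net radius $\eta\eqdef N^{-1/d}$. Since $\{x_n\}_{n=1}^N$ is an $N^{-1/d}$-net of $X$, every $x\in X$ has some $x_n$ with $\rho(x,x_n)\le N^{-1/d}$. Hence the covering condition~\eqref{eq:central_midpoint_covering_condition} holds with this $\eta$. The labels are exact, so $\max_n|v_n-f(x_n)|=0$ and we may take $\Delta=0$.

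The parameter $\theta_f=((f(x_n),x_n,1))_{n=1}^N$ makes $\hat{f}_{\theta_f}$ from~\eqref{eq:formula} coincide with the central envelope $\tfrac12(U+W)$ of that lemma. Therefore~\eqref{eq:central_midpoint_stability} gives
\[
\sup_{x\in X}|f(x)-\hat f_{\theta_f}(x)|\le \Delta+L\eta=N^{-1/d}.
\]
This is the claim with implicit constant $1$.

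The only point needing care is whether a net of radius $N^{-1/d}$ with at most $N$ points exists, so that the indexing $\{x_n\}_{n=1}^N$ is consistent. If the minimal net has $N'\neq N$ points, we pad it by repeating a point, which leaves the min/max unchanged. If instead $N'>N$, we use the doubling structure. Iterating the doubling property from a single ball of radius $\operatorname{diam}(X)\le1$ gives $\mathcal N(X,\rho,r)\le \lambda_X^{\lceil\log_2(1/r)\rceil}\le C r^{-d}$ with $C$ depending only on $(X,\rho)$. Choosing $r\eqdef (C/N)^{1/d}$ yields a net of at most $N$ points with radius $\lesssim N^{-1/d}$. Applying Lemma~\ref{lem:central_midpoint_stability} with this $\eta$ again gives the bound $\lesssim N^{-1/d}$, with a constant depending only on the doubling constant. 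This covering-number bookkeeping is the only step I expect to be even mildly delicate; the approximation estimate itself is immediate.
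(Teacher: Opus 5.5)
Your proof is correct and is essentially the paper's argument: the paper inlines the same envelope squeeze ($L_f\le f\le U_f$ and $U_f-L_f\le 2\rho(x,x_{n_x})$, so the midpoint lies within the net radius of $f$), which is precisely Lemma~\ref{lem:central_midpoint_stability} specialized to $\Delta=0$ and $L=1$. The only difference is bookkeeping: the paper passes through the doubling estimate $N_\varepsilon(X,\rho)\le(2/\varepsilon)^d$ and ends with constant $2$, whereas you obtain constant $1$ directly from the hypothesis that the net has radius $N^{-1/d}$, so your padding/doubling remark is an optional robustification rather than a required step.
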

\begin{proof}[{Proof of Lemma~\ref{lem:oracle_access__approximation}}]
Let $0<\varepsilon<1$ be arbitrary. 
Since
$\operatorname{diam}(X)\le1$ and the doubling dimension of $X$ is at most $d$,
the standard iterated doubling estimate gives
$
    N_\varepsilon(X,\rho)
\le
    \left(\frac{2}{\varepsilon}\right)^d
$; where $N_\varepsilon(X,\rho)$ denotes the minimal cardinality of an
$\varepsilon$-net of $X$.  
Since $\{x_n\}_{n=1}^N$ is an $\varepsilon$-net of
minimal cardinality, $N=N_\varepsilon(X,\rho)$, and therefore
$
    N
\le
    \left(\frac{2}{\varepsilon}\right)^d 
$, or
$
    \varepsilon
\le
    2N^{-1/d}
$.

For $x\in X$, define the McShane and Whitney envelopes generated by the oracle
values $\{f(x_n)\}_{n=1}^N$ by
\[
    U_f(x)
    \eqdef
    \min_{n\in[N]_+}
        \big(f(x_n)+\rho(x,x_n)\big),
\, \mbox{ and }\,
    L_f(x)
    \eqdef
    \max_{n\in[N]_+}
        \big(f(x_n)-\rho(x,x_n)\big).
\]
With the choice
$
    \theta_f=((f(x_n),x_n,1))_{n=1}^N
$,
the reconstruction formula~\eqref{eq:formula} is precisely
$
    \hat{f}_{\theta_f}(x)
    =
    \frac{1}{2}\big(U_f(x)+L_f(x)\big)
$ for each $x\in X$.
\hfill\\
\noindent
We first show that $f$ is squeezed between these two envelopes.  Since $f$ is
$1$-Lipschitz, for every $x\in X$ and every $n\in[N]_+$,
$
    f(x)
    \le
    f(x_n)+\rho(x,x_n)
$ and $f(x)
    \ge
    f(x_n)-\rho(x,x_n)
$.  
Taking the minimum in the first inequality and the maximum in the second gives
\begin{equation}
\label{eq:envelope_us_bby}
    L_f(x)
    \le
    f(x)
    \le
    U_f(x)
\end{equation}
for each $x\in X$.
\hfill\\
\noindent
Next, fix $x\in X$.  Since $\{x_n\}_{n=1}^N$ is an $\varepsilon$-net, there
exists $n_x\in[N]_+$ such that
$
    \rho(x,x_{n_x})\le \varepsilon 
$.
By the definitions of $U_f$ and $L_f$,
$
    U_f(x)
\le
    f(x_{n_x})+\rho(x,x_{n_x})
$ and $
    L_f(x)
\ge
    f(x_{n_x})-\rho(x,x_{n_x})
$.  
Consequently,
$
    0
\le
    U_f(x)-L_f(x)
\le
    2\rho(x,x_{n_x})
\le
    2\varepsilon
$.
Since
$
    f(x)\in[L_f(x),U_f(x)]
$
and
$
    \hat{f}_{\theta_f}(x)
$
is the midpoint of this interval, we obtain
\[
    |f(x)-\hat{f}_{\theta_f}(x)|
\le
    \frac{1}{2}\big(U_f(x)-L_f(x)\big)
\le
    \varepsilon
.
\]
Taking the supremum over $x\in X$ yields
\[
    \sup_{x\in X}
        |f(x)-\hat{f}_{\theta_f}(x)|
    \le
        \varepsilon
    \le
        2N^{-1/d}
    \lesssim
        N^{-1/d}.
\]
\end{proof}

\subsubsection{Step 2: Stability of an Optimal Function to Parameter Map}
\label{s:Optimality__ss:ProofStep2}
\begin{lem}[Local Lipschitz stability in the parameters]
\label{lem:stability}
Let $(X,\rho)$ be a metric space with diameter $\operatorname{diam}(X)<\infty$.  
Let $M\in \mathbb{N}_+$ and $R\ge 0$; the ``encoder'' map
$
    \mathfrak{E}:
    \,
    \big([-R,R]\times X\times[-R,R]\big)^M
        \ni \theta
        \mapsto
    \hat{f}_\theta\in C_b(X)
$ is $\max\{1,R,\operatorname{diam}(X)\}$-Lipschitz.  
\hfill\\
In particular, $\mathfrak{E}|_{([-1,1]\times X\times \{1\})^M}$ is $1$-Lipschitz.
\end{lem}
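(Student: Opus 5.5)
The plan is to prove the Lipschitz bound pointwise in $x\in X$ and then take the supremum. Throughout, the parameter space $\big([-R,R]\times X\times[-R,R]\big)^M$ carries the $\ell^1$-product metric fixed in the Preliminaries:
\[
\rho^{(1)}(\theta,\theta')=\sum_{m=1}^M\big(|a_m-a_m'|+\rho(b_m,b_m')+|c_m-c_m'|\big).
\]
The codomain $C_b(X)$ carries $\|\cdot\|_\infty$.

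\emph{Step 1 (min and max are nonexpansive).} For real vectors $(\alpha_m)_{m=1}^M$ and $(\alpha_m')_{m=1}^M$ we have
\[
\big|\min_m\alpha_m-\min_m\alpha_m'\big|\le\max_m|\alpha_m-\alpha_m'|,
\]
and the same bound holds for $\max$. To see this for $\min$, let $m^\ast$ attain $\min_m\alpha_m'$. Then $\min_m\alpha_m-\min_m\alpha'_m\le \alpha_{m^\ast}-\alpha'_{m^\ast}$, and the other direction follows by symmetry. The case of $\max$ follows from $\max=-\min(-\,\cdot)$. This is the same argument already used in Lemma~\ref{lem:Whitney_McShane_noise_propagation} and Lemma~\ref{lem:stability_after_local_averaging}.

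\emph{Step 2 (bound on each atom).} Fix $x\in X$. Both envelope arguments have the form $a_m\pm c_m\rho(x,b_m)$, and their differences are controlled by a single quantity. Adding and subtracting $c_m\rho(x,b_m')$ gives
\[
\big|a_m\pm c_m\rho(x,b_m)-a_m'\mp c_m'\rho(x,b_m')\big|
\le |a_m-a_m'|+|c_m|\,|\rho(x,b_m)-\rho(x,b_m')|+|c_m-c_m'|\,\rho(x,b_m').
\]
The reverse triangle inequality gives $|\rho(x,b_m)-\rho(x,b_m')|\le\rho(b_m,b_m')$. We also have $|c_m|\le R$ and $\rho(x,b'_m)\le\operatorname{diam}(X)$. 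Hence the right-hand side is at most $\max\{1,R,\operatorname{diam}(X)\}\,d_m$, where $d_m\eqdef|a_m-a_m'|+\rho(b_m,b_m')+|c_m-c_m'|$. Note that the sign of $c_m$ never matters, since only $|c_m|\le R$ is used.

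\emph{Step 3 (combine).} Apply Step 1 to the $\min$ term and to the $\max$ term separately. Since $\hat f_\theta(x)$ is the average of these two terms, this gives
\[
|\hat f_\theta(x)-\hat f_{\theta'}(x)|\le \max\{1,R,\operatorname{diam}(X)\}\max_m d_m.
\]
The right-hand side is at most $\max\{1,R,\operatorname{diam}(X)\}\,\rho^{(1)}(\theta,\theta')$, because $\max_m d_m\le\sum_m d_m$. Taking the supremum over $x\in X$ gives the claim. The image lies in $C_b(X)$ because each $\hat f_\theta$ is Lipschitz (Lemma~\ref{lem:central_midpoint_stability}) and bounded (since $\operatorname{diam}(X)<\infty$).

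\emph{Restriction to $([-1,1]\times X\times\{1\})^M$.} Here $c_m=c_m'=1$, so the term $|c_m-c_m'|\rho(x,b'_m)$ vanishes identically. The factor $\operatorname{diam}(X)$ therefore never enters, and the constant becomes $\max\{1,|c_m|\}=1$. This is why the ``in particular'' clause holds even when $\operatorname{diam}(X)>1$.

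There is no real obstacle here. The only point needing care is the bookkeeping in the restricted case: one should track which constant multiplies which coordinate increment, rather than simply plugging $R=1$ into $\max\{1,R,\operatorname{diam}(X)\}$, which would wrongly retain the factor $\operatorname{diam}(X)$.
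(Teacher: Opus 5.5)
Your proof is correct and follows the paper's argument essentially step for step: nonexpansiveness of $\min$ and $\max$, the add-and-subtract bound $|c_m|\rho(b_m,b_m')+\operatorname{diam}(X)\,|c_m-c_m'|$ on each atom, then bounding the maximum over $m$ by the sum and taking the supremum over $x$. Your remark that on $([-1,1]\times X\times\{1\})^M$ the $|c_m-c_m'|$ term vanishes, so the constant is exactly $1$ with no assumption on $\operatorname{diam}(X)$, is a genuine sharpening; the paper only observes that the restricted map is $\max\{1,\operatorname{diam}(X)\}$-Lipschitz and then uses $\operatorname{diam}(X)\le 1$ when it invokes the lemma.
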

Observe that, in the realizable \(1\)-Lipschitz case we take
\(a_m=f(b_m)\in[-1,1]\) and \(c_m=1\), the parameter vector belongs to
$
    \big([-1,1]\times X\times[-1,1]\big)^M$.  
Thus, Lemma~\ref{lem:stability} applies with \(R=1\), and the realization map is
\(\max\{1,\operatorname{diam}(X)\}\)-Lipschitz on this parameter set.  

\begin{proof}[{Proof of Lemma~\ref{lem:stability}}]
We will show that, for every
$
    \theta
    =
    \big(a,(b_m)_{m=1}^M,c\big)
,
\tilde{\theta}
    =
    \big(\tilde a,(\tilde b_m)_{m=1}^M,\tilde c\big)
\in [\mathbb{R}\times X\times \mathbb{R}]^M
$ we have
$
    \sup_{x\in X}\,
        \big|
                \hat{f}_{\theta}(x)
            -
                \hat{f}_{\tilde{\theta}}(x)
        \big|
    \le 
        \max\{1,R,D_X\}
        \,
        \sum_{m=1}^M
        \big(
                |a_m-\tilde a_m|
            +
                \rho(b_m,\tilde b_m)
            +
                |c_m-\tilde c_m|
        \big)
$.
Now, for $\theta=(a,(b_m)_{m=1}^M,c)$, write
$
    u_m^\theta(x)
    \eqdef
    c_m\,\rho(x,b_m)
$ and $m\in[M]_+$ and $x\in X$.  
Thus
\[
        \hat{f}_\theta(x)
    =
        \frac{1}{2}
        \left(
            \min_{m\in[M]_+}\big(a_m+u_m^\theta(x)\big)
            +
            \max_{m\in[M]_+}\big(a_m-u_m^\theta(x)\big)
        \right)
.
\]
Fix $x\in X$.  We first recall the elementary inequalities
$
    \big|
        \min_{m\in[M]_+} \alpha_m
        -
        \min_{m\in[M]_+} \beta_m
    \big|
    \le
    \max_{m\in[M]_+}|\alpha_m-\beta_m|
$, 
and
$
    \big|
        \max_{m\in[M]_+} \alpha_m
        -
        \max_{m\in[M]_+} \beta_m
    \big|
\le
    \max_{m\in[M]_+}|\alpha_m-\beta_m|
$. 
Applying these inequalities with $
    \alpha_m=a_m+u_m^\theta(x)
    $ and $
    \beta_m=\tilde a_m+u_m^{\tilde\theta}(x)$
and then with
$
    \alpha_m=a_m-u_m^\theta(x)
$ and $\beta_m=\tilde a_m-u_m^{\tilde\theta}(x)$, 
gives
\begin{align}
\label{eq:stability_minmax_step}
    \big|
        \hat{f}_\theta(x)-\hat{f}_{\tilde\theta}(x)
    \big|
    &\le
    \frac{1}{2}
    \max_{m\in[M]_+}
    \big|
        (a_m+u_m^\theta(x))
        -
        (\tilde a_m+u_m^{\tilde\theta}(x))
    \big|
\nonumber\\
    &\qquad
    +
    \frac{1}{2}
    \max_{m\in[M]_+}
    \big|
        (a_m-u_m^\theta(x))
        -
        (\tilde a_m-u_m^{\tilde\theta}(x))
    \big|.
\end{align}
For each $m\in[M]_+$, the reverse triangle inequality for the metric $\rho$
implies
$
    \big|
        \rho(x,b_m)-\rho(x,\tilde b_m)
    \big|
\le
    \rho(b_m,\tilde b_m)
$.
Hence, since $|c_m|\le R$ and $\rho(x,\tilde b_m)\le D_X$, we have
\begin{align}
\label{eq:cone_parameter_stability}
    \big|
        u_m^\theta(x)-u_m^{\tilde\theta}(x)
    \big|
    &=
    \big|
        c_m\rho(x,b_m)-\tilde c_m\rho(x,\tilde b_m)
    \big|
\nonumber\\
    &\le
    |c_m|
    \big|
        \rho(x,b_m)-\rho(x,\tilde b_m)
    \big|
    +
    |c_m-\tilde c_m|\,\rho(x,\tilde b_m)
\nonumber\\
    &\le
    R\,\rho(b_m,\tilde b_m)
    +
    D_X\,|c_m-\tilde c_m|.
\end{align}
Consequently,
$
    \big|
        (a_m\pm u_m^\theta(x))
        -
        (\tilde a_m\pm u_m^{\tilde\theta}(x))
    \big|
\le
        |a_m-\tilde a_m|
        +
        R\,\rho(b_m,\tilde b_m)
        +
        D_X\,|c_m-\tilde c_m|
$.  
Substituting this estimate into~\eqref{eq:stability_minmax_step} yields
\begin{equation*}
    \big|
        \hat{f}_\theta(x)-\hat{f}_{\tilde\theta}(x)
    \big|
\le
    \max_{m\in[M]_+}
    \left(
        |a_m-\tilde a_m|
        +
        R\,\rho(b_m,\tilde b_m)
        +
        D_X\,|c_m-\tilde c_m|
    \right)
    .
\end{equation*}  
Finally,
$
    |a_m-\tilde a_m|
        +
        R\,\rho(b_m,\tilde b_m)
        +
        D_X\,|c_m-\tilde c_m|
\le
    \max\{1,R,D_X\}
    \big(
        |a_m-\tilde a_m|
        +
        \rho(b_m,\tilde b_m)
        +
        |c_m-\tilde c_m|
    \big)
$.
Taking the maximum over $m$, then bounding the maximum by the sum over
$m\in[M]_+$, gives
\[
        \big|
            \hat{f}_\theta(x)-\hat{f}_{\tilde\theta}(x)
        \big|
    \le
        \max\{1,R,D_X\}
        \sum_{m=1}^M
        \big(
                |a_m-\tilde a_m|
            +
                \rho(b_m,\tilde b_m)
            +
                |c_m-\tilde c_m|
        \big)
.
\]
Since the right-hand side is independent of $x\in X$, taking the supremum over
$x\in X$ yields the conclusion.
\end{proof}

\subsubsection{Step 3: Identifying ``Good'' Lower Bounds on the Lipschitz Widths}
\label{s:Optimality__ss:ProofStep3}
Relying on the beautiful result in~\cite[Theorem 3.1 (i)]{petrova2023lipschitz} and~\cite[Theorem XV]{kolmogorov1959varepsilon} we obtain the following lower-bounds if $(X,\rho)=([0,1]^d,\|\cdot\|_2)$.  However, as we are interested in generality, we will have to derive a matching lower-bound to~\cite[Lemma 4.2]{gottlieb2016adaptive} (without their extra polylog factor); which require the underlying metric space $(X,\rho)$ not to be ``too thin'', as their doubling assumption only prevents it from being too thick.
%
%
%

% {\color{blue}{make a picture of the metric hat-functions}}
%
%
%
\begin{lem}[{Metric entropy of $\mathcal{F}_{1:d}$ scales like $\Theta(\varepsilon^{-d})$}]
\label{lem:Nondegeneracy_metric_entropy}
Let $(X,\rho,\mathfrak{m})$ be a compact metric measure space.
Assume that $(X,\rho,\mathfrak{m})$ is $0<d$-Ahlfors regular, and write
$
    \mathcal{F}_1
\eqdef
    \operatorname{Lip}((X,\rho),[-1,1];1)
$.  
Then, there are constants
$
    0<c\le C<\infty
$
and
$
    0<\varepsilon_0<1
$
(depending only on on $(X,\rho,\mathfrak{m})$) such that, for every
$
    0<\varepsilon<\varepsilon_0
$,
\begin{equation}
\label{eq:Lip_class_entropy_asymptotics}
    c\,\varepsilon^{-d}
    \le
    \log_2
    \mathcal N
    \bigl(
        \mathcal{F}_1,
        \|\cdot\|_{\infty},
        \varepsilon
    \bigr)
    \le
    C\,\varepsilon^{-d}.
\end{equation}
\end{lem}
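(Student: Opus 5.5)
The two bounds in \eqref{eq:Lip_class_entropy_asymptotics} will be proved separately, both by the classical Kolmogorov--Tikhomirov scheme transplanted to the metric setting. The only geometric input is Ahlfors regularity, which I will use through the packing and covering estimate
\[
c_1 r^{-d}\le \mathcal N(X,\rho,r)\le C_1 r^{-d},\qquad 0<r\le \operatorname{diam}(X).
\]
For the upper covering bound, a maximal $r$-separated set has disjoint balls $B(x_k,r/2)$ of mass at least $c_{\rm A}(r/2)^d$, so it has at most $\mathfrak m(X)/(c_{\rm A}(r/2)^d)$ elements; this is the argument of Lemma~\ref{lem:lower_Ahlfors_implies_entropy_bound}. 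For the lower packing bound, any cover by $K$ balls of radius $r$ satisfies $\mathfrak m(X)\le K C_{\rm A} r^d$. The same upper-mass argument also gives the doubling property with a dimension depending only on $d$ and $C_{\rm A}/c_{\rm A}$.

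\textbf{Upper bound.} I will discretize in the standard way. Fix an $\varepsilon$-net $\{x_k\}_{k=1}^K$ with $K\le C_1\varepsilon^{-d}$. Each $f\in\mathcal F_1$ is encoded by the rounded values $\lfloor f(x_k)/\varepsilon\rfloor$, and the reconstruction is the central envelope of Lemma~\ref{lem:central_midpoint_stability} with $\Delta=\varepsilon$ and $\eta=\varepsilon$. Any two functions with the same code are therefore within $O(\varepsilon)$ of a common center, so the number of codes bounds $\mathcal N(\mathcal F_1,\|\cdot\|_\infty,O(\varepsilon))$.

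The naive count is $(2/\varepsilon+1)^K$, which only gives $\varepsilon^{-d}\log(1/\varepsilon)$. To remove the logarithm I will use the Kolmogorov chaining trick. Order the net points so that each $x_k$ with $k\ge2$ has an earlier point $x_{k'}$ with $\rho(x_k,x_{k'})\le 3\varepsilon$. Such an ordering exists by running a breadth-first search on the graph joining points at distance at most $3\varepsilon$, provided this graph is connected.

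Connectedness is the main obstacle. A general compact Ahlfors regular space is uniformly perfect, but it need not be chain-connected at scale $3\varepsilon$. My first attempt will be to show, using the ball-mass bounds, that there are $O(1)$ chain components at scale $\varepsilon$ within a fixed multiple, and to handle each component separately. Within a component the increment $\lfloor f(x_k)/\varepsilon\rfloor-\lfloor f(x_{k'})/\varepsilon\rfloor$ takes only $O(1)$ values. This leaves $O(1/\varepsilon)$ choices for the root values times $O(1)^K$ choices for the increments, so the log-count is $O(\varepsilon^{-d})$.

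If the number of components is not uniformly controlled, I will instead use the fallback of chaining inside the components of a coarse cover at fixed scale $\varepsilon_0$. The roots are then $O(\varepsilon_0^{-d})$ in number, each costing $\log(1/\varepsilon)$ bits, and this cost is absorbed into $C\varepsilon^{-d}$ for $\varepsilon<\varepsilon_0$.

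\textbf{Lower bound.} I will construct a packing of bump functions. Take a maximal $4\varepsilon$-separated set $\{z_j\}_{j=1}^m$ with $m\ge c_1(4\varepsilon)^{-d}$, and define $\phi_j(x)=(2\varepsilon-\rho(x,z_j))_+$. These bumps are $1$-Lipschitz, bounded by $2\varepsilon$, and have disjoint supports. For $\sigma\in\{-1,1\}^m$ set $f_\sigma=\sum_j\sigma_j\phi_j$; disjointness of the supports makes $f_\sigma$ $1$-Lipschitz and $[-1,1]$-valued once $\varepsilon<1/2$. Whenever $\sigma\neq\sigma'$, evaluating at a point $z_j$ where they differ gives $\|f_\sigma-f_{\sigma'}\|_\infty=4\varepsilon$. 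Thus $2^m$ functions are $4\varepsilon$-separated, and an $\varepsilon$-cover must contain at least $2^m$ elements, which yields $\log_2\mathcal N\ge m\ge c\varepsilon^{-d}$.

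This construction is also the one cited in Proposition~\ref{prop:uniform_approximation_fat_shattering}. There the bumps are rescaled to height $2\gamma$ with separation $\gamma/L$ and multiplied by $L$, which is the same construction. Finally, I will fix $\varepsilon_0$ small enough to make all of the scales above at most $\operatorname{diam}(X)$ and to keep the bumps inside $[-1,1]$.
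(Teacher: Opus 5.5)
Your lower bound is correct and is essentially the paper's tent-function packing. One imprecision: it is the separation $\rho(z_i,z_j)>4\varepsilon$ against bump height $2\varepsilon$ that gives $|f_\sigma(x)-f_\sigma(y)|\le\phi_i(x)+\phi_j(y)\le\rho(x,y)$ across different supports, not disjointness of supports alone, which does not suffice in a general metric space.

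The gap is in the upper bound. Single-scale chaining at scale $3\varepsilon$ cannot remove the logarithm here, and neither of your remedies repairs it. Take the middle-thirds Cantor set with $\mathcal H^{d}$, $d=\log 2/\log 3$, which is compact and $d$-Ahlfors regular.
\begin{itemize}
\item Distinct stage-$n$ intervals are at distance at least $3^{-n}$.
\item Each stage-$n$ interval contains a point of any $\varepsilon$-net once $3^{-n}>3\varepsilon$.
\item Hence, for $n$ maximal with $3^{-n}>3\varepsilon$, the $3\varepsilon$-graph on your net has at least $2^n\ge(9\varepsilon)^{-d}$ components. This is a constant fraction of $K\lesssim\varepsilon^{-d}$, and the same holds at scale $A\varepsilon$ for any fixed $A$.
\end{itemize}
So the claim of $O(1)$ components is false. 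Encoding each root at absolute resolution $\varepsilon$ costs $\log(1/\varepsilon)$ bits, which brings back the bound $\varepsilon^{-d}\log(1/\varepsilon)$.

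The fallback fails for the same reason. By self-similarity, every ball of fixed radius $\varepsilon_0$ still splits into at least a constant times $(\varepsilon_0/\varepsilon)^{d}$ components at scale $3\varepsilon$. Your assertion that only $O(\varepsilon_0^{-d})$ roots remain therefore presupposes exactly the connectivity that fails. If you instead allow links of length up to $\varepsilon_0$ inside a coarse piece, each increment costs $\log(\varepsilon_0/\varepsilon)$ bits at every point.

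The missing idea is to tie the quantization resolution to the length of each link, i.e.\ multiscale tree chaining, which is what the paper does.
\begin{itemize}
\item Take nested maximal $r_k$-separated sets $\mathcal X_0\subseteq\cdots\subseteq\mathcal X_K$ with $r_k=2^{-k}\operatorname{diam}(X)$ and $3r_K\le\varepsilon<6r_K$.
\item Every $x\in\mathcal X_k$ has a parent $\pi_k(x)\in\mathcal X_{k-1}$ with $\rho(x,\pi_k(x))\le r_{k-1}=2r_k$, simply because $\mathcal X_{k-1}$ is an $r_{k-1}$-net. No connectivity is needed.
\item Quantize $f$ on each $\mathcal X_k$ to $q_k(x)\in r_k\mathbb Z$ with $|q_k(x)-f(x)|\le r_k$. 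Then $|q_k(x)-q_{k-1}(\pi_k(x))|\le r_k+2r_k+2r_k=5r_k$, so each link has $O(1)$ admissible values.
\end{itemize}
Consequently
\[
\log_2\#\{\text{codes}\}\lesssim\sum_{k=0}^K|\mathcal X_k|\lesssim\sum_{k=0}^K2^{kd}\lesssim 2^{Kd}\lesssim\varepsilon^{-d}.
\]
Here $|\mathcal X_k|\lesssim 2^{kd}$ is your own packing estimate, and the geometric sum uses $d>0$. The McShane extension of $q_K$ from $\mathcal X_K$, or your central envelope, is then within $3r_K\le\varepsilon$ of $f$. In this scheme, a point that would be the root of an isolated cluster at scale $\varepsilon$ is encoded relative to a coarse ancestor at that ancestor's coarse resolution. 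That is precisely what removes the logarithm.
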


\begin{proof}[{Proof of Lemma~\ref{lem:Nondegeneracy_metric_entropy}}]
Abbreviate $D_X\eqdef\operatorname{diam}(X)\in(0,\infty)$.
We prove the upper and lower bounds separately.
For each $k\in\mathbb N_0$, set
$
    r_k
\eqdef
    2^{-k}D_X
$.  
Since $X$ is compact, we may choose a nested sequence
$
    \mathcal{X}_0\subseteq \mathcal{X}_1\subseteq\cdots\subseteq X
$
such that each $\mathcal{X}_k$ is a maximal $r_k$-separated subset of $X$.
Thus $\mathcal{X}_k$ is also an $r_k$-net of $X$.
\hfill\\
We first record the standard packing estimate.  Since $\mathcal{X}_k$ is
$r_k$-separated, the balls
$
    \big\{
        B\big(x,\frac{r_k}{2}\big)
        :
        x\in\mathcal{X}_k
    \big\}
$
are pairwise disjoint.  By the lower $d$-Ahlfors condition 
\[
    \sum_{x\in\mathcal{X}_k}
        \mathfrak{m}\left(
            B\left(x,\frac{r_k}{2}\right)
        \right)
\ge
    |\mathcal{X}_k|\,c_{\rm A}\left(\frac{r_k}{2}\right)^d
\].  
On the other hand, this disjoint union is contained in $X$, and
$
    \mathfrak{m}(X)\le C_{\rm A}D_X^d
$.
Therefore
\begin{equation}
\label{eq:net_cardinality_upper_bound}
    |\mathcal{X}_k|
    \le
    C\,2^{kd}
\mbox{ for each }
    k\in\mathbb N_0,
\end{equation}
where $C$ depends only on the Ahlfors regularity constants and on $d$.
\hfill\\
Let $K\in\mathbb N$ be the unique integer such that
\begin{equation}
\label{eq:choice_of_K_entropy_proof}
    3r_K
\le
    \varepsilon
<
    6r_K .
\end{equation}
Since $\mathcal{X}_{k-1}$ is an $r_{k-1}$-net then: 
for every $k\ge1$ and every $x\in\mathcal{X}_k$, choose a parent point
$
    \pi_k(x)\in\mathcal{X}_{k-1}
$
such that
\begin{equation}
\label{eq:parent_map_entropy_proof}
    \rho(x,\pi_k(x))
\le
    r_{k-1}
=
    2r_k 
.
\end{equation}
We now construct a finite family of candidate traces on $\mathcal{X}_K$.
For $k\in\{0,\dots,K\}$, let
$
    \Lambda_k
\eqdef
    r_k\mathbb{Z}
$. 
A \textit{multiscale code} by which we mean functions
$
    q_k:\mathcal{X}_k\to\Lambda_k
$ for each $k\in [K]$
satisfying the following admissibility conditions:
$
    q_0(x)\in[-2,2]
$, $
    x\in\mathcal{X}_0
$ 
and, for every $1\le k\le K$ and every $x\in\mathcal{X}_k$,
\begin{equation}
\label{eq:admissible_multiscale_code}
    |q_k(x)-q_{k-1}(\pi_k(x))|
    \le
    5r_k .
\end{equation}
Let $\mathcal{Q}_K$ denote the set of all terminal traces
$
    q_K:\mathcal{X}_K\to\mathbb R
$
arising from such admissible multiscale codes.
\hfill\\
We claim that
\begin{equation}
\label{eq:terminal_trace_cardinality_bound}
    \log_2|\mathcal{Q}_K|
\le
    C\,2^{Kd}.
\end{equation}
Indeed, the number of possible values of $q_0(x)$ is bounded by a constant
depending only on $D_X$.  Moreover, once $q_{k-1}$ has been fixed, the
constraint~\eqref{eq:admissible_multiscale_code} and the lattice spacing
$r_k$ imply that each $q_k(x)$ has at most a universal number of admissible
choices.  Hence, 
$
    |\mathcal{Q}_K|
\le
    C^{|\mathcal{X}_0|}
    \prod_{k=1}^K
        C^{|\mathcal{X}_k|}
$; taking logarithms and combining with~\eqref{eq:net_cardinality_upper_bound} we find that
\[
    \log_2|\mathcal{Q}_K|
\le
    C
    \sum_{k=0}^K
        |\mathcal{X}_k|
\le
    C
    \sum_{k=0}^K
        2^{kd}
\le
    C\,2^{Kd}
.
\]
We next show that the family $\mathcal{Q}_K$ is rich enough to approximate the
trace of every $f\in\mathcal{F}_1$.  Fix $f\in\mathcal{F}_1$.  For every
$k\in\{0,\dots,K\}$ and every $x\in\mathcal{X}_k$, choose
$
    q_k(x)\in\Lambda_k
$
such that
\begin{equation}
\label{eq:quantized_trace_accuracy}
    |q_k(x)-f(x)|
    \le
    r_k .
\end{equation}
Such a choice is possible by definition of $\Lambda_k$.  We verify that the
resulting code is admissible.  For $k\ge1$ and $x\in\mathcal{X}_k$,
using~\eqref{eq:parent_map_entropy_proof},~\eqref{eq:quantized_trace_accuracy},
and the fact that $f$ is $1$-Lipschitz, we get
\[
\begin{aligned}
    |q_k(x)-q_{k-1}(\pi_k(x))|
    &\le
    |q_k(x)-f(x)|
    +
    |f(x)-f(\pi_k(x))|
    +
    |f(\pi_k(x))-q_{k-1}(\pi_k(x))|
\\
    &\le
    r_k
    +
    \rho(x,\pi_k(x))
    +
    r_{k-1}
\\
    &\le
    r_k+2r_k+2r_k
    =
    5r_k .
\end{aligned}
\]
Thus $q_K\in\mathcal{Q}_K$, and for each $z\in\mathcal{X}_K$ we have
\begin{equation}
\label{eq:terminal_trace_accuracy}
    |q_K(z)-f(z)|
    \le
    r_K
.
\end{equation}
For each terminal trace $q\in\mathcal{Q}_K$, define its McShane extension
$
    E_q(x)
    \eqdef
    \inf_{z\in\mathcal{X}_K}
        \bigl(q(z)+\rho(x,z)\bigr)
$ for each $x\in X$.
The function $E_q$ is $1$-Lipschitz on $X$.  We claim that the family
$
    \{E_q:q\in\mathcal{Q}_K\}
$
is an $\varepsilon$-cover of $\mathcal{F}_1$ in $\|\cdot\|_\infty$.
\hfill\\
Fix $f\in\mathcal{F}_1$, and let $q\in\mathcal{Q}_K$ be the terminal trace
constructed above.  By~\eqref{eq:terminal_trace_accuracy}, for every
$z\in\mathcal{X}_K$ and every $x\in X$,
$
    q(z)+\rho(x,z)
\ge
    f(z)-r_K+\rho(x,z)
\ge
    f(x)-r_K
$.
Taking the infimum over $z\in\mathcal{X}_K$ gives
\begin{equation}
\label{eq:McShane_lower_approx_entropy}
    E_q(x)
\ge
    f(x)-r_K 
.
\end{equation}
Conversely, since $\mathcal{X}_K$ is an $r_K$-net, for every $x\in X$ there
exists $z_x\in\mathcal{X}_K$ such that
$
    \rho(x,z_x)\le r_K
$.
Therefore
\[
\begin{aligned}
    E_q(x)
    &\le
    q(z_x)+\rho(x,z_x)
\\
    &\le
    f(z_x)+r_K+\rho(x,z_x)
\\
    &\le
    f(x)+2\rho(x,z_x)+r_K
\\
    &\le
    f(x)+3r_K .
\end{aligned}
\]
Together with~\eqref{eq:McShane_lower_approx_entropy}, this yields
$
    \|f-E_q\|_\infty
\le
    3r_K
\le
    \varepsilon
$,
where the last inequality follows from~\eqref{eq:choice_of_K_entropy_proof}.
Hence
$
    \mathcal N
    \big(
        \mathcal{F}_1,
        \varepsilon,
        \|\cdot\|_\infty
    \big)
\le
    |\mathcal{Q}_K|
$.
Using~\eqref{eq:terminal_trace_cardinality_bound} and
\eqref{eq:choice_of_K_entropy_proof}, we conclude that
\begin{equation}
\label{eq:UB_ghahahh}
    \log_2
    \mathcal N
    \bigl(
        \mathcal{F}_1,
        \|\cdot\|_{\infty},
        \varepsilon
    \bigr)
    \le
    C\,2^{Kd}
    \le
    C\,\varepsilon^{-d}.
\end{equation}
Now, we obtain a matching lower-bound to~\eqref{eq:UB_ghahahh}.
Let $0<\varepsilon<\varepsilon_0$, where $\varepsilon_0>0$ is chosen small enough
so that
$
    8\varepsilon\le D_X
$.
Set
$
    r\eqdef 2\varepsilon
$. 
Choose a maximal $4r$-separated set
$
    \{x_1,\dots,x_M\}\subseteq X
$.  
By maximality, the balls
$
    \{B(x_j,4r)\}_{j=1}^M
$
cover $X$.  Hence, using the upper Ahlfors bound,
$
    \mathfrak{m}(X)
\le
    \sum_{j=1}^M
        \mathfrak{m}(B(x_j,4r))
\le
    M\,C_{\rm A}(4r)^d
$.
Since $\mathfrak{m}(X)>0$, it follows that
\begin{equation}
\label{eq:packing_lower_bound_Ahlfors}
    M
    \ge
    c\,r^{-d}
    \ge
    c\,\varepsilon^{-d}.
\end{equation}
For every $j\in[M]_+$, define the metric tent function
$
    \psi_j(x)
\eqdef
    \bigl(r-\rho(x,x_j)\bigr)_+
$ for each $x\in X$.
Each $\psi_j$ is $1$-Lipschitz, satisfies $0\le\psi_j\le r$, and has support
contained in $B(x_j,r)$.  Since the points $x_1,\dots,x_M$ are $4r$-separated,
these supports are pairwise disjoint.
\hfill\\
For each sign vector
$
    \sigma=(\sigma_1,\dots,\sigma_M)\in\{-1,1\}^M
$,
define
$
    f_\sigma(x)
    \eqdef
    \sum_{j=1}^M
        \sigma_j\psi_j(x)
$ and $x\in X$.
Since the supports of the $\psi_j$'s are pairwise disjoint and
$
    0\le\psi_j\le r<1
$,
we have
$
    f_\sigma(X)\subseteq[-1,1]
$.
\hfill\\
We verify that $f_\sigma$ is $1$-Lipschitz.  Let $x,y\in X$.  If $x$ and $y$
belong to the support of the same tent $\psi_j$, then
$
    |f_\sigma(x)-f_\sigma(y)|
=
    |\psi_j(x)-\psi_j(y)|
\le
    \rho(x,y)
$.
If
$
    x\in\operatorname{supp}(\psi_i)
$
and
$
    y\in\operatorname{supp}(\psi_j)
$
with $i\neq j$, then
\[
\begin{aligned}
    \rho(x,y)
    &\ge
    \rho(x_i,x_j)
    -
    \rho(x,x_i)
    -
    \rho(y,x_j)
\\
    &\ge
    4r
    -
    (r-\psi_i(x))
    -
    (r-\psi_j(y))
\\
    &=
    2r+\psi_i(x)+\psi_j(y)
    \ge
    \psi_i(x)+\psi_j(y)
\\
    &\ge
    |f_\sigma(x)-f_\sigma(y)|.
\end{aligned}
\]
Finally, if
$
    x\in\operatorname{supp}(\psi_i)
$
and $y$ does not belong to any support, then
$
    f_\sigma(y)=0
$
and
$
    \rho(y,x_i)\ge r
$.
Thus
\[
    \rho(x,y)
    \ge
    \rho(y,x_i)-\rho(x,x_i)
    \ge
    r-\rho(x,x_i)
    =
    \psi_i(x)
    =
    |f_\sigma(x)-f_\sigma(y)|.
\]
The remaining case, where both $x$ and $y$ lie outside all supports, is trivial.
Therefore $f_\sigma\in\mathcal{F}_1$.
\hfill\\
If $\sigma,\tau\in\{-1,1\}^M$ and $\sigma\neq\tau$, then there exists
$j\in[M]_+$ such that $\sigma_j\neq\tau_j$.  Evaluating at the centre $x_j$ gives
$
    |f_\sigma(x_j)-f_\tau(x_j)|
=
    |\sigma_j-\tau_j|\,\psi_j(x_j)
=
    2r
$.
Hence, $\|f_\sigma-f_\tau\|_\infty
\ge
    2r
=
    4\varepsilon
$.  
Thus, the set
$
    \{f_\sigma:\sigma\in\{-1,1\}^M\}
$
is $4\varepsilon$-separated in $\|\cdot\|_\infty$.  Therefore every
$\varepsilon$-cover of $\mathcal{F}_1$ contains at least $2^M$ elements.  By
\eqref{eq:packing_lower_bound_Ahlfors},
\begin{equation}
\label{eq:LB_ghahahh}
    \log_2
    \mathcal N
    \bigl(
        \mathcal{F}_1,
        \|\cdot\|_{\infty},
        \varepsilon
    \bigr)
    \ge
    M
    \ge
    c\,\varepsilon^{-d}.
\end{equation}
Combining~\eqref{eq:UB_ghahahh} and~\eqref{eq:LB_ghahahh} yields~\eqref{eq:Lip_class_entropy_asymptotics}, and we are happy.
\end{proof}
\begin{lem}[Near-optimality of the McShane-Whitney parametrization]
\label{lem:MW_near_optimality_Lipschitz_width}
Let $(X,\rho,\mathfrak{m})$ be a compact $d$-Ahlfors regular metric measure space,
for some $d>0$.  
Then, there exist constants
$
    0<c_d'\le C_d<\infty
$\footnote{Depending only on $(X,\rho,\mathfrak{m})$} such
that: for every $N\ge 2$
\begin{equation}
\label{eq:MW_near_optimality_Ahlfors_metric}
    c_d'\,
    \frac{1}{P_N^{1/d}(\log_2 P_N)^{1/d}}
    \le
    d^{\gamma_d}_{P_N}(\mathcal F_{1:d})_{C(X)}
    \le
    C_d\,P_N^{-1/d}
\end{equation}
where 
$
    P_N\eqdef N
$ and $\gamma_d
    \eqdef
    1+\sqrt d
%    \max\{2,1+\sqrt d\}
$.
\end{lem}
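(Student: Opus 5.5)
The plan proves the two inequalities in~\eqref{eq:MW_near_optimality_Ahlfors_metric} separately. The upper bound is an explicit construction of a $\gamma_d$-Lipschitz parametrization. The lower bound compares metric entropy, via Lemma~\ref{lem:Nondegeneracy_metric_entropy} and the entropy lower bound for Lipschitz widths in~\cite[Theorem~3.1(i)]{petrova2023lipschitz}.

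\emph{Upper bound.} Fix $N$ and let $\{x_n\}_{n=1}^N$ be the net from Lemma~\ref{lem:oracle_access__approximation} (Ahlfors regularity gives doubling dimension comparable to $d$). With the locations frozen, the relevant map is $\Phi:y\mapsto \hat f_{((y_n,x_n,1))_{n=1}^N}$ on $[-1,1]^N$. By the min/max inequalities used in the proof of Lemma~\ref{lem:stability}, $\|\Phi(y)-\Phi(\tilde y)\|_\infty\le\max_n|y_n-\tilde y_n|$. Hence $\Phi$ is $1$-Lipschitz, and so $\gamma_d$-Lipschitz, from the unit ball of $Y_N=(\mathbb R^N,\|\cdot\|_\infty)$ into $C(X)$. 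Every $f\in\mathcal F_{1:d}$ has trace $y=(f(x_n))_n$ in this ball, and Lemma~\ref{lem:oracle_access__approximation} gives $\|f-\Phi(y)\|_\infty\lesssim N^{-1/d}$. Taking the infimum over norms in~\eqref{eq:def_LipWidth_n} yields $d^{\gamma_d}_N(\mathcal F_{1:d})\le C_dN^{-1/d}$. The stated $\gamma_d$-Lipschitz property of the full map $([-1,1]\times X\times[0,1])^N\to C(X)$ follows from Lemma~\ref{lem:stability} with $R=1$ and $\operatorname{diam}X\le1$.

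\emph{Lower bound.} I would invoke the Petrova--Wojtaszczyk entropy comparison. For any $\gamma$-Lipschitz $\Phi:B_{Y_n}\to C(X)$ achieving error $\delta$, covering $B_{Y_n}$ by $(3\gamma/t)^n$ balls of radius $t/\gamma$ gives a $(\delta+t)$-cover of $\mathcal K$. Taking $t=\delta$, this means
\[
\log_2\mathcal N(\mathcal F_{1:d},\|\cdot\|_\infty,2\delta)\le n\log_2(3\gamma/\delta).
\]
Combining with $\log_2\mathcal N\ge c(2\delta)^{-d}$ from Lemma~\ref{lem:Nondegeneracy_metric_entropy} (valid for $2\delta<\varepsilon_0$) gives $\delta^{-d}\lesssim n\log(\gamma_d/\delta)$. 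Then I invert. If $\delta\le n^{-1/d}(\log_2 n)^{-1/d}\cdot c'$, then $\log(1/\delta)\lesssim\log n$, and so $\delta^{-d}\ge c'^{-d}n\log n$ exceeds $Cn\log n$ for $c'$ small, a contradiction. Hence $d^{\gamma_d}_N\ge c_d' N^{-1/d}(\log_2N)^{-1/d}$. The case $2\delta\ge\varepsilon_0$ is trivially consistent after shrinking $c_d'$.

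The main obstacle is the inversion step: keeping the constants uniform in the norm $Y_n$ (the volumetric covering bound is norm-independent, which resolves this), handling small $N$ where $\log_2 N$ is small, and handling the regime $\delta\gtrsim\varepsilon_0$. Both are absorbed by adjusting $c_d'$, since $N\ge2$ and $d^{\gamma_d}_N\ge0$.
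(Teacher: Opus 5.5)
\emph{Upper bound.} Your upper bound is the paper's argument. You freeze the centres, let only the labels range over the $\ell^\infty$ unit ball, use the min/max inequalities to get a $1$-Lipschitz (hence $\gamma_d$-Lipschitz) map, and finish with the envelope squeeze.

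One repair is needed. Do not get the net from Lemma~\ref{lem:oracle_access__approximation} via ``doubling dimension comparable to $d$''. That lemma assumes doubling dimension \emph{at most} $d$. The doubling dimension of a $d$-Ahlfors regular space is in general strictly larger than $d$ (e.g.\ for the Euclidean unit square), so that route only yields $N^{-1/\operatorname{ddim}}$, which is worse than $N^{-1/d}$. Instead, as the paper does, take a maximal separated set at scale comparable to $N^{-1/d}$. Bound its cardinality by the Ahlfors packing estimate and run the squeeze on it. Your closing remark invoking Lemma~\ref{lem:stability} and $\operatorname{diam}X\le1$ belongs to Theorem~\ref{thm:stable_near_minimax_optimality_oracle_regime}, not to this lemma.

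\emph{Lower bound: comparison of routes.} Here you take a different route. The paper turns Lemma~\ref{lem:Nondegeneracy_metric_entropy} into $\varepsilon_n(\mathcal F_{1:d})_{C(X)}\ge c_d n^{-1/d}$. It then cites \cite[Theorem~3.1(i)]{petrova2023lipschitz} with $\alpha=1/d$, $\beta=0$, after checking $2\operatorname{rad}(\mathcal F_{1:d})_{C(X)}\le 2\le\gamma_d$. You instead reprove the entropy--width comparison directly, by pushing a norm-independent volumetric cover of $B_{Y_n}$ through $\Phi$.

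The citation buys brevity and the general $(\alpha,\beta)$ statement. Your argument is self-contained, and $\gamma$ enters only inside a logarithm, so it works for any fixed $\gamma>0$. This matters: the constants $\pm1$ lie in $\mathcal F_{1:d}$, so $\operatorname{rad}(\mathcal F_{1:d})_{C(X)}=1$. For $0<d<1$ one then has $\gamma_d=1+\sqrt d<2$, so the paper's check $2\le\gamma_d$ fails there, while your route is unaffected.

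\emph{Lower bound: two details to fix.}
\begin{itemize}
\item The centres $\Phi(y_i)$ lie in $C(X)$, not in $\mathcal F_{1:d}$, so you have an external cover. This is harmless, because the lower bound in Lemma~\ref{lem:Nondegeneracy_metric_entropy} comes from a $4\varepsilon$-separated family and therefore also bounds external covering numbers.
\item Your inversion sentence is backwards. Assuming $\delta\le c'(n\log_2 n)^{-1/d}$ makes $\log(1/\delta)$ large, not $\lesssim\log n$. Bootstrap instead. On $\delta<\varepsilon_0/2<1$, insert $\log_2(3\gamma_d/\delta)\lesssim\delta^{-d/2}$ into $\delta^{-d}\lesssim n\log_2(3\gamma_d/\delta)$ to get $\delta\gtrsim n^{-2/d}$. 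Hence $\log_2(3\gamma_d/\delta)\lesssim\log_2 n$ for $n\ge2$. Plugging back gives $\delta^{-d}\lesssim n\log_2 n$, which is the claimed $c_d'(n\log_2 n)^{-1/d}$.
\end{itemize}
The regime $2\delta\ge\varepsilon_0$ is absorbed by shrinking $c_d'$, as you say.
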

\begin{proof}[{Proof of Lemma~\ref{lem:MW_near_optimality_Lipschitz_width}}]
We prove the upper and lower bounds separately.

\noindent
\emph{Step 1: The McShane-Whitney upper bound.}
Since $(X,\rho,\mathfrak{m})$ is compact and $d$-Ahlfors regular, there exists a
constant $C_d<\infty$ such that, for every $N\in\mathbb N_+$, one may choose
points
$
    b_1,\dots,b_N\in X
$
satisfying
\begin{equation}
\label{eq:Ahlfors_net_radius_X}
    h_N
    \eqdef
    \sup_{x\in X}
        \min_{m\in[N]_+}
            \rho(x,b_m)
    \le
    C_d\,N^{-1/d}.
\end{equation}
Indeed, the upper covering estimate follows from Ahlfors regularity by taking a
maximal separated set at scale comparable to $N^{-1/d}$.
For $a=(a_1,\dots,a_N)\in[-1,1]^N$, define for each $x\in X$
\[
    \Phi_N(a)(x)
\eqdef
    \tfrac{1}{2}
    \left(
        \min_{m\in[N]_+}
            \bigl(a_m+\rho(x,b_m)\bigr)
        +
        \max_{m\in[N]_+}
            \bigl(a_m-\rho(x,b_m)\bigr)
    \right)
.
\]
Let $f\in\mathcal F_{1:d}$ and choose
$
    a_m\eqdef f(b_m)
$ for each $m\in[N]_+$.  
Since $f(X)\subseteq[-1,1]$, this vector $a$ belongs to $[-1,1]^N$.
Recall the definitions of the upper and lower McShane-Whitney envelopes
$
    U_f(x)
    \eqdef
    \min_{m\in[N]_+}
        \bigl(f(b_m)+\rho(x,b_m)\bigr)
$ and $
    L_f(x)
\eqdef
\linebreak
    \max_{m\in[N]_+}
        \bigl(f(b_m)-\rho(x,b_m)\bigr)
$.  
Then, $
    \Phi_N(a)(x)
=
    \frac{1}{2}\bigl(U_f(x)+L_f(x)\bigr)
$.
Since $f$ is $1$-Lipschitz, for every $x\in X$ and every $m\in[N]_+$,
$
    f(b_m)-\rho(x,b_m)
\le
    f(x)
\le
    f(b_m)+\rho(x,b_m)
$.
Taking the maximum on the left and the minimum on the right gives
\begin{equation}
\label{eq:MW_squeeze_Ahlfors_metric}
    L_f(x)
\le
    f(x)
\le
    U_f(x)
\end{equation}
for each $x\in X$.
Now fix $x\in X$ and choose $m_x\in[N]_+$ such that
$
    \rho(x,b_{m_x})
\le
    h_N
$.
By the definitions of $U_f$ and $L_f$,
$
    U_f(x)
    \le
    f(b_{m_x})+\rho(x,b_{m_x})
$ and $
    L_f(x)
\ge
    f(b_{m_x})-\rho(x,b_{m_x})
$.
Consequently,
$
    0
\le
    U_f(x)-L_f(x)
\le
    2\rho(x,b_{m_x})
\le
    2h_N
$.
Since $f(x)$ lies in the interval $[L_f(x),U_f(x)]$ by
\eqref{eq:MW_squeeze_Ahlfors_metric}, and $\Phi_N(a)(x)$ is the midpoint of this
interval, we obtain
$
    |f(x)-\Phi_N(a)(x)|
\le
    \frac{1}{2}\bigl(U_f(x)-L_f(x)\bigr)
\le
    h_N
$.
Taking the supremum over $x\in X$ and using~\eqref{eq:Ahlfors_net_radius_X}
yields
\begin{equation}
\label{eq:MW_error_bound_Ahlfors_metric}
    \|f-\Phi_N(a)\|_{C(X)}
\le
    h_N
\le
    C_d\,N^{-1/d}
.
\end{equation}
We next verify that this is an admissible Lipschitz-width parametrization.  Equip
$\mathbb R^N$ with the coordinate-wise $\ell^\infty$ norm.  If
$a,a'\in[-1,1]^N$, then, for every $x\in X$,
\[
\begin{aligned}
    |\Phi_N(a)(x)-\Phi_N(a')(x)|
    &\le
    \frac{1}{2}
    \left|
        \min_{m\in[N]_+}
            \bigl(a_m+\rho(x,b_m)\bigr)
        -
        \min_{m\in[N]_+}
            \bigl(a_m'+\rho(x,b_m)\bigr)
    \right|
\\
    &\quad
    +
    \frac{1}{2}
    \left|
        \max_{m\in[N]_+}
            \bigl(a_m-\rho(x,b_m)\bigr)
        -
        \max_{m\in[N]_+}
            \bigl(a_m'-\rho(x,b_m)\bigr)
    \right|
    \\
    &\le
    \|a-a'\|_{\ell^\infty}.
\end{aligned}
\]
Hence
$
    \|\Phi_N(a)-\Phi_N(a')\|_{C(X)}
\le
    \|a-a'\|_{\ell^\infty}
$.
Thus $\Phi_N:B_{\ell^\infty_N}\to C(X)$ is $1$-Lipschitz, and therefore also
$\gamma_d$-Lipschitz.  Combining this with
\eqref{eq:MW_error_bound_Ahlfors_metric} gives
\begin{equation}
\label{eq:MW_upper_rate_parameter_count_Ahlfors_metric}
    d^{\gamma_d}_{P_N}(\mathcal F_{1:d})_{C(X)}
    \le
    C_d\,N^{-1/d}
    =
    C_d\,P_N^{-1/d}.
\end{equation}

\noindent
\emph{Step 2: The Lipschitz-width lower bound.}
By Lemma~\ref{lem:Nondegeneracy_metric_entropy}, there exist constants
$c,C>0$ and $\varepsilon_0\in(0,1)$ such that, for every
$0<\varepsilon<\varepsilon_0$, we have that
$
    c\,\varepsilon^{-d}
\le
    \log_2
    \mathcal N
    \bigl(
        \mathcal F_{1:d},
        \varepsilon,
        \|\cdot\|_\infty
    \bigr)    \le
    C\,\varepsilon^{-d}.
$
In particular, the entropy numbers of $\mathcal F_{1:d}$ satisfy the lower-growth condition: for each $2\le n \in \mathbb{N}_+$ we have
\begin{equation}
\label{eq:entropy_number_lower_Ahlfors_metric}
    \varepsilon_n(\mathcal F_{1:d})_{C(X)}
\ge
    c_d\,n^{-1/d}
,
\end{equation}
after decreasing $c_d>0$ if necessary.  Indeed, if
$
    \varepsilon<c_d n^{-1/d}
$,
then
$
    \log_2\mathcal N(\mathcal F_{1:d},\varepsilon,\|\cdot\|_\infty)>n
$,
so $\mathcal F_{1:d}$ cannot be covered by $2^n$ balls of radius $\varepsilon$.
We may now apply the beautiful~\cite[Theorem 3.1 (i)]{petrova2023lipschitz}.  The lower entropy
estimate~\eqref{eq:entropy_number_lower_Ahlfors_metric} is the hypothesis of that
result with
$
    \alpha=\frac{1}{d}
$ and $
    \beta=0
$ in their notation.  
Noting that,
$
    \operatorname{rad}(\mathcal F_{1:d})_{C(X)}
\le
    1,
$ 
because the constant function $0$ approximates every
$
    f\in\mathcal F_{1:d}
$
with error at most $1$, then we deduce that 
\[
    2\operatorname{rad}(\mathcal F_{1:d})_{C(X)}
\le
    2
\le
    \gamma_d.
\]
Thus~\cite[Theorem 3.1 (i)]{petrova2023lipschitz} yields a constant
$c_d'>0$ such that, for every $P\ge2$,
\begin{equation}
\label{eq:Lipschitz_width_lower_rate_Lip_class_Ahlfors_metric}
    d^{\gamma_d}_{P}(\mathcal F_{1:d})_{C(X)}
    \ge
    c_d'\,
    \frac{1}{P^{1/d}(\log_2 P)^{1/d}}.
\end{equation}
Applying~\eqref{eq:Lipschitz_width_lower_rate_Lip_class_Ahlfors_metric} with
$P=P_N=N$ and combining it with
\eqref{eq:MW_upper_rate_parameter_count_Ahlfors_metric}, we obtain~\eqref{eq:MW_near_optimality_Ahlfors_metric}.
\end{proof}
We note to the reader that, in the Euclidean cube case, if the centres $b_m\in Q_d\subseteq\mathbb R^d$ are
treated as free parameters, then the parameter count is $N(d+1)$, corresponding
to $N$ labels and $Nd$ centre coordinates.  In the intrinsic metric-space
formulation above, the centres $b_1,\dots,b_N\in X$ are fixed design points,
chosen once and for all as a near-optimal net.  The only free real parameters are
the labels $a_1,\dots,a_N$.  Hence the Lipschitz-width parameter count is
$P_N=N$.

\subsubsection{Step 4: Matching the Lower Bounds}
\label{s:Optimality__ss:ProofStep4Matching}
We are now ready to complete our proof and deduce minimax-optimality, in the stable non-linear approximation theoretic sense of~\cite{petrova2023lipschitz}; namely that we our formula matches the Lipschitz widths up to polylogarithmic factors.

\begin{proof}[{Proof of Theorem~\ref{thm:stable_near_minimax_optimality_oracle_regime}}]
Fix $N\ge2$.  Since $(X,\rho,\mathfrak{m})$ is compact and $d$-Ahlfors regular,
there exists a constant $C_d<\infty$ and a choice of design points
$
    x_1,\dots,x_N\in X
$
such that
\[
    \sup_{x\in X}
        \min_{n\in[N]_+}
            \rho(x,x_n)
    \le
    C_d\,N^{-1/d}.
\]
Equivalently, after changing $C_d$ if necessary, the chosen design points form a
$C_dN^{-1/d}$-net of $X$.
Let $f\in\mathcal F_{1:d}$ and set
$
    \theta_f
\eqdef
    \big((f(x_n),x_n,1)\big)_{n=1}^N
$.
Since $f(X)\subseteq[-1,1]$, one has
$
    \theta_f
    \in
    \big([-1,1]\times X\times\{1\}\big)^N 
$.
Applying Lemma~\ref{lem:oracle_access__approximation}, with the net radius
$
    C_dN^{-1/d}
$
in place of $N^{-1/d}$, gives
\[
    \|f-\hat{f}_{\theta_f}\|_{C(X)}
    \le
    C_d\,N^{-1/d}
    =
    C_d\,P_N^{-1/d}.
\]
Taking the supremum over $f\in\mathcal F_{1:d}$ yields the asserted upper
reconstruction error.
Next, Lemma~\ref{lem:stability}, applied with $R=1$ and
$
    \operatorname{diam}(X)\le1
$,
shows that the parameter-to-function map
$
    \mathfrak{E}:
    \big([-1,1]\times X\times\{1\}\big)^N
    \to
    C(X)
$ for each $\theta\mapsto \hat{f}_\theta$, 
is $1$-Lipschitz.  Since
$
    1\le\gamma_d
$,
the same map is also $\gamma_d$-Lipschitz.  Applying Lemma~\ref{lem:MW_near_optimality_Lipschitz_width} gives
\[
    c_d'\,
    \frac{1}{P_N^{1/d}(\log_2 P_N)^{1/d}}
\le
    d^{\gamma_d}_{P_N}(\mathcal F_{1:d})_{C(X)}
\le
    C_d\,P_N^{-1/d}
.
\]
The left-hand inequality says that no $\gamma_d$-Lipschitz reconstruction scheme
using $P_N$ real parameters can have worst-case error smaller than
$
    c_d'\,
    P_N^{-1/d}(\log_2 P_N)^{-1/d}
$.
The construction above achieves error at most
$
    C_dP_N^{-1/d}
$.
Therefore the reconstruction formula~\eqref{eq:formula} is optimal, in the sense
of Lipschitz widths, up to the logarithmic factor
$
    (\log_2 P_N)^{1/d}
$.
\end{proof}

\begin{proof}[{Proof of Proposition~\ref{prop:approximation}}]
Directly follows from the proof of Theorem~\ref{thm:stable_near_minimax_optimality_oracle_regime} without assuming Ahlfors regularity, and only the weaker $0<d$-doubling condition.
\end{proof}

\subsubsection{Proof of Optimal Regularity}
\label{s:Optimal_Regularity}
\begin{proof}
Set $q_m\eqdef(aG^{(\beta)})_m$ for $m\in[M]_+$. By~\cite[Proposition~1.3.7]{cobzas2019lipschitz}, the map $x\mapsto\rho(x,b_m)$ is $1$-Lipschitz. Hence, by~\cite[Proposition~2.3.3]{cobzas2019lipschitz}, each of the maps
$
    x\mapsto q_m+L\rho(x,b_m)
$
and
$
    x\mapsto q_m-L\rho(x,b_m)
$
is $L$-Lipschitz. Therefore,~\cite[Proposition~2.3.9]{cobzas2019lipschitz} implies that
$
    x\mapsto\min_{m\in[M]_+}\{q_m+L\rho(x,b_m)\}
$
and
$
    x\mapsto\max_{m\in[M]_+}\{q_m-L\rho(x,b_m)\}
$
are both $L$-Lipschitz. Applying~\cite[Proposition~2.3.3]{cobzas2019lipschitz} once more to their half-sum yields
$
    \operatorname{Lip}\bigl(\hat{f}_\theta(\cdot|G^{(\beta)})\bigr)
    \le \tfrac12(L+L)=L
$,
as required.
\end{proof}

\subsubsection{Proofs of Parametric Optimality}
\label{s:Optimality}
\begin{proof}[{Proof of Proposition~\ref{prop:representation}}]
%%%
Fix $\theta=((a_m,b_m,c_m))_{m=1}^N$ and write
$
    \overline a_m\eqdef(aG^{(\beta)})_m
$.
Since
$
    |t|=\operatorname{ReLU}(t)+\operatorname{ReLU}(-t),
$
each map
$
    x\mapsto c_m\|x-b_m\|_1
$
admits an exact constant-depth $\operatorname{ReLU}$ realization with $\mathcal{O}(d)$ non-zero parameters. Hence, by parallelization~\cite[Lemma~5.3]{petersen2024mathematical}, the map $f_\theta:\mathbb{R}^d
\mapsto
    \mathbb{R}^{2N}$ given for each $x\in \mathbb{R}^d$ by
\[
    x
    \mapsto
    \left(
        \big(\overline a_m+c_m\|x-b_m\|_1\big)_{m=1}^N,
        \big(\overline a_m-c_m\|x-b_m\|_1\big)_{m=1}^N
    \right)
\]
admits an exact constant-depth $\operatorname{ReLU}$ realization of width and size $\mathcal \mathcal{O}(N)$.
By~\cite[Lemma~5.11]{petersen2024mathematical}, the maps
$
    (z_m)_{m=1}^N\mapsto\min_{m\in[N]_+}z_m
$
and
$
    (z_m)_{m=1}^N\mapsto\max_{m\in[N]_+}z_m
$
admit exact $\operatorname{ReLU}$ realizations of depth at most $\lceil\log_2N\rceil$, width at most $3N$, and size at most $16N$. Parallelizing these two networks and composing them with the preceding realization and the affine map $(u,v)\mapsto\tfrac12(u+v)$, as permitted by~\cite[Lemmata~5.2-5.3]{petersen2024mathematical}, yields a \texttt{ReLU}-MLP $\Phi:\mathbb R^d\to\mathbb R$ satisfying
\[
    \Phi(x)
    =
    \frac12
    \left(
        \min_{m\in[N]_+}
        \{\overline a_m+c_m\|x-b_m\|_1\}
        +
        \max_{m\in[N]_+}
        \{\overline a_m-c_m\|x-b_m\|_1\}
    \right)
    =
    \hat{f}_\theta(x|G^{(\beta)})
\]
for every $x\in\mathbb R^d$, with depth $\mathcal{O}(\log(N))$ and width and size $\mathcal \mathcal{O}(N)$.
\end{proof}

We directly obtain the transformer version.
\begin{proof}[{Proof of Corollary~\ref{cor:transformer_realization}}]
 Directly follows upon applying Proposition~\ref{prop:transformerification__SparseVersion} to $\Phi_{\theta,G^{(\beta)}}$ obtained from Proposition~\ref{prop:representation}.   
\end{proof}

\begin{proof}[{Proof of Corollary~\ref{cor:lp_relu_approximation}}]
Suppose first that $1\le p\le2$. Since $\|x-y\|_p\le\|x-y\|_1$, the map $f$ is $L$-Lipschitz with respect to $\|\cdot\|_1$. A standard tensor grid with at most $N$ points has $\ell^1$-covering radius at most $dN^{-1/d}$; hence Proposition~\ref{prop:approximation} and Proposition~\ref{prop:representation} yield a \texttt{ReLU}-MLP $\Phi$ with the asserted parametric complexity and $\|f-\Phi\|_{C(X)}\le LdN^{-1/d}$. Since $\Phi$ is $L$-Lipschitz with respect to $\|\cdot\|_1$ and $\|x-y\|_1\le d^{1-1/p}\|x-y\|_p$, we obtain $\operatorname{Lip}(\Phi)\le Ld^{1-1/p}$ with respect to $\|\cdot\|_p$.
\hfill\\
\noindent
If instead $2<p\le\infty$, then $\|x-y\|_p\le d^{1/p}\|x-y\|_\infty$, so $f$ is $Ld^{1/p}$-Lipschitz with respect to $\|\cdot\|_\infty$. Since the same grid has $\ell^\infty$-covering radius at most $N^{-1/d}$, the corresponding $\ell^\infty$ versions of Propositions~\ref{prop:approximation} and~\ref{prop:representation} give $\|f-\Phi\|_{C(X)}\le Ld^{1/p}N^{-1/d}$ and $\operatorname{Lip}(\Phi)\le Ld^{1/p}$ with respect to $\|\cdot\|_p$. Therefore the conclusion follows.
\end{proof}

\begin{proof}[{Proof of Corollary~\ref{cor:realizability}}]
In the setting of
Theorem~\ref{thrm:nonparametric_noisy_case}, define the effective parameter
$
    \theta^{(\beta,q_\varepsilon,1)}
\eqdef
    \big(
        (    V_k^{(\beta,q_\varepsilon)},X_k,1
        )
    \big)_{k=1}^{N_0}$.  
By Lemma~\ref{lem:graph_averaged_labels_equal_virtual_McShane_Whitney},
\[
    \hat{f}_{\mathbb{D}_N}^{(\beta,q_\varepsilon,1)}
    =
    \hat{f}_{\theta^{(\beta,q_\varepsilon,1)}}
\]
pointwise on $X$.  Proposition~\ref{prop:representation}, applied with
$M=N_0$, therefore gives an exact \texttt{ReLU}-MLP realization
$
    \Phi:\mathbb{R}^d\to\mathbb{R}
$
of depth $\mathcal O(\log N_0)$ and with
$\mathcal O_d(N_0)$ nonzero parameters.  Since $N_0\le N$, these bounds
are also $\mathcal O(\log N)$ and $\mathcal O_d(N)$, respectively.
Finally,
$
    \Phi
    =
    \hat{f}_{\mathbb{D}_N}^{(\beta,q_\varepsilon,1)}
$
on $X$, so the probabilistic uniform-recovery estimate follows directly
from~\eqref{eq:main_robustified_uniform_bound}.
\end{proof}

\subsubsection{Proofs of Parametric Optimality}
\label{s:Proofs__ss:ParametricOptimality}
\begin{proof}[{Proof of Corollary~\ref{cor:realizability}}]
Fix $\theta=((a_m,b_m,c_m))_{m=1}^N$ and write
$
    \overline a_m\eqdef(aG^{(\beta)})_m
$.
Since
$
    |t|=\operatorname{ReLU}(t)+\operatorname{ReLU}(-t),
$
each map
$
    x\mapsto c_m\|x-b_m\|_1
$
admits an exact constant-depth $\operatorname{ReLU}$ realization with $\mathcal{O}(d)$ non-zero parameters. Hence, by parallelization~\cite[Lemma~5.3]{petersen2024mathematical}, the map $f_\theta:\mathbb{R}^d
\mapsto
    \mathbb{R}^{2N}$ given for each $x\in \mathbb{R}^d$ by
\[
    x
    \mapsto
    \left(
        \big(\overline a_m+c_m\|x-b_m\|_1\big)_{m=1}^N,
        \big(\overline a_m-c_m\|x-b_m\|_1\big)_{m=1}^N
    \right)
\]
admits an exact constant-depth $\operatorname{ReLU}$ realization of width and size $\mathcal \mathcal{O}(N)$.
By~\cite[Lemma~5.11]{petersen2024mathematical}, the maps
$
    (z_m)_{m=1}^N\mapsto\min_{m\in[N]_+}z_m
$
and
$
    (z_m)_{m=1}^N\mapsto\max_{m\in[N]_+}z_m
$
admit exact $\operatorname{ReLU}$ realizations of depth at most $\lceil\log_2N\rceil$, width at most $3N$, and size at most $16N$. Parallelizing these two networks and composing them with the preceding realization and the affine map $(u,v)\mapsto\tfrac12(u+v)$, as permitted by~\cite[Lemmata~5.2-5.3]{petersen2024mathematical}, yields a \texttt{ReLU}-MLP $\Phi:\mathbb R^d\to\mathbb R$ satisfying
\[
    \Phi(x)
    =
    \frac12
    \left(
        \min_{m\in[N]_+}
        \{\overline a_m+c_m\|x-b_m\|_1\}
        +
        \max_{m\in[N]_+}
        \{\overline a_m-c_m\|x-b_m\|_1\}
    \right)
    =
    \hat{f}_\theta(x|G^{(\beta)})
\]
for every $x\in\mathbb R^d$, with depth $\mathcal{O}(\log(N))$ and width and size $\mathcal \mathcal{O}(N)$.
Finally, in the setting of Theorem~\ref{thrm:nonparametric_noisy_case}, applying this exact realization to $\theta^\star$ gives
$
    \Phi=\hat{f}_{\theta^\star}(\cdot|G^{(\beta)})
$
on $X$, so the claimed probabilistic estimate follows directly from that theorem.
\end{proof}

\subsection{Proofs of Examples}
\label{s:Proofs_Examples}

\begin{proof}[{Proof of Corollary~\ref{cor:uniformlycontinuous}}]
Set $\rho^\omega\eqdef\omega\circ\rho$. By the preceding discussion, $(X,\rho^\omega)$ is a compact metric space and $f$ is $1$-Lipschitz with respect to $\rho^\omega$. Fix $\varepsilon>0$ and choose a finite $\varepsilon/2$-net $\{b_m\}_{m=1}^M$ of $(X,\rho^\omega)$. Set
$
    \theta\eqdef((f(b_m),b_m,1))_{m=1}^M.
$
Since $G^0=I_M$, the formula in~\eqref{eq:formula__realizable___snowflaked} is precisely the Whitney-McShane midpoint construction applied on $(X,\rho^\omega)$. The argument of Lemma~\ref{lem:oracle_access__approximation} therefore gives
$
    \|f-\hat{f}_{\theta:\omega}(\cdot|G^0)\|_\infty<\varepsilon.
$
Finally, each map $x\mapsto f(b_m)\pm\rho^\omega(x,b_m)$ is $1$-Lipschitz with respect to $\rho^\omega$, and finite minima, maxima, and their average preserve this bound. Hence
$
    |\hat{f}_{\theta:\omega}(x|G^0)-\hat{f}_{\theta:\omega}(y|G^0)|
    \le \rho^\omega(x,y)
    =\omega(\rho(x,y)),
$
which proves the claimed $\omega$-uniform continuity.
\end{proof}

\begin{proof}[{Proof of Corollary~\ref{cor:DeepSets}}]
Since $T_\phi$ is injective on $\mathcal{K}$, $d_\phi$ is a metric, and $(\mathcal{K},d_\phi)$ is compact by assumption. Apply Corollary~\ref{cor:uniformlycontinuous} with the modulus $\omega(t)=Lt$. This yields, for every $\varepsilon>0$, parameters $\theta$ such that
$
    \|f-\hat{f}_\theta(\cdot|G^0)\|_{C(\mathcal{K})}<\varepsilon
$
and $\hat{f}_\theta(\cdot|G^0)$ is $L$-Lipschitz with respect to $d_\phi$. By~\eqref{eq:DeepSets_factorization}, $\hat{f}_\theta(\nu|G^0)=\Psi_\theta(T_\phi(\nu))$, while the $\ell^\infty$-version of the realization argument in Proposition~\ref{prop:representation} gives an exact \texttt{ReLU}-MLP realization of $\Psi_\theta$. The conclusion follows.
\end{proof}

\appendix
\section{Examples for Numerics}

Fix $d,P\in\mathbb N_+$, and let
$
L^\infty([0,1];(\mathbb R^d,\|\cdot\|_2))
$
denote the space of essentially bounded Lebesgue-a.e.\ equivalence classes of
functions from $[0,1]$ into $\mathbb R^d$.  Let $\Pi_P$ denote the subset of
functions $f$ of the form
\[
    f=\sum_{i=1}^m a_i\mathbf 1_{I_i},
    \qquad m\le P,
\]
where $a_i\in\mathbb R^d$, and where $\{I_i\}_{i=1}^m$ is a partition of
$[0,1]$, up to Lebesgue-null sets, by non-empty intervals of the form
$[\alpha,\beta)$, $(\alpha,\beta)$, or $(\alpha,\beta]$, with
$0\le \alpha<\beta\le 1$.
Then, for any two
$
    f=\sum_{i=1}^m a_i\mathbf 1_{I_i}
$ and $g=\sum_{j=1}^n b_j\mathbf 1_{J_j}$ in $\Pi_P$ 
their distance is given exactly by
\begin{equation}
\label{eq:oracle_distance}
    \|f-g\|_{L^\infty([0,1];\mathbb R^d)}
    =
    \max_{\substack{1\le i\le m,\;1\le j\le n\\
                    \lambda(I_i\cap J_j)>0}}
    \|a_i-b_j\|_2 ,
\end{equation}
where $\lambda$ denotes Lebesgue measure.

\section{Additional Numerics}\label{sec:numerics_appendix}

\subsection{Closed-Form Reconstruction on $\mathbb{R}$: Different $N_{\mathrm{train}}$}

\begin{figure}[ht]
    \centering

    \begin{subfigure}{0.48\textwidth}
        \centering
        \includegraphics[width=\textwidth]{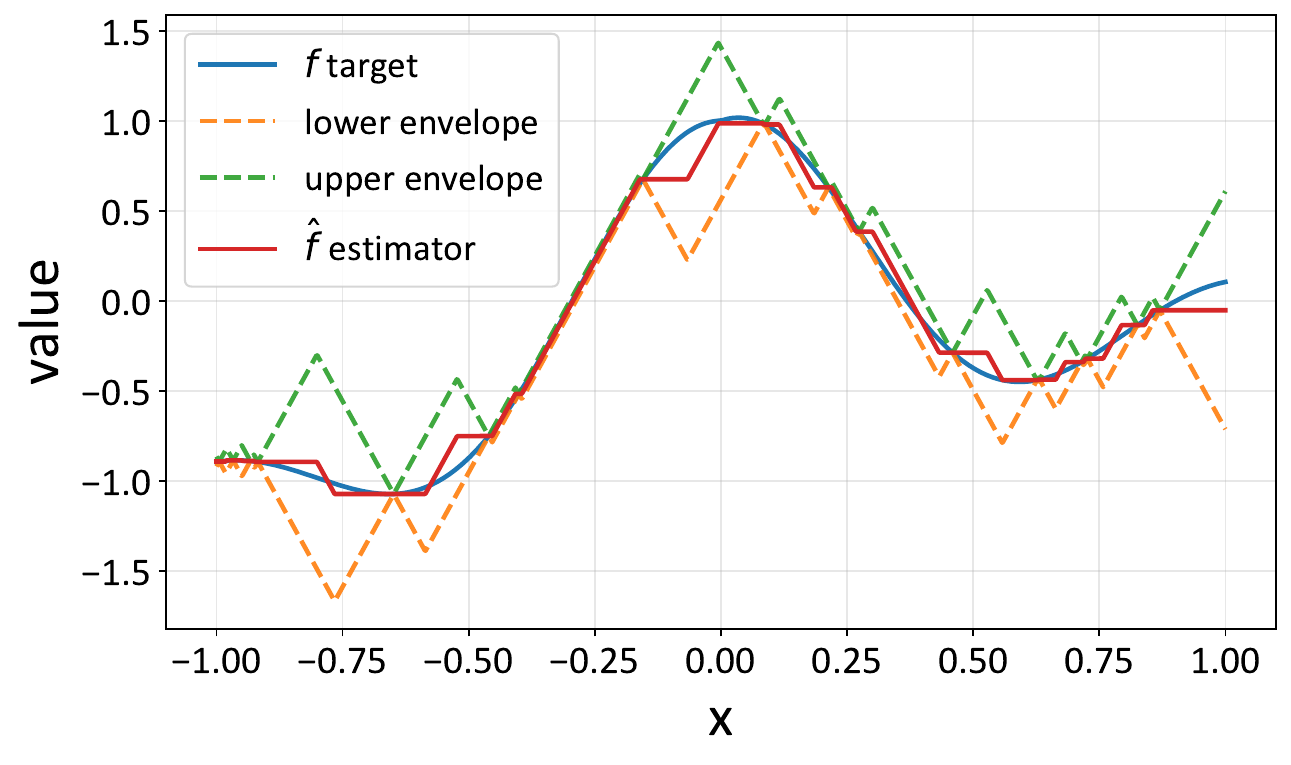}
        \caption{$N_{\mathrm{train}}=20$}
        \label{fig:closed-form-r1-n20}
    \end{subfigure}
    \hfill
    \begin{subfigure}{0.48\textwidth}
        \centering
        \includegraphics[width=\textwidth]{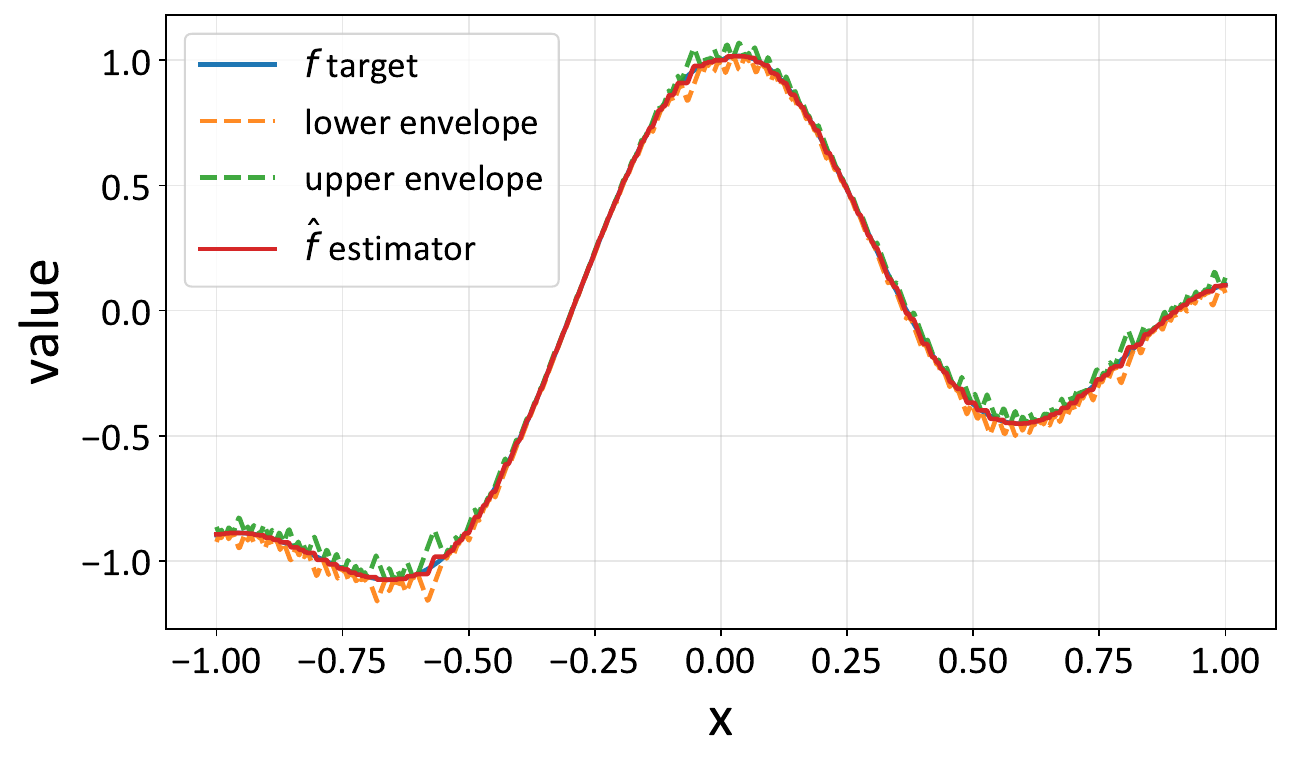}
        \caption{$N_{\mathrm{train}}=200$}
        \label{fig:closed-form-r1-n200}
    \end{subfigure}

    \caption{Closed-form Whitney-McShane midpoint reconstruction on $[-1,1]$ with noiseless samples. The lower and upper envelopes correspond to the Whitney and McShane bounds, and the estimator is their midpoint.}
    \label{fig:closed-form-r1}
\end{figure}

\subsection{Closed-Form Reconstruction on $\mathbb{R}^2$: $\ell_2$-distance}

\begin{figure}[ht]
    \centering
    \includegraphics[width=0.95\textwidth]{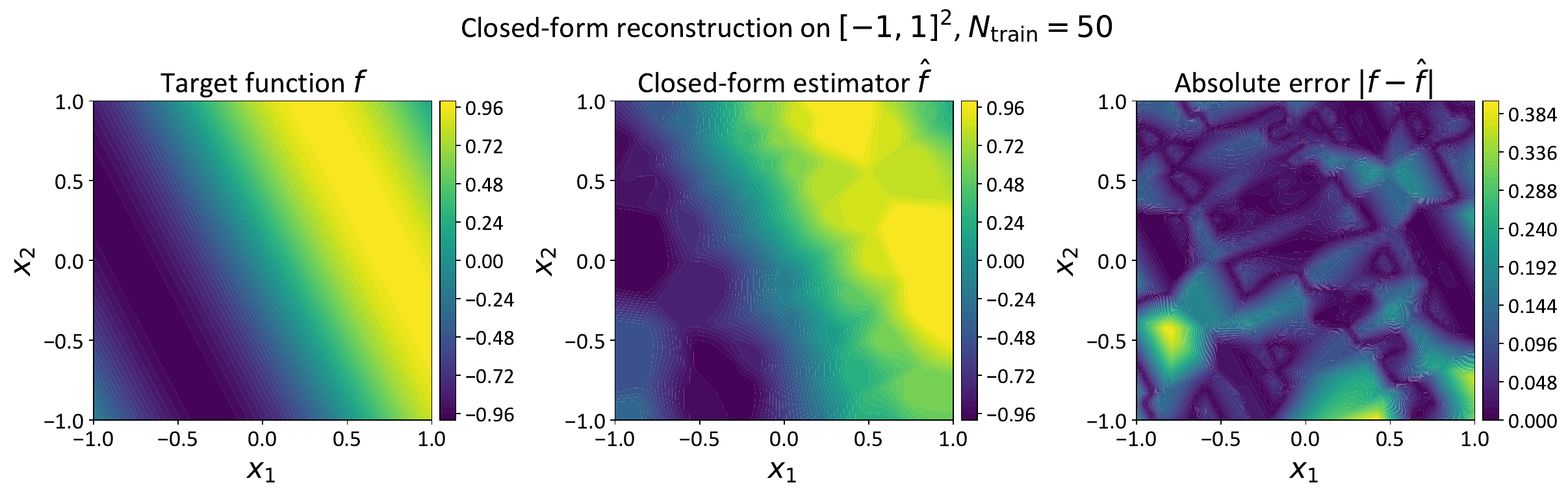}
    \caption{Closed-form reconstruction on $[-1,1]^2$ with $N_{\mathrm{train}}=50$ noiseless samples and $\ell_2$-distance as a metric.}
    \label{fig:closed-form-r2-n50-ell2}
\end{figure}

\begin{figure}[ht]
    \centering
    \includegraphics[width=0.95\textwidth]{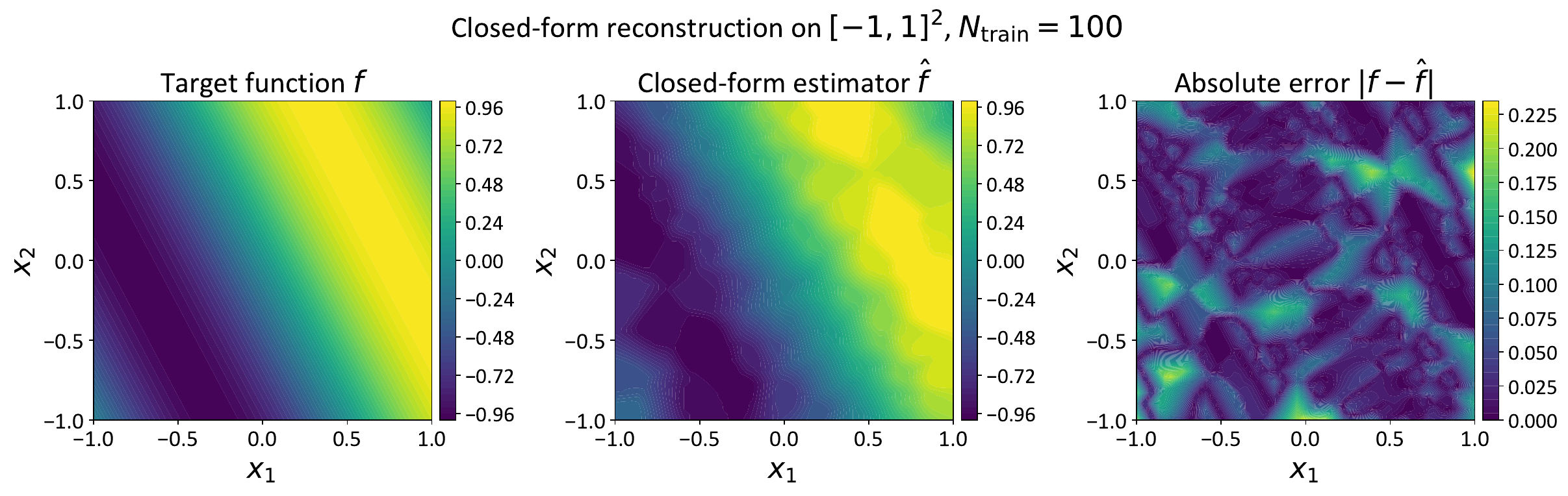}
    \caption{Closed-form reconstruction on $[-1,1]^2$ with $N_{\mathrm{train}}=100$ noiseless samples and $\ell_2$-distance as a metric.}
    \label{fig:closed-form-r2-n100-ell2}
\end{figure}

\begin{figure}[ht]
    \centering
    \includegraphics[width=0.95\textwidth]{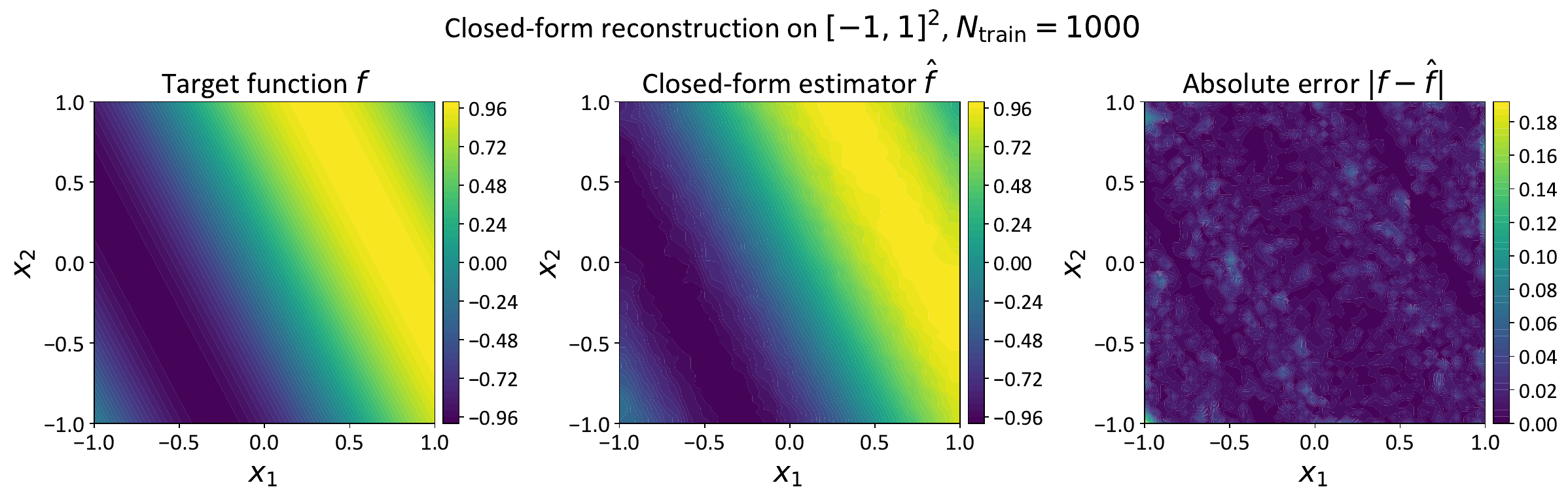}
    \caption{Closed-form reconstruction on $[-1,1]^2$ with $N_{\mathrm{train}}=1000$ noiseless samples and $\ell_2$-distance as a metric.}
    \label{fig:closed-form-r2-n1000-ell2}
\end{figure}

\subsection{Gradient Descent Recovery: 1-Dimensional Function}

%We numerically illustrate the local recovery statement of Section~\ref{s:Intro__ss:Contribution___sss:GD} in the one-dimensional noiseless setting. 
We numerically investigate whether gradient descent initialized near the closed-form parameter configuration recovers it in the one-dimensional noiseless setting.
We sample training points $X_n \in [-1,1]$ uniformly and set $Y_n=f(X_n)$, where
\[
f(x)=e^{-x^2}\cos(5x)+\max\{0,x\}+\min\{0,-x^2\}.
\]
We then form the closed-form parameter configuration
\[
\theta^\star=(Y_m,X_m,L)_{m=1}^N,
\]
where $N=1000$ and $L$ is chosen as a strict empirical Lipschitz upper bound. In particular, we estimate the empirical Lipschitz constant from the training data and use a slightly inflated value of $L$ to ensure a positive empirical Lipschitz margin.

Gradient descent is initialized near $\theta^\star$ by perturbing each component as
\[
a_m^{(0)}=Y_m+\sigma \xi_m,\qquad
b_m^{(0)}=X_m+\sigma \zeta_m,\qquad
c_m^{(0)}=L+\sigma \upsilon_m,
\]
where $\xi_m,\zeta_m,\upsilon_m$ are independent standard normal random variables and $\sigma>0$ controls the initialization gap.
We run the experiment for $\lambda =1$, $\eta = 0.001$ and three initialization gaps,
\[
\sigma \in \{10^{-4}, 5\cdot 10^{-4}, 10^{-3}\}.
\]
For each value of $\sigma$, we report the evolution of the regularized loss, the parameter error $\|\theta_t-\theta^\star\|_2$, the maximum training residual, and the function error to the closed-form estimator in iterations.

\begin{figure}[ht]
    \centering
    \includegraphics[width=0.75\textwidth]{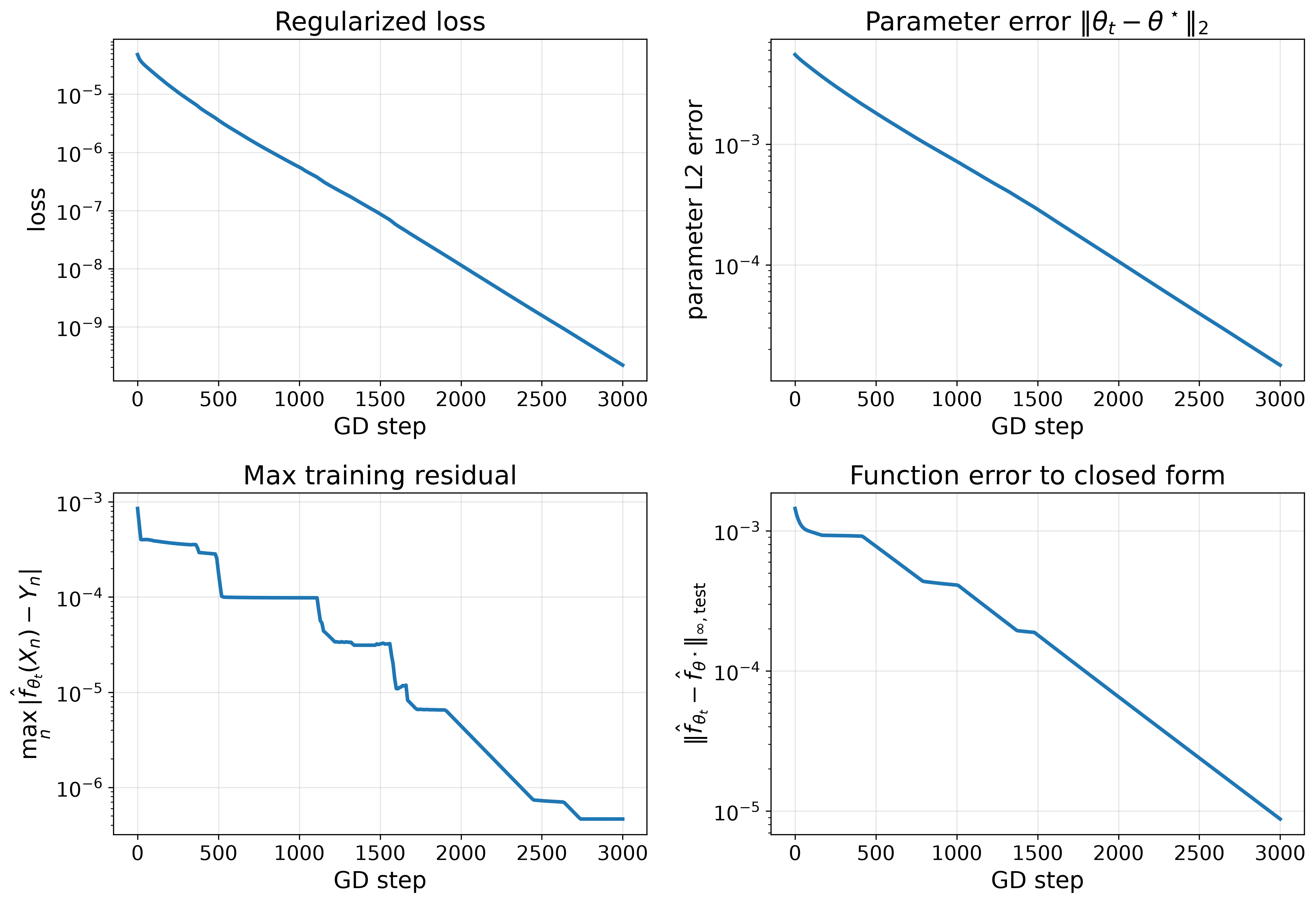}
    \caption{Gradient descent recovery in the one-dimensional noiseless setting with initialization gap $\sigma=10^{-4}$.}
    \label{fig:gd-recovery-r1-1e-4}
\end{figure}

\begin{figure}[ht]
    \centering
    \includegraphics[width=0.75\textwidth]{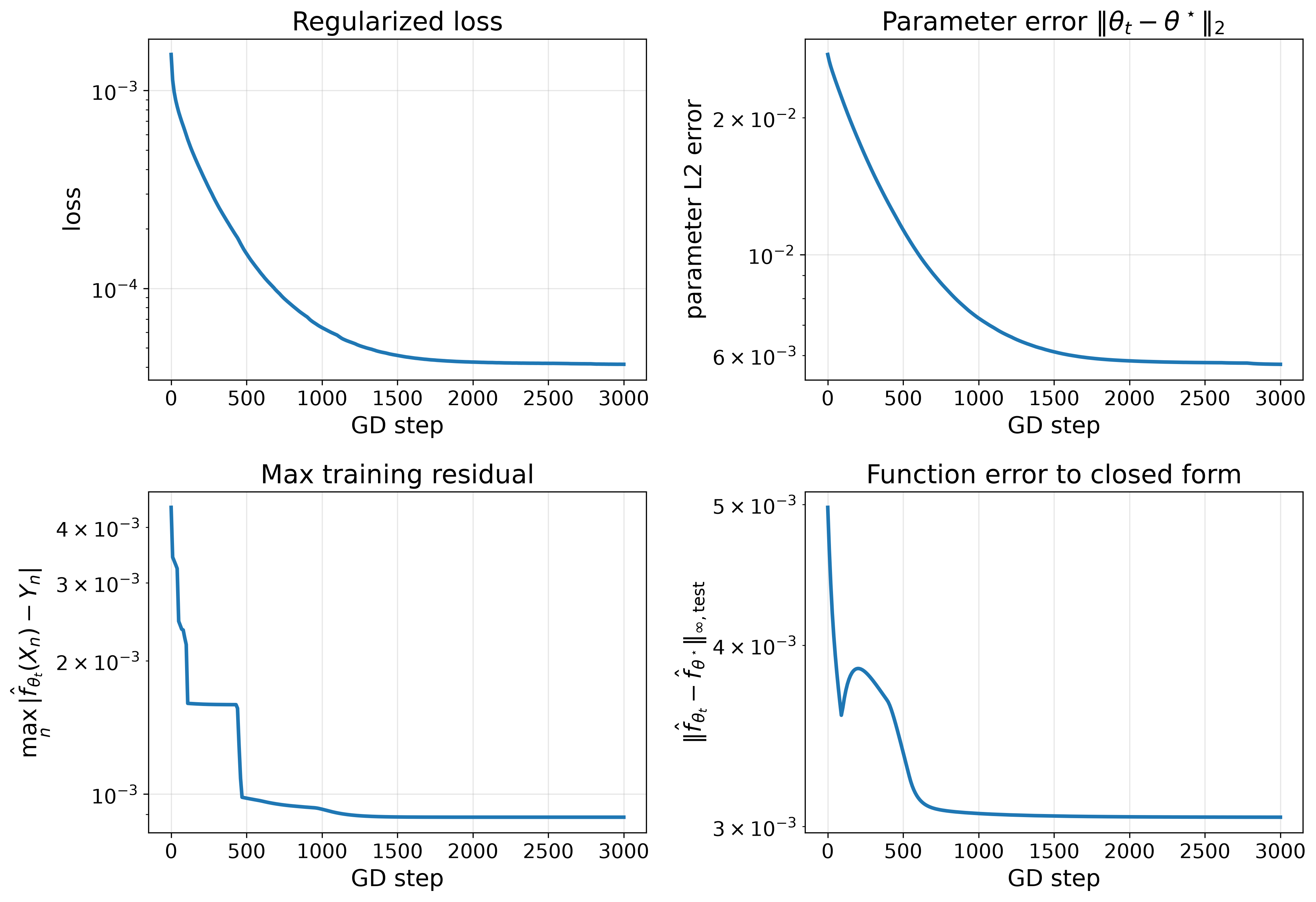}
    \caption{Gradient descent recovery in the one-dimensional noiseless setting with initialization gap $\sigma=5\cdot 10^{-4}$.}
    \label{fig:gd-recovery-r1-5e-4}
\end{figure}

\begin{figure}[ht]
    \centering
    \includegraphics[width=0.75\textwidth]{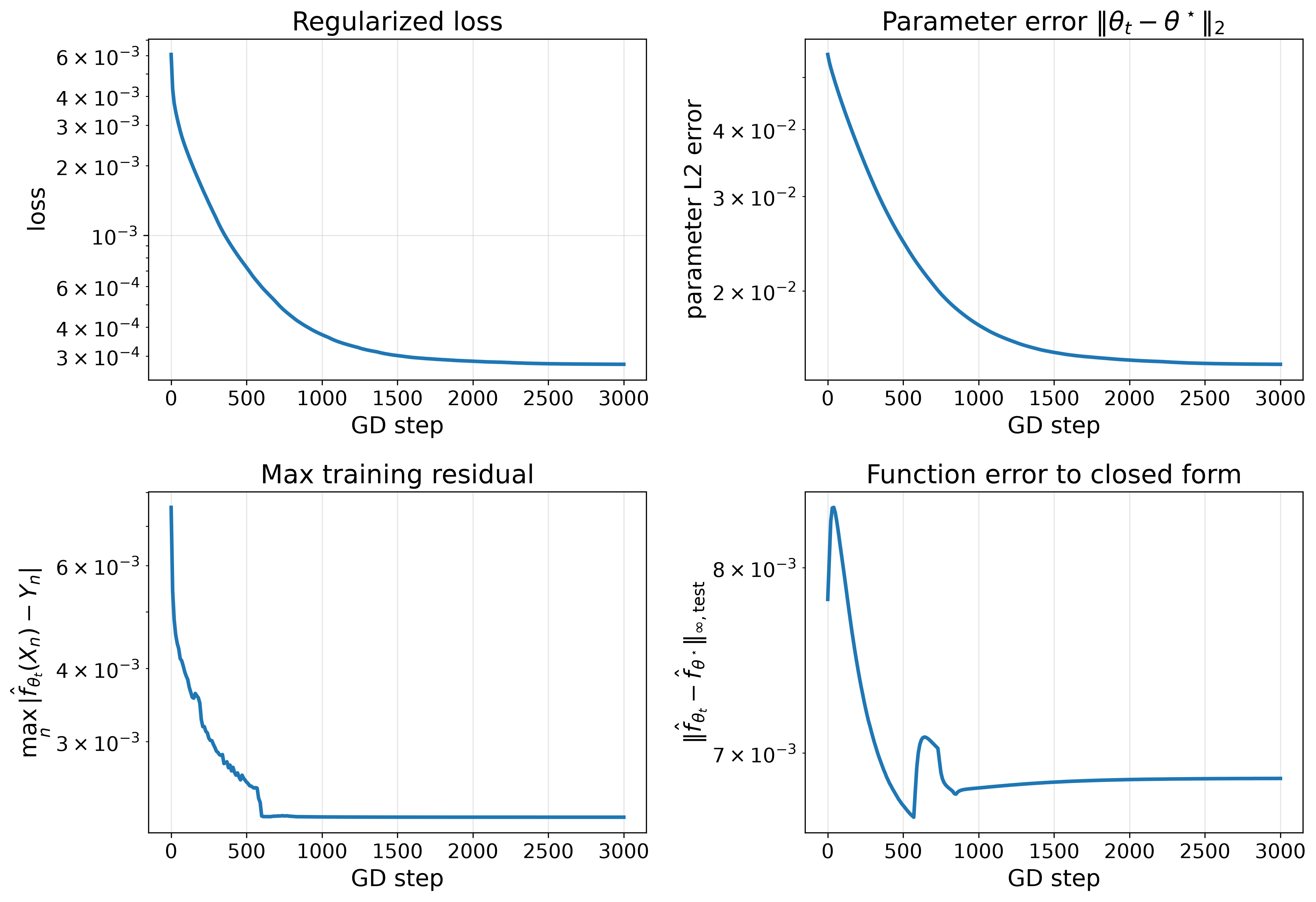}
    \caption{Gradient descent recovery in the one-dimensional noiseless setting with initialization gap $\sigma=10^{-3}$.}
    \label{fig:gd-recovery-r1-1e-3}
\end{figure}

% \newpage
\section{Deep learning architectures}
\label{a:deeplearningarchs}
For completeness, we briefly recall the relevant deep learning models.

\begin{definition}[Fully connected MLPs]
Let $d,D,L\in\mathbb{N}$, and let $d_0,d_1,\dots,d_L\in\mathbb{N}$ such that $d_0=d$ and $d_L=D$.
Let $\sigma\in \mathcal{C}(\mathbb{R})$ be an activation function.
A fully-connected MLP with depth $L$ and activation function $\sigma$ is a map $f:\mathbb{R}^d\to\mathbb{R}^D$ with iterative representation:
\begin{align*} %\label{eq:representation_MLP}
    &X^{(0)} 
    \eqdef  X, \\
    &X^{(j+1)} 
    \eqdef  
    \sigma\bullet({\bf A}^{(j)} X^{(j)} + b^{(j)}) \qquad j=0,1,\dots,L-2,
    \qquad \\
    &f(X)
    \eqdef  
    {\bf A}^{(L-1)} X^{(L-1)} + b^{(L-1)},
\end{align*}
where $\bullet$ denotes component-wise application, ${\bf A}^{(j)}\in\mathbb{R}^{d_{j+1}\times d_j}$, and $b^{(j)}\in\mathbb{R}^{d_{j+1}}$.
%Here, $J$ is the depth of the MLP and $\Upsilon\eqdef  \max_{j=1,\dots,J-1}\,d_j$ is its width.
\end{definition}

\begin{definition}[Transformers with Multi-head attention]
%Fix a temperature parameter $\lambda>0$ and dimensions $d_{\rm in},d_{{\rm key}},d_{\rm out},N\in \mathbb N$.  
Let $\lambda>0$, and let $d_{\rm in},d_{{\rm key}},d_{\rm out},N\in \mathbb N$. 
Let
${\bf Q},{\bf K}\in \mathbb{R}^{d_{{\rm key}}\times d_{\rm in}}$
and
${\bf V}\in \mathbb{R}^{d_{\rm out}\times d_{\rm in}}$. 
We define the associated attention mechanism
$\operatorname{Attn}(\cdot|{\bf Q},{\bf K},{\bf V},\lambda):\mathbb{R}^{N\times d_{\rm in}}\to \mathbb{R}^{N\times d_{\rm out}}
$
by
\begin{equation} \label{eq:def_attention}
    [\operatorname{Attn}(\mathbf{X}|{\bf Q},{\bf K},{\bf V},\lambda)]_n
    \eqdef 
    \sum_{m=1}^N
    \frac{e^{\lambda \langle {\bf Q}[\mathbf{X}]_n, {\bf K}[\mathbf{X}]_m\rangle/\sqrt{d_{{\rm key}}}}}{
    \sum_{l=1}^N e^{\lambda \langle {\bf Q}[\mathbf{X}]_n, {\bf K}[\mathbf{X}]_l \rangle/\sqrt{d_{{\rm key}}}}}
    {\bf V}[\mathbf{X}]_m, \qquad n=1,\dots,N.
\end{equation}
Here $[\mathbf{X}]_j \in \mathbb{R}^{d_{\rm in}}$ denotes the $j$th row of $\mathbf{X}$, interpreted as a column vector when multiplied by ${\bf Q},{\bf K},{\bf V}$.
%A transformer network operates by iteratively applying row-wise activation-bias maps and then applying attention mechanisms matrix-wise.  
A transformer network operates by iterating layers, each consisting of a row-wise activation–bias map followed by an attention mechanism.
Concretely, let $d,D,L\in\mathbb{N}$, and let $d_0,d_1,\dots,d_L, H_0,H_1,\dots,H_L\in\mathbb{N}$ such that $d_0=d$ and $d_L=D$.
For $j=0,1,\dots,L-2$ and $\mathtt{h}=1,\dots,H_j$, let $d^{(j)}_{{\rm key}, \mathtt{h}}, d^{(j)}_{{\rm value}, \mathtt{h}}\in\mathbb{N}$ such that $d_{j+1} = \sum_{{\mathtt{h}}=1}^{H_j} d^{(j)}_{{\rm value}, \mathtt{h}}$.
A $j$th \emph{transformer layer with Multi-head attention}, for $j=0,1,\dots,L-2$, is a map
$
\mathcal{T}_j:\mathbb{R}^{N\times d_j}\to \mathbb{R}^{N\times d_{j+1}}
$
that sends an input matrix $\mathbf{X}^{(j)}\in \mathbb{R}^{N\times d_j}$ to
$
\mathcal{T}_j(\mathbf{X}^{(j)})\eqdef  \mathbf{X}^{(j+1)}\in \mathbb{R}^{N\times d_{j+1}},
$
where 
\begin{alignat}{2} %\label{eq:representation_transformer}
    \nonumber \mathbf{Z}^{(j)}
    &\eqdef 
    \underbrace{
        \bigoplus_{\mathtt{h}=1}^{H_j}
            \operatorname{Attn}\big(
                \mathbf{X}^{(j)}
                |{\bf Q}_{\mathtt{h}}^{(j)},{\bf K}_{\mathtt{h}}^{(j)},{\bf V}_{\mathtt{h}}^{(j)},\lambda
            \big)
    }_{\text{Multi-head attention}} && \\
    \label{eq:representation_transformer} [\mathbf{X}^{(j+1)}]_n
    &\eqdef  
    %\underbrace{
        \sigma\bullet
        \big(
            [\mathbf{Z}^{(j)}]_n + b^{(j)}
        \big),
    %}_{\text{Activation and Bias}},
    &&\qquad n=1,\dots,N.
\end{alignat}
Here $\bigoplus$ denotes concatenation along the second (column) dimension, $
{\bf Q}_{\mathtt{h}}^{(j)},{\bf K}_{\mathtt{h}}^{(j)}\in \mathbb{R}^{d_{{\rm key}, \mathtt{h}}^{(j)}\times d_j}$ and 
$
{\bf V}_{\mathtt{h}}^{(j)}\in \mathbb{R}^{d_{{\rm value}, \mathtt{h}}^{(j)}\times d_j}$, and $b^{(j)}\in\mathbb{R}^{d_{j+1}}$.
Further, the addition in \eqref{eq:representation_transformer} is applied row-wise.
A \emph{transformer with Multi-head attention} with depth $L$, obtained by composing the layers $\mathcal{T}_0,\mathcal{T}_1,\dots,\mathcal{T}_{L-2}$, is a map $\mathcal{T}:\mathbb{R}^{N\times d}\to\mathbb{R}^{N\times D}$ sending an input matrix ${\bf X}^{(0)}\eqdef {\bf X}\in\mathbb{R}^{N\times d}$ to $\mathcal{T}({\bf X}) \eqdef {\bf A}^{(L-1)}{\bf X}^{(L-1)} + b^{(L-1)}\in\mathbb{R}^{N\times D}$, for some ${\bf A}^{(L-1)}\in\mathbb{R}^{D\times d_{L-1}}$ and $b^{(L-1)}\in\mathbb{R}^D$.
\end{definition}

Similar to the strategy of~\cite{petersen2020equivalence}, we 
show that every ReLU MLP can be converted into a transformer in a canonical fashion. 
This allows us to deduce a quantitative version of the in-context universality results of~\cite{furuya2026transformers} for the transformer model. 
The next result is the direct Multi-head generalization of~\cite[Proposition A.1]{kratsios2026adaptivity} and an analogue of~\cite[Proposition 11]{kratsios2025context} for our formulation of the transformer used here. 

\begin{proposition}[Prescribed-head Transformerification of MLPs]
\label{prop:transformerification__SparseVersion}
Let $L\ge2$, let
$
    f:\mathbb{R}^d\to\mathbb{R}^D
$
be a \texttt{ReLU}-MLP of depth $L$, hidden width $W$, and with
$K$ nonzero affine parameters.
Then, for every $\lambda>0$ and every $H\in\mathbb{N}_+$,
$f$ can be implemented exactly by a \texttt{ReLU} transformer
of depth $L$, with exactly $H$ attention heads at each attention
layer, hidden width at most
$
    W+H-1,
$
and at most $K$ nonzero affine parameters.
\end{proposition}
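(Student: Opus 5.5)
The construction is layer by layer: each MLP hidden layer becomes one transformer layer whose attention part acts as the affine map of that layer, followed by the same row-wise \texttt{ReLU}. The main device is to make every attention head compute a \emph{row-wise} linear map, independent of the other rows. Following \cite[Proposition A.1]{kratsios2026adaptivity}, we take ${\bf Q}_{\mathtt h}^{(j)}={\bf K}_{\mathtt h}^{(j)}=0$. Then every softmax weight in \eqref{eq:def_attention} equals $1/N$, so each head returns the row average of ${\bf V}_{\mathtt h}^{(j)}[\mathbf X]_m$. This value does not depend on $\lambda$, which handles ``every $\lambda>0$''.

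\textbf{Step 1: reduce to a single input row.} To realize $f$ on a single input $x\in\mathbb R^d$, we feed the transformer the $1\times d$ matrix $\mathbf X=x^\top$, so $N=1$. The averaging is then trivial, and the attention output of head $\mathtt h$ is exactly ${\bf V}_{\mathtt h}^{(j)}[\mathbf X^{(j)}]_1$. (Alternatively, with $N$ identical rows the average is again the row itself.) A layer therefore computes
\[
\sigma\bullet\big(\big({\bf V}_1^{(j)}\oplus\cdots\oplus{\bf V}_H^{(j)}\big)[\mathbf X^{(j)}]_1+b^{(j)}\big).
\]

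\textbf{Step 2: distribute the rows of ${\bf A}^{(j)}$ over $H$ heads.} The stacked matrix $\bigoplus_{\mathtt h}{\bf V}_{\mathtt h}^{(j)}$ must equal ${\bf A}^{(j)}$, possibly padded with zero rows. The definition requires $d_{j+1}=\sum_{\mathtt h}d^{(j)}_{{\rm value},\mathtt h}$ with each $d^{(j)}_{{\rm value},\mathtt h}\ge1$.
\begin{itemize}
\item If $d_{j+1}\ge H$, we split the rows of ${\bf A}^{(j)}$ into $H$ nonempty consecutive blocks and let ${\bf V}_{\mathtt h}^{(j)}$ be the $\mathtt h$-th block. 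No padding is needed.
\item If $d_{j+1}<H$, we use one head per row and add $H-d_{j+1}\le H-1$ dummy heads. Each dummy head has a single zero row of ${\bf V}$ and bias $0$, so it outputs the constant $\operatorname{ReLU}(0)=0$. This inflates the width to at most $W+H-1$.
\end{itemize}
In the next layer, the dummy coordinates are multiplied by zero columns, i.e.\ we append zero columns to ${\bf A}^{(j+1)}$. Since the dummy coordinates are identically zero, this does not change the computed function. The biases $b^{(j)}$ are copied, and the dummy biases are zero.

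\textbf{Step 3: output layer and parameter count.} The final affine readout ${\bf A}^{(L-1)},b^{(L-1)}$ is copied directly, with zero columns for any dummy coordinates, so the depth stays $L$. The nonzero affine parameters of the transformer are exactly the nonzero entries of the ${\bf A}^{(j)}$ and $b^{(j)}$; the ${\bf Q}$ and ${\bf K}$ matrices and all padding are zero. Hence there are at most $K$ nonzero parameters, and an induction over layers shows the transformer computes $f$ exactly.

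The main obstacle is the bookkeeping when $H$ exceeds a layer's width. We must check that dummy heads stay identically zero through the \texttt{ReLU} and add no nonzero weights. We must also decide whether the input layer, where $d_0=d$ may be smaller than $H$, needs dummy coordinates. Because we pad only the output side of each attention layer, the input side requires nothing beyond ${\bf V}$ acting on $\mathbb R^{d_j}$.
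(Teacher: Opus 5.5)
Your proposal is correct and follows essentially the same construction as the paper: a single token with ${\bf Q}^{(j)}_{\mathtt h}={\bf K}^{(j)}_{\mathtt h}=0$, so attention reduces to the value maps, with ${\bf A}^{(j)}$ carried by the values, zero-padded biases, dummy coordinates killed by zero columns in the next affine map, and an induction over layers. The only difference is that the paper always puts all of ${\bf A}^{(j)}$ into the first head and appends $H-1$ one-dimensional zero heads (width exactly $d_{j+1}+H-1$), whereas your splitting of rows across heads gives the slightly sharper width $\max\{W,H\}\le W+H-1$.
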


\begin{proof}
Let $d_0,\ldots,d_L\in\mathbb{N}_+$ satisfy
$
    d_0=d
$
and
$
    d_L=D,
$
and write
\begin{equation}
\label{eq:MLP_prescribed_heads}
\begin{aligned}
    X^{(0)}
    &\eqdef X,
    \\
    X^{(j+1)}
    &\eqdef
    \operatorname{ReLU}\bullet
    \big(
        {\bf A}^{(j)}X^{(j)}+b^{(j)}
    \big),
    \qquad
    j=0,\ldots,L-2,
    \\
    f(X)
    &\eqdef
    {\bf A}^{(L-1)}X^{(L-1)}+b^{(L-1)},
\end{aligned}
\end{equation}
where
$
    {\bf A}^{(j)}
    \in
    \mathbb{R}^{d_{j+1}\times d_j}
$
and
$
    b^{(j)}\in\mathbb{R}^{d_{j+1}}.
$
Set
\[
    \widetilde d_0\eqdef d_0,
    \qquad
    \widetilde d_j\eqdef d_j+H-1,
    \quad
    j=1,\ldots,L-1,
\]
and define the canonical embedding and projection
\[
    \iota_j(z)
    \eqdef
    (z,0_{H-1})
    \in\mathbb{R}^{\widetilde d_j},
    \qquad
    \pi_j(z,u)
    \eqdef z
    \in\mathbb{R}^{d_j},
\]
for $j\ge1$, with
$
    \iota_0=\pi_0=\operatorname{id}_{\mathbb{R}^{d}}.
$

Fix an attention layer
$
    j\in\{0,\ldots,L-2\}.
$
We use $H$ heads.  Give the first head value dimension $d_{j+1}$
and each remaining head value dimension $1$; thus
\[
    d_{\mathrm{value},1}^{(j)}
    \eqdef d_{j+1},
    \qquad
    d_{\mathrm{value},h}^{(j)}
    \eqdef1,
    \quad h=2,\ldots,H,
\]
and therefore
\[
    \sum_{h=1}^{H}
        d_{\mathrm{value},h}^{(j)}
    =
    d_{j+1}+H-1
    =
    \widetilde d_{j+1}.
\]
Take $d_{\mathrm{key},h}^{(j)}\eqdef1$ and set
$
    {\bf Q}^{(j)}_h
    \eqdef0
$, ${\bf K}^{(j)}_h
    \eqdef 0$ for each $h\in[H]_+$.  
For the value maps, set
$
    {\bf V}^{(j)}_1
    \eqdef
    {\bf A}^{(j)}\pi_j
    \in
    \mathbb{R}^{d_{j+1}\times\widetilde d_j}
$ 
and
$
    {\bf V}^{(j)}_h
    \eqdef
    0
    \in
    \mathbb{R}^{1\times\widetilde d_j}
$ for each $h=2,\ldots,H$.  
Finally, take
$
    \widetilde b^{(j)}
    \eqdef
    \iota_{j+1}\big(b^{(j)}\big)
    =
    \big(b^{(j)},0_{H-1}\big)
$ and identify a vector with the one-token matrix having that vector as its
unique row.  Since there is exactly one token, the unique softmax
coefficient in every attention head equals
\[
    \frac{
        \exp\!\big(
            \lambda
            \langle
                {\bf Q}^{(j)}_h x,
                {\bf K}^{(j)}_h x
            \rangle
        \big)
    }{
        \exp\!\big(
            \lambda
            \langle
                {\bf Q}^{(j)}_h x,
                {\bf K}^{(j)}_h x
            \rangle
        \big)
    }
    =1.
\]
Hence, for every $\lambda>0$,
$
    \operatorname{Attn}
    \big(
        x
        \mid
        {\bf Q}^{(j)}_h,
        {\bf K}^{(j)}_h,
        {\bf V}^{(j)}_h,
        \lambda
    \big)
    =
    {\bf V}^{(j)}_h x
$; consequently, whenever
$
    \widetilde X^{(j)}
    =
    \iota_j(X^{(j)}),
$
the concatenated Multi-head output is
\begin{align*}
    \bigoplus_{h=1}^{H}
    \operatorname{Attn}
    \big(
        \widetilde X^{(j)}
        \mid
        {\bf Q}^{(j)}_h,
        {\bf K}^{(j)}_h,
        {\bf V}^{(j)}_h,
        \lambda
    \big)
    &=
    \big(
        {\bf A}^{(j)}X^{(j)},
        0_{H-1}
    \big)
    \\
    &=
    \iota_{j+1}
    \big(
        {\bf A}^{(j)}X^{(j)}
    \big).
\end{align*}
Since $\operatorname{ReLU}(0)=0$, it follows that
\begin{align*}
    \widetilde X^{(j+1)}
    &=
    \operatorname{ReLU}\bullet
    \left[
        \iota_{j+1}
        \big(
            {\bf A}^{(j)}X^{(j)}
        \big)
        +
        \iota_{j+1}
        \big(
            b^{(j)}
        \big)
    \right]
    \\
    &=
    \iota_{j+1}
    \left(
        \operatorname{ReLU}\bullet
        \big(
            {\bf A}^{(j)}X^{(j)}
            +
            b^{(j)}
        \big)
    \right)
    \\
    &=
    \iota_{j+1}
    \big(
        X^{(j+1)}
    \big).
\end{align*}
Induction over $j=0,\ldots,L-2$ therefore gives
$
    \widetilde X^{(j)}
    =
    \iota_j\big(X^{(j)}\big)
$ for each $j=0,\ldots,L-1$.
For the final affine map, define
\[
    \widetilde{\bf A}^{(L-1)}
    \eqdef
    {\bf A}^{(L-1)}\pi_{L-1}
    \in
    \mathbb{R}^{D\times\widetilde d_{L-1}},
    \qquad
    \widetilde b^{(L-1)}
    \eqdef
    b^{(L-1)}.
\]
Then
$
    \mathcal{T}(X)
=
    \widetilde{\bf A}^{(L-1)}
    \widetilde X^{(L-1)}
    +
    \widetilde b^{(L-1)}
=
    {\bf A}^{(L-1)}
    X^{(L-1)}
    +
    b^{(L-1)}
    =
    f(X)$.
Thus, $\mathcal{T}$ realizes $f$ exactly and has the same depth $L$.

It remains to verify the complexity claims.  At every hidden layer,
the construction enlarges $d_j$ by exactly $H-1$ coordinates, and hence
the hidden width is at most $W+H-1$.  Moreover,
${\bf V}^{(j)}_1$ is obtained from ${\bf A}^{(j)}$ only by adjoining
zero columns, while every
${\bf Q}^{(j)}_h$, ${\bf K}^{(j)}_h$, and
${\bf V}^{(j)}_h$ for $h\ge2$ is identically zero.  Likewise,
$\widetilde b^{(j)}$ is obtained from $b^{(j)}$ only by adjoining zeros,
and $\widetilde{\bf A}^{(L-1)}$ is obtained from
${\bf A}^{(L-1)}$ only by adjoining zero columns.  Therefore,
writing $\|\cdot\|_0$ for the number of nonzero entries are
$
    \sum_{j=0}^{L-1}
    \,
    \big(
        \|{\bf A}^{(j)}\|_0
        +
        \|b^{(j)}\|_0
    \big)
=
    K
$.  
\end{proof}

%---
% Insert acknowledgments and information
% regarding funding at the end of the last
% section, i.e., right before the bibliography.
%---

\begin{ack}
The authors acknowledge that resources used in preparing this research were provided, in part, by the Province of Ontario, the Government of Canada through CIFAR, and companies sponsoring the Vector Institute\footnote{\href{https://vectorinstitute.ai/partnerships/current-partners/}{https://vectorinstitute.ai/partnerships/current-partners/}}.    
\end{ack}

\begin{funding}
A.\ Kratsios, H.\ Ghoukasian, and R.\ Hong acknowledge financial support from an NSERC Discovery Grant No.\ RGPIN-2023-04482 and No.\ DGECR-2023-00230.
\end{funding}

\bibliographystyle{plain}
\bibliography{Refs}

@article{barzilai2026beyond,
  title={Beyond benign overfitting in Nadaraya-Watson interpolators},
  author={Barzilai, Daniel and Kornowski, Guy and Shamir, Ohad},
  journal={Advances in Neural Information Processing Systems},
  volume={38},
  pages={15320--15348},
  year={2026}
}

@inproceedings{goel2024can,
  title={Can a transformer represent a Kalman filter?},
  author={Goel, Gautam and Bartlett, Peter},
  booktitle={6th Annual Learning for Dynamics \& Control Conference},
  pages={1502--1512},
  year={2024},
  organization={PMLR}
}

@article{kratsios2025beyond,
  title={An Algorithm with Certified Recovery from Noisy Data Allowing Scalable Fine-Tuning for Structured Task Shifts},
  author={Kratsios, Anastasis and Cheng, Tin Sum and Roy, Daniel},
  journal={arXiv preprint arXiv:2509.00924},
  year={2025}
}

@article{petrova2023limitations,
	author = {Petrova, Guergana and Wojtaszczyk, Przemyslaw},
	journal = {Journal of Machine Learning Research},
	number = {353},
	pages = {1--38},
	title = {Limitations on approximation by deep and shallow neural networks},
	volume = {24},
	year = {2023}}

@article{ShenYangZhang_JMPA_OptApprx_ReLU,
	author = {Shen, Zuowei and Yang, Haizhao and Zhang, Shijun},
	doi = {10.1016/j.matpur.2021.07.009},
	fjournal = {Journal de Math\'{e}matiques Pures et Appliqu\'{e}es. Neuvi\`eme S\'{e}rie},
	issn = {0021-7824,1776-3371},
	journal = {J. Math. Pures Appl. (9)},
	mrclass = {41A63 (41A25 41A46 68V15)},
	mrnumber = {4351074},
	mrreviewer = {Bao\ Huai\ Sheng},
	pages = {101--135},
	title = {Optimal approximation rate of {R}e{LU} networks in terms of width and depth},
	url = {https://doi.org/10.1016/j.matpur.2021.07.009},
	volume = {157},
	year = {2022}}

@article{petrova2023lipschitz,
	author = {Petrova, Guergana and Wojtaszczyk, Przemys{\l}aw},
	journal = {Constructive Approximation},
	number = {2},
	pages = {759--805},
	publisher = {Springer},
	title = {Lipschitz widths},
	volume = {57},
	year = {2023}}

@inproceedings{vardi2021optimal,
	author = {Vardi, Gal and Yehudai, Gilad and Shamir, Ohad},
	booktitle = {International Conference on Learning Representations},
	title = {On the Optimal Memorization Power of ReLU Neural Networks},
	year = {2021}}

@article{siegel2023optimal,
	author = {Siegel, Jonathan W},
	journal = {Journal of Machine Learning Research},
	number = {357},
	pages = {1--52},
	title = {Optimal approximation rates for deep ReLU neural networks on Sobolev and Besov spaces},
	volume = {24},
	year = {2023}}

@article{cuchiero2026global,
	author = {Cuchiero, Christa and Schmocker, Philipp and Teichmann, Josef},
	journal = {Constructive Approximation},
	pages = {1--76},
	publisher = {Springer},
	title = {Global universal approximation of functional input maps on weighted spaces},
	year = {2026}}

@article{schneider2026nonlocal,
	author = {Schneider, Cornelia and Ullrich, Mario and Vybiral, Jan},
	journal = {Journal of Machine Learning Research},
	number = {11},
	pages = {1--41},
	title = {Nonlocal techniques for the analysis of deep relu neural network approximations},
	volume = {27},
	year = {2026}}

@article{hornik1991approximation,
	author = {Hornik, Kurt},
	journal = {Neural networks},
	number = {2},
	pages = {251--257},
	publisher = {Elsevier},
	title = {Approximation capabilities of multilayer feedforward networks},
	volume = {4},
	year = {1991}}

@article{papon2022universal,
	author = {Kratsios, Anastasis and Papon, L{\'e}onie},
	journal = {Journal of Machine Learning Research},
	number = {196},
	pages = {1--73},
	title = {Universal approximation theorems for differentiable geometric deep learning},
	volume = {23},
	year = {2022}}

@article{LuydmillaJP_UAT_2018,
	author = {Grigoryeva, Lyudmila and Ortega, Juan-Pablo},
	fjournal = {Journal of Machine Learning Research (JMLR)},
	issn = {1532-4435,1533-7928},
	journal = {J. Mach. Learn. Res.},
	mrclass = {68Q05 (62M45)},
	mrnumber = {3862431},
	pages = {Paper No. 24, 40},
	title = {Universal discrete-time reservoir computers with stochastic inputs and linear readouts using non-homogeneous state-affine systems},
	volume = {19},
	year = {2018}}

@article{petersen2018optimal,
	author = {Petersen, Philipp and Voigtlaender, Felix},
	journal = {Neural Networks},
	pages = {296--330},
	publisher = {Elsevier},
	title = {Optimal approximation of piecewise smooth functions using deep ReLU neural networks},
	volume = {108},
	year = {2018}}

@article{schmidthieber2020nonparametric,
	author = {Schmidt-Hieber, Johannes},
	doi = {10.1214/19-AOS1875},
	journal = {The Annals of Statistics},
	number = {4},
	pages = {1875--1897},
	title = {Nonparametric Regression Using Deep Neural Networks with {ReLU} Activation Function},
	volume = {48},
	year = {2020}}

@article{bolcskei2019optimal,
	author = {Bolcskei, Helmut and Grohs, Philipp and Kutyniok, Gitta and Petersen, Philipp},
	journal = {SIAM Journal on Mathematics of Data Science},
	number = {1},
	pages = {8--45},
	publisher = {SIAM},
	title = {Optimal approximation with sparsely connected deep neural networks},
	volume = {1},
	year = {2019}}

@article{hong2024bridging,
	author = {Hong, Ruiyang and Kratsios, Anastasis},
	journal = {arXiv preprint arXiv:2409.12335},
	title = {Bridging the gap between approximation and learning via optimal approximation by relu mlps of maximal regularity},
	year = {2024}}

@article{Whitney1934AnalyticExtensions,
	author = {Whitney, Hassler},
	doi = {10.1090/S0002-9947-1934-1501735-3},
	journal = {Transactions of the American Mathematical Society},
	number = {1},
	pages = {63--89},
	title = {Analytic extensions of differentiable functions defined in closed sets},
	volume = {36},
	year = {1934}}

@article{McShane1934ExtensionRangeFunctions,
	author = {McShane, E. J.},
	doi = {10.1090/S0002-9904-1934-05978-0},
	journal = {Bulletin of the American Mathematical Society},
	number = {12},
	pages = {837--842},
	title = {Extension of range of functions},
	volume = {40},
	year = {1934}}

@article{kolmogorov1959varepsilon,
	author = {Kolmogorov, Andrei Nikolaevich and Tikhomirov, Vladimir Mikhailovich},
	journal = {Uspekhi Matematicheskikh Nauk},
	number = {2},
	pages = {3--86},
	publisher = {Russian Academy of Sciences, Steklov Mathematical Institute of Russian~{\ldots}},
	title = {$\varepsilon$-entropy and $\varepsilon$-capacity of sets in function spaces},
	volume = {14},
	year = {1959}}

@book{pinkus1985nwidths,
	address = {Berlin, Heidelberg},
	author = {Pinkus, Allan},
	doi = {10.1007/978-3-642-69894-1},
	publisher = {Springer},
	series = {Ergebnisse der Mathematik und ihrer Grenzgebiete},
	title = {n-Widths in Approximation Theory},
	volume = {7},
	year = {1985}}

@book{novak2008tractability,
	address = {Z{\"u}rich},
	author = {Novak, Erich and Wo{\'z}niakowski, Henryk},
	doi = {10.4171/026},
	publisher = {European Mathematical Society},
	series = {EMS Tracts in Mathematics},
	title = {Tractability of Multivariate Problems. Volume I: Linear Information},
	volume = {6},
	year = {2008}}

@article{devore1998nonlinear,
	author = {DeVore, Ronald A.},
	doi = {10.1017/S0962492900002816},
	journal = {Acta Numerica},
	pages = {51--150},
	publisher = {Cambridge University Press},
	title = {Nonlinear approximation},
	volume = {7},
	year = {1998}}

@article{kolmogorov1959entropy,
	author = {Kolmogorov, A. N. and Tikhomirov, V. M.},
	journal = {Uspekhi Matematicheskikh Nauk},
	note = {English translation: Amer. Math. Soc. Transl. Ser. 2, 17, 277--364, 1961},
	number = {2},
	pages = {3--86},
	title = {$\varepsilon$-entropy and $\varepsilon$-capacity of sets in function spaces},
	volume = {14},
	year = {1959}}

@article{devore1989optimal,
	author = {DeVore, Ronald A. and Howard, Ralph and Micchelli, Charles A.},
	journal = {Manuscripta Mathematica},
	number = {4},
	pages = {469--478},
	title = {Optimal Nonlinear Approximation},
	volume = {63},
	year = {1989}}

@article{marcati2023exponential,
	author = {Marcati, Carlo and Opschoor, Joost AA and Petersen, Philipp C and Schwab, Christoph},
	journal = {Foundations of Computational Mathematics},
	number = {3},
	pages = {1043--1127},
	publisher = {Springer},
	title = {Exponential ReLU neural network approximation rates for point and edge singularities},
	volume = {23},
	year = {2023}}

@article{gonon2025universal,
	author = {Gonon, Lukas and Jacquier, Antoine},
	journal = {IEEE Transactions on Neural Networks and Learning Systems},
	publisher = {IEEE},
	title = {Universal approximation theorem and error bounds for quantum neural networks and quantum reservoirs},
	year = {2025}}

@article{gonon2023approximation,
	author = {Gonon, Lukas and Grigoryeva, Lyudmila and Ortega, Juan-Pablo},
	journal = {The Annals of Applied Probability},
	number = {1},
	pages = {28--69},
	publisher = {Institute of Mathematical Statistics},
	title = {Approximation bounds for random neural networks and reservoir systems},
	volume = {33},
	year = {2023}}

@article{enflo1973counterexample,
	author = {Enflo, Per},
	doi = {10.1007/BF02392270},
	journal = {Acta Mathematica},
	number = {1},
	pages = {309--317},
	title = {A Counterexample to the Approximation Problem in Banach Spaces},
	volume = {130},
	year = {1973}}

@book{grothendieck1955produits,
	address = {Providence, RI},
	author = {Grothendieck, Alexander},
	number = {16},
	pages = {140},
	publisher = {American Mathematical Society},
	series = {Memoirs of the American Mathematical Society},
	title = {Produits tensoriels topologiques et espaces nucl{\'e}aires},
	year = {1955}}

@article{figiel1973approximation,
	author = {Figiel, Tadeusz and Johnson, William B.},
	journal = {Proceedings of the American Mathematical Society},
	pages = {197--200},
	title = {The approximation property does not imply the bounded approximation property},
	volume = {41},
	year = {1973}}

@article{gottlieb2016adaptive,
	author = {Gottlieb, Lee-Ad and Kontorovich, Aryeh and Krauthgamer, Robert},
	journal = {Theoretical Computer Science},
	pages = {105--118},
	publisher = {Elsevier},
	title = {Adaptive metric dimensionality reduction},
	volume = {620},
	year = {2016}}

@article{bartlett2017spectrally,
	author = {Bartlett, Peter L and Foster, Dylan J and Telgarsky, Matus J},
	journal = {Advances in neural information processing systems},
	title = {Spectrally-normalized margin bounds for neural networks},
	volume = {30},
	year = {2017}}

@inproceedings{neyshabur2018a,
	author = {Behnam Neyshabur and Srinadh Bhojanapalli and Nathan Srebro},
	booktitle = {International Conference on Learning Representations},
	title = {A {PAC}-Bayesian Approach to Spectrally-Normalized Margin Bounds for Neural Networks},
	url = {https://openreview.net/forum?id=Skz_WfbCZ},
	year = {2018}}

@article{kratsios2025context,
	author = {Kratsios, Anastasis and Furuya, Takashi},
	journal = {arXiv preprint arXiv:2502.03327},
	title = {Is in-context universality enough? mlps are also universal in-context},
	year = {2025}}

@article{neufeld2023universal,
	author = {Neufeld, Ariel and Schmocker, Philipp},
	journal = {arXiv preprint arXiv:2312.08410},
	title = {Universal approximation property of random neural networks},
	year = {2023}}

@article{devore1993wavelet,
	author = {DeVore, Ronald A and Kyriazis, George and Leviatan, Dany and Tikhomirov, Vladimir M},
	journal = {Adv. Comput. Math.},
	number = {2},
	pages = {197--214},
	title = {Wavelet compression and nonlinear n-widths.},
	volume = {1},
	year = {1993}}

@article{zhang2024deep,
	author = {Zhang, Shijun and Lu, Jianfeng and Zhao, Hongkai},
	journal = {Journal of Machine Learning Research},
	number = {35},
	pages = {1--39},
	title = {Deep network approximation: Beyond {ReLU} to diverse activation functions},
	volume = {25},
	year = {2024}}

@inproceedings{pmlr-v235-cheng24g,
	author = {Cheng, Tin Sum and Lucchi, Aurelien and Kratsios, Anastasis and Belius, David},
	booktitle = {Proceedings of the 41st International Conference on Machine Learning},
	editor = {Salakhutdinov, Ruslan and Kolter, Zico and Heller, Katherine and Weller, Adrian and Oliver, Nuria and Scarlett, Jonathan and Berkenkamp, Felix},
	month = {21--27 Jul},
	pages = {8141--8162},
	publisher = {PMLR},
	series = {Proceedings of Machine Learning Research},
	title = {Characterizing Overfitting in Kernel Ridgeless Regression Through the Eigenspectrum},
	url = {https://proceedings.mlr.press/v235/cheng24g.html},
	volume = {235},
	year = {2024}}

@article{kratsios2025kolmogorov,
	author = {Kratsios, Anastasis and Furuya, Takashi},
	journal = {arXiv preprint arXiv:2504.15110},
	title = {Kolmogorov-Arnold Networks: Approximation and Learning Guarantees for Functions and their Derivatives},
	year = {2025}}

@article{bartlett2019nearly,
	author = {Bartlett, Peter L and Harvey, Nick and Liaw, Christopher and Mehrabian, Abbas},
	journal = {Journal of Machine Learning Research},
	number = {63},
	pages = {1--17},
	title = {Nearly-tight VC-dimension and pseudodimension bounds for piecewise linear neural networks},
	volume = {20},
	year = {2019}}

@inproceedings{rahimi2007random,
	author = {Rahimi, Ali and Recht, Benjamin},
	booktitle = {Advances in Neural Information Processing Systems},
	title = {Random Features for Large-Scale Kernel Machines},
	volume = {20},
	year = {2007}}

@inproceedings{jacot2018ntk,
	author = {Jacot, Arthur and Gabriel, Franck and Hongler, Cl{\'e}ment},
	booktitle = {Advances in Neural Information Processing Systems (NeurIPS)},
	title = {Neural tangent kernel: Convergence and generalization in neural networks},
	volume = {31},
	year = {2018}}

@inproceedings{kidger2020universal,
	author = {Kidger, Patrick and Lyons, Terry},
	booktitle = {Conference on learning theory},
	organization = {PMLR},
	pages = {2306--2327},
	title = {Universal approximation with deep narrow networks},
	year = {2020}}

@article{acciaio2024designing,
	author = {Acciaio, Beatrice and Kratsios, Anastasis and Pammer, Gudmund},
	journal = {Mathematical Finance},
	number = {2},
	pages = {671--735},
	publisher = {Wiley Online Library},
	title = {Designing universal causal deep learning models: The geometric (hyper) transformer},
	volume = {34},
	year = {2024}}

@article{hou2023instance,
	author = {Hou, Songyan and Kassraie, Parnian and Kratsios, Anastasis and Krause, Andreas and Rothfuss, Jonas},
	journal = {Journal of Machine Learning Research},
	number = {349},
	pages = {1--51},
	title = {Instance-dependent generalization bounds via optimal transport},
	volume = {24},
	year = {2023}}

@article{Stone1982OptimalGlobalRates,
	author = {Stone, Charles J.},
	doi = {10.1214/aos/1176345969},
	fjournal = {The Annals of Statistics},
	journal = {Ann. Statist.},
	number = {4},
	pages = {1040--1053},
	title = {Optimal Global Rates of Convergence for Nonparametric Regression},
	volume = {10},
	year = {1982}}

@article{ismailov2014approximation,
	author = {Ismailov, Vugar E},
	journal = {Journal of Mathematical Analysis and Applications},
	number = {2},
	pages = {963--969},
	publisher = {Elsevier},
	title = {On the approximation by neural networks with bounded number of neurons in hidden layers},
	volume = {417},
	year = {2014}}

@article{cybenko1989approximation,
	author = {Cybenko, George},
	doi = {10.1007/BF02551274},
	journal = {Mathematics of Control, Signals and Systems},
	number = {4},
	pages = {303--314},
	title = {Approximation by Superpositions of a Sigmoidal Function},
	volume = {2},
	year = {1989}}

@article{yarotsky2017error,
	author = {Yarotsky, Dmitry},
	journal = {Neural Networks},
	pages = {103--114},
	publisher = {Elsevier},
	title = {Error bounds for approximations with deep ReLU networks},
	volume = {94},
	year = {2017}}

@article{shen2022optimal,
	author = {Shen, Zuowei and Yang, Haizhao and Zhang, Shijun},
	journal = {Journal de Math{\'e}matiques Pures et Appliqu{\'e}es},
	pages = {101--135},
	publisher = {Elsevier},
	title = {Optimal approximation rate of ReLU networks in terms of width and depth},
	volume = {157},
	year = {2022}}

@inproceedings{yarotsky2018optimal,
	author = {Yarotsky, Dmitry},
	booktitle = {Conference on learning theory},
	organization = {PMLR},
	pages = {639--649},
	title = {Optimal approximation of continuous functions by very deep ReLU networks},
	year = {2018}}

@article{petersen2024mathematical,
	archiveprefix = {arXiv},
	author = {Petersen, Philipp and Zech, Jakob},
	eprint = {2407.18384},
	journal = {arXiv preprint},
	title = {Mathematical theory of deep learning},
	year = {2024}}

@article{pinkus1999approximation,
	author = {Pinkus, Allan},
	doi = {10.1017/S0962492900002919},
	isbn = {0-521-77088-2},
	journal = {Acta numerica, 1999},
	mrclass = {41A30 (41A63 65D15 92B20)},
	mrnumber = {1819645},
	mrreviewer = {Andrei\ Mart\'inez Finkelshtein},
	pages = {143--195},
	publisher = {Cambridge Univ. Press, Cambridge},
	series = {Acta Numer.},
	title = {Approximation theory of the {MLP} model in neural networks},
	url = {https://doi.org/10.1017/S0962492900002919},
	volume = {8},
	year = {1999}}

@article{cohen2022optimal,
	author = {Cohen, Albert and DeVore, Ronald and Petrova, Guergana and Wojtaszczyk, Przemyslaw},
	journal = {Foundations of Computational Mathematics},
	number = {3},
	pages = {607--648},
	publisher = {Springer},
	title = {Optimal stable nonlinear approximation},
	volume = {22},
	year = {2022}}

@article{cheng2024comprehensive,
	author = {Cheng, Tin Sum and Lucchi, Aurelien and Kratsios, Anastasis and Belius, David},
	journal = {Advances in Neural Information Processing Systems},
	pages = {24659--24723},
	title = {A comprehensive analysis on the learning curve in kernel ridge regression},
	volume = {37},
	year = {2024}}

@article{kearns1994efficient,
	author = {Michael J. Kearns and Robert E. Schapire},
	doi = {10.1016/S0022-0000(05)80062-5},
	journal = {Journal of Computer and System Sciences},
	number = {3},
	pages = {464--497},
	title = {Efficient Distribution-Free Learning of Probabilistic Concepts},
	volume = {48},
	year = {1994}}

@article{bartlett1996fat,
	author = {Peter L. Bartlett and Philip M. Long and Robert C. Williamson},
	doi = {10.1006/jcss.1996.0033},
	journal = {Journal of Computer and System Sciences},
	number = {3},
	pages = {434--452},
	title = {Fat-Shattering and the Learnability of Real-Valued Functions},
	volume = {52},
	year = {1996}}

@article{gottlieb2014efficient,
	author = {Gottlieb, Lee-Ad and Kontorovich, Aryeh and Krauthgamer, Robert},
	journal = {IEEE Transactions on Information Theory},
	number = {9},
	pages = {5750--5759},
	publisher = {IEEE},
	title = {Efficient classification for metric data},
	volume = {60},
	year = {2014}}

@article{Yudovich1963Nonstationary,
	author = {Yudovich, V. I.},
	doi = {10.1016/0041-5553(63)90247-7},
	journal = {USSR Computational Mathematics and Mathematical Physics},
	number = {6},
	pages = {1407--1456},
	title = {Non-stationary flow of an ideal incompressible liquid},
	volume = {3},
	year = {1963}}

@book{RevuzYor1999Continuous,
	author = {Revuz, Daniel and Yor, Marc},
	doi = {10.1007/978-3-662-06400-9},
	edition = {3},
	publisher = {Springer},
	series = {Grundlehren der mathematischen Wissenschaften},
	title = {Continuous Martingales and Brownian Motion},
	volume = {293},
	year = {1999}}

@inproceedings{saez2024neural,
	author = {S{\'a}ez de Oc{\'a}riz Borde, Haitz and Kratsios, Anastasis},
	booktitle = {International Conference on Learning Representations},
	pages = {54236--54250},
	title = {Neural snowflakes: Universal latent graph inference via trainable latent geometries},
	volume = {2024},
	year = {2024}}

@article{Schoenberg1937MetricSpaces,
	author = {Schoenberg, I. J.},
	doi = {10.2307/1968835},
	journal = {Annals of Mathematics},
	number = {4},
	pages = {787--793},
	title = {On Certain Metric Spaces Arising From Euclidean Spaces by a Change of Metric and Their Imbedding in Hilbert Space},
	volume = {38},
	year = {1937}}

@article{Wilson1935MetricTransformations,
	author = {Wilson, W. A.},
	doi = {10.2307/2372019},
	journal = {American Journal of Mathematics},
	number = {1},
	pages = {62--68},
	title = {On Certain Types of Continuous Transformations of Metric Spaces},
	volume = {57},
	year = {1935}}

@article{naor2012assouad,
	author = {Naor, Assaf and Neiman, Ofer},
	journal = {Revista Matematica Iberoamericana},
	number = {4},
	pages = {1123--1142},
	title = {Assouad's theorem with dimension independent of the snowflaking},
	volume = {28},
	year = {2012}}

@article{assouad111plongements,
	author = {Assouad, P},
	journal = {Bull. Soc. Math. France},
	number = {4},
	title = {Plongements lipschitziens dans R\^{} n Rn},
	volume = {111}}

@inproceedings{saez2025neural,
	author = {S{\'a}ez de Oc{\'a}riz Borde, Haitz and Kratsios, Anastasis and Law, Marc T and Dong, Xiaowen and Bronstein, Michael},
	booktitle = {International Conference on Learning Representations},
	pages = {56302--56341},
	title = {Neural spacetimes for DAG representation learning},
	volume = {2025},
	year = {2025}}

@article{shikhkhalil2025loss,
	author = {Shikh Khalil, Karim R.},
	doi = {10.1016/j.jfa.2025.111066},
	journal = {Journal of Functional Analysis},
	number = {8},
	pages = {111066},
	title = {On the Loss and Propagation of Modulus of Continuity for the Two-Dimensional Incompressible Euler Equations},
	volume = {289},
	year = {2025}}

@article{mendelnaor2011dichotomies,
	author = {Mendel, Manor and Naor, Assaf},
	journal = {arXiv preprint arXiv:1102.1800},
	title = {A Note on Dichotomies for Metric Transforms},
	year = {2011}}

@article{zaheer2017deep,
	author = {Zaheer, Manzil and Kottur, Satwik and Ravanbakhsh, Siamak and Poczos, Barnabas and Salakhutdinov, Russ R and Smola, Alexander},
	journal = {Advances in neural information processing systems},
	title = {Deep sets},
	volume = {30},
	year = {2017}}

@article{wagstaff2022universal,
	author = {Wagstaff, Edward and Fuchs, Fabian B and Engelcke, Martin and Osborne, Michael A and Posner, Ingmar},
	journal = {Journal of Machine Learning Research},
	number = {151},
	pages = {1--56},
	title = {Universal approximation of functions on sets},
	volume = {23},
	year = {2022}}

@article{syed2026embedding,
	author = {Syed, Ali and Nambiar, Aditya and Siegel, Jonathan W},
	journal = {arXiv preprint arXiv:2605.08377},
	title = {Embedding Dimension Lower Bounds for Universality of Deep Sets and Janossy Pooling},
	year = {2026}}

@article{murari2026approximation,
	author = {Murari, Davide and Furuya, Takashi and Sch{\"o}nlieb, Carola-Bibiane},
	journal = {Advances in Neural Information Processing Systems},
	pages = {142888--142919},
	title = {Approximation theory for 1-lipschitz resnets},
	volume = {38},
	year = {2026}}

@article{riegler2024generating,
	author = {Riegler, Erwin and B{\"u}hler, Alex and Pan, Yang and B{\"o}lcskei, Helmut},
	journal = {arXiv preprint arXiv:2412.05109},
	title = {Generating rectifiable measures through neural networks},
	year = {2024}}

@inproceedings{meunier2022dynamical,
	author = {Meunier, Laurent and Delattre, Blaise J and Araujo, Alexandre and Allauzen, Alexandre},
	booktitle = {International Conference on Machine Learning},
	organization = {PMLR},
	pages = {15484--15500},
	title = {A dynamical system perspective for Lipschitz neural networks},
	year = {2022}}

@inproceedings{araujo2023a,
	author = {Alexandre Araujo and Aaron J Havens and Blaise Delattre and Alexandre Allauzen and Bin Hu},
	booktitle = {The Eleventh International Conference on Learning Representations},
	title = {A Unified Algebraic Perspective on Lipschitz Neural Networks},
	url = {https://openreview.net/forum?id=k71IGLC8cfc},
	year = {2023}}

@book{cobzas2019lipschitz,
	address = {Cham},
	author = {{\c S}tefan Cobza{\c s} and Radu Miculescu and Adriana Nicolae},
	doi = {10.1007/978-3-030-16489-8},
	publisher = {Springer},
	series = {Lecture Notes in Mathematics},
	title = {Lipschitz Functions},
	volume = {2241},
	year = {2019}}

@book{deberg2008computational,
	address = {Berlin, Heidelberg},
	author = {Mark de Berg and Otfried Cheong and Marc van Kreveld and Mark Overmars},
	doi = {10.1007/978-3-540-77974-2},
	edition = {3},
	isbn = {978-3-540-77973-5},
	publisher = {Springer},
	title = {Computational Geometry: Algorithms and Applications},
	year = {2008}}

@article{dupont2026improving,
	author = {Dupont, Emilien and Eisenberger, Marvin and Kozlovskii, Borislav and Mehrabian, Abbas and Ruiz, Francisco J. R. and See, Abigail and Zhou, Renfei and Alman, Josh and Vassilevska Williams, Virginia and Balog, Matej},
	journal = {arXiv preprint arXiv:2608.16884},
	title = {Improving the Matrix Multiplication Exponent with Modern Optimization and {AlphaEvolve}},
	year = {2026}}

@book{nesterov2018lectures,
	author = {Nesterov, Yurii},
	doi = {10.1007/978-3-319-91578-4},
	publisher = {Springer},
	series = {Springer Optimization and Its Applications},
	title = {Lectures on Convex Optimization},
	volume = {137},
	year = {2018}}

@misc{lu2024numerical,
	archiveprefix = {arXiv},
	author = {Lu, Jun},
	eprint = {2107.02579},
	primaryclass = {math.HO},
	title = {Numerical Matrix Decomposition},
	year = {2024}}

@misc{furuya2026transformers,
	author = {Takashi Furuya and Maarten V. de Hoop and Matti Lassas},
	title = {Transformers through the lens of support-preserving maps between measures},
	url = {https://openreview.net/forum?id=xm5MELPxTv},
	year = {2026}}

@article{bartl2025we,
	author = {Bartl, Daniel and Mendelson, Shahar},
	journal = {arXiv preprint arXiv:2502.15118},
	title = {Do we really need the Rademacher complexities?},
	year = {2025}}

@article{bartlett2005local,
	author = {Bartlett, Peter L. and Bousquet, Olivier and Mendelson, Shahar},
	doi = {10.1214/009053605000000282},
	journal = {The Annals of Statistics},
	number = {4},
	pages = {1497--1537},
	publisher = {Institute of Mathematical Statistics},
	title = {Local Rademacher complexities},
	volume = {33},
	year = {2005}}

@article{alon1997scale,
	author = {Alon, Noga and Ben-David, Shai and Cesa-Bianchi, Nicol{\`o} and Haussler, David},
	doi = {10.1145/263867.263927},
	journal = {Journal of the ACM},
	number = {4},
	pages = {615--631},
	publisher = {Association for Computing Machinery},
	title = {Scale-sensitive dimensions, uniform convergence, and learnability},
	volume = {44},
	year = {1997}}

@article{ba2022high,
	author = {Ba, Jimmy and Erdogdu, Murat A and Suzuki, Taiji and Wang, Zhichao and Wu, Denny and Yang, Greg},
	journal = {Advances in Neural Information Processing Systems},
	pages = {37932--37946},
	title = {High-dimensional asymptotics of feature learning: How one gradient step improves the representation},
	volume = {35},
	year = {2022}}

@article{holzmuller2022training,
	author = {Holzm{\"u}ller, David and Steinwart, Ingo},
	journal = {Journal of Machine Learning Research},
	number = {181},
	pages = {1--82},
	title = {Training Two-Layer {ReLU} Networks with Gradient Descent is Inconsistent},
	volume = {23},
	year = {2022}}

@article{jentzen2025nonconvergence,
	author = {Jentzen, Arnulf and Riekert, Adrian},
	doi = {10.1137/24M1639464},
	journal = {SIAM/ASA Journal on Uncertainty Quantification},
	number = {3},
	pages = {1294--1333},
	title = {Non-convergence to Global Minimizers for {Adam} and Stochastic Gradient Descent Optimization and Constructions of Local Minimizers in the Training of Artificial Neural Networks},
	volume = {13},
	year = {2025}}

@article{do2026nonconvergence,
	author = {Do, Thang and Hannibal, Sonja and Jentzen, Arnulf},
	doi = {10.1016/j.jmaa.2026.130724},
	journal = {Journal of Mathematical Analysis and Applications},
	number = {2},
	pages = {130724},
	title = {Non-convergence to Global Minimizers in Data Driven Supervised Deep Learning: {Adam} and Stochastic Gradient Descent Optimization Provably Fail to Converge to Global Minimizers in the Training of Deep Neural Networks with {ReLU} Activation},
	volume = {564},
	year = {2026}}

@misc{do2025optimalrisk,
	archiveprefix = {arXiv},
	author = {Do, Thang and Jentzen, Arnulf and Riekert, Adrian},
	eprint = {2503.01660},
	primaryclass = {cs.LG},
	title = {Non-convergence to the Optimal Risk for {Adam} and Stochastic Gradient Descent Optimization in the Training of Deep Neural Networks},
	year = {2025}}

@inproceedings{safran2018spurious,
	author = {Safran, Itay and Shamir, Ohad},
	booktitle = {Proceedings of the 35th International Conference on Machine Learning},
	pages = {4433--4441},
	publisher = {PMLR},
	series = {Proceedings of Machine Learning Research},
	title = {Spurious Local Minima are Common in Two-Layer {ReLU} Neural Networks},
	volume = {80},
	year = {2018}}

@article{cheridito2021nonconvergence,
	author = {Cheridito, Patrick and Jentzen, Arnulf and Rossmannek, Florian},
	doi = {10.1016/j.jco.2020.101540},
	journal = {Journal of Complexity},
	pages = {101540},
	title = {Non-convergence of Stochastic Gradient Descent in the Training of Deep Neural Networks},
	volume = {64},
	year = {2021}}

@misc{naor2008concentration,
	author = {Naor, Assaf},
	title = {Concentration of measure},
	year = {2008}}

@article{gottlieb2017efficientregression,
	author = {Gottlieb, Lee-Ad and Kontorovich, Aryeh and Krauthgamer, Robert},
	doi = {10.1109/TIT.2017.2713820},
	journal = {IEEE Transactions on Information Theory},
	number = {8},
	pages = {4838--4849},
	title = {Efficient Regression in Metric Spaces via Approximate Lipschitz Extension},
	volume = {63},
	year = {2017}}

@article{balazs2026nearoptimal,
	author = {Bal{\'a}zs, G{\'a}bor},
	journal = {Journal of Machine Learning Research},
	number = {134},
	pages = {1--41},
	title = {Near-optimal {Delta}-convex Estimation of Lipschitz Functions},
	volume = {27},
	year = {2026}}

@article{luxburg2004distance,
	author = {Luxburg, Ulrike von and Bousquet, Olivier},
	journal = {Journal of Machine Learning Research},
	number = {Jun},
	pages = {669--695},
	title = {Distance-based classification with Lipschitz functions},
	volume = {5},
	year = {2004}}

@article{kratsios2026adaptivity,
	author = {Kratsios, Anastasis and Neuman, A Martina and Petersen, Philipp},
	journal = {arXiv preprint arXiv:2605.04995},
	title = {Adaptivity Under Realizability Constraints: Comparing In-Context and Agentic Learning},
	year = {2026}}

@inproceedings{kratsios2020noneuclidean,
  title     = {Non-Euclidean Universal Approximation},
  author    = {Kratsios, Anastasis and Bilokopytov, Ievgen},
  booktitle = {Advances in Neural Information Processing Systems},
  volume    = {33},
  pages     = {10635--10646},
  year      = {2020}
}

@article{kratsios2022universalgeo,
  title   = {Universal Approximation Theorems for Differentiable Geometric Deep Learning},
  author  = {Kratsios, Anastasis and Papon, L{\'e}onie},
  journal = {Journal of Machine Learning Research},
  volume  = {23},
  number  = {196},
  pages   = {1--73},
  year    = {2022}
}

@article{kratsios2023metric,
  title   = {An Approximation Theory for Metric Space-Valued Functions With A View Towards Deep Learning},
  author  = {Kratsios, Anastasis and Liu, Chong and Lassas, Matti and de Hoop, Maarten V. and Dokmani{\'c}, Ivan},
  journal = {arXiv preprint arXiv:2304.12231},
  year    = {2023}
}

@article{galimberti2025neural,
  title   = {Neural Networks in Non-Metric Spaces},
  author  = {Galimberti, Luca},
  journal = {Analysis and Applications},
  year    = {2025},
  note    = {Accepted/in press}
}

@article{ismailov2026topological,
  title   = {On Shallow Feedforward Neural Networks with Inputs from a Topological Space},
  author  = {Ismailov, Vugar E.},
  journal = {Annals of Mathematics and Artificial Intelligence},
  year    = {2026},
  doi     = {10.1007/s10472-026-10003-7}
}

@article{godeke2025encoderdecoder,
  title   = {New Universal Operator Approximation Theorem for Encoder--Decoder Architectures},
  author  = {G{\"o}deke, Janek and Fernsel, Pascal},
  journal = {arXiv preprint arXiv:2503.24092},
  year    = {2025}
}

@article{nadaraya1964estimating,
  title   = {On Estimating Regression},
  author  = {Nadaraya, E. A.},
  journal = {Theory of Probability and Its Applications},
  volume  = {9},
  number  = {1},
  pages   = {141--142},
  year    = {1964},
  doi     = {10.1137/1109020}
}

@article{watson1964smooth,
  title   = {Smooth Regression Analysis},
  author  = {Watson, G. S.},
  journal = {Sankhy{\=a}: The Indian Journal of Statistics, Series A},
  volume  = {26},
  number  = {4},
  pages   = {359--372},
  year    = {1964}
}

@article{yuan2006model,
  title={Model selection and estimation in regression with grouped variables},
  author={Yuan, Ming and Lin, Yi},
  journal={Journal of the Royal Statistical Society: Series B (Statistical Methodology)},
  volume={68},
  number={1},
  pages={49--67},
  year={2006}
}

@article{zou2005regularization,
  title={Regularization and variable selection via the elastic net},
  author={Zou, Hui and Hastie, Trevor},
  journal={Journal of the Royal Statistical Society: Series B (Statistical Methodology)},
  volume={67},
  number={2},
  pages={301--320},
  year={2005}
}

@article{kimeldorf1971some,
  title={Some Results on Tchebycheffian Spline Functions},
  author={Kimeldorf, George and Wahba, Grace},
  journal={Journal of Mathematical Analysis and Applications},
  volume={33},
  number={1},
  pages={82--95},
  year={1971}
}

@article{smola2004tutorial,
  title={A Tutorial on Support Vector Regression},
  author={Smola, Alex J. and Sch{\"o}lkopf, Bernhard},
  journal={Statistics and Computing},
  volume={14},
  pages={199--222},
  year={2004}
}

@inproceedings{chen2016xgboost,
  title     = {XGBoost: A Scalable Tree Boosting System},
  author    = {Chen, Tianqi and Guestrin, Carlos},
  booktitle = {Proceedings of the 22nd ACM SIGKDD International Conference on Knowledge Discovery and Data Mining},
  pages     = {785--794},
  year      = {2016},
  publisher = {ACM},
  doi       = {10.1145/2939672.2939785}
}

@inproceedings{ke2017lightgbm,
  title     = {LightGBM: A Highly Efficient Gradient Boosting Decision Tree},
  author    = {Ke, Guolin and Meng, Qi and Finley, Thomas and Wang, Taifeng
               and Chen, Wei and Ma, Weidong and Ye, Qiwei and Liu, Tie-Yan},
  booktitle = {Advances in Neural Information Processing Systems},
  volume    = {30},
  pages     = {3146--3154},
  year      = {2017}
}

@article{lukovsevivcius2009reservoir,
  title={Reservoir computing approaches to recurrent neural network training},
  author={Luko{\v{s}}evi{\v{c}}ius, Mantas and Jaeger, Herbert},
  journal={Computer science review},
  volume={3},
  number={3},
  pages={127--149},
  year={2009},
  publisher={Elsevier}
}

@article{allen2019can,
  title={What can resnet learn efficiently, going beyond kernels?},
  author={Allen-Zhu, Zeyuan and Li, Yuanzhi},
  journal={Advances in neural information processing systems},
  volume={32},
  year={2019}
}

@article{mhaskar2016deep,
  title={Deep vs. shallow networks: An approximation theory perspective},
  author={Mhaskar, Hrushikesh N and Poggio, Tomaso},
  journal={Analysis and Applications},
  volume={14},
  number={06},
  pages={829--848},
  year={2016},
  publisher={World Scientific}
}

@article{petersen2020equivalence,
  title={Equivalence of approximation by convolutional neural networks and fully-connected networks},
  author={Petersen, Philipp and Voigtlaender, Felix},
  journal={Proceedings of the American Mathematical Society},
  volume={148},
  number={4},
  pages={1567--1581},
  year={2020}
}

\end{document}